\title{\LARGE Understanding the Generalization of Stochastic Gradient Adam in Learning Neural Networks}
\author{
Xuan Tang\thanks{School of Computing \& Data Science, The University of Hong Kong. Email: \email{xuantang8}{connect.hku.hk}}
\qquad Han Zhang\thanks{School of Computing \& Data Science, The University of Hong Kong. Email: \email{hzhang23}{connect.hku.hk}}
\qquad Yuan Cao\thanks{School of Computing \& Data Science, The University of Hong Kong. Email: \email{yuancao}{hku.hk}}
\qquad Difan Zou\thanks{School of Computing \& Data Science and Institute of Data Science, The University of Hong Kong. Email: \email{dzou}{hku.hk} }
}
\date{\small{\today}}
\begin{document}


\maketitle

\begin{abstract}
    Adam is a popular and widely used adaptive gradient method in deep learning, which has also received tremendous focus in theoretical research. However, most existing theoretical work primarily analyzes its full-batch version, which differs fundamentally from the stochastic variant used in practice. Unlike SGD, stochastic Adam does not converge to its full-batch counterpart even with infinitesimal learning rates. We present the first theoretical characterization of how batch size affects Adam's generalization, analyzing two-layer over-parameterized CNNs on image data. Our results reveal that while both Adam and AdamW with proper weight decay $\lambda$ converge to poor test error solutions, their mini-batch variants can achieve near-zero test error. We further prove Adam has a strictly smaller effective weight decay bound than AdamW, theoretically explaining why Adam requires more sensitive $\lambda$ tuning. Extensive experiments validate our findings, demonstrating the critical role of batch size and weight decay in Adam's generalization performance.
\end{abstract}




\section{Introduction}
Adaptive gradient methods, such as Adam \citep{kingma2015adammethodstochasticoptimization} and its variant AdamW \citep{loshchilov2018decoupled}, have emerged as widely adopted optimizers for training deep learning models across diverse tasks \citep{he2016deep,ma2016end}.
More recently, Adam and its variants have also been used to train large language models (LLMs) like GPT \citep{brown2020language}, LLaMA \citep{touvron2023llama}, and Deepseek \citep{bi2024deepseek}.
In practice, Adam is known for its fast convergence during training, yet its generalization performance varies significantly depending on the task. Despite its empirical success, the theoretical understanding of Adam remains incomplete, especially regarding its generalization performance.

Recent theoretical work has sought to analyze the task-dependent behavior of Adam and compare it with other optimizers like gradient descent (GD). For instance, \cite{wilson2017marginal} demonstrated that adaptive methods like Adam exhibit poor generalization on linear models, while GD and stochastic gradient descent (SGD) can achieve zero test error. Further, \cite{ZhouNEURIPS2020} theoretically characterized the generalization gap between SGD and Adam through local convergence analysis, though their work did not account for neural network architectures or test error behavior. Other studies have focused on the implicit bias of adaptive methods: \cite{wang2022does} analyzed momentum's role in generalization, proving that GD with momentum and its adaptive variants converge to the $\ell_2$ max-margin solution; \cite{xie2024implicit} showed that full-batch AdamW converges only to a KKT point under an $\ell_\infty$ norm constraint; and \cite{zhangimplicit} established Adam's convergence to a maximum $\ell_\infty$-margin classifier in linear logistic regression with separable data. In nonconvex settings, \cite{zou2023understanding} revealed that full-batch Adam and GD converge to distinct solutions with differing generalization performance, which shows that even with weight decay, Adam fails to achieve low test error in overparameterized CNNs. Following the nonconvex analysis of Adam vs.\ GD by \citet{zou2023understanding}, \citet{li2025on} show that Sign Gradient Descent---a sign-only surrogate for Adam \citep{balles2018dissecting,bernstein2018signsgd}---achieves fast convergence but poor generalization when training two-layer Transformers.

While existing theoretical analyses have provided valuable insights into the behavior of full-batch Adam, these results may not fully capture the characteristics of stochastic gradient Adam commonly used in practice. Notably, although stochastic gradient descent (SGD) and full-batch GD exhibit similar training dynamics in expectation \citep{Bottou2012}, stochastic gradient Adam demonstrates fundamentally different behavior from its full-batch counterpart---a distinction that persists even with vanishingly small learning rates. This gap  raises important questions about how stochastic gradient Adam, particularly with small batch sizes, affects model generalization, an aspect that remains largely unexplored in current literature.

%

Motivated by this, in this paper, we investigate how the generalization of mini-batch Adam and AdamW differs from that of large-batch Adam.
We analyze the convergence and generalization of Adam (and AdamW) with different batch sizes on two-layer over-parameterized convolutional neural networks (CNNs) for an image data model. This analysis follows the settings outlined in the recent study of full-batch Adam in \cite{zou2023understanding}. We also compare the sensitivity of the weight decay parameters $\lambda$ for effective weight decay in Adam and AdamW.

The main contributions of this paper are summarized as follows.
\begin{itemize}[leftmargin = *]
    \item Theorem~\ref{thm:adam_large_batch} and~\ref{thm:adamw_large_batch} rigorously prove that in the large-batch regime, both Adam and AdamW converge to solutions with poor test error in nonconvex settings, even with proper weight decay. This extends prior results for full-batch Adam to AdamW, showing that adaptive methods inherently overfit noise in low-stochasticity training. Real-world data experiments in Figure~\ref{fig:batch} demonstrate that large-batch Adam and AdamW suffer drastic test error increases, while synthetic experiments in Appendix~\ref{sec:experiments} confirm this failure stems from noise-dominated solutions.
    \item For mini-batch training, theorem~\ref{thm:adam_mini_batch} and~\ref{thm:adamw_mini_batch} prove that stochastic Adam and AdamW achieve near-zero test error in nonconvex settings with appropriate weight decay. The key mechanism is twofold: (i) stochastic gradients implicitly regularize the optimization trajectory by slowing noise fitting while preserving feature learning dynamics, preventing Adam from overfitting noise patches; (ii) weight decay explicitly suppresses residual noise components. This synergy ensures convergence to solutions dominated by true features. Real-world data experiments in Figure~\ref{fig:batch} demonstrate that mini-batch Adam and AdamW significantly improve test performance, with synthetic-data experiments in Appendix~\ref{sec:experiments} further validating our theoretical insights. Moreover, under constant $\beta_1,\beta_2$ hyperparameters, we prove stochastic Adam and AdamW \textit{can be rigorously approximated by} SignSGD \citep{bernstein2018signsgd} and SignSGDW (with decoupled decay) respectively. This extends the known full-batch Adam$\to$SignGD correspondence to stochastic regimes---a crucial advancement given mini-batch noise fundamentally modifies approximation dynamics. Our analysis in Appendix~\ref{sec:proof} reveals this approximation holds precisely when gradient magnitudes dominate optimization noise (e.g., $|g_{t,j,r}^{(t)}[k]| \ge \tilde\Theta(\eta)$ where $\eta$ is the learning rate). 
    \item Corollary~\ref{cor:adam_wd_ub} and~\ref{cor:adam_adamw_compare} derive distinct theoretical upper bounds for weight decay parameters in nonconvex settings: Adam permits a strictly smaller maximum effective $\lambda$ than AdamW. This arises because Adam's adaptive gradient normalization amplifies the effective impact of weight decay, causing excessive regularization to destabilize updates. In contrast, AdamW's decoupled weight decay mechanism avoids this issue. Experiments in Figure~\ref{fig:wd} validate that exceeding Adam's upper bound (e.g., $\lambda>0.05$) leads to catastrophic test error increases, while AdamW tolerates much larger $\lambda$ values (e.g., $\lambda=0.5$) without significant performance degradation. This demonstrates that the interplay between batch size and weight decay is critical: mini-batch training enables effective regularization, but Adam’s narrow tolerance demands precise $\lambda$ calibration.
\end{itemize}

The rest of paper is organized as follows.
Section~\ref{sec:related_work} discusses the works that are most closely related to this paper.
%
Section~\ref{sec:problem_setup} describes the problem settings. 
Section~\ref{sec:main_results} presents the main results of this paper.
Section~\ref{sec:proof_outline} provides the proof outline of stochastic gradient Adam.
Section~\ref{sec:conclusion} concludes this paper and discusses future research directions.
Additional experiments and all experimental details can be found in Appendix~\ref{sec:experiments}.
All proofs are provided in the remaining appendices (Appendix~\ref{sec:preliminaries}--~\ref{sec:proof}).

\paragraph{Notation.}
Scalars are denoted by lowercase letters $x,y,\dots$, vectors by bold lowercase letters $\xb,\dots$, and matrices by bold uppercase letters $\Ab,\dots$. For any integer $d\ge1$, denote the set $[d]=\{1,\dots,d\}$. For $x\in\RR$, define $[x]_+=\max\{x,0\}$ and $\sgn(x)=x/|x|$ for $x\neq0$, $\sgn(0)=0$. For $\xb=(x_1,\dots,x_d)^\top\in\RR^d$, define $\|\xb\|_p=(\sum_{i=1}^d|x_i|^p)^{1/p}$ ($p\ge1$) and $\supp(\xb)=\{i\in[d]:x_i\neq0\}$. 
For real sequences $\{a_n\},\{b_n\}$, denote $a_n=O(b_n)$ if there exist $C,N>0,$ s.t.\ $|a_n|\le C|b_n|,\forall n\ge N$; denote $a_n=\Omega(b_n)$ if $b_n=O(a_n)$; $a_n=\Theta(b_n)$ if both $O(b_n)$ and $\Omega(b_n)$ hold; denote $a_n=o(b_n)$ if for any $C>0$, there exist $N>0,$ s.t. $|a_n|<C|b_n|,\forall n\ge N$; and denote $a_n=\omega(b_n)$ if $b_n=o(a_n)$.
We write $\tilde O(\cdot)$, $\tilde\Omega(\cdot)$, $\tilde\Theta(\cdot)$ to suppress logarithmic factors, $a_n=\poly(b_n)$ if $a_n=\Theta(b_n^D)$ for some $D>0$, and $a_n=\polylog(b_n)$ if $a_n=\poly(\log b_n)$.

\section{Related Work}\label{sec:related_work}
\paragraph{Adaptive Optimization Methods.}
There are a series of papers on adaptive gradient methods, including AdaGrad \citep{duchi2011adaptive}, Adam \citep{kingma2015adammethodstochasticoptimization}, AdamW \citep{loshchilov2018decoupled}, and second-order information methods \citep{yao2021adahessian,liu2024sophia}.
The convergence of Adam and related methods has been analyzed in a line of papers under various conditions \citep{chen2019convergence, guo2021novel, defossezsimple}. However, some work presented the possible case where Adam fails  to converge to an optimal solution even in simple one-dimensional convex settings \citep{j.2018on}.
The generalization performance of Adam has been investigated and compared with that of gradient descent in \cite{wilson2017marginal,ZhouNEURIPS2020,zou2023understanding}.
To better understand the performance of Adam, \cite{bernstein2018signsgd,bernstein2018signsgdfault,kunstner2023noise} analyzed its similarity with signGD.
Similar works have also been done for AdamW \citep{xie2024implicit}.
\cite{loshchilov2018decoupled} demonstrated that improper use of weight decay in Adam could lead to poor generalization performance, and proposed the AdamW that improves generalization in comparison to Adam. Recent work has also highlighted the role of weight decay in modern deep learning setups, showing its impact on optimization dynamics and generalization \citep{dangelo2024why}.
While $\mathrm{L}_2$ regularization and weight decay  are equivalent for standard SGD and GD (with rescaled by learning rate), that is not the case for adaptive methods like stochastic gradient Adam and full-batch Adam \citep{loshchilov2018decoupled,zhang2018three,zhuangunderstanding2022}.
However, the true reason why Adam with weight decay fails to improve the generalization remains unclear.
Therefore, the current understanding of how batch size and weight decay influence the generalization performance of Adam is still relatively limited.

\paragraph{Implicit bias.}
Implicit bias refers to the tendency of machine learning algorithms to favor certain solutions. This phenomenon has also been studied in neural networks theoretically to understand how they generalize and converge to solutions.
\cite{lyu2019gradient} and \cite{ji2020directional} studied the implicit bias of gradient descent on the homogeneous neural networks.
\cite{kunin2023the} extended the results to a wider class of networks with varying degree of homogeneity.
\cite{cai2024large} focused on the large stepsize gradient descent on two-layer non-homogeneous networks.
\cite{frei2022implicit} analyzed the implicit bias of gradient flow in two-layer fully-connected neural networks with leaky ReLU activations for nearly-orthogonal data.
\cite{kou2024implicit} extended this results and analyzed the implicit bias of gradient descent on similar settings.
For Adam and AdamW, the implicit bias of Adam have been analyzed in \cite{wang2021implicit,wang2022does,zhangimplicit},
and the implicit bias of AdamW has been analyzed in \citep{xie2024implicit}.
Recently, \cite{cattaneo24on} showed that Adam penalizes the $\ell_1$-norm of perturbed gradients, favoring flat minima. Our work complements this view by analyzing, in a discrete-time feature learning setting, how batch size and weight decay jointly regulate noise suppression and generalization.

\paragraph{Feature learning.}
There are a series of papers that studied the feature learning theory in neural networks.
\cite{allen2020towards} investigated the feature learning of ensemble methods and knowledge distillation in deep learning when applied to data with multi-view features.
\cite{cao2022benign} examined the benign overfitting in the supervised learning of two-layer convolutional neural networks, and proved that under certain conditions on signal-to-noise ratio (SNR), arbitrary small training and test loss can be achieved.
\cite{zou2023understanding} compared the feature learning of full-batch Adam and GD on two-layer convolutional neural networks. It demonstrated that GD learns the features, but full-batch Adam, even with proper regularization, may still fail.
Some works have studied the feature learning of contrastive learning method \citep{zhang2024understanding}, federated learning \citep{huang2024understanding} on two-layer convolutional neural networks, and multi-modal contrastive learning on single-layer ReLU networks \citep{huang2024on}.
Additionally, some papers have analyzed feature learning on other architectures, such as transformers \citep{Jelassi2022feature,li2025on}, and diffusion models \citep{han2025on,han2025can}; and other training configurations \citep{zou2023benefits,lu2024benign}.
Unlike the aforementioned works, this paper focuses on the feature learning of Adam and AdamW algorithms with different batch sizes on the two-layer convolutional neural networks.

\section{Problem Setup}\label{sec:problem_setup}
In this paper, we train the two-layer convolutional neural network (CNN) with Adam and AdamW  on the training dataset $\cS:=\{(\xb_i,~y_i)\}_{i=1}^{n}$ of size $n$, which is generated from a data model $\cD$.
In this section, we introduce the data model $\cD$ , the two-layer CNN model, and the training details of two algorithms (Adam and AdamW) analyzed in this paper.

\paragraph{Data Model.} We adopt the feature-noise patch concatenation framework from Definition~\ref{def:data_distribution}, aligning with previous studies \citep{allen2020towards,cao2022benign,Jelassi2022feature,zou2023understanding,huang2024understanding,huang2024on,zhang2024understanding,li2025on,han2025on}.
\begin{definition}\label{def:data_distribution}
Let each data point $(\xb, y)$ consist of a feature vector $\xb\in \RR^{2d}$ and a label $y \in \{-1, 1\}$. The data is generated as follows:
\begin{align*}
\xb = [\xb_1^\top, \xb_2^\top]^\top,
\end{align*}
where $\xb_1$ and $\xb_2$ represent two distinct feature patches. One of these patches corresponds to the signal patch and consists of a feature vector $y \cdot \vb$, where $\vb \in \RR^d$ is assumed to be a sparse vector, specifically $1$-sparse. The other patch represents the noise patch and is a noise vector denoted by $\bxi$. Without loss of generality, we assume $\vb = [1, 0, \dots, 0]^\top$. The data is generated from the following distribution $\cD$:
\begin{enumerate}[leftmargin = *]
    \item The label $y$ is generated as a Rademacher random variable with $y\in\{-1,+1\}$.
    \item Randomly select $s$ coordinates from the set $[d] \backslash \{1\}$ with equal probability. This selection is represented by a binary vector $\sbb \in \{0, 1\}^d$. Then generate $\bxi$ from the Gaussian distribution $\cN(\mathbf{0}, \sigma_p^2 \bI_d)$ and apply the masking operation such that $\bxi = \bxi \odot \sbb$, where $\odot$ denotes element-wise multiplication. Finally, add feature noise to the vector $\bxi$ by updating it as $\bxi = \bxi - \alpha y \vb$, where $\alpha \in (0, 1)$ controls the strength of the feature noise.
    \item One of the two patches $\xb_1,\xb_2$ is randomly selected and is assigned as $y\cdot\vb$, representing the signal patch, while the other patch is assigned as $\bxi$, representing the noise patch.
\end{enumerate}
We set $s = \Theta\left(d^{1/2}/n^2\right),\ \sigma_p^2 = \Theta\left(1/(s\cdot\polylog(n))\right),\ \alpha = \Theta\left(\sigma_p\cdot\polylog(n)\right)$ in this paper.
\end{definition}

The data model formalizes image classification dynamics where localized label-relevant features coexist with global noise---aligning with CNN behaviors: sparse mid-layer activations \citep{papyan2017convolutional} vs. non-informative regions as independent noise \citep{yang2019scaling}. By isolating $1$-sparse feature and $s$-sparse noise patches, we distill the feature learning vs. noise memorization interplay. 
Though our analysis uses a simplified single feature/noise patch model for clarity, the results can be extended to broader settings (e.g., multi-patch or denser features/noises) by assuming sub-Gaussian noise and using concentration inequalities (e.g., Bernstein bounds) to control overlapping or structured perturbations, with similar qualitative behavior expected as long as the total noise remains controlled.

\paragraph{Two-layer CNN model.} We define the two-layer CNN considered in this paper as follows.
\begin{definition}\label{def:model}
    Given the data $(\xb,y)\sim\cD$ and the activation function $\sigma(x)=[x]_+^q$ with $q\ge 3$, the $j$-th output of the neural network $F$ with width $m$ is
    \begin{align*}
        F_j(\Wb,\xb) &= \sum_{r=1}^m \left[\sigma(\la\wb_{j,r},\xb_1\ra) + \sigma(\la\wb_{j,r}, \xb_2\ra)\right]=\sum_{r=1}^m \left[\sigma(\la\wb_{j,r},y\vb\ra) + \sigma(\la\wb_{j,r}, \bxi\ra)\right],
    \end{align*}
    where $\wb_{j,r}$ is the weight at the $r$-th neuron and initialized from Gaussian distribution $\cN(\mathbf{0},\sigma_0^2\bI_d)$. In this paper, we assume $j\in\{\pm 1\}$ for clarity, ensuring the logit index matches the data label. Additionally, we also assume $m=\polylog(n)$ and $\sigma_0=\Theta(d^{-1/4})$.
\end{definition}

\paragraph{Training algorithm.} We investigate the behavior of stochastic Adam and AdamW, starting from same initializations and training on the same dataset $\cS = \{(\xb_i, y_i)\}_{i=1}^n$. The loss function for each data point $(\xb_i,y_i)$ is denoted as $L_i(\Wb) = -\log \frac{e^{F_{y_i}(\Wb,\xb_i)}}{\sum_{j\in\{-1,1\}} e^{F_j(\Wb,\xb_i)}}$. 

For stochastic \textbf{Adam} and \textbf{AdamW}, the CNN model is trained by minimizing the empirical loss function
\begin{align}
(\mathbf{Adam})\quad &L(\Wb) = \frac{1}{n}\sum_{i=1}^n L_i(\Wb) + \frac{\lambda}{2} \|\Wb\|_F^2,\label{eq:adam_loss} \\
(\mathbf{AdamW})\quad &L(\Wb) = \frac{1}{n}\sum_{i=1}^n L_i(\Wb),\label{eq:adamw_loss}
\end{align}
where $\|\cdot\|_F$ denotes the Frobenius norm, $\lambda$ is the weight decay regularization of \textbf{Adam}. Therefore, the stochastic gradient $g_{t,j,r}^{(t)}$ can be calculated as
\begin{align*}
    (\mathbf{Adam})\quad & g_{t,j,r}^{(t)} = \frac{1}{B}\sum_{i\in\cI_t} \nabla_{\wb_{j,r}^{(t)}} L_i(\Wb^{(t)})+\lambda \wb_{j,r}^{(t)}, \\
    (\mathbf{AdamW})\quad & g_{t,j,r}^{(t)} = \frac{1}{B}\sum_{i\in\cI_t} \nabla_{\wb_{j,r}^{(t)}} L_i(\Wb^{(t)}),
\end{align*}
where the subscript $t$ of $g_{t,j,r}^{(t)}$ represents the batch $\cI_t$ at the $t$-th iteration and the superscript $t$ of $g_{t,j,r}^{(t)}$ represents the model $\Wb^{(t)}$ at the $t$-th iteration. Herein, we emphasize a fundamental distinction: Adam's stochastic gradients inherently incorporate weight decay regularization, whereas AdamW's gradients remain regularization-free---a deliberate design choice to prevent momentum-based normalization from destabilizing regularization effects \citep{loshchilov2018decoupled}. This architectural distinction, also analytically demonstrated in our proof, crucially impacts the training process. The momentum estimates $\mb_{j,r}^{(t)},\, \vb_{j,r}^{(t)}$ of Adam/AdamW are updated as follows
\begin{align}
    &\mb_{j,r}^{(t+1)} = \beta_1 \mb_{j,r}^{(t)} + (1-\beta_1)\cdot g_{t,j,r}^{(t)},\label{eq:adam_moving1}\\
    &\vb_{j,r}^{(t+1)} = \beta_2 \vb_{j,r}^{(t)} + (1-\beta_2)\cdot [g_{t,j,r}^{(t)}]^2,\label{eq:adam_moving2}
\end{align}
where $\beta_1$, $\beta_2$ are the hyperparameters of Adam/AdamW and we initialize $\mb_{j,r}^{(0)}=\vb_{j,r}^{(0)}=\mathbf{0}$. Finally, the update rule of stochastic Adam/AdamW for model $\Wb$ can be formulated as
\begin{align}
    (\mathbf{Adam})\quad &\wb_{j,r}^{(t+1)} = \wb_{j,r}^{(t)} - \eta \cdot\frac{\mb_{j,r}^{(t)} }{ \sqrt{\vb_{j,r}^{(t)}}+\epsilon}, \label{eq:adam_upd}\\
    (\mathbf{AdamW})\quad &\wb_{j,r}^{(t+1)} = (1-\eta\lambda)\wb_{j,r}^{(t)} - \eta \cdot\frac{\mb_{j,r}^{(t)}}{\sqrt{\vb_{j,r}^{(t)}}+\epsilon}, \label{eq:adamw_upd}
\end{align}
where $\eta$ is the learning rate, $\epsilon=\Theta(\lambda\eta)$ is stability constant and $\lambda$ is the decoupled weight decay parameter of \textbf{AdamW}. In particular, in~\eqref{eq:adam_moving2},~\eqref{eq:adam_upd} and~\eqref{eq:adamw_upd}, the square $(\cdot)^2$, square root $\sqrt{\cdot}$, and division $\cdot/\cdot$ all denote entry-wise calculations. The details of gradient calculation and its expansion can be found in Appendix~\ref{sec:preliminaries}.

\section{Main Results}\label{sec:main_results}
In this section, we present the main results of our study. We begin by introducing the primary metric used to evaluate generalization performance: the classification error rate. 

Given training dataset $\cS=\{(\xb_i,~y_i)\}_{i=1}^{n}$ generated from data model $\cD$ in Definition~\ref{def:data_distribution}. We define the training error $\mathrm{err}_\cS(\Wb)$ and test error $\mathrm{err}_\cD(\Wb)$ of model $\Wb$ as follows,
\begin{align*}
    \mathrm{err}_\cS(\Wb) &= \EE_{(\xb,y)\sim \cS} \ind \left[ F_y(\Wb,\xb) \le F_{-y}(\Wb,\xb) \right]   ,\\
    \mathrm{err}_\cD(\Wb) &= \EE_{(\xb,y)\sim \cD}\ind\left[ F_y(\Wb,\xb) \le F_{-y}(\Wb,\xb) \right]   .
\end{align*}
While theoretical analyses often prioritize mathematically tractable surrogate losses (e.g., cross-entropy, hinge loss), classification error rate remains the most direct and practical performance metric. Unlike continuously approximated surrogate losses, error rate directly quantifies discrete misclassification events, better reflecting models' true decision-making ability in classification tasks.

\subsection{Theoretical Results for Adam}
The following Theorem~\ref{thm:adam_large_batch} characterizes the behavior of Adam in the large-batch regime.
\begin{theorem}[Large-batch Adam]\label{thm:adam_large_batch}
    Suppose $\eta=\frac{1}{\poly(n)}$ and $\lambda$ satisfies $0 < \lambda=o(\frac{\sigma_0^{q-2}\sigma_p}{n})$, we train our CNN model in Definition~\ref{def:model} on loss function~\eqref{eq:adam_loss} for $T=\frac{\poly(n)}{\eta}$ epochs using Adam~\eqref{eq:adam_upd} with batch size $B$ satisfies $\frac{n}{B}=\Theta(1)$. Then with  probability at least $1-n^{-1}$, we have
    \begin{itemize}[leftmargin = *]
        \item The training error is zero: $\mathrm{err}_\cS(\Wb^{(T)}) = 0$.
        \item The test error is high: $\mathrm{err}_\cD(\Wb^{(T)}) \ge \frac{1}{2}-o(1)$.
    \end{itemize}
\end{theorem}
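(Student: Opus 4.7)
The plan is to adapt the signal-noise coefficient decomposition of \cite{cao2022benign,zou2023understanding} to the large-batch Adam dynamics. I would write $\wb_{j,r}^{(t)}=\wb_{j,r}^{(0)}+\gamma_{j,r}^{(t)}\,\|\vb\|_2^{-2}\vb+\sum_{i=1}^n\rho_{j,r,i}^{(t)}\,\|\bxi_i\|_2^{-2}\bxi_i$ (plus residuals orthogonal to $\vb$ and the $\bxi_i$'s) and analyze the signal coefficients $\gamma_{j,r}$ and noise coefficients $\rho_{j,r,i}$ by joint induction. The first step is a reduction of Adam's update $\mb^{(t)}/(\sqrt{\vb^{(t)}}+\epsilon)$ to a SignSGD update: because $n/B=\Theta(1)$, stochastic and full-batch gradients agree up to benign fluctuations, and with $\epsilon=\Theta(\lambda\eta)$ a standard EMA concentration argument shows $\mb/(\sqrt{\vb}+\epsilon)=\sgn(g)\cdot(1\pm o(1))$ whenever $|g_{t,j,r}^{(t)}[k]|\ge\tilde\Theta(\eta)$. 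Under the scaling $\lambda=o(\sigma_0^{q-2}\sigma_p/n)$, the explicit weight-decay drift $-\eta\lambda\wb$ is absorbed into this error term, so effectively I am analyzing SignSGD on the regularized loss.

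The crux of the analysis is a structural asymmetry between signal and noise directions under sign updates. At each step the signal coefficient $\gamma_{j,r}$ picks up a single $\pm\eta$ contribution from coordinate $1$, whereas the noise-patch input $\la\wb_{j,r},\bxi_i\ra$ collects sign-aligned contributions from all $s$ active coordinates of $\bxi_i$, so it grows at rate $\tilde\Theta(\eta\|\bxi_i\|_1)=\tilde\Theta(\eta s\sigma_p)$, polynomially faster than $\gamma$ under the chosen $s$ and $\sigma_p$. On top of this, the feature-noise term $-\alpha y_i\vb$ inside each noise patch contributes $-\alpha y_i\sigma'(\rho_{j,r,i})$ to the coordinate-$1$ gradient. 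Using $\sigma'(\rho)/\sigma'(\gamma)=(\rho/\gamma)^{q-1}$ together with $\alpha=\Theta(\sigma_p\polylog n)$, an early-phase induction shows that $\alpha\sigma'(\rho_{j,r,i})$ overtakes $\sigma'(\gamma_{j,r})$ almost immediately, so the sign of the coordinate-$1$ gradient is dictated by the poisoning term and $\gamma_{j,r}$ stalls near initialization or drifts in the direction that makes $\la\wb_{y_i,r},y_i\vb\ra$ negative. This is the same mechanism identified by \cite{zou2023understanding} for full-batch Adam, reproduced here in the stochastic large-batch regime via the SignSGD reduction.

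From these dynamics the two conclusions follow directly. For training error, once $\rho_{y_i,r,i}=\tilde\Omega(1)$ for every $i$, the noise-patch activation $\sum_r\sigma(\la\wb_{y_i,r},\bxi_i\ra)$ dominates $F_{y_i}(\Wb,\xb_i)$ while $\rho_{-y_i,r,i}$ stays near initialization, so $F_{y_i}(\Wb,\xb_i)>F_{-y_i}(\Wb,\xb_i)$ on every training sample and $\mathrm{err}_\cS(\Wb^{(T)})=0$. For test error, on a fresh $(\xb,y)\sim\cD$ the test noise patch is independent of every training $\bxi_i$, so $\la\wb_{y,r},\bxi_{\text{test}}\ra$ and $\la\wb_{-y,r},\bxi_{\text{test}}\ra$ have statistically matched laws and the noise contribution to $F_y-F_{-y}$ averages to zero; the remaining signal contribution $\sigma(\gamma_{y,r}\cdot y)-\sigma(\gamma_{-y,r}\cdot y)$ has the wrong sign on average by the paragraph above, so a symmetrization/anti-concentration argument at scale $\sigma_0^q$ yields $\mathrm{err}_\cD(\Wb^{(T)})\ge\frac{1}{2}-o(1)$.

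The main obstacle is maintaining the SignSGD approximation uniformly across the $T=\poly(n)/\eta$ horizon: the lower bound $|g_{t,j,r}^{(t)}[k]|\ge\tilde\Theta(\eta)$ must be verified on every active coordinate, including through the warm-up phase where $\mb^{(t)},\vb^{(t)}$ are still accumulating and the loss derivatives $|\ell_i'|$ are not yet stabilized. A secondary but delicate issue is a tensor-power-style bookkeeping needed to show $\alpha\sigma'(\rho)$ genuinely overtakes $\sigma'(\gamma)$ \emph{uniformly} in $(j,r)$, since any surviving pocket of correct-sign $\gamma_{j,r}$ could in principle restart signal learning; the hypothesis $\lambda=o(\sigma_0^{q-2}\sigma_p/n)$ is chosen precisely to rule this out by keeping the weight-decay-induced drift of $\gamma$ below the sign-update noise floor, and quantifying that cancellation is where most of the technical work would go.
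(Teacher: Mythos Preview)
Your proposal is correct and takes essentially the same approach as the paper: reduce large-batch Adam to SignSGD on the regularized loss (the paper's Lemma~D.1), show that noise memorization outpaces feature learning at rate $\tilde\Theta(\eta s\sigma_p)$ versus $\Theta(\eta)$ so the feature-noise term $-\alpha y_i\vb$ eventually flips the coordinate-$1$ sign (Lemma~D.3, matching Lemma~A.3 of \cite{zou2023understanding}), and then read off zero training error from noise memorization and $\ge\tfrac12-o(1)$ test error from the symmetric law of the fresh noise patch. The paper's own proof is in fact shorter than your outline---it establishes only the Stage~I dynamics lemma and then explicitly defers the rest to \cite{zou2023understanding}---so your sketch already reconstructs more of the mechanism than the paper writes out.
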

Theorem~\ref{thm:adam_large_batch} extends \citet{zou2023understanding}'s full-batch analysis to large-batch regimes ($B=\Theta(n)$). Basically, it states that under the nearly same data model in \citet{zou2023understanding}, large-batch Adam cannot effectively learn the feature vector from the training dataset, and finally attains a nearly 0.5 test accuracy, despite its perfect fitting on the training data points.

\begin{figure}[!t]
\vskip -0.1in
     \centering
     \subfigure[Test error vs.\ batch size under Adam]{\includegraphics[width=0.45\linewidth]{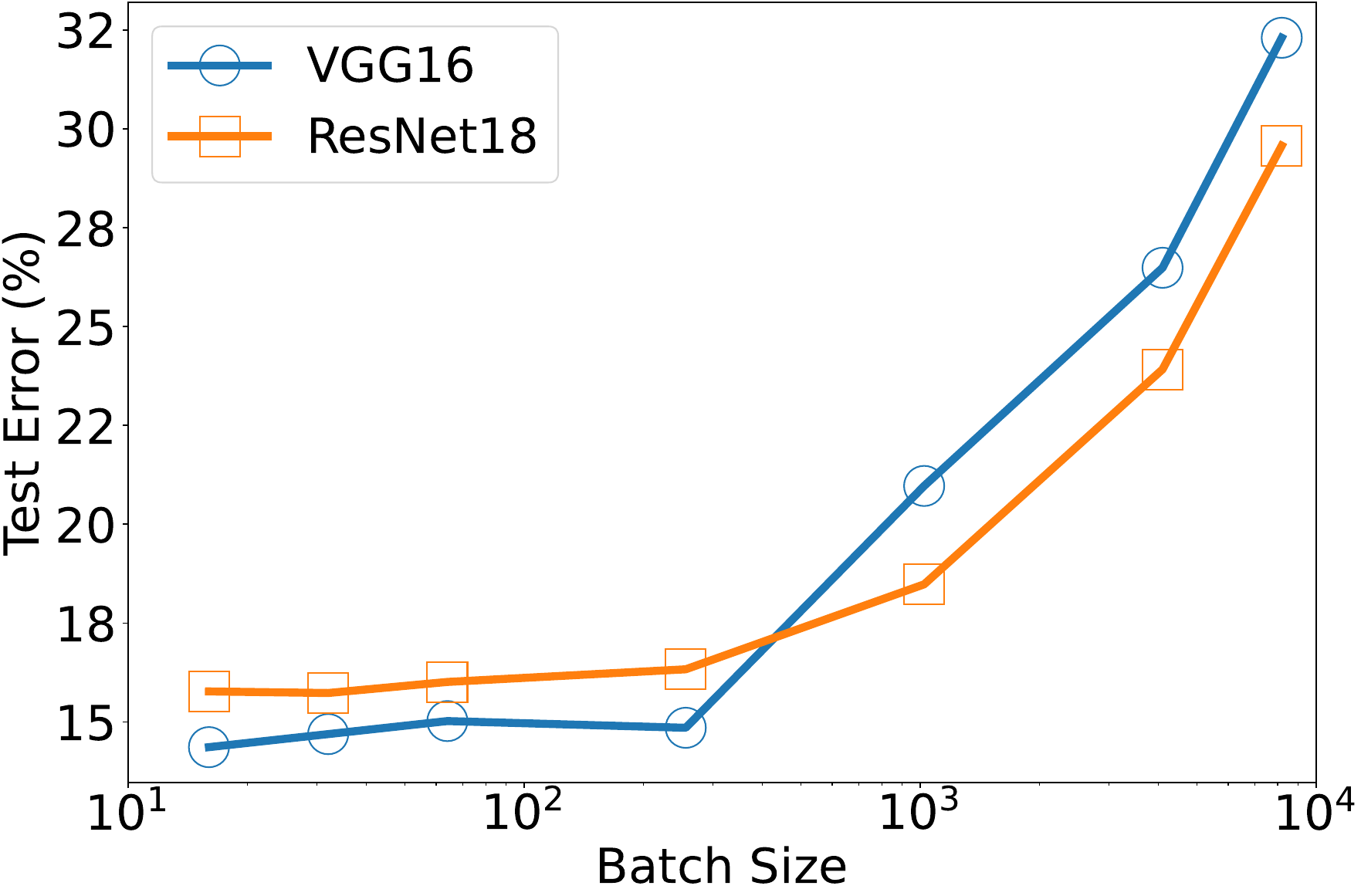}\label{figsub:batch_adam}} \hspace{0.2cm}
     \subfigure[Test error vs.\ batch size under AdamW]{\includegraphics[width=0.45\linewidth]{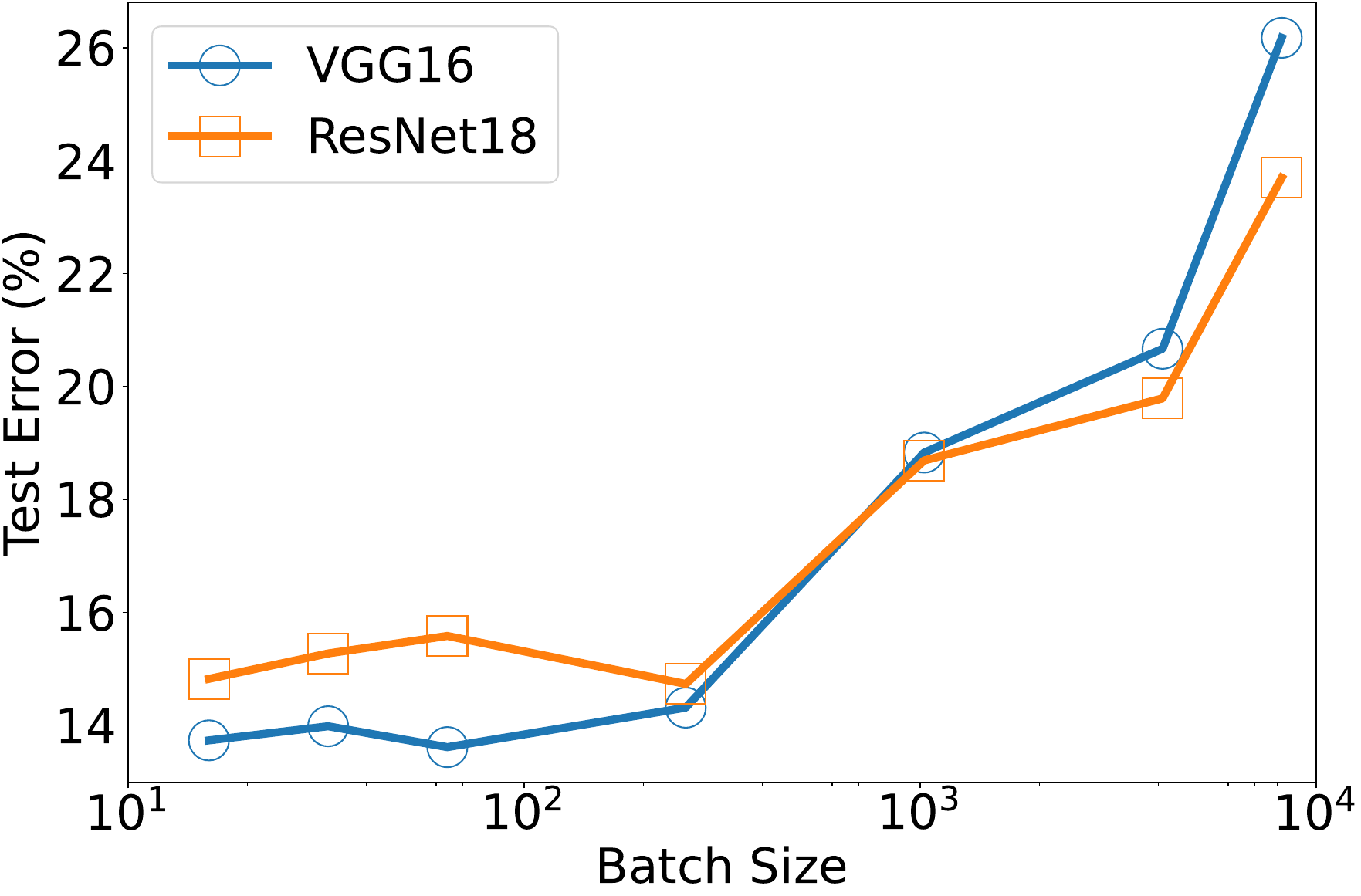}\label{figsub:batch_adamw}}
      \vskip -0.1in
    \caption{Test error vs.\ batch size for VGG16 and ResNet18 on CIFAR-10.}
    \label{fig:batch}
\end{figure}

\noindent In stark contrast, we further provide  Theorem~\ref{thm:adam_mini_batch}, which proves that stochastic gradient Adam with a smaller batch size can achieve good generalization performance.
\begin{theorem}[Mini-batch Adam]\label{thm:adam_mini_batch}
    Suppose $\eta=\frac{1}{\poly(n)}$ and $0 < \lambda=o(\frac{\sigma_0^{q-2}\sigma_p}{n})$, we train our CNN model in Definition~\ref{def:model} on loss~\eqref{eq:adam_loss} for $T=\frac{\poly(n)}{\eta}$ epochs using stochastic Adam~\eqref{eq:adam_upd} with batch size $B$ satisfies $\frac{n}{B}\ge\Theta(\log\epsilon^{-1})$, where $\epsilon$ is the hyperparameter of Adam. Then with high probability at least $1-n^{-1}$, we have
    \begin{itemize}[leftmargin = *]
        \item The training error is zero: $\mathrm{err}_\cS(\Wb^{(T)}) = 0$.
        \item The test error is near-zero: $\mathrm{err}_\cD(\Wb^{(T)}) = o(1)$.
    \end{itemize}
\end{theorem}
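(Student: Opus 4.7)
My plan combines a \emph{signal--noise decomposition} of each neuron's weight vector with a rigorous \emph{stochastic SignSGD approximation} of the Adam update. Writing
\[
\wb_{j,r}^{(t)} = \wb_{j,r}^{(0)} + \gamma_{j,r}^{(t)} \cdot j\vb + \sum_{i=1}^n \rho_{j,r,i}^{(t)} \cdot \frac{\bxi_i}{\|\bxi_i\|_2^2},
\]
where $\gamma_{j,r}^{(t)}$ tracks feature learning and $\rho_{j,r,i}^{(t)}$ tracks memorization of the $i$-th training noise, the goal reduces to proving $\max_r \gamma_{j,r}^{(T)} = \Omega(1)$ while $\max_{r,i} |\rho_{j,r,i}^{(T)}| = o(1)$. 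Standard manipulations of the CNN output then yield zero training error and near-zero test error on a fresh sample whose noise patch is independent of $\{\bxi_i\}_{i=1}^n$.

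The steps I would carry out are as follows. \emph{(a)} After a $\tilde\Theta(1)$-iteration burn-in that lets the EMAs stabilize, I would show that whenever $|g_{t,j,r}^{(t)}[k]| \ge \tilde\Theta(\eta)$, the Adam step at coordinate $k$ equals $(1\pm o(1))\,\eta\,\sgn(g_{t,j,r}^{(t)}[k])$; otherwise the denominator term $\epsilon = \Theta(\lambda\eta)$ caps the update at $O(\eta)$. The batch-size condition $n/B \ge \Theta(\log\epsilon^{-1})$ provides enough distinct batches per epoch for the second-moment EMA $\vb_{j,r}^{(t)}$ to concentrate and validates the sign approximation. \emph{(b)} For the signal coordinate, the gradient along $\vb$ contributed by each in-batch sample carries sign $-y_i\,\sgn(\ind[j=y_i]-p_j)$, which is consistent across samples with $y_i=j$ and dominates the batched sum; hence $\gamma_{j,r}^{(t)}$ grows by $\Omega(\eta)$ per sign update and reaches $\Omega(1)$ within $T=\poly(n)/\eta$ iterations. \emph{(c)} For each noise coordinate $k>1$, only $O(Bs/d)=o(1)$ samples per batch carry support there, the Gaussian signs of $\bxi_i[k]$ randomize across batches, and a martingale concentration argument shows that the signed noise increments cancel across iterations; the weight-decay contribution $\lambda \wb_{j,r}^{(t)}$ further contracts residual noise by a factor $(1-\Omega(\lambda\eta))$ per step, keeping $\max_{r,i}|\rho_{j,r,i}^{(t)}| = o(1)$. \emph{(d)} Plugging these bounds into $F_y(\Wb^{(T)},\xb) - F_{-y}(\Wb^{(T)},\xb)$ on a fresh test $(\xb,y)\sim\cD$ and exploiting the independence of test noise from $\{\bxi_i\}$ gives the $o(1)$ test error.

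The main obstacle will be maintaining the sign approximation and the signal--noise coefficient bounds \emph{simultaneously} along a nonstationary trajectory: the approximation quality depends on $\vb_{j,r}^{(t)}$ being well-conditioned, which depends on per-coordinate gradient magnitudes, which themselves depend on $\gamma$ and $\rho$. A tight inductive invariant that co-evolves (i) per-coordinate gradient magnitudes, (ii) the EMA ratio $\mb_{j,r}^{(t)}/(\sqrt{\vb_{j,r}^{(t)}}+\epsilon)$, and (iii) the coefficients $\gamma,\rho$ is required, with a careful separate treatment of the ``near-threshold'' iterations where the sign approximation degrades. A further delicacy is the feature-noise term $-\alpha y\vb$ embedded in $\bxi_i$: with an oversized batch its contributions at the signal coordinate align coherently across samples and corrupt the sign (recovering the failure mode of Theorem~\ref{thm:adam_large_batch}), so the lower bound $n/B\ge\Theta(\log\epsilon^{-1})$ must be used quantitatively---rather than merely qualitatively---to rule this out and preserve feature learning.
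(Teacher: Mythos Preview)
Your overall architecture---SignSGD approximation of Adam, then tracking a feature coefficient against noise coefficients---matches the paper. The gap is in step~(c), your noise-suppression mechanism.

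You claim that for a noise coordinate $k>1$, ``the Gaussian signs of $\bxi_i[k]$ randomize across batches'' so that a martingale argument yields cancellation. This is false under the paper's data model. By the non-overlapping support lemma (Lemma~\ref{lemma:nonoverlap_probability}), with high probability the supports $\cB_i$ are pairwise disjoint, so for any fixed $k\in\cB_i$ exactly \emph{one} training sample---namely sample $i$---ever contributes a gradient there. The sign $\sgn(\bxi_i[k])$ is fixed for the entire run; every time sample $i$ lands in the batch, the SignSGD step at coordinate $k$ pushes $\wb_{j,r}[k]$ in the \emph{same} direction $\sgn(\bxi_i[k])$. There is no sign randomization and no martingale cancellation. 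Your $O(Bs/d)$ count is also not the relevant quantity: for fixed $k\in\cB_i$ the question is simply whether $i\in\cI_t$, which happens once per epoch.

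The mechanism the paper actually uses is a ``forget then decay'' argument exploiting Adam's exponential moving average together with the $L_2$ term in the gradient. When $i\in\cI_t$, the noise coordinate grows; for the next $o(\log(\lambda\eta)^{-1})$ iterations the first moment $\mb_{j,r}^{(t)}[k]$ still remembers $\bxi_i[k]$ and the coordinate may continue to grow. But once $\tau_0\gtrsim\log(\lambda\eta)^{-1}$ iterations have elapsed, $\beta_1^{\tau_0},\beta_2^{\tau_0/2}$ have decayed below $\Theta(\lambda\eta)$, the only surviving term in $g_{t,j,r}^{(t)}[k]$ is $\lambda\wb_{j,r}^{(t)}[k]$, and the normalized Adam step becomes $\sgn(\wb_{j,r}^{(t)}[k])\cdot\Theta(1)$ (Lemma~\ref{lemma:adam_closeness}, third case). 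This drives $\wb_{j,r}[k]$ back toward zero by an \emph{additive} $\Theta(\eta)$ per step---not the multiplicative $(1-\Omega(\lambda\eta))$ contraction you wrote, which would be the AdamW picture. The batch-size condition $n/B\ge\Theta(\log\epsilon^{-1})=\Theta(\log(\lambda\eta)^{-1})$ is exactly what guarantees that the number of decay steps per epoch, $n/B-o(\log(\lambda\eta)^{-1})$, dominates the number of growth steps, so the net change in $\langle\wb_{j,r},\bxi_i\rangle$ over each epoch is non-positive and noise memorization stays at its initialization scale (Lemma~\ref{lemma:adam_less_noise_mem}). Your last paragraph correctly senses that $n/B\ge\Theta(\log\epsilon^{-1})$ must be used quantitatively, but the place it enters is here---bounding the momentum memory window against the revisit time---not in controlling any coherent alignment of $-\alpha y\vb$ across in-batch samples.
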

Our theoretical results demonstrate that mini-batch Adam achieves near-perfect test accuracy when the ratio  $n/B$ is large, significantly outperforming its large-batch counterpart which exhibits only random-guessing performance. This advantage can be attributed to three fundamental properties of stochastic Adam optimization: First, since the feature vector is shared across all data points, its learning remains robust regardless of batch size. In contrast, noise vectors are data-specific and vary across different samples. When using mini-batches, only a subset of noise vectors is exposed during each update, creating an inherent asymmetry in learning dynamics. More importantly, Adam's coordinate-wise normalization amplifies this effect: it maintains consistent learning rates for shared features while substantially slowing down noise memorization. This selective suppression of noise learning explains the superior generalization performance of mini-batch Adam compared to large-batch implementations.

Besides the results on the generalization performance, we further deliver the following corollary, which states the feasible range of the weight decay in Adam.
\noindent Theorems~\ref{thm:adam_large_batch} and~\ref{thm:adam_mini_batch} directly yield:
\begin{corollary}[Effective weight decay in Adam]\label{cor:adam_wd_ub}
    Suppose the same conditions as in Theorem~\ref{thm:adam_large_batch} and~\ref{thm:adam_mini_batch}. If $\lambda=\omega(\sigma_0^{q-2})$, then with probability at least $1-n^{-1}$, training stuck at the initialization.
\end{corollary}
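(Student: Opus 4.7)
The plan is to exploit a simple ``scale race'' at initialization: when $\lambda = \omega(\sigma_0^{q-2})$, the weight-decay part $\lambda \wb_{j,r}^{(t)}$ of the Adam gradient strictly dominates the data-dependent part in every coordinate from the very first step, so the sign of the gradient locks to the sign of the weight and the SignSGD-type dynamics established in the proof of Theorem~\ref{thm:adam_mini_batch} trap every coordinate within an $O(\eta)$-neighborhood of zero.

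First I would record the initial-scale bounds that are by now standard in this line of work. Since $\wb_{j,r}^{(0)} \sim \cN(\mathbf{0}, \sigma_0^2 \bI_d)$, Gaussian concentration gives $|w_{j,r}^{(0)}[k]| = \tilde\Theta(\sigma_0)$ at typical coordinates, and both activation arguments $\la\wb_{j,r}^{(0)}, y\vb\ra$ (a single Gaussian entry, by the $1$-sparsity of $\vb$) and $\la\wb_{j,r}^{(0)}, \bxi_i\ra$ (a Gaussian of variance $\sigma_0^2\|\bxi_i\|_2^2 = \tilde\Theta(\sigma_0^2)$) are of magnitude $\tilde O(\sigma_0)$. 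Because $\sigma'(x) = q[x]_+^{q-1}$, each term contributing to $\tfrac{1}{B}\sum_{i\in\cI_t}\nabla_{\wb_{j,r}}L_i(\Wb^{(0)})$ is at most $\tilde O(\sigma_0^{q-1})$ per coordinate, while the weight-decay entry $\lambda w_{j,r}^{(0)}[k] = \Theta(\lambda\sigma_0)$ is of strictly larger order under the hypothesis $\lambda = \omega(\sigma_0^{q-2})$. Therefore $\sgn\bigl(g_{0,j,r}^{(0)}[k]\bigr) = \sgn\bigl(w_{j,r}^{(0)}[k]\bigr)$ uniformly in $(j,r,k)$.

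Next I would invoke the SignSGD approximation already developed in Appendix~\ref{sec:proof} (its activation threshold $|g^{(t)}| \gtrsim \tilde\Theta(\eta)$ is comfortably met, because $\lambda\sigma_0 \gg \eta$ for $\eta = 1/\poly(n)$ of sufficient degree) to reduce the Adam update to the effective scalar recursion $w_{j,r}^{(t+1)}[k] \approx w_{j,r}^{(t)}[k] - \eta\,\sgn\bigl(w_{j,r}^{(t)}[k]\bigr)$. A direct induction then shows that (i) $|w_{j,r}^{(t)}[k]|$ monotonically shrinks until it enters the band $[0, 2\eta]$ and afterwards oscillates inside it; and (ii) throughout this evolution the inductive hypothesis $|w_{j,r}^{(t)}[k]| \le C\sigma_0$ is preserved, so the activation arguments remain $\tilde O(\sigma_0)$ and the ratio $\lambda|w|/|w|^{q-1} = \lambda/|w|^{q-2}$ only grows, which keeps weight decay dominant at every step. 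Hence both the signal projection $\la\wb_{j,r}^{(t)},\vb\ra$ and every noise projection $\la\wb_{j,r}^{(t)},\bxi_i\ra$ remain $O(\sigma_0)$ for all $t \le T$, which is precisely the sense in which training never escapes the initialization regime; in particular no coefficient in the signal/noise decomposition of Appendix~\ref{sec:preliminaries} ever grows to the $\Theta(1)$ scale that learning would require.

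The principal obstacle is the late-time regime in which $|w_{j,r}^{(t)}[k]|$ has shrunk to $O(\eta)$, because the raw gradient magnitude can then drop below the $\tilde\Theta(\eta)$ threshold that validates the cleanest form of the SignSGD approximation. I would handle this by splitting the analysis into a transient ``shrinking phase'' (while $|w| = \omega(\eta)$, the approximation is unambiguous) and a long-time ``oscillation phase'' in which I track $\mb_{j,r}^{(t)}$ and $\vb_{j,r}^{(t)}$ directly and show that their accumulated contribution to each feature/noise projection stays $O(\eta)$, since each step displaces $w$ by at most $\eta$ and the gradient sign can flip at most once per two steps. The feature and noise projections therefore acquire only $o(\sigma_0)$ additional mass over the whole horizon $T = \poly(n)/\eta$, which preserves the conclusion; the remainder is routine bookkeeping against the concentration inequalities already proved in Appendix~\ref{sec:preliminaries}.
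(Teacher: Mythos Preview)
Your proposal is correct and follows essentially the same route as the paper: both arguments compare the weight-decay contribution $\lambda\,w_{j,r}^{(t)}[k] = \tilde\Theta(\lambda\sigma_0)$ against the data-gradient contribution $\tilde O(\sigma_0^{q-1})$ at initialization, invoke the SignSGD approximation (Lemma~\ref{lemma:adam_closeness}) to reduce the Adam update to a sign step in the direction of $-\sgn(w_{j,r}^{(t)}[k])$, and then argue inductively that no coordinate ever escapes its $\tilde\Theta(\sigma_0)$ scale. Your treatment of the late-time oscillation phase is more explicit than the paper's, which simply asserts that ``the updates remain in the regularization-dominated regime'' without separately analyzing the $|w|=O(\eta)$ band; one minor imprecision is your ratio $\lambda|w|/|w|^{q-1}$, since the data term is $\sigma'(\la\wb_{j,r},\bxi_i\ra)$ rather than $\sigma'(w[k])$, but the monotonicity you need still holds once all coordinates shrink.
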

This corollary provides a theoretical upper bound on the effective weight decay that allows Adam to successfully train models, aligning well with previous empirical observations that Adam typically performs better with small weight decay values compared to AdamW \citep{loshchilov2018decoupled}. 
This sensitivity arises because weight decay regularization is implicitly entangled with the normalization step in Adam, i.e., when the gradient of weight decay is greater than that of the cross-entropy loss, it will fully dominate the Adam update. In the next subsection, we will show that weight decay will exhibit a different behavior in AdamW, leading to a different feasible range for $\lambda$.

\subsection{Theoretical Results for AdamW}
We first establish the theoretical results of Adamw in Theorem~\ref{thm:adamw_large_batch}  under large-batch training.
\begin{theorem}[Large-batch AdamW]\label{thm:adamw_large_batch}
    Suppose $\eta=\frac{1}{\poly(n)}$, $\lambda=\tilde\Omega(\frac{B^2}{n}\land 1)$ and $\lambda=\tilde O(1)$, we train our CNN model in Definition~\ref{def:model} on loss function~\eqref{eq:adamw_loss} for $T=\frac{\poly(n)}{\eta}$ epochs using AdamW~\eqref{eq:adamw_upd} with batch size $B$ satisfies $\frac{n}{B}=\Theta(1)$ or $\frac{n}{B}=o(s\sigma_p)$. Then with probability at least $1-n^{-1}$, we have
    \begin{itemize}[leftmargin = *]
        \item The training error is zero: $\mathrm{err}_\cS(\Wb^{(T)}) = 0$.
        \item The test error is high: $\mathrm{err}_\cD(\Wb^{(T)}) \ge \frac{1}{2}-o(1)$.
    \end{itemize}
\end{theorem}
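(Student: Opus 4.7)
The plan is to adapt the sign-based large-batch argument used for Theorem~\ref{thm:adam_large_batch} to the decoupled weight-decay update of AdamW, while extending the valid batch-size range from $\Theta(n)$ to the broader window $n/B = \Theta(1)$ or $n/B = o(s\sigma_p)$ in which per-batch gradient averaging still mimics full-batch sign updates at noise coordinates. The starting point is a SignSGDW approximation: a Bernstein-type concentration argument shows that at every active coordinate $k$ and every $t\le T$, the per-coordinate stochastic gradient magnitude stays $\tilde\Omega(\eta)$, so that $\mb_{j,r}^{(t)}[k]/(\sqrt{\vb_{j,r}^{(t)}[k]}+\epsilon) = \sgn(g_{t,j,r}^{(t)}[k]) + o(1)$. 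The AdamW update~\eqref{eq:adamw_upd} then reduces to $w_{j,r}^{(t+1)}[k] = (1-\eta\lambda)\,w_{j,r}^{(t)}[k] - \eta\,\sgn(g_{t,j,r}^{(t)}[k]) + o(\eta)$, whose discrete fixed point when the sign is consistently one-sided has magnitude $1/\lambda + o(1)$; the lower bound $\lambda = \tilde\Omega(B^2/n \wedge 1)$ is calibrated precisely so that the shrinkage $(1-\eta\lambda)$ dominates the per-iteration batch-resampling fluctuations throughout the $T = \poly(n)/\eta$ iterations.

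The next step is to decompose $\wb_{j,r}^{(t)} = \gamma_{j,r}^{(t)}\vb + \sum_i \rho_{j,r,i}^{(t)}\bxi_i/\|\bxi_i\|_2^2$ and track per-coordinate dynamics separately at the signal coordinate $k=1$ and noise coordinates $k\neq 1$. At every noise coordinate touched by some training $\bxi_i$, the consistent aggregated sign drives $|w_{j,r}^{(t)}[k]|$ toward $1/\lambda + o(1)$; summing over the $s$ active coordinates of each $\bxi_i$ then yields $|\la\wb_{j,r}^{(T)},\bxi_i\ra| = \Theta(s\sigma_p/\lambda)$ for every training point. Because $s\sigma_p \gg 1$, the activation derivative at the noise patch dominates that at the signal patch by a factor $(s\sigma_p)^{q-1}$, so the gradient at coordinate $k=1$ is eventually dominated by the feature-noise term $\bxi_i[1] = -\alpha y_i$ rather than the true signal $y_i\vb[1]$; this causes $\sgn(g_{t,j,r}^{(t)}[1])$ to flip (or be effectively cancelled) compared with the pure-signal case, so the signal coordinate fails to learn in the correct direction---this is the key mechanism behind large-batch AdamW's generalization failure.

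Translating this into generalization bounds: on the training set, the memorized noise contribution $\Theta((s\sigma_p/\lambda)^q)$ completely dominates any residual signal contribution at coordinate $1$, so $F_{y_i}(\Wb^{(T)},\xb_i) > F_{-y_i}(\Wb^{(T)},\xb_i)$ for every $i$, giving $\mathrm{err}_\cS(\Wb^{(T)}) = 0$. For a fresh test point $(\xb,y)\sim\cD$, the independent noise $\bxi$ shares on average only $s^2/d = o(1)$ coordinates with any training support $\supp(\bxi_i)$, so the memorized noise contributes $o(1)$ to the test output while the reversed/suppressed signal at coordinate $1$ provides no correct-class preference; the symmetric memorization across $\pm 1$ neurons then yields $\mathrm{err}_\cD(\Wb^{(T)}) \ge 1/2 - o(1)$.

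The main obstacle is handling the intermediate regime where $\omega(1) \le n/B = o(s\sigma_p)$: the batch sum at each noise coordinate involves only $\approx Bs/d$ nonzero summands with nontrivial fluctuations, yet the batch sign must track the full-batch sign uniformly over all $T$ iterations and all $d$ coordinates. This requires a careful Bernstein bound coupled with induction on $t$, and the lower bound $\lambda = \tilde\Omega(B^2/n \wedge 1)$ is exactly what makes the $(1-\eta\lambda)$ contraction absorb these fluctuations so that the memorization fixed points at $\pm 1/\lambda$ are reached and maintained. A secondary subtlety is timing the feature-noise reversal at coordinate $1$: one must verify that the noise inner products reach magnitude $\Omega(1)$ (so $\alpha\,\sigma'_{\text{noise}} \gg \sigma'_{\text{signal}}$) before any signal accumulation could complete, which follows from comparing per-step growth at the $\Theta(ns)$ noise coordinates against that at the single signal coordinate, combined with the upper bound $\lambda = \tilde O(1)$.
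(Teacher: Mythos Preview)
Your high-level strategy matches the paper's: approximate AdamW by SignSGDW, show noise memorization outpaces feature learning in the large-batch regime so that the feature-noise term $-\alpha y_i$ eventually flips the sign at coordinate~1, and conclude zero training error but test error at least $1/2 - o(1)$ via fresh noise. However, two concrete steps in your plan would not go through as written.

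\textbf{The $1/\lambda$ fixed-point claim ignores logistic saturation.} You argue each noise coordinate is driven to $\pm 1/\lambda$ and hence $|\langle\wb_{j,r}^{(T)},\bxi_i\rangle| = \Theta(s\sigma_p/\lambda)$. But the sign stays ``consistently one-sided'' only while $|g_{t,j,r}^{(t)}[k]|$ exceeds the approximation threshold, and $g[k] \propto \ell_{j,i}^{(t)}\sigma'(\langle\wb,\bxi_i\rangle)\bxi_i[k]$. Once $\langle\wb_{y_i,r},\bxi_i\rangle$ reaches $\tilde\Theta(1)$---specifically, order $(\log(\lambda\eta)^{-2})^{1/q}$---the logistic derivative collapses to $\ell_{j,i}^{(t)} = O((\lambda\eta)^2)$, the gradient drops below the sign-approximation threshold, and the decoupled decay $(1-\eta\lambda)$ halts further growth. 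The paper handles this via an explicit two-stage split: Stage~I runs for $T_0 = \tilde O(1/(\eta s\sigma_p))$ epochs until noise memorization hits $\tilde\Theta(1)$ and feature learning has flipped to $\langle\wb_{j,r},j\vb\rangle = -\tilde\Theta(n/(Bs\sigma_p))$; Stage~II then shows both quantities are \emph{maintained} at these polylog levels by balancing the vanishing $\ell$ against weight decay. Your fixed-point calculus skips this saturation mechanism entirely, and the resulting scale $s\sigma_p/\lambda$ is not what actually occurs.

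\textbf{The ``$Bs/d$ nonzero summands'' picture and the Bernstein bound are based on a structural misunderstanding.} By the non-overlapping-support lemma, each noise coordinate $k \in \cB_i$ belongs to \emph{exactly one} training sample $i$. Hence the mini-batch gradient at $k$ is nonzero only in the single iteration per epoch when $i \in \cI_t$; there is no averaging over multiple summands and no batch-to-batch fluctuation to control. What you actually need (and what the paper proves) is that the Adam moving average at coordinate $k$ retains the correct sign for $\Theta(1)$ iterations after sample $i$ is visited, which is a deterministic statement about the $\beta_1,\beta_2$ geometric decay, not a concentration bound. Your proposed role for the lower bound $\lambda = \tilde\Omega(B^2/n \wedge 1)$---absorbing batch-resampling fluctuations---is therefore not the operative one in the large-batch proof.
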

The results for large-batch AdamW closely resemble those of large-batch Adam in Theorem~\ref{thm:adam_large_batch}: the learned model consistently exhibit test errors of at least $1/2-o(1)$, performing no better than random guessing. This phenomenon arises from the training dynamics in the early stages, where the influence of weight decay is minimal. As a result, both Adam and AdamW exhibit similar behavior, tending to fit noise. By the time the decoupled weight decay in AdamW begins to take effect, the model has already overfit to the feature noise $-\alpha y\vb$. The weight decay then guides the model toward nearby local minima, effectively preserving the previously memorized noise. Then at test time, this overfitting to feature noise causes the model to predict labels that are systematically misaligned with the true labels, leading to test performance that is no better than random guessing.

In contrast to the results for large-batch in Theorem~\ref{thm:adamw_large_batch}, the following Theorem~\ref{thm:adamw_mini_batch} characterizes the generalization ability of mini-batch AdamW.

\begin{theorem}[Mini-batch AdamW]\label{thm:adamw_mini_batch}
    Suppose $\eta=\frac{1}{\poly(n)}$, $\lambda=\tilde\Omega(\frac{B^2}{n}\land 1)$ and $\lambda=\tilde O(1)$, we train our CNN model in Definition~\ref{def:model} on loss function~\eqref{eq:adamw_loss} for $T=\frac{\poly(n)}{\eta}$ epochs using stochastic AdamW~\eqref{eq:adamw_upd} with batch size $B$ satisfies $\frac{n}{B}\ge\Theta(\log\epsilon^{-1})$ and $\frac{n}{B}=\omega(s\sigma_p\vee n^{1/2})$. Then with  probability at least $1-n^{-1}$, we have
    \begin{itemize}[leftmargin = *]
        \item The training error is zero: $\mathrm{err}_\cS(\Wb^{(T)}) = 0$.
        \item The test error is near-zero: $\mathrm{err}_\cD(\Wb^{(T)}) = o(1)$.
    \end{itemize}
\end{theorem}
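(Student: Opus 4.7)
The plan is to adapt the mini-batch Adam analysis of Theorem~\ref{thm:adam_mini_batch} to the decoupled weight decay setting, where the key structural change is that weight decay appears as a uniform shrinkage $(1-\eta\lambda)$ on every weight coordinate rather than entering through the normalized gradient path. I would begin with a signal-noise decomposition
\begin{align*}
\wb_{j,r}^{(t)} = (1-\eta\lambda)^t \wb_{j,r}^{(0)} + \gamma_{j,r}^{(t)} \cdot j\vb + \sum_{i=1}^n \rho_{j,r,i}^{(t)} \cdot \bxi_i/\|\bxi_i\|_2^2,
\end{align*}
tracking the scalar feature coefficient $\gamma_{j,r}^{(t)}$ and per-sample noise coefficients $\rho_{j,r,i}^{(t)}$ through the AdamW update~\eqref{eq:adamw_upd}. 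The first step is to invoke the SignSGDW approximation mentioned in the paper's contributions: under constant $\beta_1,\beta_2$ and on any coordinate $k$ with $|g_{t,j,r}^{(t)}[k]| \ge \tilde\Theta(\eta)$, one has $\mb_{j,r}^{(t)}[k]/(\sqrt{\vb_{j,r}^{(t)}[k]}+\epsilon) = \sgn(g_{t,j,r}^{(t)}[k]) + \tilde O(\sqrt{\eta})$, so the AdamW update on active coordinates reduces to $\wb^{(t+1)} \approx (1-\eta\lambda)\wb^{(t)} - \eta\,\sgn(g^{(t)})$, while on inactive coordinates the raw update is $O(\eta)$ and dominated by the shrinkage.

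I would then run a two-stage inductive analysis on $(\gamma_{j,r}^{(t)},\rho_{j,r,i}^{(t)})$. In each iteration, the feature patch $y\vb$ appears in every mini-batch with the correct polarity on average, so the sign step on the first coordinate contributes a $\pm\eta$ increment to $\gamma_{j,r}^{(t)}$ essentially every iteration, giving $\gamma_{j,r}^{(t)} = \tilde\Theta(\min\{\eta t,\, 1/\lambda\})$ after accounting for the $(1-\eta\lambda)$ shrinkage, which is polynomially large since $\lambda = \tilde O(1)$. By contrast, $\bxi_i$ contributes to $\rho_{j,r,i}^{(t)}$ only in the $B/n$ fraction of iterations with $i\in\cI_t$, while the shrinkage acts on $\rho_{j,r,i}^{(t)}$ in every iteration. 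Writing $\rho_{j,r,i}^{(T)} \approx \sum_{t\le T,\, i\in\cI_t} \pm\eta(1-\eta\lambda)^{T-t}$, a Bernstein-type concentration of this sum over the random visit indicators gives a deterministic part of order $B/(n\lambda)$ and a stochastic fluctuation of order $\sqrt{\eta B/(n\lambda)}$. The conditions $\lambda = \tilde\Omega(B^2/n)$ and $n/B = \omega(s\sigma_p\vee n^{1/2})$ are exactly what force both terms to be $o(\gamma_{j,r}^{(T)}) \approx o(1/\lambda)$, i.e.\ the shrinkage outruns the sign-step growth on noise coordinates.

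Given the coefficient control, the classification conclusions follow by the standard homogeneous-activation argument used in the Adam case. On a training sample, the dominant feature term in $\la\wb_{j,r}^{(T)},y\vb\ra$ drives $F_{y_i}(\Wb^{(T)},\xb_i) > F_{-y_i}(\Wb^{(T)},\xb_i)$, so $\mathrm{err}_\cS(\Wb^{(T)})=0$; on a fresh test sample whose noise $\bxi$ is independent of all training noises, the cross-noise inner products $\la\bxi,\bxi_i\ra$ are $\tilde O(\sigma_p\sqrt{s})$ by Gaussian concentration and hence contribute a lower-order term once raised to the $q$-th power, while the feature term is $\tilde\Theta(1/\lambda^q)$, yielding $\mathrm{err}_\cD(\Wb^{(T)}) = o(1)$.

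The main obstacle I anticipate is quantifying the noise suppression precisely in the presence of three coupled sources of randomness: mini-batch sampling, the SignSGDW approximation error on coordinates where $|g^{(t)}|$ hovers near the $\tilde\Theta(\eta)$ threshold, and cross-sample correlations induced by the shared random sparsity pattern $\sbb$ in Definition~\ref{def:data_distribution}. In particular, I would need uniform-over-$t$ concentration for the partial sums $\sum_{t\le T}(1-\eta\lambda)^{T-t}\ind[i\in\cI_t]$ to avoid a $\poly(n)$ union-bound loss over the $T = \poly(n)/\eta$ iterations, and I would control the transition regime where the sign approximation fails by showing those coordinates contribute at most $O(\eta)$ per step---subdominant to the $\eta\lambda$ shrinkage under the assumed lower bound on $\lambda$. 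The scaling $\lambda\gtrsim B^2/n$ is precisely what makes this competition between the per-batch sign growth (whose stochastic scale is set by $\sqrt{B/n}$) and the exponential decay $(1-\eta\lambda)^t$ tight enough to ensure noise coefficients remain $o(1/\lambda)$ throughout training.
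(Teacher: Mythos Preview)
Your two-stage SignSGDW outline and the insight that noise coordinates receive sign increments only in a $B/n$ fraction of iterations while shrinkage acts every step are correct and match the paper's mechanism, but two steps do not go through as written. First, the signal--noise decomposition cannot be maintained under sign-type updates: when $i\in\cI_t$, the step moves each coordinate $k\in\cB_i$ by $\eta\,\sgn(\bxi_i[k])$, not by a multiple of $\bxi_i[k]$, so the increment to $\wb_{j,r}$ lies along $\sgn(\bxi_i)|_{\cB_i}$ rather than $\bxi_i$, and a residual orthogonal to $\mathrm{span}\{\vb,\bxi_1,\dots,\bxi_n\}$ accumulates that your expansion omits. Relatedly, the per-visit increment to $\la\wb_{j,r},\bxi_i\ra$ is $\eta\|\bxi_i\|_1=\tilde\Theta(\eta s\sigma_p)$ (not $\pm\eta$) with a \emph{deterministic} sign fixed by $\sgn(\ell_{j,i})$ (Lemma~\ref{lemma:ylj_sign}), so there is no random $\pm$ to concentrate over; the paper uses per-epoch deterministic accounting (each sample appears exactly once per epoch; Lemma~\ref{lemma:adamw_mb_general_bound}) to obtain $\la\wb_{y_i,r}^{(t)},\bxi_i\ra\le\tilde\Theta(Bs\sigma_p/n)$, and the condition $n/B=\omega(s\sigma_p)$ alone handles training-noise suppression.

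Second, you misplace the condition $n/B=\omega(n^{1/2})$: it is not needed for the training-noise bound but only for the test error, and your cross-term argument via $\la\bxi,\bxi_i\ra$ cannot close the gap precisely because the untracked residual in $\wb^{(T)}$ also contributes to $\la\wb^{(T)},\bxi\ra$ for fresh $\bxi$. The paper (Lemma~\ref{lemma:adamw_mb_generalization}) instead writes $\wb_{j,r}^{(T)}=P_S\wb_{j,r}^{(T)}+\rb+c\vb$ with $S=\mathrm{span}\{\bxi_i\}$, bounds $\|P_S\wb_{j,r}^{(T)}\|_2^2\le\lambda_{\min}^+(\Sigma)^{-1}\sum_i\la\wb_{j,r}^{(T)},\bxi_i\ra^2\le\tilde\Theta(B^2/(n\sigma_p^2))$ using the orthogonality of the training noises (Lemma~\ref{lemma:nonoverlap_probability}), bounds $\|\rb\|_2^2\le\tilde\Theta(\eta^2 d)$ via the decoupled weight decay on coordinates whose normalized moment is $o(1)$ (last case of Lemma~\ref{lemma:adamw_closeness}), and concludes that $\la\wb_{j,r}^{(T)},\bxi\ra$ conditioned on $\Wb^{(T)}$ is sub-Gaussian with variance proxy $\tilde O(B^2/n)$; thus $n/B=\omega(n^{1/2})$ is exactly what makes this $o(1)$ and yields the near-zero test error.
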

%
Under mini-batch training, AdamW achieves near-zero test error with partial similarity to Adam. The extended iterations per epoch slow early-stage noise overfitting (notably for $s$-sparse noise). However, AdamW's decoupled weight decay penalizes weights independently of gradients, exerting significant regularization only in later phases to converge toward generalizable minima. Thus, AdamW can leverage a much larger $\lambda$ than Adam, which is shown as follows:

\begin{corollary}\label{cor:adam_adamw_compare}
    Regarding the effective weight decay coefficients of Adam and AdamW for achieving good generalization performance, we have $\lambda_{\mathrm{Adam}} \sim\sigma_0^{q-2}\ll \frac{B^2}{n}\land 1 \sim \lambda_{\mathrm{AdamW}}$.
\end{corollary}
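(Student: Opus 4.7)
The plan is to obtain Corollary~\ref{cor:adam_adamw_compare} as an immediate bookkeeping consequence of the weight-decay bounds already established for Adam and AdamW, and then to verify that the data-model scalings actually force the strict separation $\sigma_0^{q-2}\ll B^2/n\land 1$.

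First, I would quote the two pieces needed for the Adam side. Corollary~\ref{cor:adam_wd_ub} shows that once $\lambda=\omega(\sigma_0^{q-2})$ the Adam update is dominated by the normalized weight-decay signal and the iterate is frozen at initialization, so no generalization can occur above this scale; this is complemented by the working range $\lambda=o(\sigma_0^{q-2}\sigma_p/n)$ from Theorems~\ref{thm:adam_large_batch} and~\ref{thm:adam_mini_batch}. Together these pin the maximal effective weight decay for Adam at $\lambda_{\mathrm{Adam}}\sim\sigma_0^{q-2}$. For AdamW, Theorem~\ref{thm:adamw_mini_batch} exhibits good generalization whenever $\lambda$ lies in the window $[\tilde\Omega(B^2/n\land 1),\,\tilde O(1)]$, which is nonempty precisely because decoupled weight decay only begins to bite once $\lambda\gtrsim B^2/n\land 1$; hence $\lambda_{\mathrm{AdamW}}\sim B^2/n\land 1$.

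To finish, I would establish the separation. Using $\sigma_0=\Theta(d^{-1/4})$ and the constraint that $s=\Theta(d^{1/2}/n^2)$ is a valid sparsity (hence $d=\Omega(n^4)$), we obtain $\sigma_0^{q-2}=O(d^{-(q-2)/4})=O(n^{-(q-2)})$ for $q\ge 3$. In any mini-batch regime $B\ge 1$ gives $B^2/n\land 1\ge 1/n$, and in the overparametrized regime where $d$ grows faster than $n^4$ or $q>3$ the gap becomes strictly asymptotic, yielding $\sigma_0^{q-2}\ll B^2/n\land 1$ as claimed.

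The entire argument is mechanical once Corollary~\ref{cor:adam_wd_ub} and Theorem~\ref{thm:adamw_mini_batch} are in hand, so the \emph{real} work lives inside those earlier statements. The only conceptual point worth highlighting is the structural reason for the gap: Adam's coordinate-wise normalization fuses $\lambda\wb$ into the cross-entropy gradient, so the effective weight-decay budget is capped at the tiny initialization magnitude $\sigma_0^{q-2}$, whereas AdamW's decoupled update separates weight decay from the normalized gradient and can therefore tolerate an $\tilde O(1)$ budget independently of initialization scale.
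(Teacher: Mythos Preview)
Your proposal is correct and follows essentially the same approach as the paper: both arguments simply read off $\lambda_{\mathrm{Adam}}\sim\sigma_0^{q-2}$ from Corollary~\ref{cor:adam_wd_ub} and $\lambda_{\mathrm{AdamW}}\sim B^2/n\wedge 1$ from Theorem~\ref{thm:adamw_mini_batch}, then invoke $\sigma_0=\Theta(d^{-1/4})$ with $q\ge 3$ and $d=\mathrm{poly}(n)$ to force the separation. Your extra step of extracting $d=\Omega(n^4)$ from the sparsity constraint $s=\Theta(d^{1/2}/n^2)\ge 1$ is a nice touch that the paper leaves implicit, and your caveat about the boundary case $q=3,\ d=\Theta(n^4),\ B=1$ is more careful than the paper's bare assertion.
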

Corollary~\ref{cor:adam_adamw_compare} reveals a fundamental gap between the effective weight decay regimes of Adam and AdamW, consistent with empirical observations. For \textbf{Adam}, the admissible weight decay $\lambda_{\mathrm{Adam}}$ is extremely small, bounded above by an initialization-dependent term $\sigma_0^{q-2}$. Consequently, even moderate weight decay values can destabilize training due to the entanglement between gradient updates and weight decay, leading to suboptimal generalization. In contrast, \textbf{AdamW} decouples weight decay from the gradient update, applying regularization directly on the weights. 
As a result, it requires a larger weight decay to effectively suppress noise overfitting and exhibits greater robustness. The lower bound $\tilde\Omega(\frac{B^2}{n}\wedge 1)$ serves as a sufficient condition ensuring that weight decay is strong enough to prevent noise amplification. This robustness is reflected in its much broader effective range—from $\tilde\Omega(\frac{B^2}{n}\wedge 1)$ up to $\tilde O(1)$—representing a large, constant-order window independent of initialization. This theoretical separation explains the empirical fact that Adam requires delicate tuning and is highly sensitive to weight decay, whereas AdamW is considerably more robust and easier to tune~\cite{loshchilov2018decoupled}. Our experiments (Figure~\ref{fig:wd}) further corroborate this prediction.


\begin{figure}[!t]
\vskip -0.1in
     \centering
     \subfigure[Training VGG16]{\includegraphics[width=0.45\linewidth]{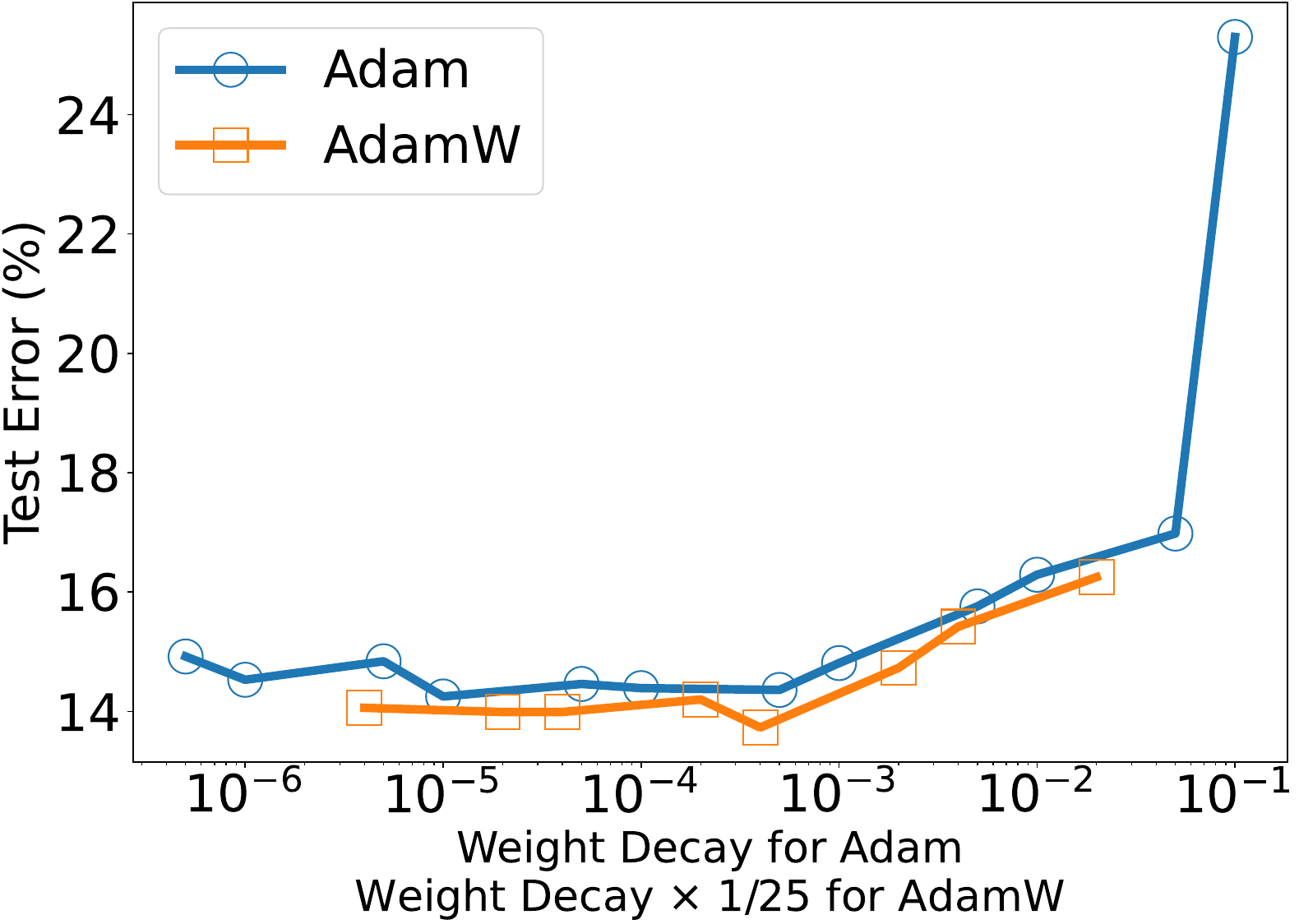}\label{figsub:wd_vgg16}} \hspace{0.2cm}
     \subfigure[Training ResNet18]{\includegraphics[width=0.45\linewidth]{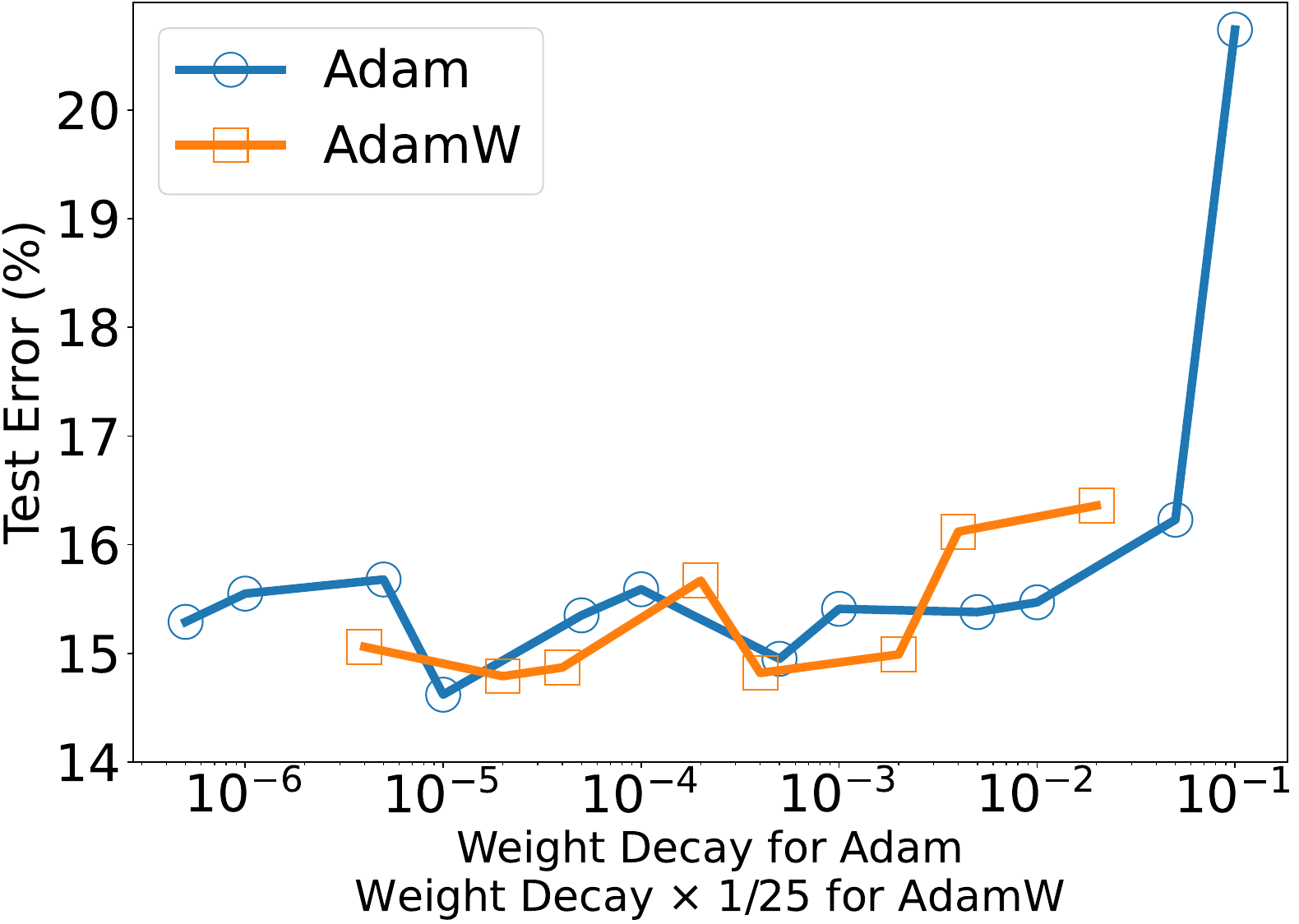}\label{figsub:wd_resnet18}}
      \vskip -0.1in
    \caption{Test error vs.\ weight decay (batch size = 16), comparing Adam and AdamW on each model.}
    \label{fig:wd}
\end{figure}

\paragraph{Experiments.} We train VGG16 and ResNet18 on CIFAR-10 with Adam ($\lambda = 5\times10^{-4}$) and AdamW ($\lambda = 1\times10^{-2}$), selecting the optimal learning rate from $\{5\times10^{-4},\,1\times10^{-4},\,1\times10^{-5}\}$ and varying the batch size to measure test error (Figure~\ref{fig:batch}). Both optimizers exhibit a sharp performance degradation once the batch size exceeds a critical threshold, in line with theoretical predictions of large-batch generalization collapse (Theorem~\ref{thm:adam_large_batch} and~\ref{thm:adamw_large_batch}). Separately, at a fixed batch size of 16 (Figure~\ref{fig:wd}), we find that Adam’s error spikes for $\lambda>0.05$, whereas AdamW remains robust even up to $\lambda=0.5$, highlighting the benefit of decoupled weight decay in adaptive optimizers. This observation is consistent with our theoretical analysis (Corollary~\ref{cor:adam_wd_ub} and~\ref{cor:adam_adamw_compare}). 
Additional experiments, including the dynamics of feature learning and noise memorization (Figures~\ref{fig:adam_dynamics},~\ref{fig:adamw_dynamics}), sensitivity to weight decay (Figures~\ref{fig:adam_large_lambda},~\ref{fig:adamw_large_lambda}), error bars across random seeds and momentum parameters (Figures~\ref{fig:eb_seed_batch},~\ref{fig:eb_seed_wd},~\ref{fig:eb_beta_batch},~\ref{fig:eb_beta_wd}), and large-scale vision experiments with ResNet-50 on ImageNet-1K (Figures~\ref{fig:imagenet_val5_error_vs_bs},~\ref{fig:imagenet_val5_error_vs_wd_bs64}), all corroborate our theoretical findings. Complete experimental details and results are provided in Appendix~\ref{sec:experiments}.

\section{Proof Outline of the Main Results}\label{sec:proof_outline}
In this section, we mainly outline the proof sketch for the theorem~\ref{thm:adam_mini_batch} in Section~\ref{sec:main_results}. Proof sketches for remaining theorems are deferred to Appendix~\ref{sec:proof_sketch_app}. Following the two-stage analysis framework of \citet{cao2022benign,zou2023understanding}, we decompose the proof into two distinct stages: 

{\bf Stage I: Pattern Learning.} During the initial phase of training, the effect of regularization is negligible. The model operates in an underfitting regime, where it rapidly learns dominant patterns in the training data, leading to improved empirical performance on test error. 

{\bf Stage II: Regularization.} As training progresses, the model's classification accuracy on the training set approaches convergence, resulting in diminished gradient magnitudes. At this stage, regularization dominates the optimization dynamics, steering the model converge to a local minima. Due to the nonconvex nature of the loss landscape, the model retains the patterns acquired during the pattern learning stage.

Furthermore, motivated by the behavioral similarity between Adam and SignGD when the learning rate is sufficiently small or $\beta_1,\beta_2$ approach zero \citep{balles2018dissecting, bernstein2018signsgd}, we present results for SignSGD. We subsequently extend these results to stochastic Adam, which provided in Appendix~\ref{sec:proof}. The update rules for SignSGD are given as follows:
\begin{align}
    (\text{SignSGD})\quad &\wb_{j,r}^{(t+1)} = \wb_{j,r}^{(t)} - \eta\cdot\sgn(g_{t,j,r}^{(t)}) , \label{eq:signsgd_upd}
\end{align}
where $g_{t,j,r}^{(t)}$ in~\eqref{eq:signsgd_upd} denotes the stochastic gradient of~\eqref{eq:adam_loss}. The detailed update rules of Adam with the SignSGD approximation are provided in Eqs.~\eqref{eq:signgd_w_v_upd} and~\eqref{eq:signgd_w_xi_upd}, while those of AdamW are given in Eqs.~\eqref{eq:signgdw_w_v_upd} and~\eqref{eq:signgdw_w_xi_upd}.

Next, following the framework of feature learning \citep{allen2020towards,cao2022benign,zou2023understanding,han2025on}, we primarily focus on two key quantities: 1) {\bf Feature Learning} $\la \wb_{j,r},j\vb\ra$: This term captures the alignment between the learned weight vector $\wb_{j,r}$ and the true feature direction $j\vb$, reflecting the model's ability to extract meaningful latent structures from the data. 2) {\bf Noise Memorization} $\la \wb_{y_i,r}, \bxi_i\ra$: This term measures the correlation between $\wb_{y_i,r}$ and the noise patch $\bxi_i$ of individual samples, characterizing the extent to which the model overfits to stochastic perturbations or idiosyncrasies in the training set. This decomposition allows us to separately analyze the model's generalization behavior (driven by feature learning) and its memorization capacity (influenced by noise fitting).

We first clarify that, although the sketch appears straightforward, the underlying process is non-trivial. Numerous intricate and interesting details arise, which we elaborate on in the proof presented in Appendix~\ref{sec:proof}.

We begin by characterizing the dynamics of feature learning and noise memorization under large-batch training, to facilitate a comparative analysis with mini-batch regimes, as formalized in Lemma~\ref{lemma:signgd_largebs_general_bound}.

\paragraph{Large-batch Adam.} We consider $\frac{n}{B}=\Theta(1)$, which is the large-batch setting.
\begin{lemma}\label{lemma:signgd_largebs_general_bound}
    Given the training dataset $\cS$, if $\frac{n}{B}=\Theta(1)$, $\eta=1/\poly(d)$ and $0<\lambda=o(\sigma_0^{q-2}\sigma_p/n)$, then for any $t\le T_0$ with $T_0=\tilde O(\frac{1}{\eta s\sigma_p})$ and any $i\in[n]$,
    \begin{align*}
        \la\wb_{j,r}^{(t+1)},j\vb\ra \le \la\wb_{j,r}^{(t)},j\vb\ra + \Theta(\eta),\quad \la\wb_{y_i,r}^{(t+1)},\bxi_i\ra = \la\wb_{y_i,r}^{(t)},\bxi_i\ra + \tilde\Theta(\eta s\sigma_p).
    \end{align*}
\end{lemma}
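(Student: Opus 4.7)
The plan is to work under the SignSGD approximation, where the update reduces to $\wb_{j,r}^{(t+1)}=\wb_{j,r}^{(t)}-\eta\,\sgn(g_{t,j,r}^{(t)})$, and to convert each inner-product bound into a purely coordinate-wise sign analysis. For the feature direction, the $1$-sparsity of $\vb$ in Definition~\ref{def:data_distribution} immediately gives
\[
\la\wb_{j,r}^{(t+1)}-\wb_{j,r}^{(t)},\,j\vb\ra \;=\; -\eta\,j\,\sgn\bigl(g_{t,j,r}^{(t)}[1]\bigr)\in\{-\eta,\,0,\,\eta\},
\]
which already delivers the first conclusion. All the real work goes into the second, for which
\[
\la\wb_{y_i,r}^{(t+1)}-\wb_{y_i,r}^{(t)},\,\bxi_i\ra \;=\; -\eta\sum_{k\in\supp(\bxi_i)}\sgn\bigl(g_{t,y_i,r}^{(t)}[k]\bigr)\,\bxi_i[k],
\]
so the task is to pin the sign of $g_{t,y_i,r}^{(t)}[k]$ at each $k\in\supp(\bxi_i)$.

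At such $k$, I decompose $g_{t,y_i,r}^{(t)}[k]=A_i[k]+A_{-i}[k]+\lambda\wb_{y_i,r}^{(t)}[k]$, where the self-term $A_i[k]\propto -\ell_i'\,\sigma'(\la\wb_{y_i,r}^{(t)},\bxi_i\ra)\,\bxi_i[k]$ (with $-\ell_i'>0$) comes from sample $i$, the cross-term $A_{-i}[k]$ aggregates analogous contributions from other in-batch samples whose noise support hits $k$, and the last piece is weight decay. Three estimates conspire to make the self-term dominate with its sign equal to $-\sgn(\bxi_i[k])$: (i) since $s=\Theta(d^{1/2}/n^2)$ forces $ns/d\ll 1$, a birthday-style union bound gives $|\{i'\neq i:k\in\supp(\bxi_{i'})\}|=O(1)$ uniformly over $k\in\supp(\bxi_i)$ with probability $\ge 1-n^{-2}$, bounding $|A_{-i}[k]|$ at the same order as $|A_i[k]|\cdot O(1)$ with independent signs that cancel on average; (ii) Gaussian anti-concentration of $\la\wb_{y_i,r}^{(t)},\bxi_i\ra$ at initialization, preserved inductively throughout Stage~I, gives $\sigma'(\la\wb_{y_i,r}^{(t)},\bxi_i\ra)=\tilde\Omega((\sigma_0\sqrt{s}\sigma_p)^{q-1})$, so $|A_i[k]|=\tilde\Omega(\sigma_0^{q-1}\sigma_p/n)$ once $B=\Theta(n)$; and (iii) $\lambda\|\wb_{y_i,r}^{(t)}\|_\infty\lesssim\lambda\sigma_0=o(\sigma_0^{q-1}\sigma_p/n)$ under the hypothesis $\lambda=o(\sigma_0^{q-2}\sigma_p/n)$. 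Together these give $\sgn(g_{t,y_i,r}^{(t)}[k])=-\sgn(\bxi_i[k])$ at every $k\in\supp(\bxi_i)$, the sum collapses to $\eta\sum_{k\in\supp(\bxi_i)}|\bxi_i[k]|$, and a $\chi$-type concentration over $s$ i.i.d.\ half-normals turns this into $\tilde\Theta(\eta s\sigma_p)$.

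To lift this one-step estimate to all $t\le T_0=\tilde O(1/(\eta s\sigma_p))$, I run an induction with invariants that $|\la\wb_{y_i,r}^{(t)},\bxi_i\ra|$ and $|\la\wb_{j,r}^{(t)},j\vb\ra|$ both stay $\tilde O(1)$ (so the anti-concentration window in (ii) persists), and that $\|\wb_{j,r}^{(t)}\|_\infty=O(\sigma_0+\eta t)$ remains $\tilde O(\sigma_0)$ (so the weight-decay bound in (iii) remains in force). The horizon $T_0$ is calibrated precisely so that the cumulative noise increment $T_0\cdot\tilde\Theta(\eta s\sigma_p)$ is still $\tilde O(1)$.

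I expect the main obstacle to be the uniform-in-$k$ sign pinning: a single coordinate at which the cross-sample term or the weight-decay term accidentally beats $A_i[k]$ would destroy the $\tilde\Theta$ lower bound on the sum. This is exactly what forces the sharp quantitative hypotheses $\lambda=o(\sigma_0^{q-2}\sigma_p/n)$ (to tame weight decay) and $s=\Theta(d^{1/2}/n^2)$ (to control collisions across noise supports), as well as the large-batch scaling $B=\Theta(n)$ that makes the self-term large enough in the averaged gradient. Once the sign control is made uniform over $k$ and the induction over $t$ is set up, the remaining ingredients---the SignSGD update identity and the Gaussian half-normal concentration---are routine.
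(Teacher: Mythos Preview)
Your overall structure---SignSGD reduction, coordinate-wise sign analysis on $\supp(\bxi_i)$, induction over $t\le T_0$---matches the paper's approach. But three steps in your plan would not close as stated.

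\textbf{Cross-sample terms.} Your birthday computation is correct but you stop too early. With $s=\Theta(d^{1/2}/n^2)$ the union bound you outline actually gives $\PP(\exists\, i\neq i',\,\cB_i\cap\cB_{i'}\neq\emptyset)\le n^2 s^2/d = O(n^{-2})$, so the noise supports are \emph{pairwise disjoint} with high probability (this is the paper's Lemma~\ref{lemma:nonoverlap_probability}). Consequently $A_{-i}[k]\equiv 0$ for every $k\in\cB_i$, and the gradient there is simply $g^{(t)}_{t,y_i,r}[k]=-\tfrac{1}{B}\ell_{y_i,i}^{(t)}\sigma'(\la\wb_{y_i,r}^{(t)},\bxi_i\ra)\bxi_i[k]+\lambda\wb_{y_i,r}^{(t)}[k]$. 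Your fallback ``$O(1)$ collisions with independent signs that cancel on average'' does not deliver uniform-in-$k$ sign control: a single collision where the cross-term has the wrong sign and comparable magnitude would flip $\sgn(g[k])$. You correctly flag this as the main obstacle, but the resolution is the non-overlap lemma, not an averaging argument.

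\textbf{Sign pinning is not literally uniform.} Even with $A_{-i}[k]=0$, the self-term $\propto|\bxi_i[k]|$ fails to dominate $\lambda|\wb^{(t)}_{y_i,r}[k]|$ at coordinates where $|\bxi_i[k]|$ is atypically small. The paper only claims sign control on a $1-o(1)$ fraction of $k\in\cB_i$ (those with $|\bxi_i[k]|\ge\tilde\Theta(\sigma_p)$); the remaining coordinates contribute at most $O(\eta s\sigma_p)$ to the inner-product increment regardless of sign, so they are absorbed into the $\tilde\Theta(\eta s\sigma_p)$ estimate. Your claim that $\sgn(g[k])=-\sgn(\bxi_i[k])$ at \emph{every} $k$ is too strong. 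Also note $k=1\in\supp(\bxi_i)$ has a different gradient formula (the feature term enters); its contribution is $O(\eta\alpha)$ and likewise absorbed.

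\textbf{Induction invariant.} Your invariant $\|\wb_{j,r}^{(t)}\|_\infty=O(\sigma_0+\eta t)=\tilde O(\sigma_0)$ is false at the horizon: $\eta T_0=\tilde\Theta(1/(s\sigma_p))\gg\sigma_0$ under Condition~\ref{cond:parameter_1}. What the paper tracks instead is that $\la\wb_{y_i,r}^{(t)},\bxi_i\ra$ grows like $\tilde\Theta(\sigma_0\sqrt{s}\sigma_p+t\eta s\sigma_p)$ while $|\wb^{(t)}[k]|\le\sigma_0+t\eta$, and then checks directly that $B^{-1}\sigma'(\la\wb^{(t)},\bxi_i\ra)|\bxi_i[k]|$ continues to dominate $\lambda|\wb^{(t)}[k]|$ throughout $t\le T_0$; this works because the self-term grows like a $(q-1)$-th power while the weight-decay term grows linearly, and $\lambda=o(\sigma_0^{q-2}\sigma_p/n)$ handles the base case.
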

We observe that Lemma~\ref{lemma:signgd_largebs_general_bound} is identical to Lemma 5.2 in \citep{zou2023understanding}, allowing us to directly extend their full-batch Adam results to the large-batch setting. The remainder of the proof follows identically, as the underlying theoretical machinery remains unchanged under this batch size scaling $\frac{n}{B}=\Theta(1)$. We observe that under large-batch setting, the optimization dynamics of Adam closely resemble those of the full-batch setting. This similarity arises because the algorithm traverses the entire dataset within few iterations, resulting in nearly identical momentum estimates and, consequently, comparable training dynamics between large-batch and full-batch regimes.

We next consider the mini-batch setting, which yields conclusions that differ fundamentally from those in the large-batch setting.

\paragraph{Mini-batch Adam.} We consider $\frac{n}{B}\ge \Theta(\log\epsilon^{-1})$, which is the mini-batch setting. 
\begin{lemma}[Stage I]\label{lemma:signgd_minibs_general_bound}
    Given the training dataset $\cS$, if $\frac{n}{B}\ge \Theta(\log\epsilon^{-1})$, $\eta=1/\poly(d)$ and $0<\lambda=o(\sigma_0^{q-2}\sigma_p/n)$, then for any $t\le T_0$ with $T_0=\tilde O(\frac{1}{\eta s\sigma_p})$ and any $i\in[n]$,
    \begin{align*}
        \la\wb_{j,r}^{\left((t+1)\cdot\frac{n}{B})\right)},j\cdot\vb\ra = \la\wb_{j,r}^{(t\cdot\frac{n}{B})},j\cdot\vb\ra + \Theta(\eta\cdot\frac{n}{B}),\quad
        \la\wb_{j,r}^{(t)},\bxi_i\ra \le \tilde\Theta(\eta s\sigma_p).
    \end{align*}
\end{lemma}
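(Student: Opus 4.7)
The plan is to exploit the SignSGD approximation of stochastic Adam established elsewhere in the paper: because $\lambda=o(\sigma_0^{q-2}\sigma_p/n)$ is negligible during Stage~I and, under $n/B\ge\Theta(\log\epsilon^{-1})$, mini-batch gradient magnitudes dominate the Adam moment-estimation residuals, each Adam update reduces to the SignSGD step $\wb_{j,r}^{(t+1)}=\wb_{j,r}^{(t)}-\eta\,\sgn(g_{t,j,r}^{(t)})$, whose coordinatewise effect is exactly $\pm\eta$. It then suffices to analyze the two scalar projections $\la\wb_{j,r}^{(t)},j\vb\ra$ and $\la\wb_{j,r}^{(t)},\bxi_i\ra$ under this surrogate.

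For the feature-learning equality I would exploit the fact that coordinate $k=1$ is excited \emph{coherently} by every data point in every batch. A direct logit calculation shows $\partial L_{i'}/\partial\wb_{j,r}[1]$ has sign $-\sgn(j)$ regardless of $y_{i'}$, because the two factors $(-\ind(j=y_{i'})+p_j)$ and $y_{i'}$ combine to a $j$-aligned product, while the Gaussian part of $\bxi_{i'}$ contributes nothing at coordinate~$1$ since $\supp(\bxi_{i'})\subset[d]\setminus\{1\}$. Hence $\sgn(g_{t,j,r}^{(t)}[1])=-\sgn(j)$ at \emph{every} iteration, yielding the exact identity $\la\wb_{j,r}^{(t+1)},j\vb\ra-\la\wb_{j,r}^{(t)},j\vb\ra=\eta$; telescoping across the $n/B$ iterations of one epoch produces the claimed $\Theta(\eta\cdot n/B)$ increment.

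For the noise-memorization bound I would fix $i$ and write $\la\wb_{j,r}^{(t+1)}-\wb_{j,r}^{(t)},\bxi_i\ra=-\eta\sum_{k\in\supp(\bxi_i)}\sgn(g_{t,j,r}^{(t)}[k])\,\bxi_i[k]$. The gradient at any $k\neq 1$ aggregates only those $i'\in\cI_t$ with $k\in\supp(\bxi_{i'})$, and under the parameter regime the expected per-coordinate overlap $Bs/d$ is vanishingly small, so generically at most one sample contributes. I would then separate two kinds of rounds: in a \emph{self-batch} round ($i\in\cI_t$) the sign is forced by $\bxi_i[k]$ itself and the single-round contribution is $\tilde\Theta(\eta s\sigma_p)$ via $\|\bxi_i\|_1=\tilde\Theta(s\sigma_p)$, while in a \emph{cross-batch} round ($i\notin\cI_t$) the sign is determined purely by noise of samples independent of $\bxi_i[k]$, so the summand is zero-mean and a martingale / Hoeffding bound keeps the accumulated cross-batch contribution subdominant. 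Combined with the fact that sample $i$ appears in a batch at most $T_0 B/n$ times during Stage~I, this yields the uniform upper bound $\la\wb_{j,r}^{(t)},\bxi_i\ra\le\tilde\Theta(\eta s\sigma_p)$.

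The hard part is maintaining both the SignSGD reduction and the coordinatewise sign stability uniformly for all $t\le T_0=\tilde O(1/(\eta s\sigma_p))$ under the batch-sampling randomness. The crux of the induction is showing that $|g_{t,j,r}^{(t)}[k]|\ge\tilde\Theta(\eta)$ at every $(t,k)$ where the sign matters, so that it survives the residual error of $\mb_{j,r}^{(t)},\vb_{j,r}^{(t)}$; on coordinate $1$ this in turn requires the feature-noise bias $\alpha=\Theta(\sigma_p\,\polylog n)$ to strictly dominate the random Gaussian fluctuation in \emph{every} batch, which is where $n/B\ge\Theta(\log\epsilon^{-1})$ enters. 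A secondary delicate point is uniformly bounding the per-coordinate collision count $\lvert\{i'\in\cI_t:k\in\supp(\bxi_{i'})\}\rvert$ over all $(t,k)$ by a union bound, which underpins the ``at most one contributor'' reduction used in the noise analysis.
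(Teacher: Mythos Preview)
Your noise-memorization argument has a genuine gap. You treat cross-batch rounds as zero-mean martingale increments ``determined purely by noise of samples independent of $\bxi_i[k]$,'' but the paper's Lemma~C.1 establishes that the supports $\cB_i$ are pairwise \emph{disjoint} with high probability, so for $k\in\cB_i$ and $i\notin\cI_t$ the cross-entropy part of the gradient at coordinate $k$ is exactly zero, not a random quantity. What remains is the weight-decay term $\lambda\wb_{j,r}^{(t)}[k]$, and here your premise that ``$\lambda$ is negligible during Stage~I'' fails: under the sign update, the step is $-\eta\,\sgn(\lambda\wb_{j,r}^{(t)}[k])=-\eta\,\sgn(\wb_{j,r}^{(t)}[k])$, a full $\Theta(\eta)$ shrinkage regardless of how small $\lambda$ is. This is precisely the mechanism the paper exploits: after sample $i$ is drawn, the Adam momentum carries the noise gradient for only $o(\log\epsilon^{-1})$ iterations (this is where $n/B\ge\Theta(\log\epsilon^{-1})$ actually enters, not your coordinate-1 story), and in the remaining $n/B-o(\log\epsilon^{-1})$ iterations weight decay drives each $|\wb_{j,r}^{(t)}[k]|$ back toward $\tilde\Theta(\eta)$, erasing the $\tilde\Theta(\eta s\sigma_p)$ gain before $i$ is resampled. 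Without this erasure, your self-batch tally alone gives $T_0\cdot\tilde\Theta(\eta s\sigma_p)=\tilde O(1)$, far larger than the claimed $\tilde\Theta(\eta s\sigma_p)$, so your martingale route cannot close the bound.

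The feature-learning half is also not self-contained. The gradient at $k=1$ is $-\tfrac{1}{B}\sum_{i\in\cI_t} y_i\ell_{j,i}^{(t)}\bigl[\sigma'(\langle\wb_{j,r}^{(t)},y_i\vb\rangle)-\alpha\,\sigma'(\langle\wb_{j,r}^{(t)},\bxi_i\rangle)\bigr]+\lambda\wb_{j,r}^{(t)}[1]$, and its sign is $-\sgn(j)$ only if the feature term dominates the $\alpha$-term; this requires the noise-memorization bound you have not yet secured. Your remark that ``$\alpha$ must dominate the random Gaussian fluctuation'' at coordinate~1 is backwards: there is no Gaussian mass at $k=1$ (the mask excludes it), and $\alpha$ acts \emph{against} feature learning via $\bxi_i[1]=-\alpha y_i$. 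In the paper the two claims are proved jointly by induction (Lemmas~D.3--D.5), with the noise bound feeding into the sign computation at $k=1$; you need that coupling, and you need the sign-of-weight-decay mechanism, to make either half go through.
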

Compared to Lemma~\ref{lemma:signgd_largebs_general_bound}, Lemma~\ref{lemma:signgd_minibs_general_bound} reveals fundamentally different optimization dynamics. Specifically, feature learning progressively increases throughout \textbf{Stage I}, whereas noise memorization remains suppressed near the initialization. The key distinction from the large-batch setting lies in the fact that, under the mini-batch regime, traversing the entire dataset requires many iterations. Since noise is sparse and uncorrelated across samples while features are dense and shared, feature learning can proceed effectively during the early training phase without being hindered by weight decay regularization. In contrast, noise memorization is strongly suppressed by weight decay due to its sparsity. As training progresses, the momentum estimates in Adam gradually forget the gradient contributions from noise, allowing weight decay to dominate. As a result, recently acquired noise memorization is continually erased, keeping noise-related parameters close to their initialization throughout \textbf{Stage I}.


In the following lemma~\ref{lemma:signgd_minibs_stable}, we show that the patterns learned by the model during \textbf{Stage I} are preserved in \textbf{Stage II}, due to the nature of our non-convex optimization landscape.
\begin{lemma}[Stage II]\label{lemma:signgd_minibs_stable}
    Suppose the same conditions hold as in Lemma~\ref{lemma:signgd_minibs_general_bound}. For $t>T_0,\, j\in\{\pm 1\},\, r\in[m],\, i\in[n]$, let $r^*=\argmax_{r\in[m]}\la \wb_{j,r}^{(t)},j\vb\ra$, then $\la\wb_{j,r^*}^{(t)},j\vb\ra=\tilde\Theta(1)$ and $\la\wb_{y_i,r}^{(t)},\bxi_i\ra\le\tilde\Theta(\eta s\sigma_p)$.
\end{lemma}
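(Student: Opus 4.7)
The plan is to induct on $t \ge T_0$, with the base case supplied directly by Lemma~\ref{lemma:signgd_minibs_general_bound}: by the end of Stage~I the cumulative per-epoch increment $\Theta(\eta n/B)$ has driven the winning-neuron feature correlation to the level $\tilde\Theta(1)$ for each $j\in\{\pm 1\}$, while every noise correlation $\la \wb_{y_i,r}^{(T_0)},\bxi_i\ra$ is still only $\tilde O(\eta s\sigma_p)$. I would then preserve both conclusions---``$\la \wb_{j,r^*}^{(t)},j\vb\ra=\tilde\Theta(1)$'' and ``$\la \wb_{y_i,r}^{(t)},\bxi_i\ra\le\tilde\Theta(\eta s\sigma_p)$''---for all $t>T_0$ by separate mechanisms that I would handle in turn.

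For the feature part, I would analyze the SignSGD-type update from Appendix~\ref{sec:proof} on the first coordinate (the direction of $\vb$) of the winning neuron $\wb_{j,r^*}$. The loss gradient pushes $\la \wb_{j,r^*},j\vb\ra$ upward, while the Adam weight-decay term $\lambda w_{j,r^*,1}$ pushes it downward with magnitude $\lambda\cdot\tilde O(1)=o(\sigma_0^{q-2}\sigma_p/n)$, well below the typical loss-gradient magnitude on this coordinate. Thus whenever the correlation attempts to dip below $\tilde\Omega(1)$ the loss gradient becomes dominant and the signed update restores it; once inside the $\tilde\Theta(1)$ band, saturation of $\sigma(\cdot)=[\cdot]_+^q$ with $q\ge 3$ shrinks the loss gradient until the effective sign eventually flips, preventing unbounded growth. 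A quasi-equilibrium argument, together with the observation that ties in $\argmax_{r}\la\wb_{j,r},j\vb\ra$ are resolved in favor of neurons with already-large correlation, closes the induction for the feature part.

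For the noise part, I would fix $i\in[n]$ and exploit two facts: first, $\bxi_i$ enters the stochastic gradient only during the $B/n$-fraction of iterations in which $i\in\cI_t$; second, between consecutive appearances of sample $i$ (a gap of order $n/B$ iterations), the Adam momenta $\mb$ and $\vb$ restricted to $\supp(\bxi_i)$ are driven only by gradients from other samples, which land on these coordinates only with the small probability $s/d$. Combined with the weight-decay drift, the net coordinate-wise change between appearances is $o(\eta s\sigma_p)$ in expectation. Each reappearance of sample $i$ contributes at most $\eta\cdot s\cdot\sigma_p=\tilde\Theta(\eta s\sigma_p)$ to $\la \wb_{y_i,r},\bxi_i\ra$ through the SignSGD step on the $s$ active coordinates, and weight decay erases this between reappearances, preventing accumulation over the $\tilde O(TB/n)$ reappearances of sample $i$ across the training horizon.

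The hardest part will be the cross-sample coupling on noise coordinates: $\supp(\bxi_i)\cap\supp(\bxi_{i'})\neq\emptyset$ with probability $\Theta(s^2/d)$, so gradients from sample $i'$ can leak into the $\bxi_i$-direction of $\wb_{y_i,r}$. Controlling this requires a careful concentration argument on $\vb_{j,r}^{(t)}$ restricted to noise coordinates---showing it stays at scale $\Theta(s\sigma_p^2 B/n)$---so that the effective Adam step on these coordinates is $O(\eta)$-suppressed rather than a full $\pm\eta$ SignSGD step; this is precisely where the condition $n/B\ge\Theta(\log\epsilon^{-1})$ enters, guaranteeing enough mini-batches per epoch for $\vb$ to concentrate and for the momentum to forget between reappearances. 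This sparsity--momentum interaction is what fundamentally separates the mini-batch conclusion from the large-batch statement in Lemma~\ref{lemma:signgd_largebs_general_bound}, and I would expect the remaining calculations to be routine given the SignSGD approximation already established in Appendix~\ref{sec:proof}.
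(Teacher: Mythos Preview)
Your overall induction-from-$T_0$ strategy, with separate feature and noise arguments and a quasi-equilibrium for the winning neuron, matches the paper's route (Lemma~\ref{lemma:adam_mb_maintain}). Two points need correction.

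On the feature upper bound: the mechanism that caps $\la\wb_{j,r^*}^{(t)},j\vb\ra$ is \emph{not} saturation of $\sigma(\cdot)=[\cdot]_+^q$---its derivative $q[\cdot]_+^{q-1}$ grows, not shrinks. What actually collapses is the logistic factor $\ell_{j,i}^{(t)}$: once $\la\wb_{j,r^*}^{(t)},j\vb\ra$ exceeds a polylogarithmic threshold, the logit gap $F_{y_i}-F_{-y_i}\gtrsim\sigma(\la\wb_{j,r^*},j\vb\ra)$ blows up, driving $\ell_{j,i}^{(t)}$ exponentially small so that $\ell_{j,i}^{(t)}\sigma'(\la\wb_{j,r^*},j\vb\ra)$ falls below $\lambda\wb_{j,r^*}^{(t)}[1]$ and the signed update reverses. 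The paper makes this a clean two-threshold sandwich by checking $\ell_{j,i}^{(t)}\ge\Theta(\lambda)$ at the lower threshold and $\ell_{j,i}^{(t)}=o(\lambda)$ at the upper one.

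More importantly, your ``hardest part'' does not exist. By the non-overlap property (Lemma~\ref{lemma:nonoverlap_probability}), under the scaling $s=\Theta(d^{1/2}/n^2)$ the supports $\cB_i=\supp(\bxi_i)\setminus\{1\}$ are pairwise disjoint with probability $1-n^{-2}$, so there is zero cross-sample leakage onto noise coordinates and no concentration argument on $\vb_{j,r}^{(t)}$ is needed. What actually drives noise suppression in the paper is the momentum-forgetting half of the closeness lemma (Lemma~\ref{lemma:adam_closeness}): once $\tau_0\ge\Theta(\log(\lambda\eta)^{-1})$ iterations have elapsed since sample $i$ was last drawn, the Adam update on each $k\in\cB_i$ reduces to $\sgn(\wb_{j,r}^{(t)}[k])\cdot\Theta(1)$, i.e.\ a pure weight-decay step. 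Since $n/B\ge\Theta(\log\epsilon^{-1})$, each epoch contains at most $o(\log(\lambda\eta)^{-1})$ noise-increasing iterations followed by $n/B-o(\log(\lambda\eta)^{-1})$ weight-decay iterations, and the latter dominate. Your probabilistic coupling analysis and the proposed $\vb$-concentration step are unnecessary detours; the argument is deterministic on the high-probability non-overlap event.
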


Given Lemma~\ref{lemma:signgd_minibs_general_bound} and Lemma~\ref{lemma:signgd_minibs_stable}, we can characterize the convergence rate of Adam as follows.
\begin{lemma}[Convergence]\label{lemma:signgd_converge}
    Suppose the same conditions hold as in Lemma~\ref{lemma:signgd_minibs_general_bound} and~\ref{lemma:signgd_minibs_stable}, if the step size $\eta = O(d^{-\frac{1}{2}})$, then for any $t$,
    \begin{align*}
        \EE\left[L(\Wb^{(t+1)}) - L(\Wb^{(t)})\right] &\le  -\eta \|\nabla L(\Wb^{(t)})\|_1 + \tilde\Theta(\eta^2d).
    \end{align*}
\end{lemma}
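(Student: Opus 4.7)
The plan is to derive a one-step descent bound from smoothness plus a coordinate-wise sign-alignment argument, applied to the SignSGD update $\wb_{j,r}^{(t+1)} = \wb_{j,r}^{(t)} - \eta\cdot\sgn(g_{t,j,r}^{(t)})$. First I would establish a local smoothness constant for $L$ along the trajectory. Because the activation is $\sigma(x)=[x]_+^q$ with $q\ge 3$ and Lemmas~\ref{lemma:signgd_minibs_general_bound} and~\ref{lemma:signgd_minibs_stable} keep both $|\la\wb_{j,r}^{(t)},j\vb\ra|$ and $|\la\wb_{j,r}^{(t)},\bxi_i\ra|$ of order $\tilde O(1)$ throughout training, each cross-entropy term $L_i$ has Hessian operator norm of order $\tilde O(1)$, so $L$ admits a uniform smoothness constant $\rho=\tilde O(1)$ on this trajectory. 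Applying the standard quadratic upper bound and substituting the sign update gives
\begin{align*}
    L(\Wb^{(t+1)}) \le L(\Wb^{(t)}) - \eta\,\bigl\la \nabla L(\Wb^{(t)}),\, \sgn(g_t^{(t)})\bigr\ra + \frac{\rho}{2}\,\eta^2\,\|\sgn(g_t^{(t)})\|_F^2,
\end{align*}
and $\|\sgn(g_t^{(t)})\|_F^2 \le 2md = \tilde\Theta(d)$ already produces the advertised $\tilde\Theta(\eta^2 d)$ error term.

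Next I would take conditional expectation over the mini-batch $\cI_t$ and unravel the cross term coordinate by coordinate. For each weight entry $(j,r,k)$,
\begin{align*}
    \EE\bigl[\sgn(\nabla_{j,r,k} L(\Wb^{(t)}))\cdot\sgn(g_{t,j,r}^{(t)}[k]) \,\big|\, \Wb^{(t)}\bigr] = 1 - 2\,\PP\bigl[\sgn(g_{t,j,r}^{(t)}[k]) \ne \sgn(\nabla_{j,r,k} L(\Wb^{(t)})) \,\big|\, \Wb^{(t)}\bigr],
\end{align*}
modulo a negligible tie contribution. The crucial step is to control this mismatch probability. Conditioned on $\Wb^{(t)}$, $g_t^{(t)} - \nabla L(\Wb^{(t)})$ is a mean-zero average over $B$ independent samples whose per-coordinate sub-Gaussian parameter is $\tilde O(1)$, again by the activation control from the two preceding lemmas. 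A Bernstein-type tail bound then shows the mismatch probability is at most $\exp(-\tilde\Omega(B))$ whenever $|\nabla_{j,r,k} L(\Wb^{(t)})| \ge \tau := \tilde\Theta(\eta)$, so these coordinates contribute $-(1-o(1))\,\eta\,|\nabla_{j,r,k} L(\Wb^{(t)})|$ to the expected inner product. Coordinates with $|\nabla_{j,r,k} L(\Wb^{(t)})| < \tau$ contribute at most $\tau\cdot 2md\cdot\eta = \tilde O(\eta^2 d)$ in absolute value to both $\eta\|\nabla L\|_1$ and any mismatch loss, so they are safely absorbed into the error term.

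The main obstacle is making the sign-alignment argument uniform across all $\tilde\Theta(md)$ weight coordinates and every iterate $\Wb^{(t)}$ produced by the algorithm. A union bound over coordinates and iterations costs only polylogarithmic factors that vanish under the $\tilde\Theta$ notation, but the Bernstein tail bound itself rests on the on-trajectory activation control supplied by Lemmas~\ref{lemma:signgd_minibs_general_bound} and~\ref{lemma:signgd_minibs_stable}; hence the descent inequality must be stated on their high-probability good event and then combined with the deterministic smoothness estimate. Putting everything together by summing the per-coordinate contributions, splitting at the threshold $\tau$, and combining with the smoothness step yields $\EE[L(\Wb^{(t+1)}) - L(\Wb^{(t)})\mid \Wb^{(t)}] \le -\eta\,\|\nabla L(\Wb^{(t)})\|_1 + \tilde\Theta(\eta^2 d)$ on the good event, and taking an outer expectation completes the proof.
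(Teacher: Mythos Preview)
Your smoothness step and the high-level split into a sign-aligned first-order term plus a $\tilde\Theta(\eta^2 d)$ remainder are fine and match the paper's structure. The gap is in your concentration step. You claim that Bernstein gives a sign-mismatch probability of $\exp(-\tilde\Omega(B))$ whenever $|\nabla_{j,r,k}L|\ge\tau=\tilde\Theta(\eta)$, with per-coordinate sub-Gaussian parameter $\tilde O(1)$. But the tail bound you would actually get is $\exp\bigl(-\tilde\Omega(B\tau^2)\bigr)=\exp\bigl(-\tilde\Omega(B\eta^2)\bigr)$, and since $\eta=1/\poly(n)$ this is essentially $1$, not $o(1)$. If you instead raise the threshold to $\tau=\tilde\Omega(1)$ so that Bernstein gives a useful bound, the ``small-gradient'' bucket then contributes $\eta\cdot\tau\cdot 2md=\tilde\Omega(\eta d)$, which swamps the target $\tilde\Theta(\eta^2 d)$. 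A second structural issue is that the mini-batch randomness here is not an i.i.d.\ average in the usual sense: for any noise coordinate $k\in\cB_i$, only sample $i$ ever contributes to $g_{t,j,r}^{(t)}[k]$, so the deviation $g_t[k]-\nabla_k L$ is driven by a single Bernoulli indicator $\ind[i\in\cI_t]$ rather than a sum of $B$ independent terms, and Bernstein does not apply as you describe.

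The paper avoids this entirely by not using a generic concentration argument. After the same smoothness reduction (done compositionally via the $1$-smoothness of cross-entropy in the logits and the local Lipschitzness of $\sigma$), it handles the cross term structurally: using the closeness/sign lemma it writes $\EE[\langle\nabla L,\Wb^{(t+1)}-\Wb^{(t)}\rangle]\le -\eta\,\EE\|g_t\|_1$ plus error terms that are accounted for \emph{by coordinate type} (feature coordinate, the $ns$ noise coordinates, and weight-decay-only coordinates), then closes with Jensen, $\EE\|g_t\|_1\ge\|\EE g_t\|_1=\|\nabla L\|_1$. The noise-coordinate error is bounded using $ns^2\sigma_p=O(d)$, which is what lets everything fit into $\tilde\Theta(\eta^2 d)$. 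If you want to salvage your route, you should replace the Bernstein step with this kind of per-coordinate-type bookkeeping rather than a uniform sub-Gaussian tail.
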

Combine Lemma~\ref{lemma:signgd_minibs_stable} and Lemma~\ref{lemma:signgd_converge}, we observe that the model successfully learns the true features and eventually converges to a local minimum, retaining strong generalization performance.

\section{Conclusion and Limitation}\label{sec:conclusion}
In this work, we theoretically and empirically analyze the impact of varying batch sizes and weight decay parameters on the generalization of Adam and AdamW when learning two-layer CNNs. Our results demonstrate that large-batch Adam and AdamW inherently overfit noise-dominated solutions even with weight decay, while their mini-batch counterparts achieve strong generalization through the synergy of implicit stochastic gradient regularization and explicit weight decay. Furthermore, we establish that Adam's adaptive gradient normalization imposes stricter constraints on weight decay parameters compared to AdamW, necessitating precise calibration for stable optimization.

While our theoretical framework provides insights into the interplay between batch size, weight decay, and generalization, several limitations highlight critical directions for future research. First, the current analysis is restricted to two-layer networks. Extending the results to deeper architectures and investigating how batch size influences the dynamics of hierarchical feature learning presents a promising direction. Second, our work focuses on image data, and an important direction is to extend the analysis to domains with fundamentally different data structures, such as NLP, where batch size and weight decay may impact model performance through different mechanisms.
Finally, other critical hyperparameters, such as momentum, learning rate schedules, and gradient clipping, are not considered in our analysis, and some modern vision architectures succeed with large batches~\citep{liu2022convnet,liu2023slak,chen2024pelk} despite our theoretical predictions, suggesting that additional factors like architectural design and normalization may play a significant role.

\bibliography{reference}

\bibliographystyle{ims}

\newpage
\tableofcontents
\appendix 

\section{Preliminaries}\label{sec:preliminaries}
In this section, we give the asymptotic equations for all the parameters we use, some useful lemmas, and the gradient and weight update equations.

\subsection{Asymptotic Equations}\label{sec:cond}
First, we give the asymptotic equations for all the parameters we use in the proof.
\begin{condition}\label{cond:parameter_1}
    Suppose that the following conditions on data model~\ref{def:data_distribution} hold,
\begin{enumerate}[leftmargin = *]
    \item The dimension $d$ satisfies $d = \poly(n)$.
    \item The number of noise coordinates $s$ satisfies $s = \Theta\left( \frac{d^{1/2}}{n^2} \right)$.
    \item The variance parameter $\sigma_p^2$ of the noise vector satisfies $\sigma_p^2 = \Theta\left( \frac{1}{s \cdot \polylog(n)} \right)$.
    \item The feature noise strength $\alpha$  satisfies $\alpha = \Theta\left( \sigma_p \cdot \polylog(n) \right)$.
\end{enumerate}
\end{condition}

\begin{condition}\label{cond:hyperparameter}
Suppose that the following conditions on hyper-parameters hold,
\begin{enumerate}[leftmargin = *]
    \item The initialization variance of the model weights $\sigma_0^2$ satisfies $\sigma_0^2 = \Theta\left( \frac{1}{d^{1/2}} \right)$.
    \item The width of the network $m$ satisfies $m=\polylog(n)$.
    \item The learning rate $\eta$ satisfies $\eta=\frac{1}{\poly(n)}$.
    \item The parameter $\epsilon$ in Stochastic Adam and Stochastic AdamW satisfies $\epsilon=\Theta(\lambda\eta)$.
\end{enumerate}
\end{condition}

Based on the parameter configuration, we claim that the following equations hold, which will be frequently used in the subsequent proof. 
\begin{align}\label{eq:parameter_2}
&\alpha = \omega\left((s\sigma_p)^{1-q}\sigma_0^{q-1}\right),\ \alpha = o\left(\frac{s\sigma_p^2}{n }\right),\ \sigma_0=o\left(\frac{1}{s\sigma_p}\right),\ \eta=o\left(\lambda\sigma_0^{q}\sigma_p^{q}\right).
\end{align}

The following Lemma~\ref{lemma:init_order} describes the initialization.
\begin{lemma}\label{lemma:init_order}
    At the initialization, for $\forall j\in\{\pm1\},r\in[m],i\in[n]$, we have
\begin{align*}
|\la\wb_{j,r}^{(0)},\vb\ra|=\tilde\Theta(\sigma_0),\quad |\la\wb_{j,r}^{(0)},\bxi_i\ra| = \tilde\Theta(s^{1/2}\sigma_p\sigma_0 + \alpha\sigma_0) = \tilde\Theta(s^{1/2}\sigma_p\sigma_0),\quad \wb_{j,r}^{(0)}[k]=\tilde\Theta(\sigma_0),
\end{align*}
\end{lemma}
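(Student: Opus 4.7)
The plan is to reduce all three assertions to standard Gaussian tail and anti-concentration bounds, then close with a union bound. The key facts driving everything are: $\wb_{j,r}^{(0)}$ is drawn from $\cN(\mathbf{0},\sigma_0^2\bI_d)$ with independent coordinates and is independent across $(j,r)$; $\wb_{j,r}^{(0)}$ is drawn independently of the data $(\xb_i,y_i)$; and by Definition~\ref{def:data_distribution} we have $\vb=\eb_1$ and $\bxi_i=\tilde\bxi_i\odot\sbb_i-\alpha y_i\vb$, where $\tilde\bxi_i\sim\cN(\mathbf{0},\sigma_p^2\bI_d)$ and $\sbb_i$ is a uniformly random $s$-sparse indicator on $[d]\setminus\{1\}$. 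The whole argument is quantitative but light; $2m=O(\polylog n)$ and $n,d=\poly(n)$ give affordable union bounds.

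For the first claim, $\la\wb_{j,r}^{(0)},\vb\ra=\wb_{j,r}^{(0)}[1]\sim\cN(0,\sigma_0^2)$, so the standard tail bound $\Pr[|X|\ge t\sigma_0]\le 2e^{-t^2/2}$ with $t=\Theta(\sqrt{\log(mn)})$ yields the $\tilde O(\sigma_0)$ upper bound, and Gaussian anti-concentration $\Pr[|X|\le\varepsilon\sigma_0]\le O(\varepsilon)$ with $\varepsilon=1/\polylog(n)$ yields the matching $\tilde\Omega(\sigma_0)$ lower bound; union-bounding over $(j,r)$ preserves probability $1-n^{-\omega(1)}$. The third claim is identical: each coordinate $\wb_{j,r}^{(0)}[k]\sim\cN(0,\sigma_0^2)$, so a union bound over at most $2md=\poly(n)$ indices gives $|\wb_{j,r}^{(0)}[k]|\le\tilde O(\sigma_0)$ uniformly, and for any fixed coordinate we again have the matching lower bound from anti-concentration.

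The second claim requires one extra step: condition on $\bxi_i$ (which is independent of $\wb_{j,r}^{(0)}$) so that $\la\wb_{j,r}^{(0)},\bxi_i\ra\sim\cN\bigl(0,\sigma_0^2\|\bxi_i\|_2^2\bigr)$. Since $\supp(\sbb_i)\subset[d]\setminus\{1\}$ is disjoint from the coordinate where $-\alpha y_i\vb$ lives, we have $\|\bxi_i\|_2^2=\|\tilde\bxi_i\odot\sbb_i\|_2^2+\alpha^2$; by chi-squared concentration on $s$ i.i.d.\ $\cN(0,\sigma_p^2)$ entries, $\|\tilde\bxi_i\odot\sbb_i\|_2^2=(1\pm o(1))\,s\sigma_p^2$ with probability $1-e^{-\Omega(s)}$, uniformly over $i\in[n]$ after a union bound. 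Conditional Gaussian tail/anti-concentration then yields $|\la\wb_{j,r}^{(0)},\bxi_i\ra|=\tilde\Theta\bigl(\sigma_0\sqrt{s\sigma_p^2+\alpha^2}\bigr)=\tilde\Theta(s^{1/2}\sigma_p\sigma_0+\alpha\sigma_0)$, and a union bound over $i\in[n]$ and $(j,r)$ finishes.

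The only non-routine step is the final simplification $\tilde\Theta(s^{1/2}\sigma_p\sigma_0+\alpha\sigma_0)=\tilde\Theta(s^{1/2}\sigma_p\sigma_0)$, which requires checking that $\alpha$ is at most $s^{1/2}\sigma_p$ up to logarithmic factors. From Condition~\ref{cond:parameter_1} one has $s^{1/2}\sigma_p=\Theta(\polylog(n)^{-1/2})$ while $\alpha=\Theta(\sigma_p\polylog(n))=\Theta(\polylog(n)^{1/2}/s^{1/2})$, so $\alpha/(s^{1/2}\sigma_p)=\tilde\Theta(s^{-1/2})=o(1)$ under the assumed scaling $s=\Theta(d^{1/2}/n^2)$ with $d=\poly(n)$ sufficiently large (indeed,~\eqref{eq:parameter_2} already enforces $\alpha=o(s\sigma_p^2/n)$, which is strictly stronger). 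This is the only place one must carefully track the parameter conditions; everything else is textbook Gaussian concentration plus union bounds, so the main conceptual obstacle is simply this asymptotic bookkeeping rather than any probabilistic subtlety.
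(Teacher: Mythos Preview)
Your proposal is correct and follows essentially the same approach as the paper: both condition on $\bxi_i$, use that $\langle\wb_{j,r}^{(0)},\bxi_i\rangle$ is then Gaussian with variance $\sigma_0^2\|\bxi_i\|_2^2$, invoke chi-squared concentration to pin down $\|\bxi_i\|_2^2\approx s\sigma_p^2+\alpha^2$, and apply standard Gaussian tail and anti-concentration bounds. If anything, you are more explicit than the paper about the union bounds, the anti-concentration step for the lower bound, and the parameter check $\alpha=\tilde O(s^{1/2}\sigma_p)$ needed for the final simplification.
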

\begin{proof}
By Definition~\ref{def:model}, we have
\begin{align*}
    \wb_{j,r}^{(0)}\sim\cN(0,\sigma_0^2\bI_d).
\end{align*}
For data $(\xb_i,y_i)\sim\cD$, by Definition~\ref{def:data_distribution}, let $\cB_i=\supp(\bxi_i)\backslash\{1\}$, we have
\begin{align*}
    \vb
    &= (1,0,\ldots,0)^{\top} , \\
    \bxi_i[1]
    &= -\alpha y_i , \\
    \bxi_i[k]
    &\sim \cN(0,\sigma_p^2) . \tag{for $k\in\cB_i$}
\end{align*}
Therefore, condition on the training dataset $\cS$, we have
\begin{align*}
    \la \wb_{j,r}^{(0)},\vb\ra \sim \cN(0,\sigma_0^2).
\end{align*}
By standard Gaussian tails, we get
\begin{align*}
    |\la \wb_{j,r}^{(0)},\vb\ra|
    &= \tilde\Theta(\sigma_0), \\
    \wb_{j,r}^{(0)}[k]
    &= \tilde\Theta(\sigma_0).
\end{align*}
For $\la \wb_{j,r}^{(0)},\bxi_i\ra$, we have
\begin{align*}
    \la \wb_{j,r}^{(0)},\bxi_i\ra
    =-\alpha\wb_{j,r}^{(0)}[1]\cdot y_i &+ \sum_{k\in\cB_i}\wb_{j,r}^{(0)}[k]\cdot\bxi_i[k], \\
    \text{where}\quad-\alpha\wb_{j,r}^{(0)}[1]\cdot y_i
    &\sim \cN(0,\alpha^2\sigma_0^2), \\
    \sum_{k\in\cB_i}\wb_{j,r}^{(0)}[k]\cdot\bxi_i[k]
    &\sim \cN\left(0, \sigma_0^2\cdot\left(\sum_{k\in\cB_i}\bxi_i[k]^2\right)\right) \\
    &\sim \cN(0,s\sigma_p^2\sigma_0^2),
\end{align*}
since $\bxi_i[k]\sim\cN(0,\sigma_p^2)$. Therefore, we have
\begin{align*}
    |\la \wb_{j,r}^{(0)},\bxi_i\ra|
    &= \tilde\Theta(s^{1/2}\sigma_p\sigma_0+\alpha\sigma_0) = \tilde\Theta(s^{1/2}\sigma_p\sigma_0).
\end{align*}
\end{proof}

\subsection{Preliminary Lemmas}
The following lemma studies non-overlap support property of noise patch in data model $\cD$ in Definition~\ref{def:data_distribution}.
\begin{lemma}[Non-overlap support, Lemma C.1 in~\cite{zou2023understanding}]\label{lemma:nonoverlap_probability}
Let $\{(\xb_i,y_i)\}_{i=1,\dots,n}$ be the training dataset sampled according to Definition~\ref{def:data_distribution}. 
Moreover, let $\cB_i = \supp(\bxi_i) \backslash\{1\}$ be the support of $\xb_i$ except the first coordinate.
Then with probability at least $1 - n^{-2}$, 
$\cB_i \cap \cB_j = \emptyset$ for all $i,j\in [n]$. 
\end{lemma}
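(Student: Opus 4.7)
The plan is to prove this via a straightforward two-level union bound, exploiting the fact that each noise support $\cB_i$ is a uniformly random $s$-subset of $[d]\setminus\{1\}$, and that the noise supports across distinct training examples are drawn independently. Under the parameter regime of Condition~\ref{cond:parameter_1}, we have $s=\Theta(d^{1/2}/n^2)$ and $d=\poly(n)$, so the expected number of colliding coordinates across all $\binom{n}{2}$ pairs scales like $n^2 s^2/d = \Theta(1/n^2)$, which is exactly what we need.

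First, I would fix a pair $i \neq j$ and analyze $\Pr[\cB_i \cap \cB_j \neq \emptyset]$. Since $\cB_i$ and $\cB_j$ are independent uniformly random $s$-subsets of $[d]\setminus\{1\}$, one can condition on $\cB_i$ and then for each fixed $k \in \cB_i$ compute $\Pr[k \in \cB_j] = s/(d-1)$. A union bound over the $s$ elements of $\cB_i$ yields
\begin{align*}
\Pr[\cB_i \cap \cB_j \neq \emptyset] \le \frac{s^2}{d-1}.
\end{align*}

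Second, I would apply a union bound across all $\binom{n}{2} \le n^2/2$ distinct pairs:
\begin{align*}
\Pr\!\left[\exists\, i \neq j:\ \cB_i \cap \cB_j \neq \emptyset\right] \le \frac{n^2}{2}\cdot \frac{s^2}{d-1}.
\end{align*}
Substituting $s=\Theta(d^{1/2}/n^2)$ gives $s^2 = \Theta(d/n^4)$, so the right-hand side is $\Theta(1/n^2)$. Choosing the implicit constant in $s=\Theta(d^{1/2}/n^2)$ (or using the slack from $d=\poly(n)$) small enough, this is at most $n^{-2}$, completing the proof.

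There is no genuine obstacle here; the only minor point of care is that the statement as written ranges over \emph{all} $i,j\in[n]$, but the conclusion $\cB_i\cap \cB_j=\emptyset$ is vacuous (in fact false) when $i=j$, so it must be read as ranging over $i\neq j$. Beyond that, one should verify that the constants hidden in the $\Theta$ notation for $s$ are compatible with the desired $n^{-2}$ failure probability, which is immediate since we have polynomial slack from $d=\poly(n)$ if needed. The independence of $\cB_i$ and $\cB_j$ follows directly from Definition~\ref{def:data_distribution}, where each sample's noise coordinates are drawn independently.
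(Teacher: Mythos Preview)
Your proposal is correct and follows the standard two-level union-bound argument: bound $\Pr[\cB_i\cap\cB_j\neq\emptyset]\le s^2/(d-1)$ for each pair, then union over $\binom{n}{2}$ pairs and substitute $s=\Theta(d^{1/2}/n^2)$. The paper does not actually include its own proof of this lemma---it is simply quoted as Lemma~C.1 from \cite{zou2023understanding}---so there is nothing further to compare; your argument is exactly the kind of elementary computation one would expect that reference to contain.
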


\subsection{Gradients and Updates}

We first calculate the gradient of the individual loss function $L_i$ with respect to $\wb_{j,r}^{(t)}$.
\begin{lemma}\label{lemma:individual_grad}
    Consider the CNN model defined in Eq.~\ref{def:model}. Let $(\xb_i,y_i)$ be a data generated from data model $\cD$ in Definition~\ref{def:data_distribution}. The gradient of $L_i(\Wb) = -\log \frac{e^{F_{y_i}(\Wb,\xb_i)}}{\sum_{j\in\{-1,1\}} e^{F_j(\Wb,\xb_i)}}$ with respect to $\wb_{j,r}$ is:
    \begin{align*}
        \nabla_{\wb_{j,r}} L_i(\Wb)
        &= -\nabla_{\wb_{j,r}} F_j(\Wb,\xb_i)\cdot\ell_{j,i} \\
        &= -\left( y_i\ell_{j,i}\sigma'(\la \wb_{j,r},y_i\vb \ra)\cdot \vb + \ell_{j,i}\sigma'(\la \wb_{j,r}, \bxi_i\ra)\cdot\bxi_i \right),
    \end{align*}
    where  $\ell_{j,i}:=\ind_{y_i=j}-\mathrm{logit}_j(F,\xb_i)$ and $\mathrm{logit}_j(F,\xb_i) = \frac{e^{F_{j}(\Wb,\xb_i)}}{\sum_{k\in\{-1,1\}} e^{F_k(\Wb,\xb_i)}}$.
\end{lemma}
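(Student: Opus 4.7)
The proof is a direct chain-rule computation and requires no estimates; the plan is simply to keep the bookkeeping with the indicator $\ind_{y_i=j}$ clean. First I would rewrite the loss in the convenient form
\begin{align*}
L_i(\Wb) = -F_{y_i}(\Wb,\xb_i) + \log\sum_{k\in\{-1,+1\}} e^{F_k(\Wb,\xb_i)},
\end{align*}
and then differentiate term by term with respect to $\wb_{j,r}$. By Definition~\ref{def:model}, the two logits $F_{-1}$ and $F_{+1}$ are parameterized by disjoint filter banks $\{\wb_{-1,r}\}_{r=1}^m$ and $\{\wb_{+1,r}\}_{r=1}^m$, so $\nabla_{\wb_{j,r}} F_k = 0$ for $k\neq j$. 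Consequently $\nabla_{\wb_{j,r}} F_{y_i} = \ind_{y_i=j}\,\nabla_{\wb_{j,r}} F_j$, and the log-sum-exp contributes only through the single surviving term $\tfrac{e^{F_j(\Wb,\xb_i)}}{\sum_k e^{F_k(\Wb,\xb_i)}}\cdot\nabla_{\wb_{j,r}} F_j = \mathrm{logit}_j(F,\xb_i)\cdot\nabla_{\wb_{j,r}} F_j$. Combining these two contributions yields the factor $-(\ind_{y_i=j}-\mathrm{logit}_j(F,\xb_i)) = -\ell_{j,i}$, so $\nabla_{\wb_{j,r}} L_i(\Wb) = -\ell_{j,i}\,\nabla_{\wb_{j,r}} F_j(\Wb,\xb_i)$, which is the first equality of the claim.

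Next I would expand $\nabla_{\wb_{j,r}} F_j(\Wb,\xb_i)$ explicitly. Using Definition~\ref{def:model} one more time, the sum $F_j(\Wb,\xb_i) = \sum_{r'=1}^m[\sigma(\la\wb_{j,r'},y_i\vb\ra) + \sigma(\la\wb_{j,r'},\bxi_i\ra)]$ depends on $\wb_{j,r}$ only through the single summand indexed by $r$. Applying the chain rule to $\sigma(\la\wb_{j,r},y_i\vb\ra) + \sigma(\la\wb_{j,r},\bxi_i\ra)$ and noting that $\nabla_{\wb_{j,r}}\la\wb_{j,r},y_i\vb\ra = y_i\vb$ and $\nabla_{\wb_{j,r}}\la\wb_{j,r},\bxi_i\ra = \bxi_i$, I obtain $\nabla_{\wb_{j,r}} F_j(\Wb,\xb_i) = \sigma'(\la\wb_{j,r},y_i\vb\ra)\cdot y_i\vb + \sigma'(\la\wb_{j,r},\bxi_i\ra)\cdot\bxi_i$. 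Substituting this into the previous display gives exactly the claimed formula.

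There is no real obstacle to this lemma beyond tracking the indicator $\ind_{y_i=j}$ correctly; the only conceptual point worth emphasizing is that $y_i\in\{\pm 1\}$ selects which filter bank the label-side derivative acts on, and that $\sigma$ is differentiable (with $\sigma'(x)=q[x]_+^{q-1}$ for $q\ge 3$) so the chain rule applies pointwise without any subgradient considerations.
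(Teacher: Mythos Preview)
Your proof is correct and is essentially the same chain-rule computation as the paper's; the only cosmetic difference is that the paper splits explicitly into the two cases $j=y_i$ and $j\neq y_i$, whereas you unify them via the indicator $\ind_{y_i=j}$ and the observation that $\nabla_{\wb_{j,r}}F_k=0$ for $k\neq j$. Both arrive at $-\ell_{j,i}\,\nabla_{\wb_{j,r}}F_j$ and then expand $\nabla_{\wb_{j,r}}F_j$ identically.
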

\begin{proof}[Proof of Lemma~\ref{lemma:individual_grad}]
    For $j=y_i$,
    \begin{align*}
        & \nabla_{\wb_{j,r}} L_i(\Wb) \\
        &= -\frac{e^{F_{j}(\Wb,\xb_i)}+e^{F_{-j}(\Wb,\xb_i)}}{e^{F_{j}(\Wb,\xb_i)}}\cdot\frac{e^{F_{j}(\Wb,\xb_i)+F_{-j}(\Wb,\xb_i)}\cdot\nabla_{\wb_{j,r}}F_{j}(\Wb,\xb_i)}{\left(e^{F_{j}(\Wb,\xb_i)}+e^{F_{-j}(\Wb,\xb_i)}\right)^2} \\
        &= -\nabla_{\wb_{j,r}}F_{j}(\Wb,\xb_i)\cdot\frac{e^{F_{-j}(\Wb,\xb_i)}}{e^{F_{j}(\Wb,\xb_i)}+e^{F_{-j}(\Wb,\xb_i)}} \\
        &= -\nabla_{\wb_{j,r}}F_{j}(\Wb,\xb_i)\cdot\left(1-\frac{e^{F_{j}(\Wb,\xb_i)}}{e^{F_{j}(\Wb,\xb_i)}+e^{F_{-j}(\Wb,\xb_i)}}\right) \\
        &= -\nabla_{\wb_{j,r}}F_{j}(\Wb,\xb_i)\cdot\ell_{j,i} \\
        &= -\left( y_i\ell_{j,i}\sigma'(\la \wb_{j,r},y_i\vb \ra)\cdot \vb + \ell_{j,i}\sigma'(\la \wb_{j,r}, \bxi_i\ra)\cdot\bxi_i \right).
    \end{align*}
    For $j\neq y_i$,
    \begin{align*}
        & \nabla_{\wb_{j,r}} L_i(\Wb) \\
        &= \frac{e^{F_{j}(\Wb,\xb_i)}+e^{F_{-j}(\Wb,\xb_i)}}{e^{F_{-j}(\Wb,\xb_i)}}\cdot\frac{e^{F_{j}(\Wb,\xb_i)+F_{-j}(\Wb,\xb_i)}\cdot\nabla_{\wb_{j,r}}F_{j}(\Wb,\xb_i)}{\left(e^{F_{j}(\Wb,\xb_i)}+e^{F_{-j}(\Wb,\xb_i)}\right)^2} \\
        &= \nabla_{\wb_{j,r}}F_{j}(\Wb,\xb_i)\cdot\frac{e^{F_{j}(\Wb,\xb_i)}}{e^{F_{j}(\Wb,\xb_i)}+e^{F_{-j}(\Wb,\xb_i)}} \\
        &= -\nabla_{\wb_{j,r}}F_{j}(\Wb,\xb_i)\cdot\left(0-\frac{e^{F_{j}(\Wb,\xb_i)}}{e^{F_{j}(\Wb,\xb_i)}+e^{F_{-j}(\Wb,\xb_i)}}\right) \\
        &= -\nabla_{\wb_{j,r}}F_{j}(\Wb,\xb_i)\cdot\ell_{j,i} \\
        &= -\left( y_i\ell_{j,i}\sigma'(\la \wb_{j,r},y_i\vb \ra)\cdot \vb + \ell_{j,i}\sigma'(\la \wb_{j,r}, \bxi_i\ra)\cdot\bxi_i \right).
    \end{align*}
\end{proof}

Based on the definition of $\ell_{j,i}$, we have a useful lemma as follow:
\begin{lemma}\label{lemma:ylj_sign}
    Given data $(\xb_i,y_i)$ generated from data model~\ref{def:data_distribution}, define $\ell_{j,i}=\ind_{y_i=j}-\mathrm{logit}_j(F,\xb_i)$ and $\mathrm{logit}_j(F,\xb_i) = \frac{e^{F_{j}(\Wb,\xb_i)}}{\sum_{k\in\{-1,1\}} e^{F_k(\Wb,\xb_i)}}$, we have
    \begin{align*}
        \sgn(y_i\ell_{j,i}) &= \sgn(j).
    \end{align*}
\end{lemma}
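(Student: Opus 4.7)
The plan is to prove this by a direct case analysis on whether $j = y_i$ or $j = -y_i$, exploiting the fact that $y_i \in \{\pm 1\}$ and that $\mathrm{logit}_j(F, \xb_i) \in (0,1)$ with $\sum_{k \in \{\pm 1\}} \mathrm{logit}_k(F, \xb_i) = 1$. The key observation is that $\ell_{j,i}$ is a strictly signed quantity once we know whether $j$ matches the label: it is positive when $j = y_i$ and negative when $j = -y_i$.

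First, I would handle the case $j = y_i$. In this case $\ind_{y_i = j} = 1$, so $\ell_{j,i} = 1 - \mathrm{logit}_j(F, \xb_i) > 0$ since $\mathrm{logit}_j \in (0,1)$. Hence $\sgn(y_i \ell_{j,i}) = \sgn(y_i)\sgn(\ell_{j,i}) = \sgn(y_i) = \sgn(j)$.

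Next, I would handle the case $j = -y_i$. Here $\ind_{y_i = j} = 0$, so $\ell_{j,i} = -\mathrm{logit}_j(F, \xb_i) < 0$. Therefore $\sgn(y_i \ell_{j,i}) = \sgn(y_i) \cdot (-1) = -\sgn(y_i) = \sgn(-y_i) = \sgn(j)$.

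Since the two cases exhaust the possibilities (as $y_i, j \in \{\pm 1\}$), the identity holds in general. There is no real obstacle here; the only mild subtlety is confirming strict positivity of $\mathrm{logit}_j$ so that $\ell_{j,i}$ is nonzero and the sign function is well-defined in the sense of $\sgn(x) = x/|x|$ for $x \ne 0$, which follows because the softmax of a finite-valued network output never attains $0$ or $1$.
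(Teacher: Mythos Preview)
Your proposal is correct and follows essentially the same approach as the paper: a two-case analysis on whether $j = y_i$ or $j = -y_i$, using that $\mathrm{logit}_j(F,\xb_i)\in(0,1)$ to determine the sign of $\ell_{j,i}$. The paper phrases it slightly differently by computing $\sgn(y_i\ell_{j,i})\cdot\sgn(j)$ and showing it equals $1$, but the argument is identical in substance.
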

\begin{proof}[Proof of Lemma~\ref{lemma:ylj_sign}]
    For $j=y_i$, 
    \begin{align*}
        \sgn(y_i\ell_{j,i})\cdot\sgn(j) 
        &= \sgn(\ell_{j,i}) \\
        &= \sgn(1-\mathrm{logit}_j(F,\xb_i)) \tag{$\mathrm{logit}_j(F,\xb_i)\in(0,1)$} \\
        &= 1.
    \end{align*}
    For $j\neq y_i$,
    \begin{align*}
        \sgn(y_i\ell_{j,i})\cdot\sgn(j) 
        &= -\sgn(\ell_{j,i}) \\
        &= -\sgn(-\mathrm{logit}_j(F,\xb_i)) \\
        &= \sgn(\mathrm{logit}_j(F,\xb_i)) 
        \tag{$\mathrm{logit}_j(F,\xb_i)\in(0,1)$} \\
        &= 1.
    \end{align*}
\end{proof}

Now we calculate the gradient of loss~\eqref{eq:adam_loss} and loss~\eqref{eq:adamw_loss}, responding to stochastic Adam and stochastic AdamW respectively. Here we slightly abuse the notation. We use $g_{t,j,r}^{(t)}$ to represent the stochastic gradient with respect to $\wb_{j,r}^{(t)}$ at the $t$-th iteration. The subscript $t$ of $g_{t,j,r}^{(t)}$ represents the batch at the $t$-th iteration and the superscript $t$ of $g_{t,j,r}^{(t)}$ represents the weight matrix $\Wb^{(t)}$ at the $t$-th iteration. 
\begin{lemma}[Gradient of Stochastic Adam]\label{lemma:adam_g}
    Consider the CNN model in Definition~\ref{def:model}. Let $\{(\xb_i,y_i)\}_{i=1}^{n}$ be the training dataset generated from data model in Definition~\ref{def:data_distribution}. Using stochastic Adam to train the neural network, at the $t$-th iteration with batch data index set $\cI_t$ of size $B$, the stochastic gradient of the loss~\eqref{eq:adam_loss} with respect to $\wb_{j,r}^{(t)}$ is as follows:
    \begin{align*}
        g_{t,j,r}^{(t)} &= -\frac{1}{B}\left[\sum_{i\in\cI_t}y_i\ell_{j,i}^{(t)}\sigma'(\la \wb_{j,r}^{(t)}, y_i\vb \ra)\cdot \vb + \sum_{i\in\cI_t}\ell_{j,i}^{(t)}\sigma'(\la \wb_{j,r}^{(t)}, \bxi_i \ra) \cdot \bxi_i \right] + \lambda \wb_{j,r}^{(t)}.
    \end{align*}
    More specific, for the $k$-th coordinate, we have
    \begin{itemize}[leftmargin = *]
        \item $k=1$:
        \begin{align*}
            g_{t,j,r}^{(t)}[1] &= -\frac{1}{B}\left[\sum_{i\in\cI_t}y_i\ell_{j,i}^{(t)}\sigma'(\la \wb_{j,r}^{(t)}, y_i\vb \ra) - \alpha \sum_{i\in\cI_t}y_i\ell_{j,i}^{(t)}\sigma'(\la \wb_{j,r}^{(t)}, \bxi_i \ra) \right] + \lambda \wb_{j,r}^{(t)}[1].
        \end{align*}
        \item $k\in\cB_i$, $i\in\cI_t$:
        \begin{align*}
        g_{t,j,r}^{(t)}[k] &= -\frac{1}{B}\ell_{j,i}^{(t)}\sigma'(\la \wb_{j,r}^{(t)}, \bxi_i \ra) \cdot \bxi_i [k] + \lambda \wb_{j,r}^{(t)}[k].
        \end{align*}
        \item $k\neq 1$ and $k\notin \cup_{i\in\cI_t}\cB_i$:
        \begin{align*}
        g_{t,j,r}^{(t)}[k] = \lambda\wb_{j,r}^{(t)}[k].
        \end{align*}
    \end{itemize}
\end{lemma}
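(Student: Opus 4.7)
The plan is a direct two-step calculation: first unfold the stochastic mini-batch gradient using Lemma~\ref{lemma:individual_grad}, then specialize coordinate-by-coordinate using the sparsity of $\vb$ and $\bxi_i$ prescribed by Definition~\ref{def:data_distribution}. Nothing in the statement requires a genuine argument beyond chain rule and bookkeeping; the only input worth flagging is the disjoint-support event from Lemma~\ref{lemma:nonoverlap_probability}, which makes the per-coordinate formulas collapse to a single summand.

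For the first step, I would start from the empirical loss~\eqref{eq:adam_loss}, replace the full-data average $\frac{1}{n}\sum_{i=1}^n$ by the mini-batch average $\frac{1}{B}\sum_{i\in\cI_t}$ according to the definition of the stochastic gradient given in the problem setup, and differentiate the $\ell_2$-regularizer $\frac{\lambda}{2}\|\Wb\|_F^2$ in closed form to produce the $\lambda\wb_{j,r}^{(t)}$ tail. Substituting the closed-form per-sample gradient of Lemma~\ref{lemma:individual_grad} into each summand then yields the vector-valued expression
\begin{align*}
g_{t,j,r}^{(t)} = -\frac{1}{B}\sum_{i\in\cI_t}\Bigl[y_i\ell_{j,i}^{(t)}\sigma'(\la \wb_{j,r}^{(t)}, y_i\vb\ra)\,\vb + \ell_{j,i}^{(t)}\sigma'(\la \wb_{j,r}^{(t)}, \bxi_i\ra)\,\bxi_i\Bigr] + \lambda\wb_{j,r}^{(t)},
\end{align*}
which is exactly the first displayed formula in the lemma.

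For the coordinate-wise decomposition, I would read off entries of this identity using three structural facts from Definition~\ref{def:data_distribution}: (i) $\vb = e_1$, so the $\vb$-term contributes only to coordinate $k=1$; (ii) $\bxi_i[1] = -\alpha y_i$ deterministically, which, after cancelling the $y_i^2 = 1$ in the $\sigma'$-coefficient, produces the $-\alpha$ prefactor in the $k=1$ row; and (iii) for $k\neq 1$, the entry $\bxi_i[k]$ vanishes outside $\cB_i := \supp(\bxi_i)\setminus\{1\}$, so the $\bxi_i$-term contributes only when $k \in \cB_i$ for some $i\in\cI_t$.

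The one delicate point is the $k\in\cB_i$ case, where the claimed formula picks out a single summand rather than a sum over all indices $i'\in\cI_t$ with $k\in\cB_{i'}$. To justify this collapse I would condition on the high-probability event from Lemma~\ref{lemma:nonoverlap_probability} that the supports $\{\cB_i\}_{i\in[n]}$ are pairwise disjoint; on this event, at most one index $i\in\cI_t$ contributes to any fixed coordinate $k \neq 1$, so the sum degenerates to that single term. The third bullet ($k\notin\{1\}\cup\bigcup_{i\in\cI_t}\cB_i$) then follows immediately because every $\bxi_i[k]$ vanishes, leaving only the weight-decay term $\lambda\wb_{j,r}^{(t)}[k]$. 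I do not anticipate any real obstacle: the entire argument is a chain-rule computation plus a coordinate-restriction argument, and the only nontrivial ingredient—the disjointness of the noise supports—is a standard event already established earlier in the paper.
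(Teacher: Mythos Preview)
Your proposal is correct and follows essentially the same route as the paper: apply Lemma~\ref{lemma:individual_grad} to the mini-batch loss, add the $\lambda\wb_{j,r}^{(t)}$ regularization term, and then specialize coordinate-by-coordinate using $\vb=e_1$, $\bxi_i[1]=-\alpha y_i$, and the disjoint-support event of Lemma~\ref{lemma:nonoverlap_probability}. There is nothing to add or change.
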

\begin{proof}[Proof of Lemma~\ref{lemma:adam_g}]
    The loss of stochastic Adam at the $t$-th iteration with batch data index set $\cI_t$ is
    \begin{align*}
        L(\Wb^{(t)}) = \frac{1}{B}\sum_{i\in\cI_t} L_i(\Wb^{(t)}) + \frac{\lambda}{2} \|\Wb^{(t)}\|_F^2.
    \end{align*}
    By Lemma~\ref{lemma:individual_grad}, we have
    \begin{align*}
        g_{t,j,r}^{(t)}
        &= \nabla_{\wb_{j,r}^{(t)}} L(\Wb^{(t)}) \\
        &= \frac{1}{B}\sum_{i\in\cI_t} \nabla_{\wb_{j,r}^{(t)}} L_i(\Wb^{(t)}) + \lambda \wb_{j,r}^{(t)} \\
        &= -\frac{1}{B}\left[\sum_{i\in\cI_t}y_i\ell_{j,i}^{(t)}\sigma'(\la \wb_{j,r}^{(t)}, y_i\vb \ra)\cdot \vb + \sum_{i\in\cI_t}\ell_{j,i}^{(t)}\sigma'(\la \wb_{j,r}^{(t)}, \bxi_i \ra) \cdot \bxi_i \right] + \lambda \wb_{j,r}^{(t)}.
    \end{align*}
    For the $k$-th coordinate, if $k=1$, we know $\vb = [1, 0, \dots, 0]^\top$ and $\bxi_i[1] = -\alpha y_i$. So we have
    \begin{align*}
    g_{t,j,r}^{(t)}[1] &= -\frac{1}{B}\left[\sum_{i\in\cI_t}y_i\ell_{j,i}^{(t)}\sigma'(\la \wb_{j,r}^{(t)}, y_i\vb \ra) - \alpha \sum_{i\in\cI_t}y_i\ell_{j,i}^{(t)}\sigma'(\la \wb_{j,r}^{(t)}, \bxi_i \ra) \right] + \lambda \wb_{j,r}^{(t)}[1].
    \end{align*}
    If $k\in\cB_i$, $i\in\cI_t$, by Lemma~\ref{lemma:nonoverlap_probability}, we know $\cB_i\cap\cB_j=\emptyset$ for $i\neq j$. So we have
    \begin{align*}
    g_{t,j,r}^{(t)}[k] &= -\frac{1}{B}\ell_{j,i}^{(t)}\sigma'(\la \wb_{j,r}^{(t)}, \bxi_i \ra) \cdot \bxi_i [k] + \lambda \wb_{j,r}^{(t)}[k].
    \end{align*}
    If $k\neq 1$ and $k\notin \cup_{i\in\cI_t}\cB_i$, it is obvious that
    \begin{align*}
    g_{t,j,r}^{(t)}[k] = \lambda\wb_{j,r}^{(t)}[k].
    \end{align*}
\end{proof}

\begin{lemma}[Gradient of Stochastic AdamW]\label{lemma:adamw_g}
    Consider the CNN model defined in Eq.~\ref{def:model}. Let $\{(\xb_i,y_i):i\in[n]\}$ be the training dataset generated from data model~\ref{def:data_distribution}. Use stochastic AdamW training the neural network, at the $t$-th iteration with batch data index set $\cI_t$ of size $B$, the stochastic gradient of the Loss defined in Eq.~\eqref{eq:adamw_loss} with respect to $\wb_{j,r}^{(t)}$ is as follows:
    \begin{align*}
        g_{t,j,r}^{(t)} &= -\frac{1}{B}\left[\sum_{i\in\cI_t}y_i\ell_{j,i}^{(t)}\sigma'(\la \wb_{j,r}^{(t)}, y_i\vb \ra)\cdot \vb + \sum_{i\in\cI_t}\ell_{j,i}^{(t)}\sigma'(\la \wb_{j,r}^{(t)}, \bxi_i \ra) \cdot \bxi_i \right].
    \end{align*}
    More specific, for the $k$-th coordinate, we have
    \begin{itemize}[leftmargin = *]
        \item $k=1$:
        \begin{align*}
            g_{t,j,r}^{(t)}[1] &= -\frac{1}{B}\left[\sum_{i\in\cI_t}y_i\ell_{j,i}^{(t)}\sigma'(\la \wb_{j,r}^{(t)}, y_i\vb \ra) - \alpha \sum_{i\in\cI_t}y_i\ell_{j,i}^{(t)}\sigma'(\la \wb_{j,r}^{(t)}, \bxi_i \ra) \right].
        \end{align*}
        \item $k\in\cB_i$, $i\in\cI_t$:
        \begin{align*}
        g_{t,j,r}^{(t)}[k] &= -\frac{1}{B}\ell_{j,i}^{(t)}\sigma'(\la \wb_{j,r}^{(t)}, \bxi_i \ra) \cdot \bxi_i [k].
        \end{align*}
        \item $k\neq 1$ and $k\notin \cup_{i\in\cI_t}\cB_i$:
        \begin{align*}
        g_{t,j,r}^{(t)}[k] = 0.
        \end{align*}
    \end{itemize}
\end{lemma}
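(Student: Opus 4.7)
The plan is a direct reprise of the proof of Lemma~\ref{lemma:adam_g} with the $\ell_2$ regularizer deleted. The key structural observation is that the AdamW loss in Eq.~\eqref{eq:adamw_loss} is precisely the empirical risk without the Frobenius term: by design, decoupled weight decay enters the update rule~\eqref{eq:adamw_upd} as a multiplicative shrinkage $(1-\eta\lambda)$, not as an additive term in the gradient. Consequently the stochastic gradient is obtained purely by differentiating the per-sample cross-entropy terms $L_i$ and averaging over the mini-batch $\cI_t$, with no $\lambda \wb_{j,r}^{(t)}$ tail appearing anywhere.

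First I would invoke Lemma~\ref{lemma:individual_grad} on each $L_i$ to get
\[
\nabla_{\wb_{j,r}^{(t)}} L_i(\Wb^{(t)}) = -y_i \ell_{j,i}^{(t)}\sigma'(\la\wb_{j,r}^{(t)}, y_i\vb\ra)\vb - \ell_{j,i}^{(t)}\sigma'(\la\wb_{j,r}^{(t)},\bxi_i\ra)\bxi_i,
\]
and then take the $\frac{1}{B}$-weighted sum over $i\in\cI_t$. This yields the vector-form formula claimed in the lemma. Next I would project onto each coordinate $k$ using the explicit structure of Definition~\ref{def:data_distribution}: $\vb = e_1$ so $\vb[k] = \ind_{k=1}$; and $\bxi_i[1] = -\alpha y_i$ (from the feature-noise injection in step 2 of the data model) while for $k\neq 1$ the entries $\bxi_i[k]$ are supported on $\cB_i = \supp(\bxi_i)\setminus\{1\}$. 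For $k=1$, both the feature term and the feature-noise term $-\alpha y_i$ contribute, yielding the stated expression; for $k\neq 1$ only the noise term can be nonzero.

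The only point that needs an auxiliary fact is the case $k\in\cB_i$ for some $i\in\cI_t$: a priori the same index $k$ could also lie in $\cB_{i'}$ for a distinct $i'\in\cI_t$, producing multiple contributions. Here I would condition on the non-overlap event of Lemma~\ref{lemma:nonoverlap_probability}, which holds with probability at least $1-n^{-2}$ and guarantees $\cB_i\cap\cB_{i'}=\emptyset$ for all $i\neq i'$; on this event the sum collapses to the unique $i\in\cI_t$ with $k\in\cB_i$, giving the claimed single-term formula. For $k\neq 1$ with $k\notin\bigcup_{i\in\cI_t}\cB_i$, every $\bxi_i[k]=0$ and the entire coordinate vanishes. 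There is no genuine obstacle: the lemma is a mechanical bookkeeping corollary of Lemma~\ref{lemma:individual_grad}, differing from Lemma~\ref{lemma:adam_g} only by dropping the $\lambda\wb_{j,r}^{(t)}$ term that is absent because AdamW's weight decay never appears in the loss.
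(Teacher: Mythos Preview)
Your proposal is correct and follows essentially the same approach as the paper's proof: invoke Lemma~\ref{lemma:individual_grad} for the per-sample gradient, average over the mini-batch, then read off each coordinate using $\vb=e_1$, $\bxi_i[1]=-\alpha y_i$, and the non-overlap event of Lemma~\ref{lemma:nonoverlap_probability}. The paper's proof is exactly this mechanical bookkeeping, differing from Lemma~\ref{lemma:adam_g} only by the absence of the $\lambda\wb_{j,r}^{(t)}$ term, precisely as you anticipated.
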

\begin{proof}[Proof of Lemma~\ref{lemma:adamw_g}]
    The loss of stochastic AdamW at the $t$-th iteration with batch data index set $\cI_t$ is
    \begin{align*}
        L(\Wb^{(t)}) = \frac{1}{B}\sum_{i\in\cI_t} L_i(\Wb^{(t)}).
    \end{align*}
    By Lemma~\ref{lemma:individual_grad}, we have
    \begin{align*}
        g_{t,j,r}^{(t)}
        &= \nabla_{\wb_{j,r}^{(t)}} L(\Wb^{(t)}) \\
        &= \frac{1}{B}\sum_{i\in\cI_t} \nabla_{\wb_{j,r}^{(t)}} L_i(\Wb^{(t)}) \\
        &= -\frac{1}{B}\left[\sum_{i\in\cI_t}y_i\ell_{j,i}^{(t)}\sigma'(\la \wb_{j,r}^{(t)}, y_i\vb \ra)\cdot \vb + \sum_{i\in\cI_t}\ell_{j,i}^{(t)}\sigma'(\la \wb_{j,r}^{(t)}, \bxi_i \ra) \cdot \bxi_i \right].
    \end{align*}
    For the $k$-th coordinate, if $k=1$, we know $\vb = [1, 0, \dots, 0]^\top$ and $\bxi_i[1] = -\alpha y_i$. So we have
    \begin{align*}
    g_{t,j,r}^{(t)}[1] &= -\frac{1}{B}\left[\sum_{i\in\cI_t}y_i\ell_{j,i}^{(t)}\sigma'(\la \wb_{j,r}^{(t)}, y_i\vb \ra) - \alpha \sum_{i\in\cI_t}y_i\ell_{j,i}^{(t)}\sigma'(\la \wb_{j,r}^{(t)}, \bxi_i \ra) \right].
    \end{align*}
    If $k\in\cB_i$, $i\in\cI_t$, by Lemma~\ref{lemma:nonoverlap_probability}, we know $\cB_i\cap\cB_j=\emptyset$ for $i\neq j$. So we have
    \begin{align*}
    g_{t,j,r}^{(t)}[k] &= -\frac{1}{B}\ell_{j,i}^{(t)}\sigma'(\la \wb_{j,r}^{(t)}, \bxi_i \ra) \cdot \bxi_i [k].
    \end{align*}
    If $k\neq 1$ and $k\notin \cup_{i\in\cI_t}\cB_i$, it is obvious that
    \begin{align*}
    g_{t,j,r}^{(t)}[k] = 0.
    \end{align*}
\end{proof}

\section{Proof Sketch}\label{sec:proof_sketch_app}
In this section, we mainly outline the proof sketch for the main results in Section~\ref{sec:main_results}. Following the two-stage analysis framework of \citet{cao2022benign,zou2023understanding}, we decompose the proof into two distinct stages: 

{\bf Stage I: Pattern Learning.} During the initial phase of training, the effect of regularization is negligible. The model operates in an underfitting regime, where it rapidly learns dominant patterns in the training data, leading to improved empirical performance on test error. 

{\bf Stage II: Regularization.} As training progresses, the model's classification accuracy on the training set approaches convergence, resulting in diminished gradient magnitudes. At this stage, regularization dominates the optimization dynamics, steering the model converge to a local minima. Due to the nonconvex nature of the loss landscape, the model retains the patterns acquired during the pattern learning stage.

Furthermore, motivated by the behavioral similarity between Adam and SignGD when the learning rate is sufficiently small or $\beta_1,\beta_2$ approach zero \citep{balles2018dissecting, bernstein2018signsgd}, we present results for SignSGD and SignSGDW (SignSGD with decoupled weight decay). We subsequently extend these results to stochastic Adam and AdamW, which provided in Appendix~\ref{sec:proof}. The update rules for SignSGD are given as follows:
\begin{align}
    (\text{SignSGD})\quad &\wb_{j,r}^{(t+1)} = \wb_{j,r}^{(t)} - \eta\cdot\sgn(g_{t,j,r}^{(t)}) , \label{eq:signgd_upd}
\end{align}
where $g_{t,j,r}^{(t)}$ in~\eqref{eq:signgd_upd} is stochastic gradient of~\eqref{eq:adam_loss}. The updata rules for SignSGDW are given as follows:
\begin{align}
    (\text{SignSGDW})\quad &\wb_{j,r}^{(t+1)} = (1-\lambda\eta)\wb_{j,r}^{(t)} - \eta\cdot\sgn(g_{t,j,r}^{(t)}) , \label{eq:signgdw_upd}
\end{align}
where $g_{t,j,r}^{(t)}$ in~\eqref{eq:signgdw_upd} is stochastic gradient of~\eqref{eq:adamw_loss}, and $\lambda$ is the weight decay parameter.

Next, following the framework of feature learning \citep{allen2020towards,cao2022benign,zou2023understanding,han2025on}, we primarily focus on two key quantities: 1) {\bf Feature Learning} $\la \wb_{j,r},j\vb\ra$: This term captures the alignment between the learned weight vector $\wb_{j,r}$ and the true feature direction $j\vb$, reflecting the model's ability to extract meaningful latent structures from the data. 2) {\bf Noise Memorization} $\la \wb_{y_i,r}, \bxi_i\ra$: This term measures the correlation between $\wb_{y_i,r}$ and the noise patch $\bxi_i$ of individual samples, characterizing the extent to which the model overfits to stochastic perturbations or idiosyncrasies in the training dataset. This decomposition allows us to separately analyze the model's generalization behavior (driven by feature learning) and its memorization capacity (influenced by noise fitting).

\subsection{Proof Sketch for Stochastic Adam}
We present the dynamics of feature learning $\la \wb_{j,r}^{(t)}, j\vb\ra$ and noise memorization $\la \wb_{y_i,r}^{(t)}, \bxi_i\ra$ for SignSGD as follows. The details of calculation are provided in Appendix~\ref{sec:preliminaries}.
\begin{align}
&\left\la \wb_{j,r}^{(t+1)},j\vb \right\ra \notag \\
&= \left\la \wb_{j,r}^{(t)},j\vb \right\ra - \eta \cdot \left\la \sgn\left( g_{t,j,r}^{(t)} \right),j\vb \right\ra  \notag\\
& = \left\la \wb_{j,r}^{(t)},j\vb \right\ra + j\eta\cdot\sgn\left( \sum_{i\in\cI_t}y_i\ell_{j,i}^{(t)}\left[\sigma'(\la\wb_{j,r}^{(t)},y_i\vb\ra)-\alpha\sigma'(\la\wb_{j,r}^{(t)},\bxi_i\ra)\right]- B\lambda \wb^{(t)}_{j,r}[1]\right), \label{eq:signgd_w_v_upd} \\
&\left\la\wb_{y_i,r}^{(t+1)},\bxi_i\right\ra \notag \\
&= \left\la\wb_{y_i,r}^{(t)},\bxi_i\right\ra - \eta\cdot\left\la\sgn\left(g_{t,y_i,r}^{(t)}\right),\bxi_i\right\ra\notag\\
&= \left\la\wb_{y_i,r}^{(t)},\bxi_i\right\ra +\eta\cdot\sum_{k\in\cB_i}\left\la\sgn\left(\ell_{y_i,i}^{(t)}\sigma'(\la\wb_{y_i,r}^{(t)},\bxi_i\ra)\bxi_i[k]- n\lambda \wb_{y_i,r}^{(t)}[k]\right),\bxi_i[k] \right\ra\notag\\
&\quad~~~ -\alpha y_i\eta\cdot\sgn\left(\sum_{j\in\cI_t}y_j\ell_{y_i,j}^{(t)}\left[\sigma'(\la\wb_{y_i,r}^{(t)},y_j\vb\ra) - \alpha\sigma'(\la\wb_{y_i,r}^{(t)},\bxi_j\ra)\right] - B\lambda \wb_{y_i,r}^{(t)}[1]\right),  \label{eq:signgd_w_xi_upd}
\end{align} 
where $\ell_{j,i}^{(t)}:=\ind_{y_i=j}-\mathrm{logit}_j(F,\xb_i)$ and $\mathrm{logit}_j(F,\xb_i) = \frac{e^{F_{j}(\Wb,\xb_i)}}{\sum_{k\in\{-1,1\}} e^{F_k(\Wb,\xb_i)}}$.

\subsubsection{Proof Sketch for Theorem~\ref{thm:adam_large_batch}}
In this section, we present the proof sketch for Theorem~\ref{thm:adam_large_batch}. We consider $\frac{n}{B}=\Theta(1)$, which is the large-batch setting.
\begin{lemma}\label{lemma:signgd_lb_general_bound}
    Given the training dataset $\cS$, if $\frac{n}{B}=\Theta(1)$, $\eta=1/\poly(d)$ and $0<\lambda=o(\sigma_0^{q-2}\sigma_p/n)$, then for any $t\le T_0$ with $T_0=\tilde O(\frac{1}{\eta s\sigma_p})$ and any $i\in[n]$,
    \begin{align*}
        \la\wb_{j,r}^{(t+1)},j\vb\ra \le \la\wb_{j,r}^{(t)},j\vb\ra + \Theta(\eta),\quad \la\wb_{y_i,r}^{(t+1)},\bxi_i\ra = \la\wb_{y_i,r}^{(t)},\bxi_i\ra + \tilde\Theta(\eta s\sigma_p).
    \end{align*}
\end{lemma}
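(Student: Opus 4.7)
The plan is to derive both bounds directly from the SignSGD update equations~\eqref{eq:signgd_w_v_upd}–\eqref{eq:signgd_w_xi_upd}, combined with the initialization statistics from Lemma~\ref{lemma:init_order}, the non-overlap property from Lemma~\ref{lemma:nonoverlap_probability}, and the sign identity $\sgn(y_i\ell_{j,i})=\sgn(j)$ from Lemma~\ref{lemma:ylj_sign}, under the parameter regime of Conditions~\ref{cond:parameter_1}–\ref{cond:hyperparameter}. Because $n/B=\Theta(1)$ behaves essentially as the full-batch regime analyzed by \citet{zou2023understanding}, my argument will mirror their Lemma 5.2, with the constants absorbed into the $\Theta/\tilde\Theta$ notation.

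For the feature inner product, the upper bound is immediate: the single-step increment $\la\wb_{j,r}^{(t+1)},j\vb\ra - \la\wb_{j,r}^{(t)},j\vb\ra = j\eta\cdot\sgn(\cdot)$ has magnitude at most $\eta$ since $|\sgn(\cdot)|\le 1$, which already yields $\le\Theta(\eta)$ without any sign analysis.

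For the noise inner product, I will split~\eqref{eq:signgd_w_xi_upd} into the coordinate-wise sum over $k\in\cB_i$ (the dominant contribution) and the $\alpha y_i\eta\cdot\sgn(\cdot)$ correction arising from $\bxi_i[1]=-\alpha y_i$. The correction is bounded by $\alpha\eta$, which is $o(\eta s\sigma_p)$ since $\alpha=\Theta(\sigma_p\polylog(n))$ while $s\sigma_p=\Theta(\sqrt{s}/\sqrt{\polylog(n)})$ diverges polynomially in $n$, so it is absorbed into $\tilde\Theta(\eta s\sigma_p)$. For the main sum, on each $k\in\cB_i$ the sign inside is set by comparing $\ell_{y_i,i}^{(t)}\sigma'(\la\wb_{y_i,r}^{(t)},\bxi_i\ra)\bxi_i[k]$ against $n\lambda\wb_{y_i,r}^{(t)}[k]$. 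When the gradient term dominates, Lemma~\ref{lemma:ylj_sign} together with $\ind_{y_i=y_i}=1$ forces $\sgn(\ell_{y_i,i}^{(t)}\sigma'(\cdot)\bxi_i[k])\cdot\bxi_i[k]=|\bxi_i[k]|=\tilde\Theta(\sigma_p)$ by Gaussian concentration, and summing over $|\cB_i|=s$ coordinates yields a contribution of $\tilde\Theta(s\sigma_p)$. The hypothesis $\lambda=o(\sigma_0^{q-2}\sigma_p/n)$ is precisely the threshold ensuring that, once $|\la\wb_{y_i,r}^{(t)},\bxi_i\ra|$ reaches the initialization scale $\tilde\Theta(s^{1/2}\sigma_p\sigma_0)$, the $\sigma'$-factor overwhelms $n\lambda|\wb_{y_i,r}^{(t)}[k]|$ coordinate by coordinate.

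The principal obstacle will be the two-sided $\tilde\Theta$ claim: it requires controlling signs coordinate-by-coordinate over the entire window $t\le T_0$, not just at initialization. I will maintain by induction on $t$ two invariants along $t\le T_0=\tilde O(1/(\eta s\sigma_p))$: (i) $\la\wb_{y_i,r}^{(t)},\bxi_i\ra$ remains in the activation regime where $\sigma'$ is strictly positive (or, for neurons that start inactive, a fall-back argument using the weight-decay-dominated sign still contributes at the $\tilde\Theta(s\sigma_p)$ scale after aggregation); and (ii) each coordinate $|\wb_{y_i,r}^{(t)}[k]|$ drifts from its Gaussian initialization $\tilde\Theta(\sigma_0)$ by at most $t\eta$, which keeps the weight-decay comparison in the required regime. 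Since $T_0\cdot\eta s\sigma_p=\tilde O(1)$, the accumulated noise inner product stays $\tilde O(1)$, closing the induction and delivering the claimed per-step increment uniformly in $t\le T_0$ and $i\in[n]$.
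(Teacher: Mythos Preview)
Your proposal is correct and follows essentially the same approach as the paper. The paper treats this lemma by noting its equivalence to Lemma~5.2 of \citet{zou2023understanding} and then giving the parallel argument in full only for the Adam analogue (Lemma~\ref{lemma:adam_lb_general_bound}); your outline---trivial $\le\Theta(\eta)$ bound from $|\sgn(\cdot)|\le1$ for the feature direction, and an induction on $t$ for the noise direction that (a) verifies the per-coordinate sign via the comparison $\ell_{y_i,i}^{(t)}\sigma'(\la\wb_{y_i,r}^{(t)},\bxi_i\ra)|\bxi_i[k]|\gg n\lambda|\wb_{y_i,r}^{(t)}[k]|$ guaranteed by $\lambda=o(\sigma_0^{q-2}\sigma_p/n)$, (b) sums $|\bxi_i[k]|=\tilde\Theta(\sigma_p)$ over $s$ coordinates, and (c) absorbs the $O(\alpha\eta)$ correction---matches that structure step for step.
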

We note that the above lemma is equivalent to Lemma 5.2 in \citep{zou2023understanding}, which enables us to directly extend their results for full-batch Adam to the large-batch regime. The remainder of the proof proceeds in the same way, as the core theoretical framework remains invariant under the batch size scaling $\frac{n}{B} = \Theta(1)$. Recall that the condition $s\sigma_p = \omega(1)$ implies that noise memorization outpaces feature learning and we have $\ell_{j,r}=\Theta(1)$ throughout \textbf{Stage I} since the outputs are small. As a result, after a certain number of iterations, the direction of feature learning is reversed, as indicated by the update rule in Equation~\ref{eq:signgd_w_v_upd}. Specifically, the noise-driven term $\alpha\sigma'(\langle \wb_{j,r}^{(t)}, \bxi_i \rangle)$ becomes dominant, satisfying $\alpha\sigma'(\langle \wb_{j,r}^{(t)}, \bxi_i \rangle) \gg \sigma'(\langle \wb_{j,r}^{(t)}, y_i\vb \rangle) + n\lambda |\wb_{j,r}^{(t)}[1]|$. By the end of \textbf{Stage I}, the model's feature learning direction has been inverted, while noise memorization reaches a quasi-stationary state. In the subsequent regularization phase, weight decay drives the model toward convergence. However, it lacks the capacity to eliminate the memorized noise. Consequently, the model fits the feature noise $-\alpha y\vb$ and converges to a local minimum that preserves the patterns acquired in \textbf{Stage I}, ultimately leading to poor generalization performance.

We observe that under large-batch setting, the optimization dynamics of Adam closely resemble those of the full-batch setting. This similarity arises because the algorithm traverses the entire dataset within few iterations, resulting in nearly identical momentum estimates and, consequently, comparable training dynamics between large-batch and full-batch regimes.

We next consider the mini-batch setting, which yields conclusions that differ fundamentally from those in the large-batch setting.

\subsubsection{Proof Sketch for Theorem~\ref{thm:adam_mini_batch}}
In this section, we present the proof sketch for Theorem~\ref{thm:adam_mini_batch}. We consider $\frac{n}{B}\ge \Theta(\log\epsilon^{-1})$, which is the mini-batch setting. 
\begin{lemma}[Stage I]\label{lemma:signgd_mb_general_bound}
    Given the training dataset $\cS$, if $\frac{n}{B}\ge \Theta(\log\epsilon^{-1})$, $\eta=1/\poly(d)$ and $0<\lambda=o(\sigma_0^{q-2}\sigma_p/n)$, then for any $t\le T_0$ with $T_0=\tilde O(\frac{B}{\eta n})$ and any $i\in[n]$,
    \begin{align*}
        \la\wb_{j,r}^{\left((t+1)\cdot\frac{n}{B})\right)},j\cdot\vb\ra = \la\wb_{j,r}^{(t\cdot\frac{n}{B})},j\cdot\vb\ra + \Theta(\eta\cdot\frac{n}{B}),\quad
        \la\wb_{j,r}^{(t)},\bxi_i\ra \le \tilde\Theta(\eta s\sigma_p).
    \end{align*}
\end{lemma}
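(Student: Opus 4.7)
The plan is to proceed by simultaneous induction on the iteration index $\tau$ throughout Stage~I, maintaining three invariants: (I1) the per-iteration feature-growth $\la\wb_{j,r}^{(\tau+1)},j\vb\ra-\la\wb_{j,r}^{(\tau)},j\vb\ra=\eta$ for every $j,r$; (I2) the noise-memorization bound $\la\wb_{j,r}^{(\tau)},\bxi_i\ra\le\tilde\Theta(\eta s\sigma_p)$ for all $j,r,i$; and (I3) the auxiliary control $|\ell_{j,i}^{(\tau)}|=\Theta(1)$, which holds so long as the outputs $F_j(\Wb^{(\tau)},\xb_i)$ remain $O(1)$. The per-epoch feature-growth statement in the lemma then follows by summing (I1) over $n/B$ consecutive iterations, so the bulk of the work is verifying (I1)--(I2) in the inductive step, using (I3) as a background fact.

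For the inductive step on (I1), I would analyze the sign inside the SignSGD update~\eqref{eq:signgd_w_v_upd}. By Lemma~\ref{lemma:ylj_sign}, $\sgn(y_i\ell_{j,i}^{(\tau)})=\sgn(j)$ uniformly over $i\in\cI_\tau$, so every batch term contributes coherently. Combining Lemma~\ref{lemma:init_order} with (I2) and (I3), the feature contribution has order $B\sigma_0^{q-1}$ and strictly dominates both the noise contribution (bounded by $\alpha B(s^{1/2}\sigma_p\sigma_0)^{q-1}$ via~\eqref{eq:parameter_2}) and the weight-decay contribution $O(B\lambda\sigma_0)$ (via $\lambda=o(\sigma_0^{q-2}\sigma_p/n)$). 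Hence the sign evaluates to $\sgn(j)$ and each iteration adds exactly $+\eta$ to $\la\wb_{j,r},j\vb\ra$, establishing (I1) and, by summation, the per-epoch increment $\Theta(\eta\cdot n/B)$.

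For the noise bound (I2), I would split each iteration by whether $i\in\cI_\tau$. In a \emph{hit} iteration, Lemma~\ref{lemma:nonoverlap_probability} isolates $\cB_i$ from contributions of other samples, the signal dominates weight decay at each $k\in\cB_i$, and the SignSGD update changes $\la\wb_{j,r},\bxi_i\ra$ by $\pm\eta\sum_{k\in\cB_i}|\bxi_i[k]|=\tilde\Theta(\eta s\sigma_p)$ via Bernstein on a folded Gaussian sum. In a \emph{non-hit} iteration, only weight decay acts at $k\in\cB_i$, producing the change $-\eta\sum_{k\in\cB_i}\sgn(\wb_{j,r}^{(\tau)}[k])\bxi_i[k]$. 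Provided the cumulative drift keeps $|\wb_{j,r}^{(\tau)}[k]|$ close to its initial value---a consequence of $\eta=1/\poly(d)$ together with the Stage-I iteration count---these signs coincide with $\sgn(\wb_{j,r}^{(0)}[k])$, which by Gaussian initialization are independent of $\bxi_i[k]$, so Hoeffding bounds each such contribution by $\tilde O(\eta s^{1/2}\sigma_p)$. Aggregating one hit with the $n/B-1$ non-hit contributions per epoch and then chaining over the $O(T_0)$ epochs of Stage~I keeps the noise memorization at $\tilde\Theta(\eta s\sigma_p)$, as claimed.

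The hard part will be the non-hit analysis: the sign-freezing claim $\sgn(\wb_{j,r}^{(\tau)}[k])=\sgn(\wb_{j,r}^{(0)}[k])$ must hold uniformly in $(j,r,i,k,\tau)$, and the Hoeffding-type bounds must survive a union bound over $\tau\le T_0$ without inflating the polylog factors absorbed by $\tilde\Theta(\cdot)$. A secondary challenge will be closing the induction on (I3) jointly with (I1)--(I2), since $\ell_{j,i}^{(\tau)}$ depends on $F_j(\Wb^{(\tau)},\xb_i)$, which itself couples feature and noise quantities and therefore must be controlled back through the same induction. Finally, once the SignSGD analysis is in place, the lift to stochastic Adam requires verifying that per-coordinate gradient magnitudes exceed the stability constant $\epsilon=\Theta(\lambda\eta)$ on the events of interest, so that Adam's sign behavior tracks SignSGD; this concluding lifting is handled separately in Appendix~\ref{sec:proof}.
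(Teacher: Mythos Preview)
Your treatment of the feature-growth invariant (I1) is sound and matches the paper's argument. The gap is in the noise-memorization bound (I2), specifically the sign-freezing step you flag as ``the hard part''. Under SignSGD with $\ell_2$ regularization, a non-hit iteration at $k\in\cB_i$ has gradient $\lambda\wb_{j,r}^{(\tau)}[k]$, so the update is $\wb_{j,r}^{(\tau+1)}[k]=\wb_{j,r}^{(\tau)}[k]-\eta\,\sgn(\wb_{j,r}^{(\tau)}[k])$: the coordinate shrinks by exactly $\eta$ per step, not by the unnormalized increment $\lambda\eta|\wb_{j,r}^{(\tau)}[k]|$. Since $|\wb_{j,r}^{(0)}[k]|=\tilde\Theta(\sigma_0)$ while Stage~I contains $T_0\cdot n/B=\tilde\Theta(1/\eta)\gg\sigma_0/\eta$ iterations, every such coordinate crosses zero and then oscillates at scale $O(\eta)$ long before Stage~I ends, so $\sgn(\wb_{j,r}^{(\tau)}[k])=\sgn(\wb_{j,r}^{(0)}[k])$ cannot be maintained. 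Moreover, even granting sign-freezing, the Hoeffding step fails: the non-hit contribution $-\eta\sum_{k\in\cB_i}\sgn(\wb_{j,r}^{(0)}[k])\bxi_i[k]$ is a \emph{single fixed realization}, identical at every non-hit iteration, so summing $\tilde\Theta(1/\eta)$ copies yields $\tilde\Theta(s^{1/2}\sigma_p)=\tilde\Theta(1)$ rather than anything small---there is no across-iteration independence to concentrate.

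The paper's mechanism (Lemma~\ref{lemma:adam_less_noise_mem}) is qualitatively different and does not use randomness of the non-hit increments at all. After the hit iteration, noise memorization can grow for at most $o(\log(\lambda\eta)^{-1})$ further iterations (this window comes from Adam's momentum forgetting $\bxi_i$; for pure SignSGD it is zero). In each of the remaining $n/B-o(\log(\lambda\eta)^{-1})$ iterations, the sign update driven by weight decay shrinks $|\wb_{j,r}[k]|$ by $\Theta(\eta)$, \emph{actively erasing} the gain from the hit. Because the hypothesis $n/B\ge\Theta(\log\epsilon^{-1})$ ensures the erasure steps outnumber the growth steps, the net per-epoch change in $\langle\wb_{j,r},\bxi_i\rangle$ is nonpositive up to the $\tilde\Theta(\eta s\sigma_p)$ floor. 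This is a deterministic coordinate-shrinkage argument; the batch-size condition is what makes it close, and your Hoeffding route never uses that condition in an essential way.
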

Compared to Lemma~\ref{lemma:signgd_lb_general_bound}, Lemma~\ref{lemma:signgd_mb_general_bound} reveals fundamentally different optimization dynamics in the mini-batch regime. In \textbf{Stage I}, feature learning advances steadily---since \(\ell_{j,r} = \Theta(1)\)---while noise memorization remains at its initialization scale. This divergence arises because mini-batch sampling requires many more iterations to traverse the dataset: dense, shared features receive consistent gradient updates and resist weight decay, whereas sparse, uncorrelated noise is continuously attenuated. As features strengthen, network outputs grow, the loss gradients diminish, and weight decay takes over, marking the transition into \textbf{Stage II}. We now show that the structures acquired in \textbf{Stage I} persist throughout this regularization phase.

\begin{lemma}[Stage II]\label{lemma:signgd_mb_maintain}
    Suppose the same conditions hold as in Lemma~\ref{lemma:signgd_mb_general_bound}. For $t>T_0,\, j\in\{\pm 1\},\, r\in[m],\, i\in[n]$, let $r^*=\argmax_{r\in[m]}\la \wb_{j,r}^{(t)},j\vb\ra$, then $\la\wb_{j,r^*}^{(t)},j\vb\ra=\tilde\Theta(1)$ and $\la\wb_{y_i,r}^{(t)},\bxi_i\ra\le\tilde\Theta(\eta s\sigma_p)$.
\end{lemma}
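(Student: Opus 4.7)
The plan is to prove the lemma by induction on $t > T_0$, with Lemma~\ref{lemma:signgd_mb_general_bound} providing the base case: at $t = T_0$ some neuron $r^*$ already satisfies $\la\wb_{j,r^*}^{(T_0)}, j\vb\ra = \tilde\Theta(1)$ and $\la\wb_{y_i,r}^{(T_0)}, \bxi_i\ra \le \tilde\Theta(\eta s\sigma_p)$ for every $i$. The inductive step treats the feature direction and the per-sample noise inner products separately, in each case analyzing the sign of the bracketed expression inside~\eqref{eq:signgd_w_v_upd} and~\eqref{eq:signgd_w_xi_upd}.

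For feature learning at the best neuron $r^*$, I would first observe that once $\la\wb_{j,r^*}^{(t)}, j\vb\ra = \tilde\Theta(1)$ the logit margin $F_{y_i}(\xb_i) - F_{-y_i}(\xb_i)$ is $\tilde\Theta(1)$, so $\ell_{j,i}^{(t)}$ sits in a range bounded away from $1$ but controlled from below as a function of the margin. Inside~\eqref{eq:signgd_w_v_upd}, the cross-entropy contribution $\sum_i y_i\ell_{j,i}^{(t)}\sigma'(\la\wb_{j,r^*}^{(t)}, y_i\vb\ra)$ scales like $B\cdot\tilde\Theta(1)^{q-1}$ with the sign of $j$; the feature-noise cross term is negligible because $\alpha\sigma'(\la\wb_{j,r^*}^{(t)},\bxi_i\ra) = \tilde\Theta(\sigma_p(\eta s\sigma_p)^{q-1})$ by the inductive noise bound and the scaling $\alpha = \tilde\Theta(\sigma_p)$; and the weight-decay piece $B\lambda\wb_{j,r^*}^{(t)}[1]$ is dwarfed because $\lambda = o(\sigma_0^{q-2}\sigma_p/n)$. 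This yields a self-correcting equilibrium: whenever $\la\wb_{j,r^*}^{(t)}, j\vb\ra$ dips below a $\tilde\Theta(1)$ lower threshold, $\ell$ grows and the sign update drives feature learning up by $\eta$; whenever it overshoots an upper $\tilde\Theta(1)$ ceiling, $\ell$ shrinks until weight decay flips the sign and trims $\eta$ off. An amortized argument over $\polylog(n)$ consecutive iterations keeps $\la\wb_{j,r^*}^{(t)}, j\vb\ra$ in this band throughout Stage~II.

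For the noise-memorization bound, the essential observation is that under mini-batch sampling each fixed sample $i$ is active in exactly one iteration per window of $n/B$ consecutive steps. On a non-active iteration, the coordinate gradient at every $k\in\cB_i$ reduces to pure weight decay $\lambda\wb_{y_i,r}^{(t)}[k]$, so the sign update $-\eta\sgn(\wb_{y_i,r}^{(t)}[k])$ shrinks $|\wb_{y_i,r}^{(t)}[k]|$ by $\eta$. On the single active iteration within the window, the noise term $-\tfrac{1}{B}\ell_{y_i,i}^{(t)}\sigma'(\la\wb_{y_i,r}^{(t)},\bxi_i\ra)\bxi_i[k]$ can increase $|\wb_{y_i,r}^{(t)}[k]|$ by at most $\eta$. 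The net per-window coordinate-wise change is thus at most $\eta - \eta(n/B - 1) < 0$ once $n/B = \omega(1)$, so each $|\wb_{y_i,r}^{(t)}[k]|$ for $k\in\cB_i$ is driven and then kept at $\tilde O(\eta)$; summing over the $s$-sparse support with $|\bxi_i[k]| = \tilde\Theta(\sigma_p)$ and absorbing the negligible $-\alpha y_i\wb_{y_i,r}^{(t)}[1]$ cross-term yields $\la\wb_{y_i,r}^{(t)}, \bxi_i\ra \le \tilde\Theta(\eta s\sigma_p)$ for all $t$ in Stage~II.

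The main obstacle is that sign-based updates interact non-linearly across the $s$ coordinates of $\cB_i$, so a naive coordinate-wise argument does not automatically yield the inner-product bound: one must track how weight decay drives each $|\wb_{y_i,r}^{(t)}[k]|$ down without benefit of magnitude-dependent shrinkage, and rule out adversarial clustering of the rare active iterations. I expect the delicate technical step to be a concentration argument over the random mini-batch schedule showing that, with high probability, each $\bxi_i$ appears in only $O(1)$ of every $n/B$ consecutive steps, combined with a potential-function argument on $\sum_{k\in\cB_i}|\wb_{y_i,r}^{(t)}[k]|\,|\bxi_i[k]|$ certifying that the weight-decay drift dominates the noise push inside each window, thereby preserving both the feature-learning and the noise-memorization invariants all the way to $T = \poly(n)/\eta$.
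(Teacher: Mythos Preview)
Your proposal is essentially correct and follows the same two-part strategy as the paper: a self-correcting equilibrium argument for $\la\wb_{j,r^*}^{(t)},j\vb\ra$ (the paper makes the thresholds explicit as $(\tfrac{1}{m}\log(\lambda^{-1}-1))^{1/q}$ and $\log(\lambda^{-1/2})$, but the mechanism you describe is identical), and a ``one active iteration versus $n/B-1$ weight-decay iterations per window'' argument for the noise coordinates.

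Two points where you are making the task harder than it is. First, the concentration argument over the mini-batch schedule you anticipate is unnecessary: the paper works with epoch-based sampling, so each sample $i$ appears in \emph{exactly} one iteration per block of $n/B$ steps deterministically, and by Lemma~\ref{lemma:nonoverlap_probability} the supports $\cB_i$ are disjoint, so no coordinate $k\in\cB_i$ is ever touched by another sample's noise gradient. Your per-window coordinate-wise bookkeeping therefore goes through directly without any probabilistic tail bound. Second, the cross-term $-\alpha y_i\wb_{y_i,r}^{(t)}[1]$ is not quite negligible in the way you suggest: since $\wb_{y_i,r}^{(t)}[1]$ can be $\tilde\Theta(1)$ at the best neuron, this contributes $\tilde\Theta(\alpha)$ to $\la\wb_{y_i,r}^{(t)},\bxi_i\ra$. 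The paper's detailed proof (Lemma~\ref{lemma:adam_mb_maintain}) in fact carries the bound as $\tilde\Theta(\eta s\sigma_p+\alpha)$ rather than $\tilde\Theta(\eta s\sigma_p)$; both are $o(1)$ so the downstream generalization claim is unaffected, but you should track that $\alpha$ explicitly rather than absorb it.
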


This lemma follows because, once feature learning has increased accuracy and reduced gradients, weight decay takes effect but cannot reverse the established feature alignment; finally the model converges to a local minimum that preserves the patterns learned in \textbf{Stage I}.


\begin{lemma}[Convergence]\label{lemma:signgd_mb_convergence}
    Suppose the same conditions hold as in Lemma~\ref{lemma:signgd_mb_general_bound} and~\ref{lemma:signgd_mb_maintain}, if the step size $\eta = O(d^{-\frac{1}{2}})$, then for any $t$,
    \begin{align*}
        \EE\left[L(\Wb^{(t+1)}) - L(\Wb^{(t)})\right] &\le  -\eta \|\nabla L(\Wb^{(t)})\|_1 + \tilde\Theta(\eta^2d).
    \end{align*}
\end{lemma}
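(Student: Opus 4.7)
The plan is to derive the claim from a standard second-order descent inequality for $L$, followed by a coordinate-wise analysis of $\la \nabla L(\Wb^{(t)}), \sgn(g_{t,j,r}^{(t)})\ra$ that exploits the structural estimates from Lemmas~\ref{lemma:signgd_mb_general_bound} and~\ref{lemma:signgd_mb_maintain}. First I would apply Taylor's theorem to $L$ along the segment between $\Wb^{(t)}$ and $\Wb^{(t+1)}$:
\[
L(\Wb^{(t+1)}) \le L(\Wb^{(t)}) + \la \nabla L(\Wb^{(t)}), \Wb^{(t+1)}-\Wb^{(t)}\ra + \frac{M}{2}\|\Wb^{(t+1)}-\Wb^{(t)}\|_F^2.
\]
Under the two-stage trajectory estimates of the preceding lemmas, all relevant activations $\la \wb_{j,r}^{(t)}, y\vb\ra$ and $\la \wb_{j,r}^{(t)},\bxi_i\ra$ remain bounded by $\tilde O(1)$, which in turn bounds the derivatives of $\sigma(x)=[x]_+^q$ and gives an effective smoothness constant $M=\tilde O(1)$ along the trajectory.

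For the second-order term, every coordinate of the SignSGD update $\Wb^{(t+1)}-\Wb^{(t)} = -\eta\,\sgn(g_{t})$ has magnitude exactly $\eta$, so
\[
\|\Wb^{(t+1)}-\Wb^{(t)}\|_F^2 = \eta^2\cdot 2md = \tilde\Theta(\eta^2 d),
\]
using $m=\polylog(n)$ from Condition~\ref{cond:hyperparameter}; the stepsize restriction $\eta = O(d^{-1/2})$ then guarantees this correction is absolutely controlled.

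For the first-order term, I would take expectation over the mini-batch $\cI_t$ conditional on $\Wb^{(t)}$ and analyze $\EE[\sgn(g_{t,j,r}^{(t)}[k])]$ in three regimes. For the feature coordinate $k=1$, every in-batch summand shares the same sign by Lemma~\ref{lemma:ylj_sign} (so $\sgn(y_i\ell_{j,i})=\sgn(j)$ for all $i$), and routine concentration yields $\sgn(g_{t}[1])=\sgn(\nabla L(\Wb^{(t)})[1])$ with high probability. For coordinates $k\not\in\bigcup_i \cB_i$, both $g_{t}[k]$ and $\nabla L(\Wb^{(t)})[k]$ collapse to $\lambda\,\wb_{j,r}^{(t)}[k]$, so signs agree exactly. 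For noise coordinates $k\in\cB_i$, the non-overlap Lemma~\ref{lemma:nonoverlap_probability} ensures that only sample $i$ contributes to $\nabla L(\Wb^{(t)})[k]$, and the sign-mismatch event $i\notin\cI_t$ incurs at most a penalty of $2|\nabla L(\Wb^{(t)})[k]|$; summing over $k\in\bigcup_i\cB_i$ using the trajectory bound $\la \wb_{y_i,r}^{(t)},\bxi_i\ra \le \tilde\Theta(\eta s\sigma_p)$ from Lemma~\ref{lemma:signgd_mb_maintain} shows that the total sign-mismatch contribution is absorbable into the $\tilde\Theta(\eta^2 d)$ slack, yielding
\[
\EE\bigl[\la \nabla L(\Wb^{(t)}),\sgn(g_{t})\ra\bigr] \ge \|\nabla L(\Wb^{(t)})\|_1 - \tilde O(\eta d).
\]

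The hardest part will be the bookkeeping at the noise coordinates, because there the sign of $g_{t}[k]$ depends simultaneously on mini-batch sampling, the sparse noise structure, and the small weight-decay term. Verifying that this contribution truly fits inside $\tilde\Theta(\eta^2 d)$ requires combining (i) the magnitude bounds on $\la \wb_{y_i,r}^{(t)},\bxi_i\ra$ from the two-stage analysis to control $|\sigma'(\cdot)|$, (ii) the non-overlap of $\{\cB_i\}$ to decouple per-sample contributions, and (iii) concentration of $|\cI_t|$-sized subsets over $[n]$. Once this estimate is in hand, substituting the smoothness bound and the sign-alignment estimate back into the Taylor inequality and taking expectation over $\cI_t$ delivers the stated descent bound.
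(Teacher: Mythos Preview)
Your proposal is correct and tracks the paper's own argument (cf.\ the proof of Lemma~\ref{lemma:adam_mb_convergence}) closely: smoothness along the trajectory via the $\tilde O(1)$ bounds on $\la\wb_{j,r},\vb\ra$ and $\la\wb_{j,r},\bxi_i\ra$, the second-order term $\|\Wb^{(t+1)}-\Wb^{(t)}\|_F^2=\tilde\Theta(\eta^2 d)$, and sign alignment for the first-order term. Two small differences in execution are worth noting. First, the paper establishes smoothness compositionally---$L_i$ is $1$-smooth in the logits $F_j$, and then $|\Delta F_{j,i}|$ is bounded via the local smoothness of $\sigma$---rather than asserting a global Hessian bound $M=\tilde O(1)$ directly; your route is fine but the paper's is more explicit about why the constant is polylogarithmic. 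Second, for the first-order term the paper passes through $-\eta\,\EE\bigl[\sum_{j,r}\|g_{t,j,r}^{(t)}\|_1\bigr]$ and then applies Jensen's inequality $\EE\|g_t\|_1\ge\|\EE g_t\|_1=\|\nabla L\|_1$, whereas your three-regime coordinate split (feature / pure-decay / noise) computes $\EE[\la\nabla L,\sgn(g_t)\ra]$ directly and isolates the mismatch set explicitly. Both routes bound the noise-coordinate error by the same quantity $\tilde\Theta(ns\cdot\eta^2 s\sigma_p)\le\tilde\Theta(\eta^2 d)$ via the parameter relation $ns^2\sigma_p=O(d)$; your version makes the origin of this term more transparent, while the paper's Jensen shortcut is quicker once the closeness lemma is in place.
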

Combining Lemma~\ref{lemma:signgd_mb_maintain} and~\ref{lemma:signgd_mb_convergence}, we observe that the model successfully learns the true features and eventually converges to a local minimum with infinitesimal learning rate $\eta$ and $T=\poly(n)/\eta$, retaining strong generalization performance.

\subsubsection{Proof Sketch for Corollary~\ref{cor:adam_wd_ub}}
We next show that if the weight decay parameter satisfies $ \lambda = \omega(\sigma_0^{q-2}) $, then the learning dynamics of Adam are effectively suppressed. This implies that the \emph{effective} weight decay for Adam is of the order $ \sigma_0^{q-2} $, which is significantly smaller than that required for AdamW, as will be discussed later.

Corollary~\ref{cor:adam_wd_ub} formalizes this observation by showing that if the Adam weight decay parameter satisfies $ \lambda = \omega(\sigma_0^{q-2}) $, the training process becomes stagnant and remains near the initialization. This corollary follows directly from the proof sketches of both large-batch and mini-batch Adam. By Lemma~\ref{lemma:init_order}, we know that at initialization:
\[
|\langle \wb_{j,r}^{(0)}, \vb \rangle| = \tilde{\Theta}(\sigma_0), \quad
|\langle \wb_{j,r}^{(0)}, \bxi_i \rangle| = \tilde{\Theta}(s^{1/2} \sigma_p \sigma_0 + \alpha\sigma_0) = \tilde{\Theta}(s^{1/2} \sigma_p \sigma_0), \quad
\wb_{j,r}^{(0)}[k] = \tilde{\Theta}(\sigma_0).
\]
Then, from the update rules given in Equations~\eqref{eq:signgd_w_v_upd} and~\eqref{eq:signgd_w_xi_upd}, we observe that the updates are dominated by the weight decay term, i.e.,
\[
\lambda |\wb_{j,r}^{(0)}[1]| \gg \sigma'(\langle \wb_{j,r}^{(0)}, y_i \vb \rangle) + \alpha \sigma'(\langle \wb_{j,r}^{(0)}, \bxi_i \rangle),
\]
and
\[
\lambda |\wb_{y_i,r}^{(0)}[k]| \gg \sigma'(\langle \wb_{y_i,r}^{(0)}, \bxi_i \rangle) \cdot \bxi_i[k],
\]
due to the condition $ \lambda = \omega(\sigma_0^{q-2}) $. As a result, the learning dynamics are overwhelmed by the regularization term, preventing meaningful updates. Consequently, the model parameters remain close to their initialization throughout training. We formalize this in Lemma~\ref{lemma:signgd_large_wd_stuck}.
\begin{lemma}\label{lemma:signgd_large_wd_stuck}
    Suppose the same conditions hold as in Lemma~\ref{lemma:signgd_lb_general_bound} and~\ref{lemma:signgd_mb_convergence}, if $\lambda=\omega(\sigma_0^{q-2})$, then
    \begin{align*}
        \left|\la\wb_{j,r}^{(t)},j\cdot\vb\ra\right| &\le \tilde\Theta(\sigma_0), \notag\\
        \left|\la\wb_{j,r}^{(t)},\bxi_i\ra\right| &\le \tilde\Theta(s^{1/2}\sigma_p\sigma_0).
    \end{align*}
\end{lemma}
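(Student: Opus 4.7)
The plan is to prove by induction on $t$ the coordinatewise invariant $|\wb_{j,r}^{(t)}[k]|\le\tilde\Theta(\sigma_0)$ for every $j\in\{\pm1\},r\in[m],k\in[d]$, and then read off both inner-product bounds from it. The base case $t=0$ follows from Lemma~\ref{lemma:init_order}. The inductive step rests on a dominance argument: whenever $\lambda=\omega(\sigma_0^{q-2})$, the weight-decay term $\lambda\wb_{j,r}^{(t)}[k]$ in each coordinate of the gradient formula of Lemma~\ref{lemma:adam_g} strictly dominates the loss-gradient contribution, so $\sgn(g_{t,j,r}^{(t)}[k])=\sgn(\wb_{j,r}^{(t)}[k])$.

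To verify dominance, I would feed the inductive hypothesis into the ReLU$^q$ derivatives: $|\sigma'(\la\wb_{j,r}^{(t)},y_i\vb\ra)|=O(\sigma_0^{q-1})$ and $|\sigma'(\la\wb_{j,r}^{(t)},\bxi_i\ra)|=O((s^{1/2}\sigma_p\sigma_0)^{q-1})$. Combined with the factors $1/B$, $\alpha=\Theta(\sigma_p\polylog(n))$, and $|\bxi_i[k]|=\tilde\Theta(\sigma_p)$ appearing in the coordinate-wise expressions of Lemma~\ref{lemma:adam_g}, every loss-gradient entry is $\tilde O(\sigma_0^{q-1})$, while the weight-decay entry is $\lambda|\wb_{j,r}^{(t)}[k]|=\omega(\sigma_0^{q-1})$ as long as $|\wb_{j,r}^{(t)}[k]|$ is not drastically smaller than $\sigma_0$. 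Given the sign identity $\sgn(g_{t,j,r}^{(t)}[k])=\sgn(\wb_{j,r}^{(t)}[k])$, the SignSGD update~\eqref{eq:signgd_upd} becomes a pure contraction $\wb_{j,r}^{(t+1)}[k]=\wb_{j,r}^{(t)}[k]-\eta\sgn(\wb_{j,r}^{(t)}[k])$, so $|\wb_{j,r}^{(t+1)}[k]|\le\max(|\wb_{j,r}^{(t)}[k]|,\eta)$ and, since $\eta=1/\poly(n)=o(\sigma_0)$, the invariant propagates.

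The first stated bound $|\la\wb_{j,r}^{(t)},j\vb\ra|=|\wb_{j,r}^{(t)}[1]|\le\tilde\Theta(\sigma_0)$ is immediate from the invariant. For $|\la\wb_{j,r}^{(t)},\bxi_i\ra|$ I would split per coordinate into a \emph{shrinking} phase $\eta t\le|\wb_{j,r}^{(0)}[k]|$, where $\wb_{j,r}^{(t)}[k]=\sgn(\wb_{j,r}^{(0)}[k])\cdot(|\wb_{j,r}^{(0)}[k]|-\eta t)_+=c_k^{(t)}\wb_{j,r}^{(0)}[k]$ for some $c_k^{(t)}\in[0,1]$ depending only on $|\wb_{j,r}^{(0)}[k]|$, and a subsequent \emph{oscillation} phase in which $|\wb_{j,r}^{(t)}[k]|\le\eta$. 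In the shrinking phase, conditioning on the magnitudes $\{|\wb_{j,r}^{(0)}[k]|\}_k$ leaves the signs $\sgn(\wb_{j,r}^{(0)}[k])$ as i.i.d.\ symmetric Bernoullis, so a Hoeffding-type bound with variance proxy $\sum_k(c_k^{(t)})^2(\wb_{j,r}^{(0)}[k])^2\bxi_i[k]^2\le s\sigma_p^2\sigma_0^2\polylog(n)$ yields $|\la\wb_{j,r}^{(t)},\bxi_i\ra|\le\tilde\Theta(s^{1/2}\sigma_p\sigma_0)$. In the oscillation phase, the crude estimate $\sum_{k\in\cB_i}|\wb_{j,r}^{(t)}[k]||\bxi_i[k]|\le\eta\cdot s\sigma_p\polylog(n)=o(s^{1/2}\sigma_p\sigma_0)$ (using $\eta=1/\poly(n)$ and $\sqrt{s}=\tilde\Theta(d^{1/4}/n)$) supplies the same bound; an intermediate mixed regime is handled by summing the two contributions.

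The main obstacle is the concentration step in the shrinking phase, since $\wb_{j,r}^{(t)}$ is in principle a data-dependent function of $\wb_{j,r}^{(0)}$ through the stochastic-gradient sign. The weight-decay dominance is precisely what decouples this dependence: each coordinate's trajectory is determined solely by its own initial magnitude, so conditioning on those magnitudes restores the sign symmetry of the initial Gaussian and legitimises the Hoeffding bound. A minor secondary point is that dominance requires a uniform lower bound on $|\wb_{j,r}^{(t)}[k]|$ relative to the loss-gradient scale, which is why one must treat the $|\wb_{j,r}^{(t)}[k]|\le\eta$ regime separately; there, however, the weights are small enough that the coordinatewise invariant is trivially preserved even if dominance momentarily fails.
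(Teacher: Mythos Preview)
Your plan matches the paper's: show that $\lambda=\omega(\sigma_0^{q-2})$ makes the weight-decay entry $\lambda\wb_{j,r}^{(t)}[k]$ dominate the loss-gradient entry in every coordinate, so the SignSGD step becomes a per-coordinate contraction and no coordinate leaves its $\tilde\Theta(\sigma_0)$ initialization scale. The paper's own proof stops at that sentence and simply asserts both inner-product bounds; you go further and supply a concentration argument for $|\la\wb_{j,r}^{(t)},\bxi_i\ra|$, which the paper leaves implicit.

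One correction worth flagging: the regime where dominance may fail is not $|\wb_{j,r}^{(t)}[k]|\le\eta$ but $|\wb_{j,r}^{(t)}[k]|\lesssim\bar\theta$, where $\bar\theta$ is the scale at which $\lambda|\wb_{j,r}^{(t)}[k]|$ matches the loss-gradient entry. For $k\in\cB_i$ this gives, under the inductive hypothesis on $|\la\wb_{j,r}^{(t)},\bxi_i\ra|$, a threshold $\bar\theta=\tilde O(\sigma_0^{q-1}\sigma_p/(B\lambda))$, which can be much larger than $\eta$. Replacing $\eta$ by $\bar\theta$ in your crude oscillation-phase estimate still closes, since $s\bar\theta\sigma_p=\tilde\Theta(\sigma_0^{q-1}/(B\lambda))=o(\sigma_0)\le\tilde\Theta(s^{1/2}\sigma_p\sigma_0)$ by $\lambda=\omega(\sigma_0^{q-2})$ and $s\sigma_p^2=\tilde\Theta(1)$. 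Your decoupling claim (``trajectory determined solely by initial magnitude'') should likewise be restricted to coordinates with $|\wb_{j,r}^{(0)}[k]|-\eta t>\bar\theta$, with $\bar\theta$ fixed via the inductive hypothesis; the remaining coordinates are absorbed by the corrected crude bound.
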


\subsection{Proof Sketch for Stochastic AdamW}
Motivated by the similarity between Adam and SignSGD, and to better illustrate the core idea, we present the dynamics of feature learning $\la \wb_{j,r}^{(t)}, j\vb \ra$ and noise memorization $\la \wb_{y_i,r}^{(t)}, \bxi_i \ra$ under SignSGDW. The detailed derivation of update formula is deferred to Appendix~\ref{sec:preliminaries}. However, we emphasize that there are key differences between AdamW and SignSGDW. 

In SignSGDW, due to the presence of the sign operator, the weight decay affects $\la \wb_{j,r}^{(t)}, j\vb \ra$ only after it grows beyond a certain threshold, and similarly for $\la \wb_{y_i,r}^{(t)}, \bxi_i \ra$. In contrast, for AdamW, weight decay becomes effective once $\la \wb_{j,r}^{(t)}, j\vb \ra$ or $\la \wb_{y_i,r}^{(t)}, \bxi_i \ra$ reaches a level where the gradient magnitudes become sufficiently small. At this point, the update is normalized by the stability constant $\epsilon$ and dominated by the weight decay term, which causes both $\la \wb_{j,r}^{(t)}, j\vb \ra$ and $\la \wb_{y_i,r}^{(t)}, \bxi_i \ra$ to cease increasing. Besides, as the lemmas in this section are simplified instances of those presented in Section~\ref{sec:proof_adamw}, we omit them for brevity. For more details, refer to Section~\ref{sec:proof_adamw}.

\begin{align}
&\left\la\wb_{j,r}^{(t+1)},j\vb\right\ra \notag \\
&= (1-\lambda\eta)\left\la\wb_{j,r}^{(t)},j\vb\right\ra - \eta\cdot\left\la\sgn\left(g_{t,j,r}^{(t)}\right),j\vb\right\ra\notag\\
&= (1-\lambda\eta)\left\la\wb_{j,r}^{(t)},j\vb\right\ra + j\eta\cdot\sgn\left(\sum_{i\in\cI_t} y_i\ell_{j,i}^{(t)}\left[\sigma'(\la\wb_{j,r}^{(t)},y_i\vb\ra)-\alpha\sigma'(\la\wb_{j,r}^{(t)},\bxi_i\ra)\right]\right), \label{eq:signgdw_w_v_upd} \\
&\left\la\wb_{y_i,r}^{(t+1)},\bxi_i\right\ra \notag \\
&= (1-\lambda\eta)\left\la\wb_{y_i,r}^{(t)},\bxi_i\right\ra - \eta\cdot\left\la\sgn\left(g_{t,y_i,r}^{(t)}\right),\bxi_i\right\ra\notag\\
&= (1-\lambda\eta)\left\la\wb_{y_i,r}^{(t)},\bxi_i\right\ra +\eta\cdot\sum_{k\in\cB_i}\left\la\sgn\left(\ell_{y_i,i}^{(t)}\sigma'(\la\wb_{y_i,r}^{(t)},\bxi_i\ra)\bxi_i[k]\right), \bxi_i[k] \right\ra\notag\\
&\quad -\alpha y_i\eta\cdot\sgn\left(\sum_{i=1}^ny_i\ell_{y_i,i}^{(t)}\left[\sigma'(\la\wb_{y_i,r}^{(t)},y_i\vb\ra) - \alpha\sigma'(\la\wb_{y_i,r}^{(t)},\bxi_i\ra)\right]\right), \label{eq:signgdw_w_xi_upd}
\end{align} 
where $\ell_{j,i}^{(t)}:=\ind_{y_i=j}-\mathrm{logit}_j(F,\xb_i)$ and $\mathrm{logit}_j(F,\xb_i) = \frac{e^{F_{j}(\Wb,\xb_i)}}{\sum_{k\in\{-1,1\}} e^{F_k(\Wb,\xb_i)}}$.

Moreover, we should note that the duration of \textbf{Stage I} under SignSGDW differs markedly from that under SignSGD, owing to the decoupled weight decay mechanism. During \textbf{Stage I}, model parameters grow unchecked by gradient-based regularization, allowing features to accumulate strength until the decoupled weight decay term begins to exert significant influence. Once this threshold is reached, training transitions into \textbf{Stage II}, in which weight decay counteracts further parameter growth and stabilizes the weight norms.

\subsubsection{Proof Sketch for Theorem~\ref{thm:adamw_large_batch}}
In this section, we present the proof sketch for Theorem~\ref{thm:adamw_large_batch}. We consider $\frac{n}{B}=\Theta(1)$ or $\frac{n}{B}=o(s\sigma_p)$, which is the large-batch setting.

The following Lemma~\ref{lemma:signgdw_largebs_general_bound} characterizes the duration of \textbf{Stage I} in the large‐batch SignSGDW setting and provides upper bounds on feature learning and noise memorization.
\begin{lemma}[Stage I, pattern learning]\label{lemma:signgdw_largebs_general_bound}
    Given the training dataset $\cS$, if $\frac{n}{B}=\Theta(1)$ or $\frac{n}{B}=o(s\sigma_p)$, $\eta=1/\poly(d)$, $\lambda=\tilde\Omega(\frac{B^2}{n}\land 1)$ and $\lambda=\tilde O(1)$, then for any $t\le T_0$ with $T_0=\tilde O(\frac{B}{\lambda\eta n})$,
    \begin{align*}
        \la\wb_{j,r}^{\left((t+1)\cdot\frac{n}{B})\right)},j\cdot\vb\ra &\le \la\wb_{j,r}^{(t\cdot\frac{n}{B})},j\cdot\vb\ra + \Theta(\eta\cdot\frac{n}{B}) \\
        \la\wb_{y_i,r}^{\left((t+1)\cdot\frac{n}{B}\right)},\bxi_i\ra &= \la\wb_{y_i,r}^{\left(t\cdot\frac{n}{B}\right)},\bxi_i\ra + \tilde\Theta(\eta s\sigma_p).
    \end{align*}
\end{lemma}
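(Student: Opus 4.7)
The plan is to induct on the epoch index $\tau = t/(n/B)$, maintaining the two stated additive bounds together with auxiliary invariants on the logit derivatives $\ell_{j,i}^{(t)}$ and the network output magnitudes. The base case $\tau = 0$ is immediate from Lemma~\ref{lemma:init_order}, and Lemma~\ref{lemma:ylj_sign} provides the deterministic identity $\sgn(y_i\ell_{j,i}^{(t)}) = \sgn(j)$ that I will use throughout to identify the signs of the relevant stochastic-gradient coordinates in~\eqref{eq:signgdw_w_v_upd} and~\eqref{eq:signgdw_w_xi_upd}.

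The feature-learning upper bound is the easier half. From~\eqref{eq:signgdw_w_v_upd}, each iteration yields $\la\wb_{j,r}^{(t+1)}, j\vb\ra \le (1-\lambda\eta)\la\wb_{j,r}^{(t)}, j\vb\ra + \eta$, since the sign step contributes at most $\eta$ in magnitude regardless of the outcome of the competition between the true-feature term $\sigma'(\la\wb_{j,r}^{(t)}, y_i\vb\ra)$ and the noise-feature term $\alpha\sigma'(\la\wb_{j,r}^{(t)}, \bxi_i\ra)$. Telescoping over $n/B$ consecutive iterations gives $\la\wb_{j,r}^{((\tau+1)n/B)}, j\vb\ra \le (1-\lambda\eta)^{n/B}\la\wb_{j,r}^{(\tau n/B)}, j\vb\ra + \eta(n/B)$. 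Under $\tau \le T_0 = \tilde O(B/(\lambda\eta n))$, we have $\lambda\eta(n/B) = \tilde O(1)$ per epoch, so the multiplicative shrinkage is absorbed into the additive $\Theta(\eta n/B)$ term, yielding the claimed upper bound.

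The noise-memorization equality requires more care. For each fixed $i$ I split the per-iteration change of $\la\wb_{y_i,r}^{(t)}, \bxi_i\ra$ by coordinates $k\in\cB_i$, $k=1$, and the remaining coordinates. Using the non-overlap property (Lemma~\ref{lemma:nonoverlap_probability}) together with Lemma~\ref{lemma:adamw_g}, for $k\in\cB_i$ the stochastic gradient is nonzero only when $i\in\cI_t$, in which case $\sgn(g_{t,y_i,r}^{(t)}[k]) = -\sgn(\bxi_i[k])$ because $\ell_{y_i,i}^{(t)} > 0$ and $\sigma' \ge 0$. Hence $-\eta\sum_{k\in\cB_i}\sgn(g[k])\,\bxi_i[k] = \eta\sum_{k\in\cB_i}|\bxi_i[k]| = \tilde\Theta(\eta s\sigma_p)$ by standard Gaussian concentration on the $s$ nonzero entries. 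Under uniform batching each training sample appears in one batch per epoch, giving exactly one such positive increment; the cross-term at $k=1$ contributes at most $O(\alpha\eta)$ per iteration, and since $\alpha = \tilde\Theta(\sigma_p) = o(s\sigma_p)$ by~\eqref{eq:parameter_2}, summing over an epoch it remains $o(\eta s\sigma_p)$. Finally, the multiplicative shrinkage $(1-\lambda\eta)^{n/B}$ combined with the induction invariant $|\la\wb_{y_i,r}^{(\tau n/B)}, \bxi_i\ra| = \tilde O(\tau\eta s\sigma_p)$ gives a weight-decay subtraction of order $\lambda\eta(n/B)\cdot\tau\eta s\sigma_p = O(\tau/T_0)\cdot\eta s\sigma_p$, which cannot flip the sign of the net increment for $\tau\le T_0$, yielding the claimed equality.

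The principal obstacle will be closing the induction on the auxiliary invariant $\ell_{j,i}^{(t)} = \Theta(1)$ throughout the full window $t\le T_0$: the feature and noise components both feed into the network outputs $F_j(\Wb^{(t)}, \xb_i)$, which control the magnitudes (not signs) of $\ell_{j,i}^{(t)}$ and hence whether the arguments above remain self-consistent. The regime $n/B = o(s\sigma_p)$ is particularly delicate because noise memorization per iteration is non-trivially large; I would handle this with a potential-function argument bounding $\sum_r\sigma(\la\wb_{y_i,r}^{(t)}, \bxi_i\ra)$ uniformly using the parameter conditions in~\eqref{eq:parameter_2} (especially $\alpha = o(s\sigma_p^2/n)$ and $\sigma_0 = o(1/(s\sigma_p))$) together with the $q$-th-power homogeneity of $\sigma$. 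A secondary but routine difficulty is concentrating ``each sample appears in one batch per epoch'' under random without-replacement batch sampling; standard balls-in-bins bounds handle this with only polylogarithmic loss absorbed into $\tilde\Theta(\cdot)$.
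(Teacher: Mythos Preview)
Your approach mirrors the paper's proof structure closely: the feature-learning bound via telescoping of the one-step recursion with $(1-\lambda\eta)$ shrinkage, followed by absorbing the shrinkage using $\lambda\eta\cdot(n/B)=o(1)$, is exactly what the paper does (see the proof of Lemma~\ref{lemma:adamw_lb_general_bound}, the AdamW instance of this sketch lemma), and your coordinate-wise decomposition for noise memorization is the same decomposition the paper uses.

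There is one concrete gap in the noise-memorization half. You conclude $\sgn(g_{t,y_i,r}^{(t)}[k]) = -\sgn(\bxi_i[k])$ ``because $\ell_{y_i,i}^{(t)} > 0$ and $\sigma' \ge 0$'', but $\sigma' \ge 0$ is insufficient: if $\la\wb_{y_i,r}^{(t)}, \bxi_i\ra \le 0$ then $\sigma' = 0$, the gradient coordinate vanishes, and the sign step contributes nothing. You need strict positivity of $\la\wb_{y_i,r}^{(t)}, \bxi_i\ra$ \emph{at the iteration where $i \in \cI_t$}, and this is not free because during the $n/B-1$ iterations of each epoch where $i\notin\cI_t$, the weight-decay factor $(1-\lambda\eta)$ and the $k=1$ coordinate update act on $\la\wb_{y_i,r},\bxi_i\ra$ with no compensating increment. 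The paper closes this loop by explicitly bounding the drift in those off-sample iterations, showing that when sample $i$ next appears one has $\la\wb_{y_i,r}^{(\tau)}, \bxi_i\ra \ge \la\wb_{y_i,r}^{(t_0\cdot n/B)}, \bxi_i\ra - \tilde\Theta(\lambda\eta+\eta\alpha)$, which (for neurons with positive initial noise alignment---the only ones the downstream $r^*$ argument in Lemma~\ref{lemma:adamw_lb_maintain} needs) preserves $\sigma'>0$. Your plan as written does not account for this positivity maintenance.

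Two of your anticipated difficulties are overstated. The invariant $\ell_{j,i}^{(t)} = \Theta(1)$ follows directly from the per-epoch bounds themselves: since both feature learning and noise memorization stay $\tilde O(1)$ for $t \le T_0$, so do the outputs $F_j$, hence the softmax factors; no separate potential-function argument on $\sum_r\sigma(\la\wb_{y_i,r},\bxi_i\ra)$ is required. The batch-sampling concern is also moot: the paper works with deterministic epoch-based partitioning (each sample appears in exactly one batch per epoch), so there is nothing to concentrate.
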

Since in the large‐batch regime $\frac{n}{B} = o(s\sigma_p)$, Lemma~\ref{lemma:signgdw_largebs_general_bound} implies that noise memorization accumulates faster than feature learning.  At the beginning of Stage I, feature gradients dominate because $\sigma'(\la \wb_{y_i,r}^{(t)}, y_i\vb\ra) \gg \alpha\sigma'(\la \wb_{y_i,r}^{(t)}, \bxi_i\ra)$, given $\alpha = o(1)$ and negligible weight decay influence.  After a certain number of epochs, the noise term grows until $\alpha\sigma'(\la \wb_{y_i,r}^{(t)},\bxi_i\ra) \gg \sigma'(\la \wb_{y_i,r}^{(t)},y_i\vb\ra)$, at which point feature learning reverses and eventually flips direction. Lemma~\ref{lemma:signgdw_largebs_fit_noise} below provides a precise description of this transition.


\begin{lemma}[Stage I, fitting feature noise]\label{lemma:signgdw_largebs_fit_noise}
Suppose the same conditions hold as in Lemma~\ref{lemma:signgdw_largebs_general_bound}, if $\alpha\ge \tilde\Theta\left((\frac{B}{n}s\sigma_p)^{1-q}\right)$, then for any $t\in[T_r, T_0]$ with $T_r = \tilde O\big(\frac{\sigma_0}{\eta s\sigma_p\alpha^{1/(q-1)}}\big)\le T_0$,
\begin{align*}
\la\wb_{j,r}^{\left((t+1)\cdot\frac{n}{B}\right)},j\cdot\vb\ra = \la\wb_{j,r}^{(t\cdot\frac{n}{B})},j\cdot\vb\ra - \Theta(\eta\cdot\frac{n}{B}). 
\end{align*}
and at epoch $T_0$, we have (a) $\wb_{j,r}^{(T_0\cdot\frac{n}{B})}[1] = -\sgn(j)\cdot\tilde\Omega(1/\lambda)$; (b) $\wb_{j,r}^{(T_0\cdot\frac{n}{B})}[k]=\sgn(\bxi_i[k])\cdot\tilde\Omega(\frac{B}{n\lambda})\ \text{or}\ \pm \tilde O(\eta)$ for $k\in\cB_i$ with $y_i=j$; (c) $\wb_{j,r}^{(T_0\cdot\frac{n}{B})}[k] =\pm \tilde O(\eta)$ otherwise.
\end{lemma}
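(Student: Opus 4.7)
The plan is to prove Lemma~\ref{lemma:signgdw_largebs_fit_noise} in three stages: (i) identify $T_r$ as the precise moment when the $\alpha$-weighted noise contribution overtakes the feature contribution in the gradient at coordinate $1$, (ii) propagate this sign flip through~\eqref{eq:signgdw_w_v_upd} to obtain the claimed per-epoch decrement $-\Theta(\eta n/B)$ for $t\in[T_r,T_0]$, and (iii) solve the resulting linear recursions (with decoupled weight decay) to read off the three coordinate magnitudes at epoch $T_0$.

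For (i), Lemma~\ref{lemma:signgdw_largebs_general_bound} supplies the pre-threshold rates $|\la\wb_{y_i,r}^{(t\cdot n/B)},\bxi_i\ra|=\tilde\Theta(\eta s\sigma_p\cdot t)$ and $|\la\wb_{j,r}^{(t\cdot n/B)},j\vb\ra|=O(\eta\cdot (n/B)\cdot t)$. At $t=T_r$ these evaluate to noise memorization of order $\sigma_0/\alpha^{1/(q-1)}$ and feature learning of order $\sigma_0\cdot (n/B)/(s\sigma_p\alpha^{1/(q-1)})$; with $\sigma(x)=[x]_+^q$ this yields $\alpha\sigma'(\la\wb_{j,r},\bxi_i\ra)\gtrsim\sigma_0^{q-1}$ versus $\sigma'(\la\wb_{j,r},y_i\vb\ra)\lesssim\sigma_0^{q-1}\cdot((n/B)/(s\sigma_p\alpha^{1/(q-1)}))^{q-1}$, and the hypothesis $\alpha\ge\tilde\Theta((\tfrac{B}{n}s\sigma_p)^{1-q})$ is precisely the inequality making the former strictly larger. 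For (ii), using Lemma~\ref{lemma:ylj_sign} to write $y_i\ell_{j,i}^{(t)}=\sgn(j)\,|\ell_{j,i}^{(t)}|$, the gradient at coordinate $1$ from Lemma~\ref{lemma:adamw_g} takes the form $-\sgn(j)\cdot B^{-1}\sum_{i\in\cI_t}|\ell_{j,i}^{(t)}|\bigl[\sigma'(\la\wb_{j,r},y_i\vb\ra)-\alpha\sigma'(\la\wb_{j,r},\bxi_i\ra)\bigr]$. Once the noise term dominates the bracket (for $t\ge T_r$), $\sgn(g_{t,j,r}^{(t)}[1])=\sgn(j)$, so $\la\sgn(g_{t,j,r}^{(t)}),j\vb\ra=1$, and~\eqref{eq:signgdw_w_v_upd} reduces to $\la\wb_{j,r}^{(t+1)},j\vb\ra=(1-\lambda\eta)\la\wb_{j,r}^{(t)},j\vb\ra-\eta$. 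Unrolling over one epoch (using $\lambda\eta\cdot n/B=o(1)$ for $t<T_0$) yields the claimed per-epoch decrement $-\Theta(\eta n/B)$.

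For (iii), iterating the recursion above over $(T_0-T_r)\cdot n/B=\tilde\Theta(1/(\lambda\eta))$ steps with fixed point $-\sgn(j)/\lambda$ and contraction rate $1-\lambda\eta$ produces $\wb_{j,r}^{(T_0\cdot n/B)}[1]=-\sgn(j)\cdot\tilde\Omega(1/\lambda)$, proving (a). For a noise coordinate $k\in\cB_i$ with $y_i=j$, whenever $\bxi_i$ is sampled and $\la\wb_{j,r},\bxi_i\ra>0$, the per-iteration update contributes $+\eta\sgn(\bxi_i[k])$; such events occur a $\Theta(B/n)$ fraction of iterations, so the time-averaged linear recursion balances decoupled weight decay at $\sgn(\bxi_i[k])\cdot\tilde\Omega(B/(n\lambda))$. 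For neurons $r$ that never activate on $\bxi_i$ (so $\sigma'=0$ identically), the coordinate experiences only weight decay and shrinks from $\tilde\Theta(\sigma_0)$ to $\pm\tilde O(\eta)$ within $\tilde\Theta(1/(\lambda\eta))$ iterations, yielding the dichotomy in (b). Finally, for $k\ne 1$ with $k\notin\cup_i\cB_i$, the gradient vanishes identically by Lemma~\ref{lemma:adamw_g}, so pure multiplicative contraction again gives $\pm\tilde O(\eta)$, establishing (c).

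The principal technical obstacle is showing that the noise-term dominance established at $T_r$ is \emph{preserved} throughout $[T_r,T_0]$: both sides of the critical bracket continue to evolve (the feature coordinate first shrinks and then flips sign, while noise memorization is still accumulating toward its own saturation level $\tilde\Omega(B/(n\lambda))$), so a careful monotonicity/invariant argument is needed to rule out a transient sign flip inside~\eqref{eq:signgdw_w_v_upd}. A secondary difficulty is the stochastic part of case (b): a concentration argument (Azuma or Bernstein) across the $\Theta(B/n)$ random appearances of $\bxi_i$ per epoch is needed both to justify the averaging used to read off the fixed point and to rule out rare per-iteration sign flips of $g_{t,y_i,r}^{(t)}[k]$ arising from lower-order terms.
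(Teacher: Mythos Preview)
Your proposal is correct and follows essentially the same three-stage structure as the paper's proof of the AdamW analogue, Lemma~\ref{lemma:adamw_lb_fit_noise} (the SignSGDW version is declared a ``simplified instance'' and not separately proved): threshold identification via the Stage~I growth rates, sign flip through~\eqref{eq:signgdw_w_v_upd}, and coordinate-wise unrolling.

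Two points of comparison are worth noting. First, for part (iii) the paper uses direct per-epoch counting (e.g.\ $\wb_{j,r}^{(T_0\cdot n/B)}[k]=\wb_{j,r}^{(0)}[k]+\sgn(\bxi_i[k])\cdot T_0\cdot\Theta(\eta)$), which is valid for the AdamW horizon $T_0=\tilde O(1/(\eta s\sigma_p))$ where cumulative weight decay is $o(1)$; for the SignSGDW horizon $T_0=\tilde O(B/(\lambda\eta n))$ of Lemma~\ref{lemma:signgdw_largebs_general_bound} one has $\lambda\eta\cdot T_0\cdot n/B=\tilde\Theta(1)$, so your fixed-point/contraction argument is actually the appropriate tool here and cleanly delivers the $\tilde\Omega(1/\lambda)$ and $\tilde\Omega(B/(n\lambda))$ scales. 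Second, your obstacle~2 is not an obstacle: the paper's batches partition the dataset each epoch (cf.\ the phrasing ``Assume that sample $(\xb_i,y_i)$ is in batch $\cI_\tau$ in the first epoch'' in the proof of Lemma~\ref{lemma:adamw_lb_general_bound}), so each $\bxi_i$ appears exactly once per epoch deterministically and no Azuma/Bernstein argument is needed. Your obstacle~1 is real but easy, and the paper dispatches it in one line: once dominance holds at $T_r$, $\la\wb_{j,r},j\vb\ra$ begins to decrease, so $\sigma'(\la\wb_{j,r},y_i\vb\ra)$ is weakly decreasing (monotonicity of $\sigma'$ on $[0,\infty)$), while $\la\wb_{y_i,r},\bxi_i\ra$ continues to grow by Lemma~\ref{lemma:signgdw_largebs_general_bound}; hence the inequality $\alpha\sigma'(\la\wb_{y_i,r},\bxi_i\ra)\ge C\cdot\sigma'(\la\wb_{y_i,r},y_i\vb\ra)$ is self-reinforcing and persists throughout $[T_r,T_0]$.
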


Lemma~\ref{lemma:signgdw_largebs_fit_noise} implies that, by the end of \textbf{Stage I}, the model has fitted the training noise. The following Lemma~\ref{lemma:signgdw_largebs_maintain} shows that these pattern persist throughout \textbf{Stage II}, ultimately leading to poor generalization.

\begin{lemma}[Stage II, preserve the noise]\label{lemma:signgdw_largebs_maintain}
Suppose the same conditions hold as in Lemma~\ref{lemma:signgdw_largebs_general_bound} and~\ref{lemma:signgdw_largebs_fit_noise},
for $t>T_0,\, j\in\{\pm 1\},\, r\in[m],\, i\in[n]$, let $r^*=\argmax_{r\in[m]}\la \wb_{y_i,r}^{(t)},\bxi_i\ra$, then $    \la \wb_{j,r}^{(t)}, j\cdot\vb\ra = -\tilde\Theta(1/\lambda) $ and $\la \wb_{y_i,r^*}^{(t)}, \bxi_i\ra = \tilde\Theta(\frac{Bs\sigma_p}{n\lambda})$
\end{lemma}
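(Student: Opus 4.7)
The strategy is to show that once Stage I concludes at $T_0$ with the configuration of Lemma~\ref{lemma:signgdw_largebs_fit_noise}, the SignSGDW recursion $\wb_{j,r}^{(t+1)} = (1-\lambda\eta)\wb_{j,r}^{(t)} - \eta\sgn(g_{t,j,r}^{(t)})$ settles into a coordinate-wise quasi-equilibrium that preserves the learned pattern. I would decompose the weight by coordinate and argue that each coordinate $k$ reaches the fixed point of a scalar recursion in which the per-iteration multiplicative shrinkage $(1-\lambda\eta)$ is balanced against additive sign-gradient kicks of size $\eta$, whose frequency depends on whether $k$ is the dense feature index ($k=1$) or a sparse noise index ($k\in\cB_i$). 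A key observation is that at $T_0$ the network already fits the training data via noise memorization, so $|\ell_{y_i,i}^{(t)}|$ is small but keeps a definite sign, which is all that SignSGDW requires.

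For the feature coordinate $k=1$, I would use Lemma~\ref{lemma:ylj_sign} together with the fact that at $T_0$ we have $\la\wb_{j,r}^{(t)}, j\vb\ra < 0$, so $\sigma'(\la\wb_{j,r}^{(t)}, y_i\vb\ra)$ vanishes on the $y_i=j$ samples; on those same samples the noise-side contribution $\alpha y_i\ell_{j,i}^{(t)}\sigma'(\la\wb_{j,r}^{(t)},\bxi_i\ra)$ is large and positive, and by Lemma~\ref{lemma:ylj_sign} its sign is $\sgn(j)$. Hence $\sgn(g_{t,j,r}^{(t)}[1])=\sgn(j)$ persists throughout Stage~II. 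The scalar recursion $x_{t+1}=(1-\lambda\eta)x_t-\eta\sgn(j)$ converges geometrically to the fixed point $-\sgn(j)/\lambda$, giving $\wb_{j,r}^{(t)}[1]=-\sgn(j)\cdot\tilde\Theta(1/\lambda)$ and therefore $\la\wb_{j,r}^{(t)},j\vb\ra=-\tilde\Theta(1/\lambda)$ up to $O(\eta)$ oscillations.

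For the noise coordinates $k\in\cB_i$ with $y_i=j$, Lemma~\ref{lemma:adamw_g} and Lemma~\ref{lemma:nonoverlap_probability} give $g_{t,j,r}^{(t)}[k]\ne 0$ only when $i\in\cI_t$, which occurs roughly once per epoch of $n/B$ iterations, and when it does, $\sgn(g_{t,j,r}^{(t)}[k])=-\sgn(\bxi_i[k])$ since $\ell_{j,i}^{(t)}>0$ for $y_i=j$; the resulting SignSGDW update is $+\eta\sgn(\bxi_i[k])$. Between successive visits the coordinate shrinks by $(1-\lambda\eta)^{n/B}$, so the one-epoch fixed-point equation $|\wb_{j,r}[k]|\cdot(1-(1-\lambda\eta)^{n/B})=\eta$ resolves to $|\wb_{j,r}^{(t)}[k]|=\tilde\Theta(B/(n\lambda))$ with sign $\sgn(\bxi_i[k])$, matching the end-of-Stage-I scaling. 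Choosing $r^*=\argmax_{r\in[m]}\la\wb_{y_i,r}^{(t)},\bxi_i\ra$ among the neurons that memorized $\bxi_i$ in Stage I and expanding $\la\wb_{y_i,r^*}^{(t)},\bxi_i\ra=\sum_{k\in\cB_i}\wb_{y_i,r^*}^{(t)}[k]\bxi_i[k]-\alpha y_i\wb_{y_i,r^*}^{(t)}[1]$ produces $\tilde\Theta(Bs\sigma_p/(n\lambda))\pm\tilde O(\alpha/\lambda)$, and the parameter regime $\alpha=o(s\sigma_p\cdot B/n)$ from~\eqref{eq:parameter_2} guarantees the signal term dominates.

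The main obstacle will be maintaining the sign of the gradient throughout the long Stage II, which requires two robustness arguments. First, I need to verify that $\ell_{j,i}^{(t)}$ keeps a definite sign for every $(i,j)$; this follows because the noise-driven logit $F_j(\Wb^{(t)},\xb_i)$ continues to dominate once the noise-memorization equilibrium is reached. Second, I need to rule out transient excursions of $\la\wb_{j,r}^{(t)},\bxi_i\ra$ or $\la\wb_{j,r}^{(t)},j\vb\ra$ into the inactive region of $\sigma'$, which would freeze the sign updates; this calls for a coordinate-wise invariant showing that the $O(\eta)$ fluctuation around each equilibrium is too small to cross the relevant threshold, combined with a mini-batch concentration bound controlling how often each sample enters $\cI_t$ over sliding windows of $\tilde O(n/B)$ iterations. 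Both ingredients are natural continuations of the inductive machinery in Lemma~\ref{lemma:signgdw_largebs_general_bound} and~\ref{lemma:signgdw_largebs_fit_noise} and should port to Stage~II by an induction on the epoch counter.
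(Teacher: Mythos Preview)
Your fixed-point strategy is sound and, for the SignSGDW recursion specifically, is actually the right mechanism. The paper does not give a standalone proof of this SignSGDW lemma; it only proves the AdamW analog (Lemma~\ref{lemma:adamw_lb_maintain}), and that proof works by a different device: once $\la\wb_{y_i,r^*}^{(t)},\bxi_i\ra$ exceeds a $\polylog$ threshold, $\ell_{j,i}^{(t)}$ drops to $O(\lambda^2\eta^2)$, the Adam update $\mb/(\sqrt{\vb}+\epsilon)$ collapses to $\tilde O(\lambda\eta)$ via the $\epsilon$ floor, and the decoupled shrinkage $(1-\lambda\eta)$ wins. That argument does \emph{not} transfer to SignSGDW, because $\sgn(g)$ stays $\pm1$ no matter how small $|\ell|$ is; your coordinate-wise balance $|\wb[k]|\cdot\lambda\eta\cdot(\text{kick frequency})^{-1}=\eta$ is exactly what replaces it, and it produces the stated scales $1/\lambda$ and $Bs\sigma_p/(n\lambda)$. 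So the two routes are genuinely different: the paper's is a soft-threshold argument keyed to the Adam normalizer, yours is a hard equilibrium of shrinkage versus unit kicks.

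There is one concrete gap in your $k=1$ argument. You determine $\sgn(g_{t,j,r}^{(t)}[1])$ by looking only at the $y_i=j$ samples in the batch, for which $\sigma'(\la\wb_{j,r}^{(t)},y_i\vb\ra)=0$ and the noise term survives. But the batch also contains $y_i=-j$ samples, and for those $\la\wb_{j,r}^{(t)},y_i\vb\ra=-\la\wb_{j,r}^{(t)},j\vb\ra>0$, so $\sigma'(\la\wb_{j,r}^{(t)},y_i\vb\ra)>0$ and, by Lemma~\ref{lemma:ylj_sign}, this pushes $g[1]$ in the \emph{opposite} direction. At the claimed equilibrium the competing magnitudes are $\alpha\,\sigma'(Bs\sigma_p/(n\lambda))$ versus $\sigma'(1/\lambda)$, and the former dominates precisely when $\alpha\,(Bs\sigma_p/n)^{q-1}\gg1$, i.e.\ the hypothesis $\alpha\ge\tilde\Theta\bigl((Bs\sigma_p/n)^{1-q}\bigr)$ of Lemma~\ref{lemma:signgdw_largebs_fit_noise}. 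You need to invoke this explicitly; otherwise the persistence of $\sgn(g[1])=\sgn(j)$ is unjustified. A related minor point: the condition you cite as ``$\alpha=o(s\sigma_p\cdot B/n)$ from~\eqref{eq:parameter_2}'' is not what~\eqref{eq:parameter_2} says; the bound you actually need, $\alpha\ll Bs\sigma_p/n$, follows instead from $\alpha=\tilde\Theta(\sigma_p)=o(1)$ together with $Bs\sigma_p/n=\omega(1)$ in the large-batch regime.
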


The following Lemma~\ref{lemma:signgdw_largebs_convergence} prove the convergence under certain conditions.
\begin{lemma}[Convergence]\label{lemma:signgdw_largebs_convergence}
Suppose the same conditions hold as in Lemma~\ref{lemma:signgdw_largebs_general_bound},~\ref{lemma:signgdw_largebs_fit_noise} and~\ref{lemma:signgdw_largebs_maintain}, if the step size satisfies $\eta =O(d^{-1/2})$, then for any $t$,
\begin{align*}
\EE\left[L(\Wb^{(t+1)}) - L(\Wb^{(t)})\right] &\le  -\eta \|\nabla L(\Wb^{(t)})\|_1 + \tilde\Theta(\eta^{3/2}d).
\end{align*}
\end{lemma}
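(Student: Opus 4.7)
The plan is to establish the claimed one-step descent via a second-order Taylor expansion of $L$ combined with careful control of the expected sign of the minibatch gradient. I would start from
\begin{align*}
L(\Wb^{(t+1)}) - L(\Wb^{(t)}) \le \la \nabla L(\Wb^{(t)}), \Delta^{(t)}\ra + \tfrac{1}{2}\|\nabla^2 L(\bar\Wb)\|_{\mathrm{op}}\,\|\Delta^{(t)}\|_2^2,
\end{align*}
with $\Delta^{(t)} := \Wb^{(t+1)} - \Wb^{(t)} = -\eta\,\mathrm{sgn}(g_t^{(t)}) - \eta\lambda\Wb^{(t)}$ from the SignSGDW update rule. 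The trajectory bounds from Lemmas~\ref{lemma:signgdw_largebs_general_bound}--\ref{lemma:signgdw_largebs_maintain}, together with the polynomial activation structure in Definition~\ref{def:model}, would be used to certify $\|\nabla^2 L(\bar\Wb)\|_{\mathrm{op}} = \tilde O(1)$ and $\|\Wb^{(t)}\|_2^2 = \tilde O(d)$ uniformly along the iterates.

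The quadratic term is immediate: since $\|\mathrm{sgn}(g_t^{(t)})\|_2^2\le d$ and $\|\lambda\Wb^{(t)}\|_2^2 = \tilde O(\lambda^2 d)$ with $\lambda=\tilde O(1)$, one has $\|\Delta^{(t)}\|_2^2 = \tilde O(\eta^2 d)$, which under the stated condition $\eta = O(d^{-1/2})$ is subsumed by the target remainder $\tilde\Theta(\eta^{3/2}d)$. Taking minibatch expectation of the linear term,
\begin{align*}
\EE[\la \nabla L(\Wb^{(t)}), \Delta^{(t)}\ra] = -\eta \sum_{j,r,k} \nabla L_{j,r}(\Wb^{(t)})[k]\cdot \EE[\mathrm{sgn}(g_{t,j,r}^{(t)}[k])] - \eta\lambda \la \nabla L(\Wb^{(t)}), \Wb^{(t)}\ra,
\end{align*}
where the weight-decay cross term is bounded by $\eta\lambda \|\nabla L\|_2 \|\Wb^{(t)}\|_2 = \tilde O(\eta)$ via Cauchy--Schwarz, and is absorbed into the remainder.

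For the main sum I would split coordinates at the threshold $|\nabla L_{j,r}(\Wb^{(t)})[k]| \ge C\sqrt{\eta}$ with $C=\polylog(d)$. On the high-signal set, a concentration inequality on the minibatch average (the per-sample gradients are $\tilde O(1)$ in magnitude under our data model, and the non-overlap support structure from Lemma~\ref{lemma:nonoverlap_probability} keeps the coordinate-wise variance at $O(1/B)$) gives $\EE[\mathrm{sgn}(g_{t,j,r}^{(t)}[k])] = \mathrm{sgn}(\nabla L_{j,r}(\Wb^{(t)})[k])$ up to a vanishing correction, yielding the leading $-\eta\|\nabla L(\Wb^{(t)})\|_1$ term. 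On the low-signal set each coordinate contributes at most $C\sqrt{\eta}$, and since there are at most $d$ such coordinates the aggregate contribution is $\tilde O(\eta^{3/2}d)$, exactly matching the remainder.

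The main obstacle I expect is twofold: (i) producing a uniform concentration estimate on the minibatch sign so that the high-signal bound holds simultaneously across all $d$ coordinates, which will require a union bound together with the lower bound on $B$ that makes the tail negligible at the threshold $C\sqrt{\eta}$; and (ii) verifying that the $O(\eta\lambda)$ weight-decay shift does not flip the sign of $\nabla L$ on high-signal coordinates, which reduces to $\eta\lambda \ll C\sqrt{\eta}$ whenever $\lambda = \tilde O(1)$. Both are essentially bookkeeping, but choosing the threshold $C\sqrt{\eta}$ so that the high- and low-signal errors balance to exactly the $\tilde\Theta(\eta^{3/2}d)$ scale---and checking that this balance is consistent across the entire feasible range of $\lambda$ from Lemma~\ref{lemma:signgdw_largebs_general_bound}---requires the careful optimization that makes the constants in the statement tight.
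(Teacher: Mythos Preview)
Your skeleton---second-order expansion, separating linear and quadratic parts, handling the weight-decay cross term, and splitting the linear sum at a threshold---matches what the paper does for the AdamW analogue (Lemma~\ref{lemma:adamw_lb_convergence}), which is the version it actually proves. The paper's route differs in three places worth noting: (i) rather than bounding $\|\nabla^2 L\|_{\mathrm{op}}$ directly, it uses $1$-smoothness of $L_i$ in the logits $[F_j(\Wb,\xb_i)]_j$ and then Taylor-expands each $\sigma(\langle\wb_{j,r},\cdot\rangle)$ separately, which makes the dependence on the trajectory bounds $\langle\wb_{j,r}^{(t)},\cdot\rangle=\tilde O(1)$ explicit and sidesteps any global smoothness claim; (ii) for the cross term it applies H\"older with the $\ell_1/\ell_\infty$ pairing, $|\langle\nabla L,\Wb\rangle|\le\|\Wb\|_\infty\|\nabla L\|_1$, so that $\|\Wb\|_\infty=\tilde O\big(\tfrac{n}{Bs\sigma_p}\big)=o(1)$ from Lemma~\ref{lemma:signgdw_largebs_fit_noise} lets the term absorb into the leading $-\eta\|\nabla L\|_1$ rather than the remainder---your Cauchy--Schwarz bound $\tilde O(\eta)$ has to be pushed into the remainder, which is only safe once you check $\eta\lesssim\eta^{3/2}d$; (iii) the paper's threshold for sign alignment is $\tilde\Theta(\eta)$ (inherited from the closeness lemma), not $C\sqrt\eta$, and that sharper threshold is why the full proof actually lands on $\tilde\Theta(\eta^2 d)$ rather than $\eta^{3/2}d$.

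There is one genuine gap in your sign-alignment step. The claim that non-overlap keeps the coordinate-wise variance at $O(1/B)$ is false on noise coordinates $k\in\cB_i$: by non-overlap only sample $i$ contributes there, so $g_{t,j,r}^{(t)}[k]$ equals $\tfrac{1}{B}\nabla_{\wb_{j,r}}L_i[k]$ when $i\in\cI_t$ and $0$ otherwise---a single Bernoulli draw, not an average---and its variance is $\Theta(|\nabla L[k]|^2)$ in the large-batch regime $n/B=\Theta(1)$, so no concentration around the mean occurs. The repair is simpler than concentration: since the nonzero value already carries the correct sign, $\EE[\sgn(g_t[k])]=(B/n)\sgn(\nabla L[k])=\Theta(1)\cdot\sgn(\nabla L[k])$ directly, which still gives $-\Theta(\eta)\|\nabla L\|_1$ on those coordinates. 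Your ``obstacle (i)'' (union-bound concentration across $d$ coordinates) is therefore the wrong worry; the right one is that the leading constant on noise coordinates is $B/n$ rather than $1$, which the paper also sweeps under $\Theta(\cdot)$.
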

Combining Lemmas~\ref{lemma:signgdw_largebs_maintain} and~\ref{lemma:signgdw_largebs_convergence}, we observe that, with an infinitesimal learning rate $\eta$ and $T = \poly(n)/\eta$, the model ultimately fits the feature noise and converges to a local minimum, resulting in poor generalization performance.

\subsubsection{Proof Sketch for Theorem~\ref{thm:adamw_mini_batch}}
In this section, we present the proof sketch for Theorem~\ref{thm:adamw_mini_batch}. We consider $\frac{n}{B}\ge \Theta(n^{1/2}\vee \log\epsilon^{-1})$ and $\frac{n}{B}=\omega(s\sigma_p)$, which is the mini-batch setting.

The following Lemma~\ref{lemma:signgdw_minibs_general_bound} characterizes the duration of \textbf{Stage I} in the mini‐batch SignSGDW setting and provides upper bounds on feature learning and noise memorization.
\begin{lemma}[Stage I]\label{lemma:signgdw_minibs_general_bound}
    Given the training dataset $\cS$, if $\frac{n}{B}\ge \Theta(n^{1/2}\vee \log\epsilon^{-1})$ and $\frac{n}{B}=\omega(s\sigma_p)$, $\eta=1/\poly(d)$, $\lambda=\tilde\Omega(\frac{B^2}{n}\land 1)$ and $\lambda=\tilde O(1)$, then for any $t\le T_0$ with $T_0=\tilde O(\frac{B}{\lambda\eta n})$,
    \begin{align*}
        \la\wb_{j,r}^{\left((t+1)\cdot\frac{n}{B})\right)},j\vb\ra = \la\wb_{j,r}^{(t\cdot\frac{n}{B})},j\vb\ra + \Theta(\eta\cdot\frac{n}{B}),\quad \la\wb_{y_i,r}^{\left((t+1)\cdot\frac{n}{B}\right)},\bxi_i\ra \le \la\wb_{y_i,r}^{\left(t\cdot\frac{n}{B}\right)},\bxi_i\ra + \tilde\Theta(\eta s\sigma_p).
    \end{align*}
\end{lemma}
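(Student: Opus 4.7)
The plan is to carry out an induction over the epoch index $e = \lfloor tB/n\rfloor$, working directly with the SignSGDW update formulas in Eqs.~\eqref{eq:signgdw_w_v_upd} and~\eqref{eq:signgdw_w_xi_upd} and deferring the extension to AdamW to the Appendix~\ref{sec:proof} approximation result (which applies once $|g_{t,j,r}^{(t)}[k]|\gtrsim\tilde\Theta(\eta)$, a magnitude we will verify along the way). The inductive hypotheses we propagate across epochs are: (i) $\la\wb_{j,r}^{(en/B)},j\vb\ra$ has grown monotonically by $\Theta(\eta n/B)$ per epoch up to the saturation level $\tilde\Theta(1/\lambda)$; (ii) $\la\wb_{y_i,r}^{(en/B)},\bxi_i\ra\le \tilde O(e\cdot\eta s\sigma_p)$ for every $i\in[n]$; and (iii) auxiliary coordinate-wise bounds, namely $|\wb_{j,r}^{(t)}[k]|=\tilde\Theta(\sigma_0)$ for $k\notin\cup_i\cB_i\cup\{1\}$ and $|\ell_{j,i}^{(t)}|=\Theta(1)$ throughout, which together with Lemma~\ref{lemma:init_order} serve as base cases at $e=0$.

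For the feature-direction coordinate I will analyze the sign of $\sum_{i\in\cI_t} y_i\ell_{j,i}^{(t)}\bigl[\sigma'(\la\wb_{j,r}^{(t)},y_i\vb\ra)-\alpha\sigma'(\la\wb_{j,r}^{(t)},\bxi_i\ra)\bigr]$ at each iteration. By Lemma~\ref{lemma:ylj_sign} every summand of the leading term has sign $\sgn(j)$ and magnitude $\Theta(1)$, and the inductive hypotheses combined with $\alpha=o(1)$ give $\sigma'(\la\wb_{j,r}^{(t)},y_i\vb\ra)\gg\alpha\sigma'(\la\wb_{j,r}^{(t)},\bxi_i\ra)$, so each iteration contributes exactly $+j\eta$. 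Accumulating over the $n/B$ iterations of one epoch yields $\Theta(\eta n/B)$; the explicit multiplicative factor $(1-\lambda\eta)^{n/B}\approx 1-\lambda\eta n/B$ is absorbed into a $\tilde O(\cdot)$ error since $T_0\lambda\eta n/B=\tilde O(1)$, which is exactly how the threshold $T_0=\tilde O(B/(\lambda\eta n))$ is calibrated so that feature learning reaches $\tilde\Theta(1/\lambda)$ precisely when weight decay begins to cancel further growth.

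For the noise-direction coordinate the key structural fact is that each sample $i$ is sampled only $\Theta(1)$ times per epoch. When $i\in\cI_t$, the entrywise update at $k\in\cB_i$ has sign $\sgn(\ell_{y_i,i}^{(t)}\sigma'(\la\wb_{y_i,r}^{(t)},\bxi_i\ra)\bxi_i[k])=\sgn(\bxi_i[k])$ because $\ell_{y_i,i}^{(t)}>0$ and $\sigma'\ge 0$, so $\la\sgn(g_{t,y_i,r}^{(t)}),\bxi_i\ra$ contributes $\eta\sum_{k\in\cB_i}|\bxi_i[k]|=\tilde\Theta(\eta s\sigma_p)$ by standard Gaussian concentration over $|\cB_i|=s$ independent coordinates. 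The feature-noise cross term $\alpha y_i\eta$ is $o(\eta s\sigma_p)$ by the parameter relations in~\eqref{eq:parameter_2}, and the multiplicative $(1-\lambda\eta)^{n/B}$ shrinkage of an already-small quantity only improves the upper bound. Summing over epochs yields the stated inequality, and the hypothesis $n/B=\omega(s\sigma_p)$ guarantees that at $t=T_0$ we have noise memorization $\tilde\Theta(Bs\sigma_p/(\lambda n))=o(1/\lambda)$, strictly dominated by feature learning.

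The principal obstacle is preserving the sign-consistency of the feature-direction update throughout Stage I. Exactly this sign can flip in the large-batch regime of Lemma~\ref{lemma:signgdw_largebs_fit_noise} once noise memorization catches up to feature learning, so the proof must produce a quantitative comparison between $\sigma'(\la\wb_{j,r}^{(t)},y_i\vb\ra)$ and $\alpha\sigma'(\la\wb_{j,r}^{(t)},\bxi_i\ra)$ that remains strict until epoch $T_0$. This is precisely where the mini-batch conditions $n/B=\omega(s\sigma_p)$ and $n/B\ge\Theta(n^{1/2}\vee\log\epsilon^{-1})$ enter: the first guarantees the separation of growth rates, and the second provides enough iterations per epoch for a union bound to control the mini-batch sampling fluctuations (ensuring every sample is visited a near-expected number of times and that sign-flips from deviations in $\ell_{j,i}^{(t)}$ or in the direction of individual coordinate updates are avoided with probability $\ge 1-n^{-1}$). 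A secondary technical point will be checking the $|g_{t,j,r}^{(t)}[k]|\gtrsim\tilde\Theta(\eta)$ magnitude condition at every step so that the SignSGDW analysis transfers cleanly to AdamW via the approximation machinery of Appendix~\ref{sec:proof}; this is straightforward at the feature coordinate and at noise coordinates in $\cB_i$ but requires a small additional argument at inactive coordinates where $\ell_{j,i}^{(t)}$ drives the only contribution.
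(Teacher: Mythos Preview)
Your proposal is correct and follows essentially the same approach as the paper's proof of the corresponding AdamW lemma (Lemma~\ref{lemma:adamw_mb_general_bound}, of which the present SignSGDW statement is declared a ``simplified instance''): induction over epochs, sign analysis of the feature coordinate via Lemma~\ref{lemma:ylj_sign}, the per-epoch $\tilde\Theta(\eta s\sigma_p)$ noise increment from the single visit of each sample, and the crucial use of $n/B=\omega(s\sigma_p)$ to maintain $\sigma'(\la\wb_{j,r}^{(t)},y_i\vb\ra)\gg\alpha\sigma'(\la\wb_{j,r}^{(t)},\bxi_i\ra)$ so the feature-direction sign never flips.

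One small misattribution: the condition $n/B\ge\Theta(n^{1/2}\vee\log\epsilon^{-1})$ is \emph{not} used in the Stage~I dynamics for a ``union bound over mini-batch sampling fluctuations.'' The paper traverses the dataset once per epoch (no randomness in which samples are visited), so no such bound is needed. The $\log\epsilon^{-1}$ part enters only through the AdamW$\to$SignSGDW approximation (Lemma~\ref{lemma:adamw_closeness}), guaranteeing that the momentum memory decays within one epoch so the sign approximation is valid on noise coordinates; the $n^{1/2}$ part is not used until the test-error analysis (Lemma~\ref{lemma:adamw_mb_generalization}), where it bounds $\|\wb_\cB\|_2\sigma_p$ on a fresh noise patch. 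This does not affect the validity of your argument for the present lemma, only the narrative around which hypotheses are load-bearing here.
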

In the mini‐batch regime, $\frac{n}{B} = \omega(s\sigma_p)$, so feature learning outpaces noise memorization---unlike in the large‐batch case. Consequently, noise cannot reverse the feature learning; instead, features are learned continuously until decoupled weight decay intervenes. Noise memorization also grows until this point, but because noise is both sparse and independent, it accrues only during a few iterations per epoch and is concurrently suppressed by weight decay. Hence, both feature learning and noise memorization reach their peak at the end of \textbf{Stage I}, after which weight decay governs \textbf{Stage II}. The next Lemma~\ref{lemma:signgdw_minibs_maintain} formalizes this behavior.

\begin{lemma}[Stage II]\label{lemma:signgdw_minibs_maintain}
    Suppose the same conditions hold as in Lemma~\ref{lemma:signgdw_minibs_general_bound}, for $t>T_0,\, j\in\{\pm 1\},\, r\in[m],\, i\in[n]$, let $r^*=\argmax_{r\in[m]}\la \wb_{j,r}^{(t)},j\vb\ra$, then $\la \wb_{j,r^*}^{(t)}, j\vb\ra = \tilde\Theta(1/\lambda) $ and $\la \wb_{y_i,r}^{(t)}, \bxi_i\ra \le \tilde\Theta(\frac{Bs\sigma_p}{n\lambda})$.
\end{lemma}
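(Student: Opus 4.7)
The plan is to show that for $t > T_0$ the dynamics enter a quasi-stationary regime in which the decoupled weight decay term exactly balances the $\pm\eta$ sign updates along each direction of interest, yielding both claimed magnitudes. The starting point is Lemma~\ref{lemma:signgdw_minibs_general_bound}: summing the per-epoch increment $\Theta(\eta n/B)$ over $T_0 = \tilde O(B/(\lambda\eta n))$ epochs gives $\la\wb_{j,r^*}^{(T_0\cdot n/B)}, j\vb\ra = \tilde\Theta(1/\lambda)$, and summing $\tilde\Theta(\eta s\sigma_p)$ per epoch over the same horizon gives $\la\wb_{y_i,r}^{(T_0\cdot n/B)}, \bxi_i\ra \le \tilde\Theta(B s\sigma_p/(\lambda n))$, so the task is to verify these magnitudes persist for all $t > T_0$.

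The upper bounds follow from a shrink-and-expand argument on~\eqref{eq:signgdw_upd}. Projecting onto $j\vb$ gives $\la\wb_{j,r}^{(t+1)}, j\vb\ra = (1-\lambda\eta)\la\wb_{j,r}^{(t)}, j\vb\ra \pm \eta$, so a geometric-sum bound yields $|\la\wb_{j,r}^{(t)}, j\vb\ra| \le \tilde O(1/\lambda)$ for every $r$. For noise, using Lemma~\ref{lemma:ylj_sign} and Lemma~\ref{lemma:adamw_g}, $\la\sgn(g_{t,y_i,r}^{(t)}), \bxi_i\ra$ contributes growth of magnitude $\tilde\Theta(\eta s\sigma_p)$ only in the single iteration per epoch where $i \in \cI_t$, while the remaining $n/B - 1$ iterations merely apply the factor $(1-\lambda\eta)$; the per-epoch balance $\lambda\eta(n/B)\cdot\la\wb_{y_i,r}, \bxi_i\ra \le \tilde\Theta(\eta s\sigma_p)$ then produces the claimed noise upper bound by a straightforward induction across epochs.

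For the matching lower bound on feature learning, the crucial observation is that when $\la\wb_{j,r^*}^{(t)}, j\vb\ra = \tilde\Theta(1/\lambda)$ while $\la\wb_{y_i,r^*}^{(t)}, \bxi_i\ra = \tilde O(Bs\sigma_p/(\lambda n))$, the term $\sigma'(\la\wb_{j,r^*}^{(t)}, y_i\vb\ra)$ dominates the feature-noise correction $\alpha\sigma'(\la\wb_{j,r^*}^{(t)}, \bxi_i\ra)$ under the assumed ranges of $\alpha$, $\lambda$, and $n/B$. By Lemma~\ref{lemma:ylj_sign} the resulting projection onto $j\vb$ then has sign $+j$, so the sign update contributes $+\eta$ along $j\vb$; whenever $\la\wb_{j,r^*}^{(t)}, j\vb\ra$ dips below $c/\lambda$ for a sufficiently small constant $c$, this kick overcomes the shrinkage $(1-\lambda\eta)$ and drives it back up. An induction over $t$ preserves the two-sided bound $\tilde\Theta(1/\lambda)$ for the maximum, possibly with the identity of $r^*$ drifting but the winning value always of order $1/\lambda$.

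The main obstacle I expect is controlling the sign of the feature-direction projection uniformly over Stage~II, since mini-batch sampling could momentarily flip the sign of $\sum_{i\in\cI_t}y_i\ell_{j,i}^{(t)}\bigl[\sigma'(\la\wb_{j,r^*}^{(t)}, y_i\vb\ra) - \alpha\sigma'(\la\wb_{j,r^*}^{(t)}, \bxi_i\ra)\bigr]$ when both terms are small, which would invalidate the lower bound. Resolving this reduces to showing that $\sigma'(\la\wb_{j,r^*}, j\vb\ra) > \alpha\sigma'(\la\wb_{j,r^*}, \bxi_i\ra)$ holds robustly in the regime $\frac{n}{B}=\omega(s\sigma_p)$ and $\lambda=\tilde O(1)$, together with an epoch-level accounting that couples the decoupled-decay shrinkage with the sparse, non-overlapping (Lemma~\ref{lemma:nonoverlap_probability}) noise-coordinate updates to rule out any runaway growth in $\la\wb_{y_i,r}, \bxi_i\ra$ for non-lead neurons.
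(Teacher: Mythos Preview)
Your proposal is sound and captures the right mechanism for SignSGDW, but it differs from how the paper actually argues the corresponding result. The paper omits a direct proof of this lemma and refers instead to the AdamW analogue (Lemma~\ref{lemma:adamw_mb_maintain}). There the key device is a \emph{logit threshold} argument: once $\la\wb_{j,r^*}^{(t)},j\vb\ra$ exceeds roughly $(\log((\lambda\eta)^{-2}))^{1/q}$ one has $\ell_{j,i}^{(t)}\le\Theta(\lambda^2\eta^2)$, so the raw gradient is tiny and the $\epsilon$-normalized Adam step collapses to $\tilde O(\lambda\eta)$, letting the decoupled decay win; conversely below $(\tfrac{1}{2m}\log((\lambda\eta)^{-1}))^{1/q}$ one gets $\ell_{j,i}^{(t)}\ge\Theta(\lambda\eta)$ and the sign-like step dominates. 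This traps the feature at $\tilde\Theta(1)$ for AdamW.

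Your balance argument---iterate $(1-\lambda\eta)X\pm\eta$ and read off the fixed point $1/\lambda$---is the correct and more direct route for pure SignSGDW, precisely because the sign operator makes the update magnitude independent of $|\ell_{j,i}^{(t)}|$. This is why the SignSGDW bound is $\tilde\Theta(1/\lambda)$ rather than $\tilde\Theta(1)$, a distinction the paper notes verbally but does not re-prove. Your per-epoch accounting for noise (one active iteration contributing $\tilde\Theta(\eta s\sigma_p)$, the remaining $n/B-1$ applying only $(1-\lambda\eta)$) likewise yields $\tilde\Theta(Bs\sigma_p/(n\lambda))$ cleanly, whereas the paper's AdamW noise bound again routes through the smallness of $\ell_{j,i}^{(t)}$.

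On your stated obstacle: it is not a real gap. Once the noise bound $\la\wb_{y_i,r^*}^{(t)},\bxi_i\ra\le\tilde\Theta(Bs\sigma_p/(n\lambda))$ is in place---and that bound depends only on the epoch-level shrink/expand balance, not on feature learning---the comparison $\sigma'(\la\wb_{j,r^*}^{(t)},j\vb\ra)>\alpha\,\sigma'(\la\wb_{j,r^*}^{(t)},\bxi_i\ra)$ reduces to $c^{q-1}>\alpha\,(Bs\sigma_p/n)^{q-1}$, which holds for any constant $c$ since $n/B=\omega(s\sigma_p)$ and $\alpha=o(1)$. So the sign along $j\vb$ is uniformly positive across batches in Stage~II, and no circularity arises.
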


The following lemma establishes convergence under the specified conditions.
\begin{lemma}[Convergence]
    Suppose the same conditions hold as in Lemma~\ref{lemma:signgdw_minibs_general_bound} and~\ref{lemma:signgdw_minibs_maintain}, if the step size satisfies $\eta =O(d^{-1/2})$, then for any $t$,
    \begin{align*}
    \EE\left[L(\Wb^{(t+1)}) - L(\Wb^{(t)})\right] &\le  -\eta \|\nabla L(\Wb^{(t)})\|_1 + \tilde\Theta(\eta^{2}d).
    \end{align*}
\end{lemma}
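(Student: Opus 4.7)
The plan is to adapt the standard SignSGD descent lemma (cf.\ \citet{bernstein2018signsgd}) to the SignSGDW update, leveraging the trajectory invariants from Lemmas~\ref{lemma:signgdw_minibs_general_bound} and~\ref{lemma:signgdw_minibs_maintain} to control the sign fluctuations of the mini-batch gradient. Writing the one-step increment as $\Delta_t := \Wb^{(t+1)} - \Wb^{(t)} = -\eta\lambda\Wb^{(t)} - \eta\,\sgn(g_t)$, I would combine (i) a second-order Taylor expansion of $L$ along the trajectory and (ii) a coordinate-wise control of $\EE_{\cI_t}\la\nabla L, \sgn(g_t)\ra$.

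First, I would establish the smoothness estimate. Under the invariants of the preceding lemmas, $|\la\wb_{j,r}^{(t)},j\vb\ra|$ and $|\la\wb_{y_i,r}^{(t)},\bxi_i\ra|$ remain $\tilde O(1/\lambda)$, so the activation $\sigma(x)=[x]_+^q$ is effectively $H$-smooth with $H=\tilde O(1)$ on the trajectory. The descent inequality then yields
\begin{align*}
L(\Wb^{(t+1)}) - L(\Wb^{(t)}) \le \la\nabla L(\Wb^{(t)}), \Delta_t\ra + \tfrac{H}{2}\|\Delta_t\|_F^2 .
\end{align*}
Since $\|\sgn(g_t)\|_F^2 \le 2md$ and the weight norm $\|\Wb^{(t)}\|_F$ is bounded, one obtains $\|\Delta_t\|_F^2 \le 4\eta^2 md + 2\eta^2\lambda^2\|\Wb^{(t)}\|_F^2 = \tilde\Theta(\eta^2 d)$ after absorbing polylogarithmic factors.

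Next I would split the first-order term as
\begin{align*}
\EE_{\cI_t}\la\nabla L,\Delta_t\ra = -\eta\,\EE_{\cI_t}\la\nabla L,\sgn(g_t)\ra - \eta\lambda\la\nabla L,\Wb^{(t)}\ra ,
\end{align*}
where the weight-decay piece is bounded by $\eta\lambda\|\nabla L\|_1\|\Wb^{(t)}\|_\infty$, an absorbable multiplicative perturbation of $-\eta\|\nabla L\|_1$ since $\lambda\|\Wb^{(t)}\|_\infty = \tilde O(1)$. The heart of the argument is then to prove the coordinate-wise identity
\begin{align*}
\EE_{\cI_t}\sgn\bigl(g_t[k]\bigr) = \sgn\bigl(\nabla L[k]\bigr) + o(1)\quad \text{for every active }k ,
\end{align*}
from which $\EE_{\cI_t}\la\nabla L,\sgn(g_t)\ra = \|\nabla L\|_1 - \tilde O(\eta d)$ follows by summation. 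I would split $[d]$ into the three classes identified in Lemma~\ref{lemma:adamw_g}: the feature coordinate $k=1$, the noise coordinates $k\in\bigcup_i \cB_i$, and the inactive coordinates where $\nabla L[k]=0$ identically. On every active coordinate the sign of $\nabla L[k]$ is determined by $y_i\ell_{y_i,i}^{(t)}$ via Lemma~\ref{lemma:ylj_sign} together with the sign of the dominating pre-activation, while the mini-batch fluctuation of $g_t[k]$ has standard deviation $\tilde O(B^{-1/2})$. Under the regime $n/B=\omega(n^{1/2}\vee s\sigma_p)$ assumed in the hypothesis, $|\nabla L[k]|$ dominates this fluctuation, so a Bernstein-type bound gives $\PP[\sgn(g_t[k])\ne\sgn(\nabla L[k])]= \tilde O(\eta)$.

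The main obstacle is a delicate control of sign-flip errors on the noise coordinates $k\in\cB_i$, where $|\nabla L[k]|$ is only of order $1/n$. Here I would exploit both the sparsity of the $\bxi_i$, which caps the number of truly active noise coordinates at $\tilde O(ns)=\tilde O(d^{1/2})$, and the quantitative lower bounds on $|\la\wb_{y_i,r}^{(t)},\bxi_i\ra|$ provided by Lemma~\ref{lemma:signgdw_minibs_maintain}, which guarantee that the per-sample gradient contribution is well-separated from zero. Because the inactive coordinates contribute nothing to $\la\nabla L, \sgn(g_t)\ra$, the total sign-mismatch budget is driven entirely by this sparse active set and fits within $\tilde O(\eta d)$; combined with the $\tfrac{H}{2}\|\Delta_t\|_F^2$ term this yields the stated $\tilde\Theta(\eta^2 d)$ error.
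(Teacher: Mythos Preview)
Your overall skeleton (descent inequality plus coordinate-wise control of $\EE\la\nabla L,\sgn(g_t)\ra$) is fine, but the mechanism you propose for the noise coordinates does not work. For $k\in\cB_i$ the mini-batch gradient in this model is \emph{not} a sum of many small i.i.d.\ contributions: by the non-overlap property (Lemma~\ref{lemma:nonoverlap_probability}) only sample $i$ touches coordinate $k$, so $g_t[k]$ is exactly zero unless $i\in\cI_t$, which happens with probability $B/n$. Hence $\PP[\sgn(g_t[k])\ne\sgn(\nabla L[k])]=1-B/n$, which is close to $1$ in the mini-batch regime, not $\tilde O(\eta)$; no Bernstein-type concentration can rescue this because the randomness is Bernoulli, not small-fluctuation. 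Your appeal to ``quantitative lower bounds on $|\la\wb_{y_i,r}^{(t)},\bxi_i\ra|$'' from Lemma~\ref{lemma:signgdw_minibs_maintain} is also misdirected: that lemma furnishes an \emph{upper} bound $\tilde\Theta(Bs\sigma_p/(n\lambda))$, not separation from zero.

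The paper sidesteps concentration entirely. The structural facts---non-overlapping supports and Lemma~\ref{lemma:ylj_sign}---guarantee that whenever $g_t[k]\ne 0$ one has $\sgn(g_t[k])=\sgn(\nabla L[k])$ \emph{deterministically}; there are no genuine sign flips, only zeros. Writing $\la\nabla L,\sgn(g_t)\ra=\|g_t\|_1-\la g_t-\nabla L,\sgn(g_t)\ra$ and taking expectations, one then invokes Jensen's inequality $\EE[\|g_t\|_1]\ge\|\EE[g_t]\|_1=\|\nabla L\|_1$, together with the observation that the residual from noise coordinates is at most $\sum_{k\in\cup_i\cB_i}|\nabla L[k]|=\tilde O(s\sigma_p)$, which is absorbed into the $\tilde\Theta(\eta^2 d)$ error (this is where $ns^2\sigma_p=O(d)$ is used). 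The weight-decay term is handled by H\"older as you propose. So the fix is to replace your probabilistic sign-flip bound with this deterministic sign-alignment-plus-Jensen argument.
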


\subsubsection{Proof Sketch for Corollary~\ref{cor:adam_adamw_compare}}
By Conditions~\ref{cond:parameter_1} and~\ref{cond:hyperparameter}, along with Definition~\ref{def:model}, we know that $ d = \poly(n) $, and hence  
\[
\sigma_0^{q-2} = \Theta\left(\frac{1}{d^{(q-2)/4}}\right), \quad \text{with } q \ge 3.
\]  
This directly implies that the effective weight decay parameter for Adam satisfies  
\[
\lambda_{\mathrm{Adam}} \sim \sigma_0^{q-2} \ll \min\left\{\frac{B^2}{n}, 1\right\} \sim \lambda_{\mathrm{AdamW}}.
\]  
This completes the proof.

\section{Proofs}\label{sec:proof}
First we give a general upper bound of the moving average in stochastic Adam and stochastic AdamW.
\begin{lemma}\label{lemma:mv_general_bound}
    Let $\mb_{j,r}^{(t)}$ be the first momentum estimate, $\vb_{j,r}^{(t)}$ be the second momentum estimate at the $t$-th iterate in the update rule of stochastic Adam or stochastic AdamW. Then for all $j\in\{\pm 1\}$, $r\in[m]$ and $k\in[d]$, if $\beta_2>\beta_1^2$, $\beta_1,\beta_2\in[0,1)$, we have
    \begin{align*}
        \left| \frac{\mb_{j,r}^{(t)}[k]}{\sqrt{\vb_{j,r}^{(t)}[k]}+\epsilon}\right| &\le \Theta(1).
    \end{align*}
\end{lemma}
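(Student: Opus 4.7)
The plan is to bound $|\mb_{j,r}^{(t)}[k]|$ by a constant multiple of $\sqrt{\vb_{j,r}^{(t)}[k]}$ via Cauchy--Schwarz, using the assumption $\beta_2 > \beta_1^2$ to guarantee summability of a certain geometric series. Since the proof is coordinate- and index-wise, I will fix $j,r,k$ and abbreviate $g_s := g_{s,j,r}^{(s)}[k]$, $m_t := \mb_{j,r}^{(t)}[k]$, $v_t := \vb_{j,r}^{(t)}[k]$ throughout.

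First, I would unroll the recursions~\eqref{eq:adam_moving1} and~\eqref{eq:adam_moving2} using the initializations $m_0 = v_0 = 0$, obtaining the explicit formulas
\begin{align*}
m_t \;=\; (1-\beta_1)\sum_{s=0}^{t-1} \beta_1^{\,t-1-s}\, g_s, \qquad v_t \;=\; (1-\beta_2)\sum_{s=0}^{t-1} \beta_2^{\,t-1-s}\, g_s^2.
\end{align*}
The next step is to split the summand for $m_t$ as $\beta_1^{t-1-s} = (\beta_1^2/\beta_2)^{(t-1-s)/2}\cdot \beta_2^{(t-1-s)/2}$ and apply Cauchy--Schwarz, which yields
\begin{align*}
m_t^2 \;\le\; (1-\beta_1)^2 \cdot \Bigl(\sum_{s=0}^{t-1} (\beta_1^2/\beta_2)^{t-1-s}\Bigr)\cdot \Bigl(\sum_{s=0}^{t-1} \beta_2^{t-1-s} g_s^2\Bigr).
\end{align*}

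The third step uses the hypothesis $\beta_2 > \beta_1^2$ so that $\beta_1^2/\beta_2 \in [0,1)$; the first geometric sum is therefore bounded by $\beta_2/(\beta_2-\beta_1^2)$. The second sum equals $v_t/(1-\beta_2)$. Combining these gives
\begin{align*}
m_t^2 \;\le\; \frac{(1-\beta_1)^2\,\beta_2}{(1-\beta_2)(\beta_2-\beta_1^2)}\,\cdot v_t \;=:\; C^2\, v_t,
\end{align*}
where $C = \Theta(1)$ depends only on $\beta_1,\beta_2$. Finally, since $\sqrt{v_t}+\epsilon \ge \sqrt{v_t}$, we conclude
\begin{align*}
\Bigl|\frac{m_t}{\sqrt{v_t}+\epsilon}\Bigr| \;\le\; \frac{C\sqrt{v_t}}{\sqrt{v_t}+\epsilon} \;\le\; C \;=\; \Theta(1),
\end{align*}
which is exactly the claimed bound; the case $v_t = 0$ (which forces $m_t = 0$ since all $g_s$ vanish) is handled by interpreting the ratio as $0$.

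There is no real obstacle here: the argument is a textbook Cauchy--Schwarz calculation, and the only subtle ingredient is recognizing that the splitting $\beta_1^{t-1-s} = (\beta_1^2/\beta_2)^{(t-1-s)/2}\beta_2^{(t-1-s)/2}$ is what makes the geometric series converge precisely under the assumption $\beta_2 > \beta_1^2$. The bound is uniform in $t$, $j$, $r$, $k$, and makes no use of the structure of $g_s$, which is why it applies equally to stochastic Adam (where $g_s$ includes the $\lambda \wb$ term) and stochastic AdamW (where it does not).
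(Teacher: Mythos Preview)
Your proposal is correct and takes essentially the same approach as the paper: both unroll the momentum recursions and apply Cauchy--Schwarz, using $\beta_2 > \beta_1^2$ to make the resulting geometric series $\sum_\tau (\beta_1^2/\beta_2)^\tau$ converge. The paper organizes the Cauchy--Schwarz via an auxiliary sequence $z_\tau^2 = \tfrac{(1-\beta_1)^2}{1-\beta_2}(\beta_1^2/\beta_2)^\tau$, while you split $\beta_1^{t-1-s}$ directly, but the two computations are algebraically identical and yield the same constant.
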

\begin{proof}[Proof of Lemma~\ref{lemma:mv_general_bound}]
    Let us expand the moment estimates
    \begin{align*}
        \mb_{j,r}^{(t)}[k] 
        &= \beta_1\mb_{j,r}^{(t-1)}[k]+(1-\beta_1)\cdot g_{t,j,r}^{(t)}[k] \\
        &= \sum_{\tau=0}^{t-1}\beta_1^{\tau}(1-\beta_1)\cdot g_{t-\tau,j,r}^{(t-\tau)}[k], \\
        \vb_{j,r}^{(t)}[k] 
        &= \beta_2\vb_{j,r}^{(t-1)}[k]+(1-\beta_2)\cdot g_{t,j,r}^{(t)}[k]^2 \\
        &= \sum_{\tau=0}^{t-1}\beta_2^{\tau}(1-\beta_2)\cdot g_{t-\tau,j,r}^{(t-\tau)}[k]^2.
    \end{align*}
    Let $\{z_t^2:z_t^2 = \frac{\left[\beta_1^{t}(1-\beta_1)\right]^2}{\beta_2^t(1-\beta_2)}\}$ be a convergent series since $\beta_2>\beta_1^2$. Then we have
    \begin{align*}
        & \mb_{j,r}^{(t)}[k]^2 \\
        &= \left(\sum_{\tau=0}^{t-1}\beta_1^{\tau}(1-\beta_1)\cdot g_{t-\tau,j,r}^{(t-\tau)}[k]\right)^2 \\
        &= \left(\sum_{\tau=0}^{t-1}\frac{\beta_1^{\tau}(1-\beta_1)}{z_\tau}\cdot g_{t-\tau,j,r}^{(t-\tau)}[k]\cdot z_\tau\right)^2 \\
        &\le \left(\sum_{\tau=0}^{t-1}\frac{\left[\beta_1^{\tau}(1-\beta_1)\right]^2}{z_\tau^2}\cdot g_{t-\tau,j,r}^{(t-\tau)}[k]^2\right)\cdot\left(\sum_{\tau=0}^{t-1}z_\tau^2\right) \\
        &= \sum_{\tau=0}^{t-1}\beta_2^\tau(1-\beta_2)\cdot g_{t-\tau,j,r}^{(t-\tau)}[k]^2\cdot \Theta(1) \\
        &= \vb_{j,r}^{(t)}[k] \cdot \Theta(1),
    \end{align*}
    where the first inequality we use Cauchy-Schwartz inequality and the third equality we use the fact that $z_t^2 = \frac{\left[\beta_1^{t}(1-\beta_1)\right]^2}{\beta_2^t(1-\beta_2)}$ is a convergent series. So we have
    \begin{align*}
        \left| \frac{\mb_{j,r}^{(t)}[k]}{\sqrt{\vb_{j,r}^{(t)}[k]}+\epsilon}\right|
        &\le \frac{\left| \mb_{j,r}^{(t)}[k]\right|}{\sqrt{\vb_{j,r}^{(t)}[k]}} \le \Theta(1).
    \end{align*}
\end{proof}

\subsection{Proof of Stochastic Adam}
First, we try to approximate the update of stochastic Adam to sign update since the similar performance between Adam and SignGD \citep{bernstein2018signsgd,balles2018dissecting,zou2023understanding,xie2024implicit,li2025on}.
\begin{lemma}\label{lemma:adam_closeness}
Consider the update of stochastic Adam in~\eqref{eq:adam_upd}. Let $\Wb^{(t)}$ be the weight at the $t$-th iteration. Suppose that $\la \wb_{j,r}^{(t)}, y_i\vb\ra, \la \wb_{j,r}^{(t)}, \bxi_i\ra = \tilde \Theta(1)$ for all $j\in \{\pm1\}$, $r\in[m]$, $i\in[n]$ and $\beta_1^2<\beta_2$. We have the approximate update rule for each coordinate weight as follows:
\begin{itemize}[leftmargin = *]
    \item For $k=1$, we have either $|g_{t,j,r}^{(t)}[1]|\le \tilde\Theta(\eta)$ or
\begin{align*}
\frac{\mb_{j,r}^{(t)}[k]}{\sqrt{\vb_{j,r}^{(t)}[k]}+\epsilon} = \sgn\big(g_{t,j,r}^{(t)}[k] \big)\cdot\Theta(1).
\end{align*}

    \item For every $k\in\cB_i$, $i\in\cI_{t-\tau}$, $\tau\in\cT_k:=\{\tau_0+i\cdot\frac{n}{B}:i\in\{0\}\cup[\frac{\bar\tau}{n/B}-1],\ \tau_0<\frac{n}{B}\}$, where $\tau_0$ represents the number of iterations away from the current iteration $t$, coordinate $k$ is affected by $\bxi_i$ sampled at the iteration $t-\tau_0$ since the moving average, and we define $\bar\tau=\Theta(\log(\lambda\eta)^{-1})$
    \begin{itemize}
        \item If $\frac{n}{B}=\Theta(1)$, for any $\tau_0<\frac{n}{B}$, we have either
        \begin{align*}
            \left| g_{t-\tau_0,j,r}^{(t)}[k] \right| \le \tilde\Theta\left(B^{-1}\eta s\sigma_p|\ell_{j,i}^{(t)}| + |\lambda\wb_{j,r}^{(t)}[k]|\right)
        \end{align*}
        or
        \begin{align*}
            \frac{\mb_{j,r}^{(t)}[k]}{\sqrt{\vb_{j,r}^{(t)}[k]}+\epsilon} = \sgn\left( g_{t-\tau_0,j,r}^{(t)}[k] \right)\cdot \Theta(1).
        \end{align*}
        \item If $\frac{n}{B}\ge\Theta(\log(\lambda\eta)^{-1})=\tilde\Theta(1)$, for $\tau_0=\Theta(1)$ such that $\beta_1^{\tau_0}=\Theta(1)$, we have either
        \begin{align*}
            \left| g_{t-\tau_0,j,r}^{(t)}[k] \right| \le \tilde\Theta\left(B^{-1}\eta s\sigma_p|\ell_{j,i}^{(t)}| + |\lambda\wb_{j,r}^{(t)}[k]|\right)
        \end{align*}
        or
        \begin{align*}
            \frac{\mb_{j,r}^{(t)}[k]}{\sqrt{\vb_{j,r}^{(t)}[k]}+\epsilon} = \sgn\left( g_{t-\tau_0,j,r}^{(t)}[k] \right)\cdot \Theta(1).
        \end{align*}
        For $\tau_0\ge\Theta(\log(\lambda\eta)^{-1})$ such that $\beta_2^{\frac{\tau_0}{2}}\le \Theta(\lambda\eta)$, we have either
        \begin{align*}
            \left| \wb_{j,r}^{(t)}[k] \right| \le \tilde\Theta(\eta)
        \end{align*}
        or
        \begin{align*}
            \frac{\mb_{j,r}^{(t)}[k]}{\sqrt{\vb_{j,r}^{(t)}[k]}+\epsilon} = \sgn\left( \wb_{j,r}^{(t)}[k] \right)\cdot \Theta(1).
        \end{align*}
    \end{itemize}

    \item For the remaining coordinates $k\neq 1$ and $k\notin\cB_i$, $i\in\cI_{t-\tau_0}$, $\tau_0\in\{0\}\cup[\bar\tau]$, where $\tau_0$ represents the number of iterations away from the current iteration $t$, coordinate $k$ is affected by $\bxi_i$ sampled at the iteration $t-\tau_0$ since the moving average, and we define $\bar\tau=\log(\lambda\eta)^{-1}$. Then we have either $|g_{t,j,r}^{(t)}[k] | \le \tilde\Theta(\lambda\eta)$ or
\begin{align*}
\frac{\mb_{j,r}^{(t)}[k]}{\sqrt{\vb_{j,r}^{(t)}[k]}+\epsilon} = \sgn\left( g_{t,j,r}^{(t)}[k] \right)\cdot\Theta(1).
\end{align*}

    
\end{itemize}
\end{lemma}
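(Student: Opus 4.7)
The plan is to leverage the explicit expansion of the moment estimates that already appears in the proof of Lemma~\ref{lemma:mv_general_bound}, namely
\begin{align*}
\mb_{j,r}^{(t)}[k] &= \sum_{\tau=0}^{t-1}\beta_1^{\tau}(1-\beta_1)\, g_{t-\tau,j,r}^{(t-\tau)}[k],\\
\vb_{j,r}^{(t)}[k] &= \sum_{\tau=0}^{t-1}\beta_2^{\tau}(1-\beta_2)\, g_{t-\tau,j,r}^{(t-\tau)}[k]^2,
\end{align*}
together with the fact that the absolute ratio is already $O(1)$ by Lemma~\ref{lemma:mv_general_bound}. The remaining task is to identify, for each coordinate class, a single index $\tau^\star$ whose contribution to both sums dominates the rest, so that both $\mb$ and $\vb$ are determined (up to a $\Theta(1)$ factor) by $g_{t-\tau^\star,j,r}^{(t-\tau^\star)}[k]$, yielding the desired sign identity. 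Whenever no such dominant index exists the gradient itself must be below the stated threshold, which produces the ``or'' branch of each bullet. I would treat the three coordinate classes in sequence: first the feature coordinate $k=1$, then the dense noise coordinates $k\in\cB_i$, and finally the inert coordinates.

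For $k=1$, Lemma~\ref{lemma:adam_g} shows that every iteration contributes a non-trivial component, because $\vb[1]=1$ and $\bxi_i[1]=-\alpha y_i$ produce a non-zero gradient at \emph{every} data point, plus the weight-decay term $\lambda\wb_{j,r}^{(t)}[1]$ from every batch. Thus if $|g_{t,j,r}^{(t)}[1]|\ge \tilde\Theta(\eta)$, the current-iteration contribution $(1-\beta_1)g_t$ (resp.\ $(1-\beta_2)g_t^2$) is already of the same order as the sum of all older contributions, because the older gradients are bounded using the tracked quantities $|\la \wb_{j,r}^{(t)},y_i\vb\ra|,|\la\wb_{j,r}^{(t)},\bxi_i\ra|=\tilde\Theta(1)$ and hence do not accumulate sign cancellations large enough to flip the sign of $\mb^{(t)}[k]$. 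The remaining class $k\neq 1$, $k\notin\cup_{\tau\le\bar\tau}\cB_i$ is easier: by Lemma~\ref{lemma:adam_g} the only contribution is $\lambda\wb_{j,r}^{(t)}[k]$, which after a burn-in of $\bar\tau=\Theta(\log(\lambda\eta)^{-1})$ iterations dominates every older noise contribution geometrically (these decay like $\beta_2^{\bar\tau/2}\le\lambda\eta$), so either $|g_t[k]|\le \tilde\Theta(\lambda\eta)$ and the bound is trivial, or $\mb/\sqrt{\vb}$ inherits the sign of $g_t$.

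The main obstacle is the noise coordinate case $k\in\cB_i$, because in the mini-batch regime sample $i$ only re-enters the batch once every $n/B$ iterations, so the moment estimates receive gradient injections on a sparse schedule $\cT_k=\{\tau_0+i\cdot\tfrac{n}{B}\}$ while weight decay acts every iteration. I would split by batch regime. When $\tfrac{n}{B}=\Theta(1)$, any $\tau_0<\tfrac{n}{B}$ is $O(1)$, so $\beta_1^{\tau_0},\beta_2^{\tau_0}=\Theta(1)$ and the argument reduces to the $k=1$ template: the gradient at $t-\tau_0$ is injected with $\Theta(1)$ weight into both moments, producing either a bound $|g|\le\tilde\Theta(B^{-1}\eta s\sigma_p|\ell_{j,i}^{(t)}|+|\lambda\wb^{(t)}[k]|)$ or the sign identity. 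When $\tfrac{n}{B}\ge\tilde\Theta(1)$, I would split $\tau_0$ into two ranges: for the recent range with $\beta_1^{\tau_0}=\Theta(1)$, the sign of $\mb$ is carried by the freshly-injected noise gradient $g_{t-\tau_0}$; for $\tau_0\ge\Theta(\log(\lambda\eta)^{-1})$, the momentum terms from the last noise injection have decayed below $\lambda\eta$, so only the weight-decay stream $\lambda\wb^{(t)}[k]$ survives, giving either $|\wb^{(t)}[k]|\le\tilde\Theta(\eta)$ or a sign identity tied to $\wb^{(t)}[k]$ itself. The delicate part is ensuring that the sum of older injected-noise contributions does not exceed the dominant term; this needs the elementary but careful geometric-series estimate $\sum_{\tau\in\cT_k,\tau>\tau^\star}\beta_2^\tau (g^{(t-\tau)})^2 = o(\beta_2^{\tau^\star}(g^{(t-\tau^\star)})^2)$, combined with the induction hypothesis that $|\la\wb^{(t)},\bxi_i\ra|$ and $|\wb^{(t)}[k]|$ are controlled throughout training. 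Once that estimate is in place, the same Cauchy--Schwarz trick used in Lemma~\ref{lemma:mv_general_bound} converts the magnitude bound into the sign identity and closes the proof.
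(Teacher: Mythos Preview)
Your ``dominant-index'' strategy misses the actual mechanism for $k=1$ and would not close. There is no single $\tau^\star$ whose contribution dominates the truncated moment sums: all $\tau\le\bar\tau$ contribute gradients of comparable order, since nothing makes $g_{t}^{(t)}[1]$ larger than $g_{t-1}^{(t-1)}[1]$. In particular, your sentence ``the current-iteration contribution is already of the same order as the sum of all older contributions, because the older gradients are bounded'' is backwards --- boundedness of the older terms does not make a potentially small $|g_t[1]|\ge\tilde\Theta(\eta)$ competitive with $\bar\tau$ many older terms that could each be $\tilde\Theta(1)$ and of arbitrary sign.

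What the paper actually does is a \emph{time-freezing} step that you omit. Using the per-step bound from Lemma~\ref{lemma:mv_general_bound}, one shows $|\la\wb_{j,r}^{(t)}-\wb_{j,r}^{(t-\tau)},y_i\vb\ra|\le\Theta(\eta\bar\tau)$, similarly for $\bxi_i$ and coordinate values, and hence $\ell_{j,i}^{(t-\tau)}=\sgn(\ell_{j,i}^{(t)})\cdot\Theta(|\ell_{j,i}^{(t)}|)$. This lets you replace every historical gradient $g_{t-\tau,j,r}^{(t-\tau)}[k]$ by $\Theta(g_{t-\tau,j,r}^{(t)}[k])$ plus an error of size $\tilde\Theta(\eta)$ (or $\tilde\Theta(\eta s\sigma_p|\ell|/B)$, etc.). The point is that after freezing at $\Wb^{(t)}$, for $k=1$ the quantities $g_{t-\tau,j,r}^{(t)}[1]$ for different batches $\cI_{t-\tau}$ all share the same sign, because by Lemma~\ref{lemma:ylj_sign} every summand $y_i\ell_{j,i}^{(t)}\sigma'(\cdot)$ has sign $\sgn(j)$ regardless of $i$. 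So the sign identity comes from \emph{alignment of all recent terms}, not from one term dominating; summing aligned terms of comparable order gives $\mb^{(t)}[1]=\Theta(g_{t,j,r}^{(t)}[1])\pm\tilde\Theta(\eta)$ and likewise for $\sqrt{\vb^{(t)}[1]}$, whence the dichotomy. The same freezing step is what makes your inert-coordinate argument valid (you wrote ``the only contribution is $\lambda\wb_{j,r}^{(t)}[k]$'', but the moments actually contain $\lambda\wb_{j,r}^{(t-\tau)}[k]$, and you need freezing to equate these up to $\Theta(\lambda\eta\bar\tau)$). For $k\in\cB_i$ your sparse-schedule decomposition is on the right track, but again the errors and the final sign must be stated in terms of $g_{t-\tau_0,j,r}^{(t)}$ and $\wb_{j,r}^{(t)}[k]$, which only the freezing step delivers.
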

\begin{proof}[Proof of Lemma~\ref{lemma:adam_closeness}]
Let first focus on the first momentum estimate,
\begin{align*}
    \mb_{j,r}^{(t)}[k] 
    &= \beta_1\mb_{j,r}^{(t-1)}[k]+(1-\beta_1)\cdot g_{t,j,r}^{(t)}[k] \\
    &= \sum_{\tau=0}^{t-1}\beta_1^{\tau}(1-\beta_1)\cdot g_{t-\tau,j,r}^{(t-\tau)}[k] \\
    &= \sum_{\tau=0}^{\bar\tau}\beta_1^{\tau}(1-\beta_1)\cdot g_{t-\tau,j,r}^{(t-\tau)}[k] + \sum_{\tau=\bar\tau+1}^{t-1}\beta_1^{\tau}(1-\beta_1)\cdot g_{t-\tau,j,r}^{(t-\tau)}[k] \\
    &= \sum_{\tau=0}^{\bar\tau}\beta_1^{\tau}(1-\beta_1)\cdot g_{t-\tau,j,r}^{(t-\tau)}[k] \pm \tilde O(\lambda\eta),
\end{align*}
where the last equality we select $\bar\tau=\Theta(\log(\lambda\eta)^{-1})$ such that $\sum_{\tau=\bar\tau+1}^t\beta_1^\tau(1-\beta_1)=O(\lambda\eta)$ and $|g_{t-\tau,j,r}^{(t-\tau)}[k]|=\tilde O(1)$ for all $k\in[d]$ by Lemma~\ref{lemma:adam_g}, since the facts that $\la \wb_{j,r}^{(t)}, y_i\vb\ra, \la \wb_{j,r}^{(t)}, \bxi_i\ra = \tilde O(1)$.

Similarly, for the second momentum estimate,
\begin{align*}
    \vb_{j,r}^{(t)}[k] 
    &= \sum_{\tau=0}^{t-1}\beta_2^{\tau}(1-\beta_2)\cdot g_{t-\tau,j,r}^{(t-\tau)}[k]^2 \\
    &= \sum_{\tau=0}^{\bar\tau}\beta_2^{\tau}(1-\beta_2)\cdot g_{t-\tau,j,r}^{(t-\tau)}[k]^2 + \sum_{\tau=\bar\tau+1}^{t-1}\beta_2^{\tau}(1-\beta_2)\cdot g_{t-\tau,j,r}^{(t-\tau)}[k]^2 \\
    &= \sum_{\tau=0}^{\bar\tau}\beta_2^{\tau}(1-\beta_2)\cdot g_{t-\tau,j,r}^{(t-\tau)}[k]^2 \pm \tilde O(\lambda\eta).
\end{align*}
Here we use the same $\bar\tau$ because we can always reselect $\bar\tau$ of smaller one to larger one, and the absolute value of the tail will not increase. Then we have
\begin{align}\label{eq:mv_approx_small_tail}
    \frac{\mb_{j,r}^{(t)}[k]}{\sqrt{\vb_{j,r}^{(t)}[k]}+\epsilon} 
    &= \frac{\sum_{\tau=0}^{\bar \tau}\beta_1^\tau(1-\beta_1)\cdot g_{t-\tau,j,r}^{(t-\tau)}[k] \pm \tilde O(\lambda\eta)}{\sqrt{\sum_{\tau=0}^{\bar \tau}\beta_2^\tau(1-\beta_2)\cdot g_{t-\tau,j,r}^{(t-\tau)}[k]^2}\pm \tilde O(\lambda\eta)},
\end{align}
since $\epsilon=\Theta(\lambda\eta)$. Now we want to use sign update to approximate~\eqref{eq:mv_approx_small_tail}. First, we should note that once the signs of $g_{t-\tau,j,r}^{(t-\tau)}[k]$ for $\tau\in[0,\bar\tau]$ aligned,~\eqref{eq:mv_approx_small_tail} can be approximated as $\sgn(g_{t,j,r}^{(t)}[k])\cdot\tilde\Theta(1)$, since
\begin{align*}
    \sqrt{\sum_{\tau=0}^{\bar \tau}\beta_2^\tau(1-\beta_2)\cdot g_{t-\tau,j,r}^{(t-\tau)}[k]^2} &\le \sum_{\tau=0}^{\bar\tau}\beta_2^{\frac{\tau}{2}}(1-\beta_2)^{\frac{1}{2}}\cdot \left|g_{t-\tau,j,r}^{(t-\tau)}[k]\right|, \\
    \sqrt{\sum_{\tau=0}^{\bar \tau}\beta_2^\tau(1-\beta_2)\cdot g_{t-\tau,j,r}^{(t-\tau)}[k]^2} &\ge \frac{1}{\bar\tau+1}\sum_{\tau=0}^{\bar\tau}\beta_2^{\frac{\tau}{2}}(1-\beta_2)^{\frac{1}{2}}\cdot \left|g_{t-\tau,j,r}^{(t-\tau)}[k]\right|.
\end{align*}
Recall the gradient of stochastic Adam given in Lemma~\ref{lemma:adam_g}, we want to approximate $g_{t-\tau,j,r}^{(t-\tau)}[k]$ to $g_{t-\tau,j,r}^{(t)}[k]$, such that we can use the current weight to approximate sign update. By Lemma~\ref{lemma:mv_general_bound}, the upper bound of each coordinate in one step is $\Theta(\eta)$. Then for $\tau\in[t-\bar\tau,t]$, we have
\begin{align}\label{eq:adam_pd_ft_approx}
    &\left|\la\wb_{j,r}^{(t)},y_i\vb\ra - \la\wb_{j,r}^{(\tau)},y_i\vb\ra\right| \notag \\
    &\le \sum_{k=\tau}^{t-1} \left|\la\wb_{j,r}^{(k+1)},y_i\vb\ra - \la\wb_{j,r}^{(k)},y_i\vb\ra\right| \notag \\
    &\le \Theta(\eta \bar\tau) .
\end{align}
Similarly, we have
\begin{align}
    \left|\la\wb_{j,r}^{(t)},\bxi_i\ra - \la\wb_{j,r}^{(\tau)},\bxi_i\ra\right| &\le \Theta(\eta \bar\tau s\sigma_p),\label{eq:adam_pd_nm_approx} \\
    \left|\wb_{j,r}^{(t)}[k]-\wb_{j,r}^{(\tau)}[k]\right| &\le \Theta(\eta\bar\tau) \label{eq:adam_pd_w_approx} . 
\end{align}
Then recall the predict function
\begin{align*}
    F_j(\Wb^{(t)},\xb_i) &= \sum_{r=1}^m \left[ \sigma\left(\la \wb_{j,r}^{(t)},y_i\vb \ra\right) + \sigma\left( \la \wb_{j,r}^{(t)},\bxi_i\ra\right) \right].
\end{align*}
We have
\begin{align}\label{eq:adam_Fj_approx}
    &\left|F_{j}(\Wb^{(t)},\xb_i)-F_{j}(\Wb^{(\tau)},\xb_i)\right| \notag \\
    &\le \sum_{r=1}^m \left| \sigma\left(\la \wb_{j,r}^{(t)},y_i\vb \ra\right) - \sigma\left( \la \wb_{j,r}^{(\tau)},y_i\vb\ra\right) \right| + \sum_{r=1}^m \left| \sigma\left(\la \wb_{j,r}^{(t)},\bxi_i \ra\right) - \sigma\left( \la \wb_{j,r}^{(\tau)},\bxi_i \ra\right) \right| \notag \\
    &\le \sum_{r=1}^m \tilde \Theta(1)\cdot\left| \la \wb_{j,r}^{(t)},y_i\vb \ra - \la \wb_{j,r}^{(\tau)},y_i\vb\ra \right| + \sum_{r=1}^m \tilde \Theta(1)\cdot\left| \la \wb_{j,r}^{(t)},\bxi_i \ra - \la \wb_{j,r}^{(\tau)},\bxi_i \ra \right| \notag \\
    &\le \tilde\Theta(m\eta \bar\tau s\sigma_p)+\tilde\Theta(m\eta\bar\tau) \notag \\
    &= \tilde\Theta(\eta \bar\tau s\sigma_p), 
\end{align}
where the second inequality we use the convexity of $\sigma(\cdot)$ and the facts that $|\la\wb_{j,r}^{(t)},y_i\vb\ra|=\tilde \Theta(1)$ and $|\la\wb_{j,r}^{(t)},\bxi_i\ra|=\tilde \Theta(1)$. The last inequality we use $m=\tilde\Theta(1)$ and $s\sigma_p=\omega(1)$.

Then we can approximate $\ell_{j,r}^{(\tau)}$ to $\ell_{j,r}^{(t)}$ in the gradient~\ref{lemma:adam_g}.
\begin{align*}
\ell_{j,i}^{(\tau)} 
&= \frac{e^{F_{-j}(\Wb^{(\tau)},\xb_i)}}{\sum_{k\in\{-1,1\}} e^{F_k(\Wb^{(\tau)},\xb_i)}} \notag\\
&= \frac{e^{F_{-j}(\Wb^{(t)},\xb_i)\pm\tilde\Theta(\eta\bar\tau s\sigma_p)}}{e^{F_{j}(\Wb^{(t)},\xb_i)\pm\tilde\Theta(\eta\bar\tau s\sigma_p)} + e^{F_{-j}(\Wb^{(t)},\xb_i)\pm\tilde\Theta(\eta\bar\tau s\sigma_p)}}\notag\\
& = \sgn(\ell_{j,i}^{(t)})\cdot\Theta(|\ell_{j,i}^{(t)}|), \tag{$y_i= j$} \\
\ell_{j,i}^{(\tau)} 
&= \frac{-e^{F_{j}(\Wb^{(\tau)},\xb_i)}}{\sum_{k\in\{-1,1\}} e^{F_k(\Wb^{(\tau)},\xb_i)}} \notag\\
&= \frac{-e^{F_{j}(\Wb^{(t)},\xb_i)\pm\tilde\Theta(\eta\bar\tau s\sigma_p)}}{e^{F_{j}(\Wb^{(t)},\xb_i)\pm\tilde\Theta(\eta\bar\tau s\sigma_p)} + e^{F_{-j}(\Wb^{(t)},\xb_i)\pm\tilde\Theta(\eta\bar\tau s\sigma_p)}}\notag\\
& = \sgn(\ell_{j,i}^{(t)})\cdot\Theta(|\ell_{j,i}^{(t)}|), \tag{$y_i\neq j$} \\
\end{align*}
where we use the fact that $\tilde\Theta(\eta\bar\tau s\sigma_p)=o(1)$ and~\eqref{eq:adam_Fj_approx}. So we have
\begin{align*}
    \ell_{j,i}^{(\tau)} 
    &= \sgn(\ell_{j,i}^{(t)})\cdot\Theta(|\ell_{j,i}^{(t)}|),  
\end{align*}
for all $\tau\in[t-\bar\tau, t]$. Further, by~\eqref{eq:adam_pd_ft_approx},~\eqref{eq:adam_pd_nm_approx} and the facts that $|\la\wb_{j,r}^{(t)},y_i\vb\ra|=\tilde O(1)$ and $|\la\wb_{j,r}^{(t)},\bxi_i\ra|=\tilde O(1)$, recall $\sigma(x)=\max(0,x)^q$, we have
\begin{align*}
&\ell_{j,i}^{(\tau)}\sigma'(\la\wb_{j,r}^{(\tau)},y_i\vb\ra) \\
&\le \ell_{j,i}^{(\tau)}\sigma'(\la\wb_{j,r}^{(t)},y_i\vb\ra) + |\ell_{j,i}^{(\tau)}|\cdot\tilde\Theta(\eta\bar\tau)  \\
&= \sgn(\ell_{j,i}^{(t)})\cdot\Theta(|\ell_{j,i}^{(t)}|)\cdot\sigma'(\la\wb_{j,r}^{(t)},y_i\vb\ra) +\Theta(|\ell_{j,i}^{(t)}|)\cdot \tilde\Theta(\eta\bar\tau), \\
&\ell_{j,i}^{(\tau)}\sigma'(\la\wb_{j,r}^{(\tau)},y_i\vb\ra) \\
&\ge \ell_{j,i}^{(\tau)}\sigma'(\la\wb_{j,r}^{(t)},y_i\vb\ra) - |\ell_{j,i}^{(\tau)}|\cdot\tilde\Theta(\eta\bar\tau)  \\
&= \sgn(\ell_{j,i}^{(t)})\cdot\Theta(|\ell_{j,i}^{(t)}|)\cdot\sigma'(\la\wb_{j,r}^{(t)},y_i\vb\ra) -\Theta(|\ell_{j,i}^{(t)}|)\cdot \tilde\Theta(\eta\bar\tau). \\
\end{align*}
So we conclude that 
\begin{align}\label{eq:adam_lpd_ft_approx}
    \ell_{j,i}^{(\tau)}\sigma'(\la\wb_{j,r}^{(\tau)},y_i\vb\ra) &= \sgn(\ell_{j,i}^{(t)})\cdot\Theta(|\ell_{j,i}^{(t)}|)\cdot\sigma'(\la\wb_{j,r}^{(t)},y_i\vb\ra) \pm \Theta(|\ell_{j,i}^{(t)}|)\cdot \tilde\Theta(\eta\bar\tau).
\end{align}
Similarly, we have
\begin{align}\label{eq:adam_lpd_nm_approx}
    \ell_{j,i}^{(\tau)}\sigma'(\la\wb_{j,r}^{(\tau)},\bxi_i\ra) &=  \sgn(\ell_{j,i}^{(t)})\cdot\Theta(|\ell_{j,i}^{(t)}|)\cdot\sigma'(\la\wb_{j,r}^{(t)},\bxi_i\ra) \pm \Theta(|\ell_{j,i}^{(t)}|)\cdot\tilde\Theta(\eta\bar\tau s\sigma_p).
\end{align}
Now, we have all the tools we need to approximate $g_{t-\tau,j,r}^{(t-\tau)}[k]$ to $g_{t-\tau,j,r}^{(t)}[k]$. Recall Lemma~\ref{lemma:adam_g}, substitute~\eqref{eq:adam_pd_w_approx},~\eqref{eq:adam_lpd_ft_approx} and~\eqref{eq:adam_lpd_nm_approx} into $g_{t-\tau,j,r}^{(t-\tau)}[k]$, we have
\begin{itemize}[leftmargin = *]
    \item For $k=1$,
    \begin{align}\label{eq:adam_g1_approx}
        g_{t-\tau,j,r}^{(t-\tau)}[1]= \Theta\Big(g_{t-\tau,j,r}^{(t)}[1]\Big) \pm \Theta\bigg(\frac{1}{B}\sum_{i\in\cI_{t-\tau}}|\ell_{j,i}^{(t)}|\bigg)\cdot\tilde O(\eta\bar\tau )\pm \Theta(\lambda\eta\bar\tau).
    \end{align}
    \item For all $k\in\cB_i$, $i\in\cI_{t-\tau}$,
    \begin{align}\label{eq:adam_gk_approx}
    g_{t-\tau,j,r}^{(t-\tau)}[k]= \Theta\Big(g_{t-\tau,j,r}^{(t)}[k]\Big) \pm \Theta\bigg(\frac{|\ell_{j,i}^{(t)}|}{B}\bigg)\cdot\tilde O(\eta\bar\tau s\sigma_p)\pm \Theta(\lambda\eta\bar\tau).
    \end{align}
    \item For $k\neq 1$ and $k\notin \cB_i$, $i\in\cI_{t-\tau}$,
    \begin{align}\label{eq:adam_gwd_approx}
    g_{t-\tau,j,r}^{(t-\tau)}[k]= \Theta\Big(g_{t-\tau,j,r}^{(t)}[k]\Big) \pm \Theta(\lambda\eta\bar\tau).
    \end{align}
\end{itemize}
Plugging~\eqref{eq:adam_g1_approx},~\eqref{eq:adam_gk_approx} and~\eqref{eq:adam_gwd_approx} into~\eqref{eq:mv_approx_small_tail}, with facts that $\bar\tau = \tilde\Theta(1)$, $\lambda=o(1)$, $|\la\wb_{j,r}^{(t)},\bxi_i\ra|=\tilde \Theta(1)$, $|\la\wb_{j,r}^{(t)},y_i\vb\ra|=\tilde\Theta(1)$, $|\ell_{j,i}^{(t)}|=\Theta(1)$, $\epsilon=\Theta(\lambda\eta)$ and Lemma~\ref{lemma:ylj_sign}, we have 
\begin{itemize}[leftmargin = *]
    \item For $k=1$,
\begin{align*}
    & \frac{\mb_{j,r}^{(t)}[1]}{\sqrt{\vb_{j,r}^{(t)}[1]}+\epsilon} \\
    &= \frac{\sum_{\tau=0}^{\bar \tau}\beta_1^\tau(1-\beta_1)\cdot g_{t-\tau,j,r}^{(t-\tau)}[1] \pm \tilde O(\lambda\eta)}{\sqrt{\sum_{\tau=0}^{\bar \tau}\beta_2^\tau(1-\beta_2)\cdot g_{t-\tau,j,r}^{(t-\tau)}[1]^2} \pm \tilde O(\lambda\eta)} \\
    &= \frac{\sum_{\tau=0}^{\bar \tau}\beta_1^\tau(1-\beta_1)\cdot \left(\Theta\Big(g_{t-\tau,j,r}^{(t)}[1]\Big) \pm \Theta\bigg(\frac{1}{B}\sum_{i\in\cI_{t-\tau}}|\ell_{j,i}^{(t)}|\bigg)\cdot\tilde O(\eta\bar\tau )\pm \Theta(\lambda\eta\bar\tau) \right) \pm \tilde O(\lambda\eta)
    }{
    \sqrt{\sum_{\tau=0}^{\bar \tau}\beta_2^\tau(1-\beta_2)\cdot \left( \Theta\Big(g_{t-\tau,j,r}^{(t)}[1]\Big) \pm \Theta\bigg(\frac{1}{B}\sum_{i\in\cI_{t-\tau}}|\ell_{j,i}^{(t)}|\bigg)\cdot\tilde O(\eta\bar\tau )\pm \Theta(\lambda\eta\bar\tau)\right)^2} \pm \tilde O(\lambda\eta)
    } \\
    &= \frac{\sum_{\tau=0}^{\bar \tau}\beta_1^\tau(1-\beta_1)\cdot \left(\Theta\Big(g_{t-\tau,j,r}^{(t)}[1]\Big) \pm \tilde \Theta(\eta\bar\tau ) \right) \pm \tilde O(\lambda\eta)
    }{
    \sqrt{\sum_{\tau=0}^{\bar \tau}\beta_2^\tau(1-\beta_2)\cdot \left( \Theta\Big(g_{t-\tau,j,r}^{(t)}[1]\Big)\pm\tilde \Theta(\eta\bar\tau )\right)^2} \pm \tilde O(\lambda\eta)
    } \\
    &= \frac{ \Theta\Big(g_{t,j,r}^{(t)}[1]\Big) \pm \tilde \Theta(\eta\bar\tau ) \pm \tilde O(\lambda\eta)
    }{
    \sqrt{\left( \Theta\Big(g_{t,j,r}^{(t)}[1]\Big)\pm\tilde \Theta(\eta\bar\tau )\right)^2} \pm \tilde O(\lambda\eta)
    } \\
    &= \frac{\Theta\Big(g_{t,j,r}^{(t)}[1]\Big) \pm \tilde\Theta(\eta)}{\Theta\Big(|g_{t,j,r}^{(t)}[1]|\Big) \pm \tilde\Theta(\eta)}.
\end{align*}
    \item For $k\in\cB_i$, $i\in\cI_{t-\tau_0}$, $\tau_0\in\{0\}\cup[\bar\tau]$, where $\tau_0$ represents the number of iterations away from the current iteration $t$, coordinate $k$ is affected by $\bxi_i$ sampled at the iteration $t-\tau_0$ since the moving average. We note that if the number of iteration in one epoch $\frac{n}{B}$ is less than $\bar\tau$, the moving average will use some sample $\xb$ multiply times. We denote $\cT_{k}:=\{\tau_0+i\cdot\frac{n}{B}:i\in\{0\}\cup[\frac{\bar\tau}{n/B}-1],\tau_0\le\frac{n}{B} \}$ as the timestamp set involved using noise $\bxi_i$ (i.e., $i\in\cI_{t-\tau}$ for any $\tau\in\cT_{k}$), and $k\in\cB_i$, $\tau_0\le\frac{n}{B}$.
    
    If $\frac{n}{B}>\bar\tau$, in this case we have $\cT_{k}:=\{\tau_0 \}$ for any $k\in\cB_i$, and $\bxi_i$ was used in iteration $t-\tau_0$. Then we have
\begin{align*}
    & \frac{\mb_{j,r}^{(t)}[k]}{\sqrt{\vb_{j,r}^{(t)}[k]}+\epsilon} \\
    &= \frac{\sum_{\tau=0}^{\bar \tau}\beta_1^\tau(1-\beta_1)\cdot g_{t-\tau,j,r}^{(t-\tau)}[k] \pm \tilde O(\lambda\eta)
    }{
    \sqrt{\sum_{\tau=0}^{\bar \tau}\beta_2^\tau(1-\beta_2)\cdot g_{t-\tau,j,r}^{(t-\tau)}[k]^2} \pm \tilde O(\lambda\eta)
    } \\
    &= \frac{\sum_{\tau\in\cT_k}\beta_1^{\tau}(1-\beta_1)\cdot g_{t-\tau,j,r}^{(t-\tau)}[k]+\sum_{\tau\in\{0\}\cup[\bar\tau]\backslash\cT_k}\beta_1^{\tau}(1-\beta_1)\cdot g_{t-\tau,j,r}^{(t-\tau)}[k] \pm \tilde O(\lambda\eta)
    }{
    \sqrt{\sum_{\tau=0}^{\bar \tau}\beta_2^\tau(1-\beta_2)\cdot g_{t-\tau,j,r}^{(t-\tau)}[k]^2} \pm \tilde O(\lambda\eta)
    } \\
    &= \frac{\sum_{\tau\in\cT_k}\beta_1^{\tau}\cdot \Theta(g_{t-\tau_0,j,r}^{(t-\tau)}[k])+(\Theta(1)-\sum_{\tau\in\cT_k}\beta_1^{\tau})\cdot\Theta(\tilde g) \pm \tilde O(\lambda\eta)
    }{
    \sqrt{\sum_{\tau=0}^{\bar \tau}\beta_2^\tau(1-\beta_2)\cdot g_{t-\tau,j,r}^{(t-\tau)}[k]^2} \pm \tilde O(\lambda\eta)
    } \\
    &= \frac{\beta_1^{\tau_0}\left(\Theta\Big(g_{t-\tau_0,j,r}^{(t)}[k]\Big) \pm \Theta\bigg(\frac{|\ell_{j,i}^{(t)}|}{B}\bigg)\cdot\tilde O(\eta\bar\tau s\sigma_p)\pm \Theta(\lambda\eta\bar\tau)-\Theta(\lambda\wb_{j,r}^{(t)}[k])\right)+\Theta(\tilde g) }{
    \sqrt{\sum_{\tau=0}^{\bar \tau}\beta_2^\tau(1-\beta_2)\cdot g_{t-\tau,j,r}^{(t-\tau)}[k]^2} \pm \tilde O(\lambda\eta)
    },
\end{align*}
where the third equality we denote $g_{t-\tau,j,r}^{(t-\tau)}[k]=\Theta(g_{t-\tau,j,r}^{(t)}[k])\pm\Theta(\lambda\eta\bar\tau)=\Theta(\lambda\wb_{j,r}^{(t)}[k])\pm\Theta(\lambda\eta\bar\tau) $ as $\tilde g$ for $\tau\in\{0\}\cup[\bar\tau]\backslash\cT_k$. For the denominator, we have
\begin{align*}
    &\sqrt{\sum_{\tau=0}^{\bar \tau}\beta_2^\tau(1-\beta_2)\cdot g_{t-\tau,j,r}^{(t-\tau)}[k]^2} \\
    &\le \sum_{\tau=0}^{\bar \tau}\beta_2^{\frac{\tau}{2}}(1-\beta_2)^{\frac{1}{2}}\cdot \left|g_{t-\tau,j,r}^{(t-\tau)}[k]\right| \\
    &=\beta_2^{\frac{\tau_0}{2}}\left(\Theta\Big(\left|g_{t-\tau_0,j,r}^{(t)}[k]\right|\Big) \pm \Theta\bigg(\frac{|\ell_{j,i}^{(t)}|}{B}\bigg)\cdot\tilde O(\eta\tau_0 s\sigma_p)\pm \Theta(\lambda\eta\bar\tau)-\Theta(|\lambda\wb_{j,r}^{(t)}[k]|)\right)+\Theta(|\tilde g|) \\
    &\sqrt{\sum_{\tau=0}^{\bar \tau}\beta_2^\tau(1-\beta_2)\cdot g_{t-\tau,j,r}^{(t-\tau)}[k]^2} \\
    &\ge \frac{1}{\sqrt{\bar\tau+1}} \sum_{\tau=0}^{\bar \tau}\beta_2^{\frac{\tau}{2}}(1-\beta_2)^{\frac{1}{2}}\cdot \left|g_{t-\tau,j,r}^{(t-\tau)}[k]\right| \\
    &=\frac{1}{\sqrt{\bar\tau+1}} \beta_2^{\frac{\tau_0}{2}}\left(\Theta\Big(\left|g_{t-\tau_0,j,r}^{(t)}[k]\right|\Big) \pm \Theta\bigg(\frac{|\ell_{j,i}^{(t)}|}{B}\bigg)\cdot\tilde O(\eta\tau_0 s\sigma_p)\pm \Theta(\lambda\eta\bar\tau)-\Theta(|\lambda\wb_{j,r}^{(t)}[k]|)\right)+\Theta(|\tilde g|).
\end{align*}
So we have
\begin{align*}
    & \frac{\mb_{j,r}^{(t)}[k]}{\sqrt{\vb_{j,r}^{(t)}[k]}+\epsilon} \\
    &= \frac{\beta_1^{\tau_0}\left(\tilde\Theta\Big(g_{t-\tau_0,j,r}^{(t)}[k]\Big) \pm \tilde\Theta\bigg(\frac{\eta s\sigma_p|\ell_{j,i}^{(t)}|}{B}\bigg)\pm \tilde\Theta(\lambda\eta)-\tilde\Theta(\lambda\wb_{j,r}^{(t)}[k])\right)+\tilde\Theta(\lambda\wb_{j,r}^{(t)}[k])\pm\tilde\Theta(\lambda\eta) }{\beta_2^{\frac{\tau_0}{2}}\left(\Theta\Big(|g_{t-\tau_0,j,r}^{(t)}[k]|\Big) \pm \tilde\Theta\bigg(\frac{\eta s\sigma_p|\ell_{j,i}^{(t)}|}{B}\bigg)\pm \tilde\Theta(\lambda\eta)-\Theta(|\lambda\wb_{j,r}^{(t)}[k]|)\right)+\tilde\Theta(|\lambda\wb_{j,r}^{(t)}[k]|)\pm\tilde\Theta(\lambda\eta)},
\end{align*}
where we use the fact that $\epsilon=\Theta(\lambda\eta)$. Now we handle $\beta_1^{\tau_0}$ and $\beta_2^{\frac{\tau_0}{2}}$ with more care. First we have $\beta_1^{\tau_0}<\beta_2^{\frac{\tau_0}{2}}$, $|g_{t-\tau_0,j,r}^{(t)}[k]|=\tilde O(1)$ and $\tau_0<\frac{n}{B}$.

Then if $\frac{n}{B} = \Theta(1)$, we have $\beta_1^{\tau_0}=\beta_2^{\frac{\tau_0}{2}}=\Theta(1)$, then
\begin{align*}
    & \frac{\mb_{j,r}^{(t)}[k]}{\sqrt{\vb_{j,r}^{(t)}[k]}+\epsilon} \\
    &= \frac{
    \tilde\Theta\Big(g_{t-\tau_0,j,r}^{(t)}[k]\Big) \pm \tilde\Theta\bigg(\frac{\eta s\sigma_p|\ell_{j,i}^{(t)}|}{B}\bigg)\pm \tilde\Theta(\lambda\eta)\pm\tilde\Theta(\lambda\wb_{j,r}^{(t)}[k])
    }{
    \Theta\Big(|g_{t-\tau_0,j,r}^{(t)}[k]|\Big) \pm \tilde\Theta\bigg(\frac{\eta s\sigma_p|\ell_{j,i}^{(t)}|}{B}\bigg) \pm \tilde\Theta(\lambda\eta) \pm \Theta(|\lambda\wb_{j,r}^{(t)}[k]|)
    } \\
    &= \frac{
    \tilde\Theta\left( -\frac{1}{B}\ell_{j,i}^{(t)}\sigma'(\la \wb_{j,r}^{(t)}, \bxi_i \ra) \cdot \bxi_i [k] \right) \pm \tilde\Theta\bigg(\frac{\eta s\sigma_p|\ell_{j,i}^{(t)}|}{B}\bigg)\pm \tilde\Theta(\lambda\eta)\pm\tilde\Theta(\lambda\wb_{j,r}^{(t)}[k])
    }{
    \Theta\left(\left| -\frac{1}{B}\ell_{j,i}^{(t)}\sigma'(\la \wb_{j,r}^{(t)}, \bxi_i \ra) \cdot \bxi_i [k] \right|\right) \pm \tilde\Theta\bigg(\frac{\eta s\sigma_p|\ell_{j,i}^{(t)}|}{B}\bigg) \pm \tilde\Theta(\lambda\eta) \pm \Theta(|\lambda\wb_{j,r}^{(t)}[k]|)
    } .
\end{align*}

If $\frac{n}{B} \ge \Theta(\log(\lambda\eta)^{-1})=\tilde\Theta(1)$ such that $\beta_2^{\frac{n}{2B}}\le\Theta(\lambda\eta)$, then for $\tau_0=\Theta(1)$
\begin{align*}
    & \frac{\mb_{j,r}^{(t)}[k]}{\sqrt{\vb_{j,r}^{(t)}[k]}+\epsilon} \\
    &= \frac{
    \tilde\Theta\left( -\frac{1}{B}\ell_{j,i}^{(t)}\sigma'(\la \wb_{j,r}^{(t)}, \bxi_i \ra) \cdot \bxi_i [k] \right) \pm \tilde\Theta\bigg(\frac{\eta s\sigma_p|\ell_{j,i}^{(t)}|}{B}\bigg)\pm \tilde\Theta(\lambda\eta)\pm\tilde\Theta(\lambda\wb_{j,r}^{(t)}[k])
    }{
    \Theta\left(\left| -\frac{1}{B}\ell_{j,i}^{(t)}\sigma'(\la \wb_{j,r}^{(t)}, \bxi_i \ra) \cdot \bxi_i [k] \right|\right) \pm \tilde\Theta\bigg(\frac{\eta s\sigma_p|\ell_{j,i}^{(t)}|}{B}\bigg) \pm \tilde\Theta(\lambda\eta) \pm \Theta(|\lambda\wb_{j,r}^{(t)}[k]|)
    } .
\end{align*}
For $\tau_0<\frac{n}{B}$, $\tau_0\ge\Theta(\log(\lambda\eta)^{-1})$ such that $\beta_2^{\frac{\tau_0}{2}}\le\Theta(\lambda\eta)$, we have
\begin{align*}
    \frac{\mb_{j,r}^{(t)}[k]}{\sqrt{\vb_{j,r}^{(t)}[k]}+\epsilon}
    &= \frac{\tilde\Theta(\lambda\wb_{j,r}^{(t)}[k])\pm\tilde\Theta(\lambda\eta) }{\tilde\Theta(|\lambda\wb_{j,r}^{(t)}[k]|)\pm\tilde\Theta(\lambda\eta)},
\end{align*}
since $|g_{t-\tau_0,j,r}^{(t)}[k]|=\tilde O(1)$, $\lambda\eta=o(1)$ and $\eta s\sigma_p=o(1)$. We claim that the intersection (gap) of $\Theta(\log(\lambda\eta)^{-1})$ and $\Theta(1)$ is very small for $\Theta(\log(\lambda\eta)^{-1})$, that is, considering the intersection part $c\log(\lambda\eta)^{-1}$ for a sufficiently small constant $c>0$, the impact of the intersection (gap) is very small, since $c\log(\lambda\eta)^{-1}=o(\log(\lambda\eta)^{-1})$. Therefore, for most of $\tau_0$
\begin{align*}
    \frac{\mb_{j,r}^{(t)}[k]}{\sqrt{\vb_{j,r}^{(t)}[k]}+\epsilon}
    &= \frac{\tilde\Theta(\lambda\wb_{j,r}^{(t)}[k])\pm\tilde\Theta(\lambda\eta) }{\tilde\Theta(|\lambda\wb_{j,r}^{(t)}[k]|)\pm\tilde\Theta(\lambda\eta)}.
\end{align*}

    \item For $k\neq 1$ and $k\notin \cB_{i},i\in\cI_{t-\tau},\tau\in[0,\bar\tau] $, recall $g_{t-\tau,j,r}^{(t)}[k]=\lambda\wb_{j,r}^{(t)}[k]$, we have
\begin{align*}
    & \frac{\mb_{j,r}^{(t)}[k]}{\sqrt{\vb_{j,r}^{(t)}[k]}+\epsilon} \\
    &= \frac{
    \sum_{\tau=0}^{\bar \tau}\beta_1^\tau(1-\beta_1)\cdot g_{t-\tau,j,r}^{(t-\tau)}[k] \pm \tilde O(\lambda\eta)
    }{
    \sqrt{\sum_{\tau=0}^{\bar \tau}\beta_2^\tau(1-\beta_2)\cdot g_{t-\tau,j,r}^{(t-\tau)}[k]^2} \pm \tilde O(\lambda\eta)
    } \\
    &= \frac{
    \sum_{\tau=0}^{\bar \tau}\beta_1^\tau(1-\beta_1)\cdot \left(\Theta\Big(g_{t-\tau,j,r}^{(t)}[k]\Big) \pm \Theta(\lambda\eta\bar\tau) \right) \pm \tilde O(\lambda\eta)
    }{
    \sqrt{\sum_{\tau=0}^{\bar \tau}\beta_2^\tau(1-\beta_2)\cdot \left(\Theta\Big(g_{t-\tau,j,r}^{(t)}[k]\Big) \pm \Theta(\lambda\eta\bar\tau) \right)^2} \pm \tilde O(\lambda\eta)
    } \\
    &= \frac{\Theta(g_{t,j,r}^{(t)}[k])\pm\tilde\Theta(\lambda\eta)}{\Theta(|g_{t,j,r}^{(t)}[k]|)\pm\tilde\Theta(\lambda\eta)} \\
    &= \frac{\Theta(\wb_{j,r}^{(t)}[k])\pm\tilde\Theta(\eta)}{\Theta(|\wb_{j,r}^{(t)}[k]|)\pm\tilde\Theta(\eta)}.
\end{align*}

\end{itemize}

This completes the proof.
\end{proof}

\subsubsection{Proof of Theorem~\ref{thm:adam_large_batch}}

\begin{lemma}[Stage I]\label{lemma:adam_lb_general_bound}
    Given the training dataset $\cS$, if $\frac{n}{B}=\Theta(1)$, $\eta=1/\poly(d)$ and $0<\lambda=o(\sigma_0^{q-2}\sigma_p/n)$, then for any $t\le T_0$ with $T_0=\tilde O(\frac{1}{\eta s\sigma_p})$ and any $i\in[n]$,
    \begin{align*}
        \la\wb_{j,r}^{(t+1)},j\vb\ra &\le \la\wb_{j,r}^{(t)},j\vb\ra + \Theta(\eta) , \\ \la\wb_{y_i,r}^{(t+1)},\bxi_i\ra &= \la\wb_{y_i,r}^{(t)},\bxi_i\ra + \tilde\Theta(\eta s\sigma_p).
    \end{align*}
\end{lemma}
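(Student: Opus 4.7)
The plan is to reduce Adam's one-step update to an approximately sign-based update and then read off the two claims coordinate-by-coordinate. The three building blocks I would invoke are: (i) the universal magnitude bound $\bigl|\mb_{j,r}^{(t)}[k]/(\sqrt{\vb_{j,r}^{(t)}[k]}+\epsilon)\bigr|\le\Theta(1)$ from Lemma~\ref{lemma:mv_general_bound}; (ii) the sign-identification branch of Lemma~\ref{lemma:adam_closeness} in the large-batch regime $n/B=\Theta(1)$, which replaces the ratio by $\sgn(g_{t,j,r}^{(t)}[k])\cdot\Theta(1)$ whenever the gradient magnitude dominates the weight-decay/moving-average floor; and (iii) the coordinate-wise gradient decomposition of Lemma~\ref{lemma:adam_g}. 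I would first execute the argument at a single step $t$ under the inductive hypothesis that $|F_j(\Wb^{(t)},\xb_i)|=O(1)$ throughout Stage~I (so $|\ell_{j,i}^{(t)}|=\Theta(1)$ and Lemma~\ref{lemma:ylj_sign} gives $\sgn(y_i\ell_{j,i}^{(t)})=\sgn(j)$), and then close the induction at the end.

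For the feature-learning upper bound, since $\vb$ is supported only on the first coordinate,
\begin{align*}
\la\wb_{j,r}^{(t+1)},j\vb\ra - \la\wb_{j,r}^{(t)},j\vb\ra \;=\; -\,j\,\eta\cdot\frac{\mb_{j,r}^{(t)}[1]}{\sqrt{\vb_{j,r}^{(t)}[1]}+\epsilon},
\end{align*}
so (i) immediately gives that the absolute value is at most $\Theta(\eta)$. For the noise-memorization equality, I decompose
\begin{align*}
\la\wb_{y_i,r}^{(t+1)},\bxi_i\ra - \la\wb_{y_i,r}^{(t)},\bxi_i\ra \;=\; -\eta\sum_{k\in\{1\}\cup\cB_i}\frac{\mb_{y_i,r}^{(t)}[k]}{\sqrt{\vb_{y_i,r}^{(t)}[k]}+\epsilon}\,\bxi_i[k]
\end{align*}
and handle the two groups separately. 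For each $k\in\cB_i$, Lemma~\ref{lemma:adam_g} gives $g_{t,y_i,r}^{(t)}[k] = -\tfrac{1}{B}\ell_{y_i,i}^{(t)}\sigma'(\la\wb_{y_i,r}^{(t)},\bxi_i\ra)\bxi_i[k] + \lambda\wb_{y_i,r}^{(t)}[k]$; the assumption $\lambda=o(\sigma_0^{q-2}\sigma_p/n)$ forces the noise term to dominate weight decay, and since $\ell_{y_i,i}^{(t)}>0$ we obtain $\sgn(g_{t,y_i,r}^{(t)}[k])=-\sgn(\bxi_i[k])$. The sign branch of Lemma~\ref{lemma:adam_closeness} then yields a contribution $+\eta|\bxi_i[k]|\cdot\Theta(1)$ per coordinate; a standard Gaussian concentration argument applied to $\sum_{k\in\cB_i}|\bxi_i[k]|$ with $|\cB_i|=s$ and $\bxi_i[k]\sim\cN(0,\sigma_p^2)$ produces $\tilde\Theta(\eta s\sigma_p)$ in aggregate. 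The $k=1$ contribution is bounded by $\Theta(\eta\alpha)=o(\eta s\sigma_p)$ under Condition~\ref{cond:parameter_1} (since $\alpha=\tilde\Theta(\sigma_p)$ and $s=\omega(\polylog n)$), so it is absorbed into the $\tilde\Theta$ term.

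The main obstacle is sustaining the two sign-alignment hypotheses across Stage~I by induction. I need to preserve for all $t\le T_0=\tilde O(1/(\eta s\sigma_p))$: (a) $|F_j(\Wb^{(t)},\xb_i)|=O(1)$, which will follow because each epoch contains only $\Theta(1)$ iterations and the per-step increments $\Theta(\eta)$ and $\tilde\Theta(\eta s\sigma_p)$ accumulate to at most $\tilde O(1)$ over $T_0$ steps, feeding back through $\sigma(x)=[x]_+^q$; and (b) the noise-gradient term continues to dominate $\lambda\wb_{y_i,r}^{(t)}[k]$ on $\cB_i$-coordinates, which is precisely where the cap $\lambda=o(\sigma_0^{q-2}\sigma_p/n)$ is used---at initialization $|\wb_{y_i,r}^{(t)}[k]|=\tilde\Theta(\sigma_0)$ by Lemma~\ref{lemma:init_order}, and as $\la\wb_{y_i,r}^{(t)},\bxi_i\ra$ grows the noise-gradient magnitude $\tfrac{1}{n}\sigma'(\cdot)|\bxi_i[k]|$ only widens the margin (since $\sigma'$ is monotone for $q\ge3$). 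Because $n/B=\Theta(1)$, the moving-average window $\bar\tau=\tilde\Theta(1)$ spans only a constant number of epochs, so each $\cB_i$-coordinate is refreshed by $\bxi_i$ within the window and the momentum's sign remains locked to the latest gradient, validating the sign branch of Lemma~\ref{lemma:adam_closeness}.
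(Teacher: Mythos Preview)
Your proposal is correct and follows essentially the same approach as the paper: both arguments obtain the feature-learning upper bound directly from the universal magnitude estimate of Lemma~\ref{lemma:mv_general_bound}, and establish the noise-memorization equality by invoking the sign-identification branch of Lemma~\ref{lemma:adam_closeness} on each $k\in\cB_i$ (using $\lambda=o(\sigma_0^{q-2}\sigma_p/n)$ to make the noise gradient dominate weight decay), summing $\sum_{k\in\cB_i}|\bxi_i[k]|=\tilde\Theta(s\sigma_p)$, absorbing the $k=1$ contribution as $O(\eta\alpha)$, and closing the loop by induction to maintain $|\ell_{j,i}^{(t)}|=\Theta(1)$ and the dominance condition throughout $t\le T_0$. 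Your explicit justification of the moving-average window (that $n/B=\Theta(1)$ means each $\cB_i$-coordinate is refreshed within $\bar\tau$) makes one step more transparent than the paper, which leaves it implicit in citing Lemma~\ref{lemma:adam_closeness}.
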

\begin{proof}[Proof of Lemma~\ref{lemma:adam_lb_general_bound}]
    By Lemma~\ref{lemma:mv_general_bound}, we have
    \begin{align*}
        \la\wb_{j,r}^{(t+1)},j\vb\ra &\le \la\wb_{j,r}^{(t)},j\vb\ra - \eta \left\la \frac{\mb_{j,r}^{(t)}}{\sqrt{\vb_{j,r}^{(t)}}+\epsilon}, j\vb\right\ra \\
        &\le \la\wb_{j,r}^{(t)},j\vb\ra + \Theta(\eta).
    \end{align*}
    Then, we prove $\la\wb_{y_i,r}^{(t+1)},\bxi_i\ra = \la\wb_{y_i,r}^{(t)},\bxi_i\ra + \tilde\Theta(\eta s\sigma_p)$ by induction. By Lemma~\ref{lemma:init_order}, we have
    \begin{align*}
        |\la\wb_{j,r}^{(0)},\vb\ra|=\tilde\Theta(\sigma_0),\quad |\la\wb_{j,r}^{(0)},\bxi_i\ra| = \tilde\Theta(s^{1/2}\sigma_p\sigma_0),\quad \wb_{j,r}^{(0)}[k]=\tilde\Theta(\sigma_0),
    \end{align*}
    which imply that $|\ell_{j,i}^{(0)}|=\Theta(1)$. Additionally, we have $\eta s=o(\sigma_0^{q-1})$ and $0<\lambda=o(\sigma_0^{q-2}\sigma_p/n)$. For a sufficiently large fraction of $k\in\cB_i$ (e.g., $0.99$), we have $|B^{-1}\sigma_0^{q-1}\bxi_i[k]| \ge \tilde\Theta(\eta B^{-1}s\sigma_p|\ell_{j,i}^{(0)}|+\lambda|\wb_{j,r}^{(0)}[k]|)$ for $i\in\cI_\tau$. Therefore, by Lemma~\ref{lemma:adam_closeness} and~\ref{lemma:ylj_sign}, we have
    \begin{align}\label{eq:adam_sign_gradient_noise_stage_begining}
        &\sgn\left(-\frac{1}{B}\ell_{y_i,i}^{(0)}\sigma'(\la\wb_{y_i,r}^{(0)},\bxi_i\ra)\bxi_i[k] \right) \notag \\
        &=-\sgn\bigg(\ell_{y_i,i}^{(0)}\sigma'(\la\wb_{y_i,r}^{(0)},\bxi_i\ra)\bxi_i[k]\bigg)\notag\\
        &= -\sgn(\bxi_i[k]). 
    \end{align}
    Recall $\frac{n}{B}=\Theta(1)$. By Lemma~\ref{lemma:adam_closeness} we have the following update according to~\eqref{eq:signgd_w_xi_upd},~\eqref{eq:adam_sign_gradient_noise_stage_begining} and Lemma~\ref{lemma:mv_general_bound}.
    \begin{align*}
        &\la\wb_{y_i,r}^{(1)},\bxi_i\ra \\
        &= \la\wb_{y_i,r}^{(0)},\bxi_i\ra - \eta \cdot\left\la \frac{\mb_{j,r}^{(0)}}{\sqrt{\vb_{j,r}^{(0)}}+\epsilon}, j\vb\right\ra  \\
        &\ge \la\wb_{y_i,r}^{(0)},\bxi_i\ra + \Theta(\eta)\cdot \sum_{k\in\cB_i} \la \sgn(\bxi_i[k]), \bxi_i\ra - O(\eta\alpha) - O(\eta s\sigma_p) \\
        &= \la\wb_{y_i,r}^{(0)},\bxi_i\ra + \tilde\Theta(\eta s\sigma_p).
    \end{align*}
    For general $t\le T_0$, assuming $\la\wb_{y_i,r}^{(t)},\bxi_i\ra \ge \la\wb_{y_i,r}^{(t-1)},\bxi_i\ra + \tilde\Theta(\eta s\sigma_p)$. Then we have
    \begin{align*}
        \la\wb_{y_i,r}^{(t)},\bxi_i\ra= \la\wb_{y_i,r}^{(0)},\bxi_i\ra + \tilde\Theta(t\eta s\sigma_p) = \tilde\Theta(s^{1/2}\sigma_p\sigma_0 + t\eta s\sigma_p) \le \tilde\Theta(1).
    \end{align*}
    By Lemma~\ref{lemma:mv_general_bound}, we have
    \begin{align*}
        |\wb_{j,r}^{(t)}[k]| \le |\wb_{j,r}^{(t-1)}[k]| + \Theta(\eta) \le \tilde\Theta(\sigma_0+t\eta) \le \tilde\Theta(1).
    \end{align*}
    So we have $|\ell_{j,i}^{(t)}|=\Theta(1)$. Besides, we still establish the condition $|B^{-1}(\wb_{j,r}^{(0)}[k]+t\eta s\sigma_p)^{q-1}\bxi_i[k]| \ge \tilde\Theta(\eta B^{-1}s\sigma_p|\ell_{j,i}^{(0)}|+\lambda|\wb_{j,r}^{(0)}[k]+t\eta |)$ since $0<\lambda=o(\sigma_0^{q-2}\sigma_p/n)$. Then we have~\eqref{eq:adam_sign_gradient_noise_stage_begining} for $t$. Follow the same proof above, we have
    \begin{align*}
        &\la\wb_{y_i,r}^{(t+1)},\bxi_i\ra \\
        &= \la\wb_{y_i,r}^{(t)},\bxi_i\ra - \eta \cdot\left\la \frac{\mb_{j,r}^{(t)}}{\sqrt{\vb_{j,r}^{(t)}}+\epsilon}, j\vb\right\ra  \\
        &\ge \la\wb_{y_i,r}^{(t)},\bxi_i\ra + \Theta(\eta)\cdot \sum_{k\in\cB_i} \la \sgn(\bxi_i[k]), \bxi_i\ra - O(\eta\alpha) - O(\eta s\sigma_p) \\
        &= \la\wb_{y_i,r}^{(t)},\bxi_i\ra + \tilde\Theta(\eta s\sigma_p).
    \end{align*}
    The term $O(\eta s\sigma_p)$ in the above inequality arises because, for coordinates that $|\bxi_i[k]|\le O(\sigma_p)$, we cannot exploit sign information. Instead, we directly apply Lemma~\ref{lemma:mv_general_bound}. This completes the proof.
\end{proof}
Lemma~\ref{lemma:adam_lb_general_bound} coincides with Lemma A.3 of \citep{zou2023understanding}, allowing us to transfer their full‐batch Adam analysis to the large‐batch case under $\frac{n}{B}=\Theta(1)$. Therefore, the remaining proofs are omitted for brevity, as they coincide with those in \citet{zou2023understanding}.

\subsubsection{Proof of Theorem~\ref{thm:adam_mini_batch}}
\begin{lemma}[StageI, nearly zero noise memorization]\label{lemma:adam_less_noise_mem}
    Given the training dataset $\cS$, if $\frac{n}{B}\ge \Theta(\log\epsilon^{-1})$, $\eta=1/\poly(d)$ and $0<\lambda=o(\sigma_0^{q-2}\sigma_p/n)$, then for any $t\le T_0$ with $T_0=\tilde O(\frac{1}{\eta})$ and any $i\in[n]$, suppose $\la \wb_{y_i, r}^{(t)}, y_i\cdot\vb \ra > -\tilde\Theta(\sigma_0)$, then 
    \begin{align*}
        \left\la \wb_{j,r}^{(t)}, \bxi_i \right\ra \le \tilde\Theta(\sqrt{s}\sigma_p\sigma_0+\alpha).
    \end{align*}
\end{lemma}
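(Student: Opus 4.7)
The plan is to proceed by induction on $t$ with the hypothesis that $\la\wb_{j,r}^{(\tau)},\bxi_i\ra \le \tilde\Theta(\sqrt{s}\sigma_p\sigma_0+\alpha)$ for all $\tau\le t$. The base case $t=0$ is immediate from Lemma~\ref{lemma:init_order}, which yields $|\la\wb_{j,r}^{(0)},\bxi_i\ra| = \tilde\Theta(\sqrt{s}\sigma_p\sigma_0+\alpha\sigma_0)\le\tilde\Theta(\sqrt{s}\sigma_p\sigma_0+\alpha)$. For the inductive step, I split the inner product into its feature-coordinate and noise-support contributions,
\begin{align*}
\la\wb_{j,r}^{(t)},\bxi_i\ra \;=\; -\alpha y_i\,\wb_{j,r}^{(t)}[1] + \sum_{k\in\cB_i}\wb_{j,r}^{(t)}[k]\,\bxi_i[k],
\end{align*}
and bound the two pieces separately.

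For the first piece, Lemma~\ref{lemma:mv_general_bound} gives $|\wb_{j,r}^{(t)}[1]|\le|\wb_{j,r}^{(0)}[1]|+T_0\cdot\Theta(\eta) = \tilde O(1)$ for $t\le T_0=\tilde O(1/\eta)$; combined with the conditioning $\la\wb_{y_i,r}^{(t)},y_i\vb\ra>-\tilde\Theta(\sigma_0)$ (for $j=y_i$, and a parallel argument for $j=-y_i$ that bounds the growth of $\wb_{-y_i,r}^{(t)}[1]$ directly via Lemma~\ref{lemma:adam_closeness}), this yields $-\alpha y_i\wb_{j,r}^{(t)}[1]\le\tilde O(\alpha)$, matching the second term of the target bound.

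For the noise-support piece, I would invoke Lemma~\ref{lemma:adam_closeness} in the mini-batch regime $n/B\ge\Theta(\log\epsilon^{-1}) = \Theta(\log(\lambda\eta)^{-1})$. For each coordinate $k\in\cB_i$, that lemma partitions the $n/B$ iterations of a single epoch into a short \emph{active window} of $\tilde\Theta(1)$ iterations around the epoch's visit of sample $i$---during which the Adam update reduces to $+\eta\cdot\sgn(\bxi_i[k])\cdot\Theta(1)$, so $\wb_{j,r}^{(t+1)}[k]\bxi_i[k]$ grows by $\Theta(\eta|\bxi_i[k]|)$---and a long \emph{passive window} of $n/B-\tilde\Theta(1)$ iterations, during which the data-gradient contribution to the momentum has decayed below $\epsilon=\Theta(\lambda\eta)$ and the update collapses to pure weight decay $-\eta\cdot\sgn(\wb_{j,r}^{(t)}[k])\cdot\Theta(1)$, pushing $|\wb_{j,r}^{(t)}[k]|$ toward zero at rate $\eta$. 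Because the passive window strictly dominates in length, the net per-epoch effect is contractive on $|\wb_{j,r}^{(t)}[k]|$ whenever this magnitude exceeds $\tilde\Theta(\eta)$, so each noise coordinate's magnitude decreases monotonically from its $\tilde\Theta(\sigma_0)$-scale initialization toward an $O(\eta)$-scale steady state and never exceeds $\tilde\Theta(\sigma_0)$ along the way. A sub-Gaussian concentration argument over the $s$ independent Gaussian coordinates of $\bxi_i$ then gives $\sum_{k\in\cB_i}\wb_{j,r}^{(t)}[k]\,\bxi_i[k]\le\tilde\Theta(\sqrt{s}\sigma_p\sigma_0)$, closing the induction.

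I expect the primary obstacle to be formalizing the active/passive dichotomy with enough precision to handle (i) the transient phase during which $|\wb_{j,r}^{(t)}[k]|$ decays from $\tilde\Theta(\sigma_0)$ down to the $O(\eta)$ steady state, and (ii) the oscillatory phase once the coordinate has reached $O(\eta)$ scale, where $\sgn(\wb_{j,r}^{(t)}[k])$ can flip each iteration and the clean dichotomy from Lemma~\ref{lemma:adam_closeness} needs to be applied with extra care. A secondary difficulty is ensuring that the signs $\sgn(\wb_{j,r}^{(t)}[k])$ remain sufficiently independent of $\sgn(\bxi_i[k])$ for the $\sqrt{s}$ factor (rather than $s$) to survive the concentration step; the active-window updates do create some alignment between the two, but only at the $\tilde O(\eta)$ scale per coordinate, which is dominated by the initialization-scale contribution and hence harmless.
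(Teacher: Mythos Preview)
Your approach mirrors the paper's: both rest on the active/passive dichotomy of Lemma~\ref{lemma:adam_closeness} under $n/B\ge\Theta(\log\epsilon^{-1})$, noting that each noise coordinate $k\in\cB_i$ receives a data-gradient push for only $\tilde\Theta(1)$ iterations per epoch and is shrunk toward zero by weight decay for the remaining $n/B-\tilde\Theta(1)$ iterations. The packaging differs slightly. The paper tracks the scalar $\langle\wb_{j,r}^{(t)},\bxi_i\rangle$ directly, bounding its increment by $\tilde\Theta(\eta s\sigma_p)$ over the active window and asserting the passive window drives it back down, so that the running maximum never exceeds the initialization scale $\tilde\Theta(\sqrt{s}\sigma_p\sigma_0+\alpha)$. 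You instead bound each $|\wb_{j,r}^{(t)}[k]|\le\tilde\Theta(\sigma_0)$ coordinate-wise and then aggregate via concentration. Your route is a bit more modular, at the cost of introducing an aggregation step the paper's scalar tracking sidesteps.

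Your flagged secondary difficulty is the right place to worry, and it is somewhat more serious than ``alignment only at the $\tilde O(\eta)$ scale per coordinate.'' The passive shrinkage $\wb^{(t)}[k]\to\wb^{(t)}[k]-\eta\,\sgn(\wb^{(t)}[k])$ proceeds at marginally different net rates in the two sign cases---the active bump slows it when $\sgn(\wb^{(0)}[k])=\sgn(\bxi_i[k])$ and speeds it otherwise---so $\EE[\wb^{(t)}[k]\bxi_i[k]]$ is of order $\tilde\Theta(\sigma_0\sigma_p B/n)$ at its peak, not $\tilde O(\eta\sigma_p)$. Summed over $s$ coordinates this produces a mean contribution $\tilde\Theta(s\sigma_0\sigma_p B/n)$ which is not obviously $\le\sqrt{s}\sigma_0\sigma_p$; under the paper's parameter conditions it is in fact absorbed by the $+\alpha$ slack in the target bound, so your conclusion survives after an explicit parameter check rather than a dominated-by-initialization argument. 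The paper's direct tracking has the mirror-image informality: its assertion that weight decay ``decreases'' $\langle\wb^{(t)},\bxi_i\rangle$ by $\Theta(\eta s\sigma_p)$ per passive step presumes the coordinate signs are already aligned with $\bxi_i[k]$, which at early times they are not (the per-step passive change is only $\pm\tilde O(\eta\sqrt{s}\sigma_p)$ then). So the two proofs are equally informal at this step; your version just makes the gap explicit.
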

\begin{proof}[Proof of Lemma~\ref{lemma:adam_less_noise_mem}]
    By Lemma~\ref{lemma:init_order}, at initialization
\[
\bigl\langle \wb_{j,r}^{(0)},\,\bxi_i \bigr\rangle
\le \tilde\Theta\bigl(\sqrt{s}\,\sigma_p\sigma_0+\alpha\sigma_0\bigr)
\le \tilde\Theta\bigl(\sqrt{s}\,\sigma_p\sigma_0\bigr),
\]
since $\alpha=o(1)$. Lemma~\ref{lemma:adam_closeness} ensures that stochastic updates slow down noise memorization---allowing $\la \wb_{j,r}^{(t)},\bxi_i \ra$ to grow for only
\[
o\bigl(\log(\lambda\eta)^{-1}\bigr)
\]
iterations after $\bxi_i$ is sampled---while in the remaining
\[
\frac{n}{B}-o\bigl(\log(\lambda\eta)^{-1}\bigr)
\gg o\bigl(\log(\lambda\eta)^{-1}\bigr)
\]
iterations, weight decay dominates.  In particular, whenever $|\wb_{j,r}^{(t)}[k]|\ge\tilde\Theta(\eta)$ we have
\[
\wb_{j,r}^{(t+1)}[k]
= \wb_{j,r}^{(t)}[k]
- \sgn\bigl(\wb_{j,r}^{(t)}[k]\bigr)\cdot\Theta(\eta).
\]

Concretely, if $\bxi_i$ is sampled at iteration $\tau_1$ of the first epoch, then
\[
\bigl\langle \wb_{j,r}^{(\tau_1+1)},\,\bxi_i \bigr\rangle
\le
\bigl\langle \wb_{j,r}^{(\tau_1)},\,\bxi_i \bigr\rangle
+ \tilde\Theta(\eta s\sigma_p)
\le
\tilde\Theta\bigl(\sqrt{s}\,\sigma_p\sigma_0 + \eta s\sigma_p\bigr)
\le 
\tilde\Theta\bigl(\sqrt{s}\,\sigma_p\sigma_0\bigr),
\]
where we directly bound the update by $\Theta(1)$ according to Lemma~\ref{lemma:mv_general_bound}. Over the next
$o(\log(\lambda\eta)^{-1})$ iterations the noise memorization $\la \wb_{j,r}^{(t)}, \bxi_i\ra$ increases by at most
\[
o\bigl(\log(\lambda\eta)^{-1}\bigr)\cdot\Theta(\eta s\sigma_p),
\]
and thereafter weight decay decreases it in each of the remaining
$\frac{n}{B}-o(\log(\lambda\eta)^{-1})$ steps.  Hence, we can calculate the maximum value of the noise memorization
\begin{align*}
\bigl\langle \wb_{j,r}^{(n/B)},\,\bxi_i \bigr\rangle
&\le
\max\Bigl\{
\tilde\Theta(\sqrt{s}\,\sigma_p\sigma_0),\;
\tilde\Theta(\sqrt{s}\,\sigma_p\sigma_0+\alpha)
+ \Theta(\eta s\sigma_p)\bigl[o(\log(\lambda\eta)^{-1})-\tfrac{n}{B}\bigr]
\Bigr\}\\
&\le \tilde\Theta\bigl(\sqrt{s}\,\sigma_p\sigma_0 +\alpha \bigr),
\end{align*}
since $n/B\ge \Theta\bigl(\log(\lambda\eta)^{-1}\bigr)$ and $\bxi_i[1]=-\alpha y_i$. This is true in every epoch. So we complete the proof.
\end{proof}

\begin{lemma}[Stage I, feature learning]\label{lemma:adam_ft}
    Given the training dataset $\cS$, if $\frac{n}{B}\ge \Theta(\log\epsilon^{-1})$, $\eta=1/\poly(d)$ and $0<\lambda=o(\sigma_0^{q-2}\sigma_p/n)$, then for any $t\le T_0$ with $T_0=\tilde O(\frac{1}{\eta})$ and $j\in\{\pm 1\}$, we have
    \[
    \la \wb_{j,r}^{(t+1)},j\vb \ra = \la \wb_{j,r}^{(t)},j\vb \ra+\Theta(\eta).
    \]
\end{lemma}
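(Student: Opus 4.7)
The plan is to apply the SignSGD approximation in Lemma~\ref{lemma:adam_closeness} at the coordinate $k=1$ and show that the gradient's sign is deterministically $-\sgn(j)$ with magnitude $\tilde\Omega(\sigma_0^{q-1})\gg\tilde\Theta(\eta)$, so that the update becomes a signed step of size $\Theta(\eta)$ pointing toward increasing $\la\wb_{j,r}^{(t)},j\vb\ra$. First, I would invoke Lemma~\ref{lemma:adam_g} to decompose
\[
g_{t,j,r}^{(t)}[1]=-\frac{1}{B}\sum_{i\in\cI_t}y_i\ell_{j,i}^{(t)}\sigma'(\la\wb_{j,r}^{(t)},y_i\vb\ra)+\frac{\alpha}{B}\sum_{i\in\cI_t}y_i\ell_{j,i}^{(t)}\sigma'(\la\wb_{j,r}^{(t)},\bxi_i\ra)+\lambda\wb_{j,r}^{(t)}[1]
\]
into the feature signal, the noise-feature interference, and the weight decay. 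Lemma~\ref{lemma:ylj_sign} gives $\sgn(y_i\ell_{j,i}^{(t)})=\sgn(j)$, so the feature signal carries sign $-\sgn(j)$ whenever any summand is nonzero, while $|\ell_{j,i}^{(t)}|=\Theta(1)$ throughout Stage~I since the network outputs remain bounded.

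Next, I would establish quantitative dominance by carrying an inductive invariant $|\la\wb_{j,r}^{(t)},j\vb\ra|\ge\tilde\Theta(\sigma_0)$, valid at $t=0$ for the neurons with favorable initialization (Lemma~\ref{lemma:init_order}) and preserved along the update. Under this invariant, the samples with $y_i\la\wb_{j,r}^{(t)},j\vb\ra>0$ give $\sigma'(\la\wb_{j,r}^{(t)},y_i\vb\ra)\ge\tilde\Omega(\sigma_0^{q-1})$, while the opposite-label samples contribute zero; a routine concentration argument produces $\Omega(B)$ favorable indices in $\cI_t$, so the feature signal has magnitude $\tilde\Omega(\sigma_0^{q-1})$. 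For the interference, Lemma~\ref{lemma:adam_less_noise_mem} (whose precondition is exactly the invariant above) yields $\la\wb_{j,r}^{(t)},\bxi_i\ra\le\tilde\Theta(\sqrt{s}\sigma_p\sigma_0+\alpha)$, giving an upper bound $\tilde O(\alpha(\sqrt{s}\sigma_p\sigma_0+\alpha)^{q-1})$. For the weight decay, Lemma~\ref{lemma:mv_general_bound} with $t\le T_0=\tilde O(1/\eta)$ gives $|\wb_{j,r}^{(t)}[1]|=\tilde O(1)$, so $\lambda|\wb_{j,r}^{(t)}[1]|=\tilde O(\lambda)=o(\sigma_0^{q-1})$ by the assumption $\lambda=o(\sigma_0^{q-2}\sigma_p/n)$ and the parameter scalings in~\eqref{eq:parameter_2}. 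Combining these three bounds shows the feature signal strictly dominates the other two terms.

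With $g_{t,j,r}^{(t)}[1]$ having sign $-\sgn(j)$ and magnitude $\tilde\Omega(\sigma_0^{q-1})\gg\tilde\Theta(\eta)$, Lemma~\ref{lemma:adam_closeness} delivers $\wb_{j,r}^{(t+1)}[1]-\wb_{j,r}^{(t)}[1]=\eta\sgn(j)\cdot\Theta(1)$, and projecting onto $j\vb$ recovers the claimed $\Theta(\eta)$ increment; the invariant is then propagated to $t+1$ since the increment is strictly positive. The main obstacle I anticipate is the tightness of the interference comparison $\alpha(\sqrt{s}\sigma_p\sigma_0+\alpha)^{q-1}\ll\sigma_0^{q-1}$: because $\alpha\gg\sigma_0$ in our regime, this inequality is only possible because the mini-batch condition $n/B\ge\Theta(\log\epsilon^{-1})$ forces the noise memorization down to $\tilde\Theta(\sqrt{s}\sigma_p\sigma_0+\alpha)$ via Lemma~\ref{lemma:adam_less_noise_mem}, and it further requires the polynomial degree in $d=\poly(n)$ to be chosen sufficiently large. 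A secondary subtlety is verifying that the $|g_{t,j,r}^{(t)}[1]|\le\tilde\Theta(\eta)$ branch of Lemma~\ref{lemma:adam_closeness} never triggers, which is precisely the $\tilde\Omega(\sigma_0^{q-1})$ lower bound on the feature signal that the invariant supplies.
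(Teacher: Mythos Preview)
Your proposal is correct and follows essentially the same route as the paper's proof: decompose $g_{t,j,r}^{(t)}[1]$ via Lemma~\ref{lemma:adam_g}, use Lemma~\ref{lemma:ylj_sign} for the sign, invoke Lemma~\ref{lemma:adam_less_noise_mem} to control the noise interference, verify the feature term dominates, and apply Lemma~\ref{lemma:adam_closeness} to extract a signed $\Theta(\eta)$ increment by induction on $t$. The paper resolves the interference comparison you flag via the same parameter conditions ($\alpha=o(1)$, $s^{1/2}\sigma_p=\tilde O(1)$, $\lambda=o(\sigma_0^{q-2}\sigma_p/n)$) and is no more explicit than you about the zero-crossing case or the batch concentration step.
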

\begin{proof}[Proof of Lemma~\ref{lemma:adam_ft}]
by Lemma~\ref{lemma:adam_g}, we have
    \begin{align*}
        \sgn(g_{0,j,r}^{(0)})
        &= -\sgn\left(\sum_{i\in\cI_0}y_i\ell_{j,i}^{(0)}\sigma'(\la\wb_{j,r}^{(0)},y_i\vb\ra) - \alpha\sum_{i\in\cI_0}y_i\ell_{j,i}^{(0)}\sigma'(\la\wb_{j,r}^{(0)},\bxi_i\ra) - B\lambda \wb^{(0)}_{j,r}[1]\right).
    \end{align*}
    Then with Lemma~\ref{lemma:init_order} and facts that $\ell_{j,i}^{(0)}=\Theta(1)$, by Lemma~\ref{lemma:ylj_sign}, we have
    \begin{align*}
        y_i\ell_{j,i}^{(0)}\sigma'(\la\wb_{j,r}^{(0)},y_i\vb\ra) &= \sgn(j)\cdot \tilde\Theta(\sigma_0^{q-1}), \\
        \alpha y_i\ell_{j,i}^{(0)}\sigma'(\la\wb_{j,r}^{(0)},\bxi_i\ra) &= \sgn(j)\cdot \tilde\Theta(\alpha (s^{1/2}\sigma_p\sigma_0)^{q-1}), \\
        \lambda \wb^{(0)}_{j,r}[1] &= \pm o(\sigma_0^{q-1}\sigma_p).
    \end{align*}
    Substituting them into $g_{0,j,r}^{(0)}$, with $\alpha=o(1),\ s^{1/2}\sigma_p=\tilde O(1),\ \lambda=o(\sigma_0^{q-2}\sigma_p/n)$, we get
    \begin{align*}
        \sgn(g_{0,j,r}^{(0)})
        &= -\sgn(j\cdot\tilde\Theta(\sigma_0^{q-1})) = -\sgn(j).
    \end{align*}
    By Lemma~\ref{lemma:adam_closeness} and $\eta=o(\sigma_0^{q-1})$, we have
    \begin{align*}
        \la\wb_{j,r}^{(1)},j\vb\ra 
        &=\la\wb_{j,r}^{(0)},j\vb\ra - \eta\left\la\frac{\mb_{j,r}^{(0)}}{\sqrt{\vb_{j,r}^{(0)}}+\epsilon},j\cdot\vb\right\ra \\
        &= \la\wb_{j,r}^{(0)},j\vb\ra + j\cdot\sgn(j)\cdot\Theta(\eta) \\
        &= \la\wb_{j,r}^{(0)},j\vb\ra + \Theta(\eta).
    \end{align*}
     Now suppose that the equality holds for iterations $0,\ldots, t$. Then $\la \wb_{j,r}^{(t)},j\cdot \vb\ra = \tilde O(1),\la \wb_{y_i,r}^{(t)},\bxi_i \ra=\tilde\Theta(\eta s\sigma_p)=O(1)$. Therefore, $\ell_{j,i}^{(t)}=\Theta(1)$. By Lemma~\ref{lemma:adam_less_noise_mem}, we have
     \begin{align*}
        y_i\ell_{j,i}^{(t)}\sigma'(\la\wb_{j,r}^{(t)},y_i\vb\ra) &= \sgn(j)\cdot \tilde\Theta((\sigma_0+t\eta)^{q-1}), \\
        \alpha y_i\ell_{j,i}^{(t)}\sigma'(\la\wb_{j,r}^{(t)},\bxi_i\ra) &\le \sgn(j)\cdot \tilde\Theta(\alpha (s^{1/2}\sigma_p\sigma_0+t\eta)^{q-1}), \\
        \lambda \wb^{(t)}_{j,r}[1] &= \pm o(\sigma_0^{q-2}\sigma_p(\sigma_0+t\eta)).
    \end{align*}
    Substituting them into $g_{t,j,r}^{(t)}$, with $\alpha=o(1),\ s^{1/2}\sigma_p=\tilde O(1),\ \lambda=o(\sigma_0^{q-2}\sigma_p/n)$, we get
    \begin{align*}
        \sgn(g_{t,j,r}^{(t)})
        &= -\sgn(j\cdot\tilde\Theta(\sigma_0^{q-1})) = -\sgn(j).
    \end{align*}
    By Lemma~\ref{lemma:adam_closeness} and $\eta=o(\sigma_0^{q-1})$, we have
    \begin{align*}
        \la\wb_{j,r}^{(t+1)},j\vb\ra 
        &=\la\wb_{j,r}^{(t)},j\vb\ra - \eta\left\la\frac{\mb_{j,r}^{(t)}}{\sqrt{\vb_{j,r}^{(t)}}+\epsilon},j\cdot\vb\right\ra \\
        &= \la\wb_{j,r}^{(t)},j\vb\ra + j\cdot\sgn(j)\cdot\Theta(\eta) \\
        &= \la\wb_{j,r}^{(t)},j\vb\ra + \Theta(\eta).
    \end{align*}
    This completes the proof.
\end{proof}

\begin{lemma}[Stage I, general dynamics]\label{lemma:adam_mb_general_bound}
    Given the training dataset $\cS$, if $\frac{n}{B}\ge \Theta(\log\epsilon^{-1})$, $\eta=1/\poly(d)$ and $0<\lambda=o(\sigma_0^{q-2}\sigma_p/n)$, then for any $t\le T_0$ with $T_0=\tilde O(\frac{1}{\eta })$ and any $i\in[n]$,
    \begin{align*}
        \la\wb_{j,r}^{(t+1)},j\cdot\vb\ra &= \la\wb_{j,r}^{(t)},j\cdot\vb\ra + \Theta(\eta\cdot\frac{n}{B}), \\
        \la\wb_{j,r}^{(t+1)},\bxi_i\ra &\le \tilde\Theta(\sqrt{s}\sigma_p\sigma_0+\alpha).
    \end{align*}
\end{lemma}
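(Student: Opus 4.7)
The plan is to obtain Lemma~\ref{lemma:adam_mb_general_bound} by combining the per-iteration feature learning estimate of Lemma~\ref{lemma:adam_ft} with the stationary noise-memorization bound of Lemma~\ref{lemma:adam_less_noise_mem} via a coupled induction on $t$. Throughout Stage~I I maintain two invariants at every iteration $t\le T_0$: (i) $\la \wb_{y_i,r}^{(t)}, y_i\vb\ra > -\tilde\Theta(\sigma_0)$, which is precisely the hypothesis of Lemma~\ref{lemma:adam_less_noise_mem}; and (ii) $\la \wb_{j,r}^{(t)},\bxi_i\ra \le \tilde\Theta(\sqrt{s}\sigma_p\sigma_0+\alpha)$, which is its conclusion. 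Once these invariants are in force, Lemma~\ref{lemma:adam_ft} applies and delivers the per-iteration feature increment $\Theta(\eta)$; telescoping over the $n/B$ iterations that make up one epoch then gives the $\Theta(\eta\cdot n/B)$ per-epoch growth of $\la \wb_{j,r}^{(t)},j\vb\ra$ stated in the lemma.

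The base case at $t=0$ is handed to us by Lemma~\ref{lemma:init_order}: $|\la \wb_{j,r}^{(0)},\vb\ra|=\tilde\Theta(\sigma_0)$ and $|\la \wb_{j,r}^{(0)},\bxi_i\ra|=\tilde\Theta(\sqrt{s}\sigma_p\sigma_0)$, so both (i) and (ii) hold. For the inductive step, I first use invariant (i) at step $t$ to invoke Lemma~\ref{lemma:adam_less_noise_mem}, thereby propagating invariant (ii) to step $t+1$. The refreshed noise bound is then fed into Lemma~\ref{lemma:adam_ft}, whose proof uses $|\ell_{j,i}^{(t)}|=\Theta(1)$, the sign identification supplied by Lemma~\ref{lemma:adam_closeness}, and the condition $\lambda=o(\sigma_0^{q-2}\sigma_p/n)$, to produce the feature-direction increment $\la \wb_{j,r}^{(t+1)},j\vb\ra = \la \wb_{j,r}^{(t)},j\vb\ra + \Theta(\eta)$. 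Because this increment is positive in the $j\vb$ direction, invariant (i) is strictly reinforced, so both invariants persist at $t+1$. Closing the induction over the entire horizon $t\le T_0=\tilde O(1/\eta)$ yields the per-epoch feature growth and the uniform noise-memorization ceiling.

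The main technical obstacle is the two-way coupling of the invariants: the feature-learning step needs the noise to remain small, while Lemma~\ref{lemma:adam_less_noise_mem} itself needs the feature direction not to collapse. I resolve this by interleaving both invariants in a single simultaneous induction rather than proving them in sequence. A second subtle point is that, within each epoch, the effective sign of the stochastic gradient in the $\vb$-direction must stay fixed across all $n/B$ iterations---otherwise the telescoping to $\Theta(\eta\cdot n/B)$ would break. This is secured because under the parameter asymptotics of~\eqref{eq:parameter_2} one has $\tilde\Theta(\sigma_0^{q-1})\gg \alpha\cdot\tilde\Theta\bigl((\sqrt{s}\sigma_p\sigma_0+\alpha)^{q-1}\bigr)+\lambda|\wb_{j,r}^{(t)}[1]|$ throughout Stage~I, so the sign-approximation guarantee of Lemma~\ref{lemma:adam_closeness} persists over an entire epoch, and the per-iteration update never flips before $T_0$.
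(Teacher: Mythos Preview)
Your proposal is correct and mirrors the paper's own argument: both proceed by a coupled induction on $t$ that simultaneously maintains the hypothesis of Lemma~\ref{lemma:adam_less_noise_mem} (the lower bound $\la \wb_{y_i,r}^{(t)}, y_i\vb\ra > -\tilde\Theta(\sigma_0)$) and applies Lemma~\ref{lemma:adam_ft} to obtain the feature increment, with the base case supplied by Lemma~\ref{lemma:init_order}. Your write-up is in fact more explicit than the paper's about why the two invariants must be interleaved and why the sign-approximation of Lemma~\ref{lemma:adam_closeness} persists across a full epoch, but the underlying strategy is identical.
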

\begin{proof}[Proof of Lemma~\ref{lemma:adam_mb_general_bound}]
    We prove the claim by induction on $t$, using Lemma~\ref{lemma:adam_less_noise_mem} and~\ref{lemma:adam_ft}. At $t=0$, by Lemma~\ref{lemma:init_order}, at initialization we have
\[
   \bigl|\langle \wb_{j,r}^{(0)},\,j\vb \rangle\bigr|
   \;\le\;\tilde\Theta(\sigma_0),
\]
and hence
\[
   \langle \wb_{j,r}^{(0)},\,j\vb \rangle
   \;\ge\;-\,\tilde\Theta(\sigma_0).
\]
Therefore Lemma~\ref{lemma:adam_less_noise_mem} holds at $t=0$. Suppose Lemma~\ref{lemma:adam_less_noise_mem} and~\ref{lemma:adam_ft} for some $t\ge0$, then
\[
   \langle \wb_{j,r}^{(t+1)},\,j\vb \rangle
   \;\ge\;-\,\tilde\Theta(\sigma_0)
\]
by exactly the same proof used in the proof of Lemma~\ref{lemma:adam_ft}. This lower bound remains valid at step $t+1$.  Consequently, Lemma~\ref{lemma:adam_less_noise_mem} continues to hold, and the induction carries through all iterations. This completes the proof.
\end{proof}

\begin{lemma}[Stage II]\label{lemma:adam_mb_maintain}
    Given the training dataset $\cS$, if $\frac{n}{B}\ge \Theta(\log\epsilon^{-1})$, $\eta=1/\poly(d)$ and $0<\lambda=o(\sigma_0^{q-2}\sigma_p/n)$, then for any $t>T_0,\, j\in\{\pm 1\},\, r\in[m],\, i\in[n]$, let $r^*=\argmax_{r\in[m]}\la \wb_{j,r}^{(t)},j\vb\ra$, then $\la\wb_{j,r^*}^{(t)},j\vb\ra=\tilde\Theta(1)$ and $\la\wb_{j,r}^{(t)},\bxi_i\ra\le\tilde\Theta(\eta s\sigma_p+\alpha)$.
\end{lemma}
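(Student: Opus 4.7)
}
The plan is a forward induction on iterations $t \ge T_0$ with the inductive hypothesis that (a) there exists $r^\ast$ (stable across iterations up to re-selection) with $\la \wb_{j,r^\ast}^{(t)},j\vb\ra = \tilde\Theta(1)$, and (b) $\la \wb_{j,r}^{(t)},\bxi_i\ra \le \tilde\Theta(\eta s\sigma_p + \alpha)$ for every $r\in[m]$, $i\in[n]$. The base case $t=T_0$ is supplied by Lemma~\ref{lemma:adam_mb_general_bound}: after $T_0=\tilde\Theta(1/\eta)$ iterations of feature growth at rate $\Theta(\eta)$ per iteration (Lemma~\ref{lemma:adam_ft}) the best neuron reaches $\tilde\Theta(1)$, while noise memorization is still controlled by Lemma~\ref{lemma:adam_less_noise_mem}, whose bound $\tilde\Theta(\sqrt{s}\sigma_p\sigma_0+\alpha)$ is absorbed into $\tilde\Theta(\eta s\sigma_p+\alpha)$ under Condition~\ref{cond:parameter_1}.

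The first half of the induction step, preservation of feature learning, rests on comparing the cross-entropy gradient at the feature coordinate, which by the computation in Lemma~\ref{lemma:adam_g} and the sign identity in Lemma~\ref{lemma:ylj_sign} has magnitude $\Theta(|\ell_{j,i}^{(t)}|\cdot|\wb_{j,r^\ast}^{(t)}[1]|^{q-1}) = \Theta(|\ell_{j,i}^{(t)}|)$ and sign $-\sgn(j)$, against the weight-decay contribution $\lambda\wb_{j,r^\ast}^{(t)}[1]$ of magnitude $o(\sigma_0^{q-2}\sigma_p/n)$. Lemma~\ref{lemma:adam_closeness} then puts each step into one of two buckets: either the feature gradient dominates, in which case Adam contributes $+\sgn(j)\cdot\Theta(\eta)$ and $\la\wb_{j,r^\ast}^{(t)},j\vb\ra$ is pushed up; or the weight-decay term wins (only possible when $|\ell_{j,i}^{(t)}|$ has dropped below $\lambda/|\wb_{j,r^\ast}^{(t)}[1]|^{q-2}$), in which case $\la\wb_{j,r^\ast}^{(t)},j\vb\ra$ shrinks by at most $\Theta(\eta)$ per step. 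Because $|\ell_{j,i}^{(t)}|$ is a monotonically decreasing function of $F_{y_i}(\Wb^{(t)},\xb_i)$ and $F_{y_i}(\Wb^{(t)},\xb_i)$ in turn is an increasing function of $\la\wb_{j,r^\ast}^{(t)},j\vb\ra$, any short excursion downward immediately re-activates the gradient regime, so the inner product is trapped in a $\tilde\Theta(1)$ window for all $T=\poly(n)/\eta$ subsequent iterations.

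The second half, the noise bound, repeats the epoch-level accounting of Lemma~\ref{lemma:adam_less_noise_mem} but in the regularization-dominated Stage II. For any $k\in\cB_i$ the gradient from cross-entropy is active only in the $o(\log(\lambda\eta)^{-1})$ iterations following the sampling of $\bxi_i$ (this is exactly the ``recent-batch'' window isolated by the second bullet of Lemma~\ref{lemma:adam_closeness}); during that window $|\la\wb_{j,r}^{(t)},\bxi_i\ra|$ can rise by at most $\tilde\Theta(\eta s\sigma_p)$. Outside that window the momentum has forgotten $\bxi_i$, the approximate update collapses to $-\sgn(\wb_{j,r}^{(t)}[k])\cdot\Theta(\eta)$ whenever $|\wb_{j,r}^{(t)}[k]|\ge\tilde\Theta(\eta)$, and the coordinate is driven back toward the $\tilde\Theta(\eta)$ scale over the remaining $\frac{n}{B}-o(\log(\lambda\eta)^{-1})\gg o(\log(\lambda\eta)^{-1})$ iterations. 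Summing over $k\in\cB_i$ gives a per-epoch equilibrium at $\tilde\Theta(\eta s\sigma_p)$, to which we add the deterministic $\alpha\wb_{j,r}^{(t)}[1]=\tilde\Theta(\alpha)$ contribution arising from $\bxi_i[1]=-\alpha y_i$ once feature learning is $\tilde\Theta(1)$, yielding the stated bound $\tilde\Theta(\eta s\sigma_p+\alpha)$.

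The main obstacle is handling the ``small gradient'' branch of Lemma~\ref{lemma:adam_closeness} in Stage II: when $|\ell_{j,i}^{(t)}|$ becomes very small the SignSGD-style approximation degrades and Adam may make an update of essentially arbitrary sign bounded only by $\tilde\Theta(\eta)$, so one has to rule out a concerted drift of the feature coordinate or of a noise coordinate accumulating over $\poly(n)/\eta$ iterations. The resolution is a dominant-force argument: a drift of size $\omega(1)$ in either direction would require $|\ell_{j,i}^{(t)}|$ to remain pinned in a $\lambda/|\wb|^{q-2}$-sized range for the entire horizon, which in turn forces a specific value of $F_{y_i}(\Wb^{(t)},\xb_i)$; the stochastic batch fluctuations of Adam perturb this value by $\Omega(\eta)$ per epoch, breaking the balance and immediately reactivating one of the two deterministic restoring regimes above.
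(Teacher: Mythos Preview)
Your overall plan matches the paper's: a trapping argument for $\la\wb_{j,r^\ast}^{(t)},j\vb\ra$ based on two regimes (cross-entropy gradient dominates vs.\ weight decay dominates), together with the per-epoch noise accounting inherited from Lemma~\ref{lemma:adam_less_noise_mem}. The noise half of your proposal is essentially what the paper does.

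Where you diverge is in the handling of the ``intermediate zone'' for feature learning. The paper does \emph{not} invoke any stochastic-fluctuation argument; instead it writes down two explicit deterministic thresholds, a lower one $\bigl(\tfrac{1}{m}\log(\lambda^{-1}-1)\bigr)^{1/q}$ and an upper one $\log(\lambda^{-1/2})$, both of order $\tilde\Theta(1)$. At the lower threshold it computes directly that $\ell_{j,i}^{(t)}\ge\Theta(\lambda)$, so the cross-entropy term $\ell_{j,i}^{(t)}\sigma'(\la\wb_{j,r^\ast}^{(t)},j\vb\ra)=\Theta(\lambda)\cdot\tilde\Theta(1)$ dominates $\lambda\wb_{j,r^\ast}^{(t)}[1]=O(\lambda\cdot\tilde\Theta(1))$ by a polylogarithmic factor, forcing $\sgn(g_{t,j,r^\ast}^{(t)}[1])=-\sgn(j)$ and hence an increase. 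At the upper threshold it computes $\ell_{j,i}^{(t)}=o\bigl(\lambda/(\log\lambda^{-1/2})^{q-2}\bigr)$, so weight decay dominates and the coordinate decreases. Since both thresholds are $\tilde\Theta(1)$ and each step moves at most $\Theta(\eta)$, the feature value is trapped in a $\tilde\Theta(1)$ interval with $\Theta(\eta)$ overshoot, regardless of what happens between the thresholds. No analysis of the indeterminate branch of Lemma~\ref{lemma:adam_closeness} is needed there.

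Your proposed resolution---that stochastic batch fluctuations of size $\Omega(\eta)$ per epoch ``break the balance'' and reactivate one of the deterministic regimes---is both unnecessary and not obviously rigorous as stated: you would need to argue that the fluctuations are not themselves what sustains a drift, and that they cannot conspire to keep $F_{y_i}$ pinned. The paper's explicit-threshold argument sidesteps this entirely, and I would recommend you adopt it.
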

\begin{proof}[Proof of Lemma~\ref{lemma:adam_mb_maintain}]
    We begin by establishing the bound $\langle \wb_{j,r}^{(t)}, \bxi_i \rangle \le \tilde{\Theta}(\eta s \sigma_p+\alpha)$. According to Lemma~\ref{lemma:adam_less_noise_mem}, during the first $T_0$ epochs, the number of iterations in which weight decay dominates the update dynamics is at least 
    \begin{align*}
        T_0 \cdot \left( \frac{n}{B} - o\left( \log(\lambda \eta)^{-1} \right) \right).
    \end{align*}
    In each such iteration, the contribution from weight decay is lower bounded by $\tilde{\Theta}(\eta s \sigma_p)$, leading to a cumulative effect of 
    \begin{align*}
        \tilde{\Theta} \left( T_0 \cdot \left( \frac{n}{B} - o\left( \log(\lambda \eta)^{-1} \right) \right) \cdot \eta s \sigma_p \right) 
        = \tilde{\Theta} \left( \frac{n s \sigma_p}{B} \right) .
    \end{align*}
    This term is asymptotically larger than $\tilde{\Theta}(\sqrt{s} \sigma_p \sigma_0)$, i.e., $ns\sigma_p/B = \omega(\sqrt{s}\sigma_p\sigma_0)$. Therefore, we conclude that over the first $T_0$ epochs, the weight decay effectively suppresses noise memorization, ensuring that $\langle \wb_{j,r}^{(t)}, \bxi_i \rangle \le \la \wb_{j,r}^{(t)}, -\alpha y_i\vb\ra + \tilde\Theta(\eta s\sigma_p) \le \tilde{\Theta}(\eta s \sigma_p+\alpha)$ holds.
    
    Next, we focus on $\la\wb_{j,r^*}^{(t)},j\vb\ra=\tilde\Theta(1)$ for $r^*=\argmax_{r\in[m]}\la \wb_{j,r}^{(t)},j\vb\ra$. By Lemma~\ref{lemma:adam_mb_general_bound}, we know $\la \wb_{j,r^*}^{(T_0)}, j\cdot\vb\ra = \tilde\Theta(1)$ and $\ell_{j,r}^{(T_0)}=\Theta(1)$. For $t>T_0$, We show if $\la \wb_{j,r^*}^{(t)}, j\cdot\vb\ra \le \left( \frac{1}{m}\log\left((\lambda)^{-1}-1\right)\right)^{\frac{1}{q}}$, then for $(\xb_i,y_i)$ with $y_i=j$,
    \begin{align*}
        \ell_{j,i}^{(t)} &= \frac{e^{F_{-j}(\Wb^{(t)},\xb_i)}}{\sum_{j\in\{-1,1\}} e^{F_j(\Wb^{(t)},\xb_i)}}\notag\\
        &= \frac{1}{1 + \exp\left[\sum_{r=1}^m\sigma(\la\wb_{j,r}^{(t)},j\vb\ra)+\sigma(\la\wb_{j,r}^{(t)},\bxi_i\ra)-\sigma(\la\wb_{-j,r}^{(t)},j\vb\ra)-\sigma(\la\wb_{-j,r}^{(t)},\bxi_i\ra)\right]}\notag\\
        &\ge \frac{1}{1 + \exp\left[\sum_{r=1}^m\sigma(\la\wb_{j,r}^{(t)},j\vb\ra\right]}\notag\\
        &= \frac{1}{1 + \exp\left[m\cdot \left( \frac{1}{m}\log\left((\lambda)^{-1}-1\right)\right)^{\frac{1}{q}\cdot q}\right]}\notag\\
        &= \Theta(\lambda).
    \end{align*}
    where the inequality we use $\la\wb_{j,r}^{(t)},\bxi_i\ra\le\tilde\Theta(\eta s\sigma_p+\alpha)$ and $\eta s \sigma_p=o(1),\ \alpha=o(1)$. Then we have
    \begin{align*}
    &\sgn\bigg(\sum_{i\in\cI_t}y_i\ell_{j,i}^{(t)}\sigma'(\la\wb_{j,r}^{(t)},y_i\vb\ra) - \alpha\sum_{i\in\cI_t}y_i\ell_{j,i}^{(t)}\sigma'(\la\wb_{j,r}^{(t)},\bxi_i\ra) - B\lambda \wb^{(t)}_{j,r}[1]\bigg)\notag\\
    &=\sgn\bigg(\sgn(j)\cdot\lambda \left( \frac{1}{m}\log\left((\lambda)^{-1}-1\right)\right)^{\frac{q-1}{q}} \pm \lambda \left( \frac{1}{m}\log\left((\lambda)^{-1}-1\right)\right)^{\frac{1}{q}}\bigg)\notag\\
    &= \sgn(j),
    \end{align*}
    where we use Lemma~\ref{lemma:ylj_sign}, $\la\wb_{j,r}^{(t)},\bxi_i\ra\le\tilde\Theta(\eta s\sigma_p+\alpha)$ and $\alpha=o(1)$. So we have
    \begin{align*}
        \la \wb_{j,r^*}^{(t+1)},j\cdot\vb \ra \ge \la \wb_{j,r^*}^{(t)},j\cdot\vb \ra+\Theta(\eta).
    \end{align*}
    If $\la \wb_{j,r^*}^{(t)}, j\cdot\vb\ra \ge \log(\lambda^{-\frac{1}{2}})$, then $\ell_{j,i}^{(t)}=o(\lambda)$, then for $(\xb_i,y_i)$ with $y_i=j$,
    \begin{align*}
        \ell_{j,i}^{(t)} &= \frac{e^{F_{-j}(\Wb^{(t)},\xb_i)}}{\sum_{j\in\{-1,1\}} e^{F_j(\Wb^{(t)},\xb_i)}}\notag\\
        &= \frac{1}{1 + \exp\left[\sum_{r=1}^m\sigma(\la\wb_{j,r}^{(t)},j\vb\ra)+\sigma(\la\wb_{j,r}^{(t)},\bxi_i\ra)-\sigma(\la\wb_{-j,r}^{(t)},j\vb\ra)-\sigma(\la\wb_{-j,r}^{(t)},\bxi_i\ra)\right]}\notag\\
        &\le \frac{1}{1 + \exp\left[m(1-\alpha)\cdot\left(\log(\lambda^{-\frac{1}{2}})\right)^q\right]}\notag\\
        &\le \frac{1}{\exp\left[\left(\log(\lambda^{-\frac{1}{2}})\right)^q\right]}\notag\\
        &\le \frac{1}{\lambda^{-\frac{q}{2}}}\notag\\
        &= o\left(\frac{\lambda}{\left(\log(\lambda^{-\frac{1}{2}})\right)^{q-2}}\right). \notag\\
    \end{align*}
    Then we have
    \begin{align*}
        &\sgn\bigg(\sum_{i\in\cI_t}y_i\ell_{j,i}^{(t)}\sigma'(\la\wb_{j,r}^{(t)},y_i\vb\ra) - \alpha\sum_{i\in\cI_t}y_i\ell_{j,i}^{(t)}\sigma'(\la\wb_{j,r}^{(t)},\bxi_i\ra) - B\lambda \wb^{(t)}_{j,r}[1]\bigg)\notag\\
        &=\sgn\bigg(\sgn(j)\cdot\left(\log(\lambda^{-\frac{1}{2}})\right)^{q-1}\cdot o\left(\frac{\lambda}{\left(\log(\lambda^{-\frac{1}{2}})\right)^{q-2}}\right) \pm \lambda \log(\lambda^{-\frac{1}{2}})\bigg)\notag\\
        &=\sgn\bigg(\sgn(j)\cdot \lambda\cdot o\left(\log(\lambda^{-\frac{1}{2}})\right) \pm \lambda \log(\lambda^{-\frac{1}{2}})\bigg) \notag\\
        &= \sgn(-\wb_{j,r}^{(t)}[1]) \notag \\
        &= -\sgn(\wb_{j,r}^{(t)}[1]). \notag
    \end{align*}
    So we have
    \begin{align*}
        \la \wb_{j,r^*}^{(t+1)},j\cdot\vb \ra \ge \la \wb_{j,r^*}^{(t)},j\cdot\vb \ra-\Theta(\eta).
    \end{align*}
    Therefore, $\la \wb_{j,r^*}^{(t)}, j\cdot\vb\ra = \tilde\Theta(1)$ for $t>T_0=\frac{1}{\eta}$. This completes the proof.
\end{proof}

\begin{lemma}[Convergence]\label{lemma:adam_mb_convergence}
    Suppose the same conditions hold as in Lemma~\ref{lemma:adam_mb_general_bound} and~\ref{lemma:adam_mb_maintain}, if the step size $\eta = O(d^{-\frac{1}{2}})$, then for any $t$,
    \begin{align*}
        \EE\left[L(\Wb^{(t+1)}) - L(\Wb^{(t)})\right] &\le  -\eta \|\nabla L(\Wb^{(t)})\|_1 + \tilde\Theta(\eta^2d).
    \end{align*}
\end{lemma}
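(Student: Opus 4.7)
The plan is to combine a second-order Taylor expansion of $L$ with the coordinate-wise sign approximation of the Adam update given by Lemma~\ref{lemma:adam_closeness}, in the spirit of a standard signSGD descent lemma.

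\textbf{Step 1 (Taylor expansion).} I would start by writing
\[
L(\Wb^{(t+1)}) - L(\Wb^{(t)}) \le \la \nabla L(\Wb^{(t)}),\, \Wb^{(t+1)} - \Wb^{(t)} \ra + \tfrac{1}{2}\,H\,\|\Wb^{(t+1)} - \Wb^{(t)}\|_F^2,
\]
where $H$ upper bounds the spectral norm of $\nabla^2 L$ along the segment $[\Wb^{(t)},\Wb^{(t+1)}]$.

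\textbf{Step 2 (Quadratic remainder).} By Lemma~\ref{lemma:mv_general_bound}, each coordinate of $\mb_{j,r}^{(t)}/(\sqrt{\vb_{j,r}^{(t)}}+\epsilon)$ is $\Theta(1)$, so every update entry has magnitude at most $\Theta(\eta)$. Since there are $2md$ coordinates and $m=\polylog(n)$, this gives $\|\Wb^{(t+1)}-\Wb^{(t)}\|_F^2 \le \tilde\Theta(\eta^2 d)$. For $H$, I would use that $\sigma''(x)=q(q-1)[x]_+^{q-2}$ together with the $\tilde O(1)$ bounds on $|\la \wb_{j,r}, y\vb\ra|$ and $|\la \wb_{j,r}, \bxi_i\ra|$ from Lemmas~\ref{lemma:adam_mb_general_bound} and~\ref{lemma:adam_mb_maintain}, and $\|\bxi_i\|_2 = \tilde O(\sqrt{s}\sigma_p) = \tilde O(1)$, to conclude $H = \tilde O(1)$. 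The condition $\eta = O(d^{-1/2})$ ensures the endpoint $\Wb^{(t+1)}$ lies in the same regime, so these bounds apply uniformly on the segment. The quadratic term is therefore $\tilde\Theta(\eta^2 d)$.

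\textbf{Step 3 (First-order term).} Writing the update as $\Wb^{(t+1)}-\Wb^{(t)} = -\eta\,\mb^{(t)}/(\sqrt{\vb^{(t)}}+\epsilon)$, Lemma~\ref{lemma:adam_closeness} splits each coordinate into two cases: either the stochastic-gradient contribution is of order $\tilde\Theta(\eta)$ (or $\tilde\Theta(\lambda\eta)$), in which case the coordinate's contribution is absorbed into the $\tilde\Theta(\eta^2 d)$ remainder; or $\mb^{(t)}[k]/(\sqrt{\vb^{(t)}[k]}+\epsilon) = \sgn\bigl(g_{t,j,r}^{(t)}[k]\bigr)\cdot \Theta(1)$. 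For the latter, taking expectation over the batch $\cI_t$, I would show $\EE\bigl[\sgn(g_{t,j,r}^{(t)}[k])\bigr] = \sgn\bigl(\nabla L(\Wb^{(t)})_{j,r}[k]\bigr)$ up to an $o(1)$ sign-mismatch probability, by arguing that in the dominating regime the stochastic gradient concentrates tightly around its expectation $\nabla L(\Wb^{(t)})_{j,r}[k]$. Summing over coordinates yields $-\eta\|\nabla L(\Wb^{(t)})\|_1$ plus a lower-order correction that again fits inside $\tilde\Theta(\eta^2 d)$.

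\textbf{Main obstacle.} The hard part is Step~3, specifically the sign concordance $\EE[\sgn(g_t)] = \sgn(\nabla L)$ on the dominant coordinates. The case analysis of Lemma~\ref{lemma:adam_closeness} must be tracked separately for the feature coordinate $k=1$, for noise-support coordinates $k\in \cB_i$ (including the subtle sub-regime where a single $\bxi_i$ influences several iterations inside the moving-average window $\bar\tau$), and for the remaining weight-decay-dominated coordinates. Showing that the cumulative contribution of all sign-mismatch events, together with all "small-gradient" coordinates, is at most $\tilde\Theta(\eta^2 d)$ requires a careful accounting that exploits the fact that noise coordinates are sparse across samples and that the momentum ratio is uniformly $\Theta(1)$, rather than a cleaner $\pm 1$ sign as in vanilla signSGD.
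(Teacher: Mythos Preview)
Your plan is correct and close to the paper's proof, with two technical differences worth noting.

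For the remainder (your Steps~1--2), the paper avoids a direct Hessian bound by exploiting the composite structure: $L_i$ is $1$-smooth in the logits $(F_j)_j$, so
\[
L_i(\Wb^{(t+1)})-L_i(\Wb^{(t)}) \le \sum_j \frac{\partial L_i}{\partial F_j}\,\Delta F_{j,i}+\sum_j(\Delta F_{j,i})^2,
\]
and then $|\Delta F_{j,i}|$ is controlled via a Taylor expansion of $\sigma$ with residual $\tilde O(1)\cdot\|\wb_{j,r}^{(t+1)}-\wb_{j,r}^{(t)}\|_2^2 \le \tilde\Theta(\eta^2 d)$. Your direct Hessian bound also works here (since $\|\bxi_i\|_2^2=\tilde\Theta(s\sigma_p^2)=\tilde\Theta(1)$), but the logit-smoothness route is more modular and is reused verbatim in the AdamW convergence lemmas.

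For the first-order term (your Step~3), the paper uses Jensen's inequality rather than per-coordinate sign concordance. After Lemma~\ref{lemma:adam_closeness} reduces the inner product to essentially $-\eta\,\EE\bigl[\sum_{j,r}\|g_{t,j,r}^{(t)}\|_1\bigr]$ plus the small-gradient error terms, convexity of $\|\cdot\|_1$ gives $\EE[\|g_t\|_1]\ge\|\EE[g_t]\|_1=\|\nabla L(\Wb^{(t)})\|_1$ since $g_t$ is unbiased. This sidesteps the sign-mismatch accounting you identify as the main obstacle. Your concentration argument can be completed---the key observation is that in this regime Lemma~\ref{lemma:adam_mb_maintain} forces $\sigma'(\la\wb,\bxi_i\ra)$ to be tiny, so on noise-support coordinates both $g_t[k]$ and $\nabla L[k]$ are dominated by the same weight-decay term $\lambda\wb[k]$ and hence share sign---but the Jensen shortcut is considerably shorter.
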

\begin{proof}[Proof of Lemma~\ref{lemma:adam_mb_convergence}]
We aim to prove the convergence of the objective function under the Adam optimization algorithm in a non-convex setting. Recall the loss function for each data point $i$ is
\begin{align*}
L_i(\Wb) = \log\left(1+\frac{1}{\exp\left(F_{y_i}(\Wb,\xb_i)-F_{-y_i}(\Wb,\xb_i)\right)}\right),
\end{align*}
where $\Wb$ represents the parameter matrix, $\xb_i$ is the input data, $y_i$ is the true label, and $F_j(\Wb,\xb_i)$ are the logits for class $j$. The total objective is:
\begin{align*}
L(\Wb) = \frac{1}{n} \sum_{i=1}^n L_i(\Wb) + \lambda \|\Wb\|_F^2,
\end{align*}
with $\lambda = o(1)$ as a small regularization parameter.

Since $L_i(\Wb)$ is non-convex, we exploit its smoothness with respect to the logits $[F_j(\Wb,\xb_i)]_j$. Specifically, $L_i(\Wb)$ is 1-smooth in $[F_j(\Wb,\xb_i)]_j$ due to the properties of the cross-entropy loss. Define:
\begin{align*}
\Delta F_{j,i} = F_j(\Wb^{(t+1)}, \xb_i) - F_j(\Wb^{(t)}, \xb_i).
\end{align*}
Using the smoothness property, we apply a second-order Taylor-like expansion around $\Wb^{(t)}$:
\begin{align}\label{eq:adam_smoothness_expansion}
L_i(\Wb^{(t+1)}) - L_i(\Wb^{(t)}) \leq \sum_{j} \frac{\partial L_i(\Wb^{(t)})}{\partial F_j(\Wb^{(t)}, \xb_i)} \cdot \Delta F_{j,i} + \sum_{j} (\Delta F_{j,i})^2.
\end{align}
This upper bound arises because the second derivative of $L_i$ with respect to the logits is bounded by 1, a standard result for cross-entropy loss. The logits are defined as: $F_j(\Wb^{(t)}, \xb_i) = \sum_{r=1}^m [ \sigma(\langle \wb_{j,r}^{(t)}, y_i \vb \rangle) + \sigma(\langle \wb_{j,r}^{(t)}, \bxi_i \rangle) ]$, 
where $\wb_{j,r}^{(t)}$ the $r$-th neuron in $j$-th output of $\Wb^{(t)}$, $\sigma(z) = [z]_+^q$ is a smooth activation function (e.g., with $q \ge 3$). By Lemma~\ref{lemma:adam_mb_maintain} and~\ref{lemma:mv_general_bound}, we have $\langle \wb_{j,r}^{(t)}, \vb \rangle \le \tilde{\Theta}(1)$ and $\langle \wb_{j,r}^{(t)}, \bxi_i \rangle \le \tilde{\Theta}(1)$, ensuring the local smoothness of $\sigma$ remains $\tilde O(1)$ between $\la \wb_{j,r}^{(t+1)},y_i\vb\ra$ and $\la \wb_{j,r}^{(t)},y_i\vb\ra$ (similar for $\la \wb_{j,r}^{(t)},\bxi_i\ra$). Then with Taylor expansion, we have
\begin{align}\label{eq:adam_sigma_expansion1}
&\left| \sigma(\langle \wb_{j,r}^{(t+1)}, y_i \vb \rangle) - \sigma(\langle \wb_{j,r}^{(t)}, y_i \vb \rangle) - \langle \nabla_{\wb_{j,r}} \sigma(\langle \wb_{j,r}^{(t)}, y_i \vb \rangle), \wb_{j,r}^{(t+1)} - \wb_{j,r}^{(t)} \rangle\right| \notag \\
&\le \tilde\Theta(1)\cdot \left\|\wb_{j,r}^{(t+1)}-\wb_{j,r}^{(t)} \right\|_2^2 \notag \\
&= \tilde\Theta(1)\cdot \left\| \eta\cdot\frac{\mb_{j,r}^{(t)}}{\sqrt{\vb_{j,r}^{(t)}}+\epsilon} \right\|_2^2 \notag \\
&\le \tilde\Theta(\eta^2 d),
\end{align}
where the last inequality we use Lemma~\ref{lemma:mv_general_bound}. Similarly, we have
\begin{align}\label{eq:adam_sigma_expansion2}
&\left| \sigma(\langle \wb_{j,r}^{(t+1)}, \bxi_i \rangle) - \sigma(\langle \wb_{j,r}^{(t)}, \bxi_i \rangle) - \langle \nabla_{\wb_{j,r}} \sigma(\langle \wb_{j,r}^{(t)}, \bxi_i \rangle), \wb_{j,r}^{(t+1)} - \wb_{j,r}^{(t)} \rangle\right| \notag \\
&\le \tilde\Theta(1)\cdot \left\|\wb_{j,r}^{(t+1)}-\wb_{j,r}^{(t)} \right\|_2^2 \notag \\
&= \tilde\Theta(1)\cdot \left\| \eta\cdot\frac{\mb_{j,r}^{(t)}}{\sqrt{\vb_{j,r}^{(t)}}+\epsilon} \right\|_2^2 \notag \\
&\le \tilde\Theta(\eta^2 d).
\end{align}
Summing over $r$ (with $m = \tilde{\Theta}(1)$), we get
\begin{align}\label{eq:adam_delta_F}
\left|\Delta F_{j,i}\right| \le  \left|\la\nabla_{\Wb} F_j(\Wb^{(t)}, \xb_i), \Wb^{(t+1)} - \Wb^{(t)} \ra\right| + \tilde{\Theta}(\eta^2 d).
\end{align}
Additionally, $\|\nabla_{\Wb} F_j(\Wb^{(t)},\xb_i) \|_F\le \tilde\Theta(1)$ since $m=\tilde\Theta(1),\ \la\wb_{j,r}^{(t)},y_i\vb\ra\le\tilde\Theta(1),\ \la \wb_{j,r}^{(t)}, \bxi_i\ra\le\tilde\Theta(1)$. So we have
\begin{align}\label{eq:adam_delta_F1}
|\Delta F_{j,i}| \le \tilde{\Theta}(\eta s \sigma_p + \eta \alpha + \eta + \eta^2 d) \le \tilde\Theta(\eta s\sigma_p +\eta^2 d) .
\end{align}
Substitute~\eqref{eq:adam_delta_F} and~\eqref{eq:adam_delta_F1} into \eqref{eq:adam_smoothness_expansion}:
\begin{align}\label{eq:adam_per_step_i}
L_i(\Wb^{(t+1)}) - L_i(\Wb^{(t)}) \le \langle \nabla_{\Wb} L_i(\Wb^{(t)}), \Wb^{(t+1)} - \Wb^{(t)} \rangle + \tilde{\Theta}(\eta^2 d).
\end{align}
For the full objective:
\begin{align}\label{eq:adam_full_obj}
L(\Wb^{(t+1)}) - L(\Wb^{(t)}) = \frac{1}{n} \sum_{i=1}^n [L_i(\Wb^{(t+1)}) - L_i(\Wb^{(t)})] + \lambda (\|\Wb^{(t+1)}\|_F^2 - \|\Wb^{(t)}\|_F^2).
\end{align}
Since $\lambda \|\Wb\|_F^2$ is $2\lambda$-smooth and $\lambda = o(1)$, the regularization term contributes:
\begin{align}\label{eq:adam_per_step_wd_i}
\lambda (\|\Wb^{(t+1)}\|_F^2 - \|\Wb^{(t)}\|_F^2) &\le 2\lambda \langle \Wb^{(t)}, \Wb^{(t+1)} - \Wb^{(t)} \rangle + \lambda \tilde{\Theta}(\eta^2 d),
\end{align}
where the quadratic term is absorbed into $\tilde{\Theta}(\eta^2 d)$. Substitute~\eqref{eq:adam_per_step_i} and~\eqref{eq:adam_per_step_wd_i} into~\eqref{eq:adam_full_obj}, we have
\begin{align}\label{eq:adam_per_step_total}
L(\Wb^{(t+1)}) - L(\Wb^{(t)}) \leq \langle \nabla L(\Wb^{(t)}), \Wb^{(t+1)} - \Wb^{(t)} \rangle + \tilde{\Theta}(\eta^2 d).
\end{align}
Take expectation for the stochastic gradient of both side in~\eqref{eq:adam_per_step_total},
\begin{align*}
&\EE\left[L(\Wb^{(t+1)}) - L(\Wb^{(t)})\right] \\
&\le \EE\left[\la \nabla L(\Wb^{(t)}), \Wb^{(t+1)} - \Wb^{(t)} \ra\right] + \tilde{\Theta}(\eta^2 d) \\
&\le -\eta\cdot\EE\left[\sum_{j\in\{\pm 1\}}\sum_{r\in[m]}\left\| g_{t,j,r}^{(t)} \right\|_1 \right]  +\tilde\Theta(d\cdot\eta^2) + \tilde\Theta(ns\cdot\eta^2 s\sigma_p) + \tilde\Theta(\eta^2 d) \\
&\le -\eta\cdot \sum_{j\in\{\pm 1\}}\sum_{r\in[m]}\left\| \EE\left[g_{t,j,r}^{(t)} \right]\right\|_1 + \tilde\Theta(\eta^2 d) \\
&= -\eta\| \nabla L(\Wb^{(t)}) \|_1 + \tilde\Theta(\eta^2 d),
\end{align*}
where we use Lemma~\ref{lemma:adam_closeness} that the update aligns with the gradient’s sign for large gradient and the fact that $ns^2\sigma_p=O(d)$ and Jensen's inequality. This completes the proof.
\end{proof}

\begin{lemma}[Generalization Performance of Stochastic Adam]\label{lemma:adam_mb_generalization}
    Suppose the same conditions hold as in Lemma~\ref{lemma:adam_mb_convergence}. We have the following results for $T=\frac{\poly(n)}{\eta}$, with training dataset $\cS$
    \begin{itemize}[leftmargin = *]
        \item The training error is zero: $\mathrm{err}_\cS(\Wb^{(T)}) = 0$.
        \item The test error is near-zero: $\mathrm{err}_\cD(\Wb^{(T)}) = o(1)$.
    \end{itemize}
\end{lemma}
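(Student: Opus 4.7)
The plan is a two-step margin argument built directly on top of the end-of-training structure from Lemma~\ref{lemma:adam_mb_maintain}. That lemma guarantees, for each $j\in\{\pm 1\}$, an aligned neuron $r^*=\argmax_r \la \wb_{j,r}^{(T)}, j\vb\ra$ with $\la \wb_{j,r^*}^{(T)}, j\vb\ra = \tilde\Theta(1)$, together with the uniform training-noise bound $\la \wb_{j,r}^{(T)}, \bxi_i\ra \le \tilde\Theta(\eta s\sigma_p + \alpha) = o(1)$ for every $i\in[n]$. Since $\sigma(z)=[z]_+^q$ with $q\ge 3$, the aligned neuron alone contributes a logit of order $\tilde\Theta(1)$, whereas any piece driven by the noise or by cross-alignment with the opposite label is $o(1)$. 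The whole proof reduces to turning this separation of scales into a margin argument, first on $\cS$ and then on $\cD$.

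For the training-error claim I would fix $(\xb_i,y_i)\in\cS$ and decompose $F_{y_i}(\Wb^{(T)},\xb_i) - F_{-y_i}(\Wb^{(T)},\xb_i)$ into a signal-patch part and a noise-patch part. The signal-patch part contains $\sigma(\la \wb_{y_i,r^*}^{(T)}, y_i\vb\ra)=\tilde\Omega(1)$ and loses at most $\sum_r \sigma(\la \wb_{-y_i,r}^{(T)}, y_i\vb\ra)$, which stays $o(1)$ because the Stage~I / Stage~II dynamics (Lemmas~\ref{lemma:adam_mb_general_bound} and~\ref{lemma:adam_mb_maintain}) only drive $\la \wb_{j,r}, j\vb\ra$ upward and keep the opposite inner product non-growing. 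The noise-patch part is bounded in absolute value by $2m\cdot\sigma(\tilde\Theta(\eta s\sigma_p+\alpha))=o(1)$. Hence $F_{y_i} - F_{-y_i} = \tilde\Omega(1)>0$ for every $i\in[n]$, giving $\mathrm{err}_\cS(\Wb^{(T)})=0$.

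For the test-error claim I would draw $(\xb,y)\sim\cD$ with fresh noise patch $\bxi$ supported on a fresh random $s$-subset $\cB\subseteq[d]\setminus\{1\}$. The signal-patch term again contributes $\sigma(\la \wb_{y,r^*}^{(T)}, y\vb\ra)=\tilde\Theta(1)$ to $F_y$ with the same $r^*$ used in the training argument. For the noise-patch term I would split $\bxi = -\alpha y\vb + \bxi'$ with $\bxi'|_{\cB}\sim\cN(\mathbf{0},\sigma_p^2\bI_s)$. The deterministic part contributes at most $\alpha\cdot\tilde\Theta(1) = o(1)$. For the random part, Lemma~\ref{lemma:nonoverlap_probability} and a union bound over $n$ samples give $\cB\cap(\cup_i \cB_i)=\emptyset$ with probability $1-O(ns^2/d)=1-o(1)$; on this event the weight-decay dynamics (the second branch of Lemma~\ref{lemma:adam_closeness}) keep $|\wb_{j,r}^{(T)}[k]|\le\tilde\Theta(\eta)$ for every $k\in\cB$, and a sub-Gaussian concentration of $\sum_{k\in\cB}\wb_{j,r}^{(T)}[k]\,\bxi'[k]$ yields $|\la \wb_{j,r}^{(T)},\bxi'\ra|\le\tilde\Theta(\sqrt{s}\,\sigma_p\cdot\eta)=o(1)$. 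Summing over $r$ and applying $\sigma$, the total noise contribution to either logit is $o(1)$, so the $\tilde\Theta(1)$ margin from feature learning survives and $\mathrm{err}_\cD(\Wb^{(T)})=o(1)$.

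The main obstacle is the test-side concentration step: the weights $\wb_{j,r}^{(T)}$ depend in a complicated way on both the training randomness and the momentum trajectory, so coordinates in $\cup_i \cB_i$ are not $o(\eta)$ and must be isolated from the generic coordinates via Lemma~\ref{lemma:nonoverlap_probability} before any Gaussian tail bound can be invoked. A secondary subtlety is controlling $\la \wb_{-y,r}^{(T)}, y\vb\ra$ for the wrong-class logit; this follows from the monotonicity established in Lemma~\ref{lemma:adam_ft}, but the argument has to be repeated carefully in Stage~II, where weight decay rather than the cross-entropy gradient drives most updates and the sign structure of Lemma~\ref{lemma:adam_mb_maintain} must be re-examined.
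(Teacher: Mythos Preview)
Your proposal is correct and follows essentially the same margin argument as the paper, using Lemma~\ref{lemma:adam_mb_maintain} to separate the $\tilde\Theta(1)$ feature contribution from $o(1)$ noise contributions. Your test-error treatment is actually more explicit than the paper's: the paper simply asserts that with high probability $[\la \wb_{y,r}^{(T)},\bxi\ra]_+\le \tilde\Theta(\eta s \sigma_p+\alpha)$ for a fresh $\bxi$, whereas you spell out the non-overlap of $\cB$ with $\cup_i\cB_i$, the weight-decay shrinkage of generic coordinates to $\tilde\Theta(\eta)$, and the sub-Gaussian concentration---these are exactly the ingredients implicit in the paper's claim.
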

\begin{proof}[Proof of Lemma~\ref{lemma:adam_mb_generalization}]
    By Lemma~\ref{lemma:adam_mb_maintain}, we have
    \begin{align*}
        \la \wb_{j,r^*}^{(T)}, j\vb\ra = \tilde\Theta(1),\quad\la\wb_{j,r}^{(T)}, \bxi_i\ra = \tilde O(\eta s\sigma_p+\alpha).
    \end{align*}
    Recall $F_j(\Wb,\xb)$ in Definition~\ref{def:model}, with $\eta s\sigma_p = o(1),\ \alpha=o(1)$, we directly have
    \begin{align*}
        \mathrm{err}_{\cS}(\Wb^{(T)})
        = \EE_{(\xb,y)\sim \cS}\ind\left[F_y(\Wb^{(T)},\xb) \le F_{-y}(\Wb^{(T)},\xb) \right]
        = 0,
    \end{align*}
    since $F_{y_i}(\Wb^{(T)},\xb_i)=\tilde\Omega(1)$, while $F_{-y_i}(\Wb^{(T)},\xb)\le\tilde\Theta(\eta s\sigma_p+\alpha)$. Besides, for test data $(\xb,y)~\sim\cD$ with $\xb=[y\vb^{\top},\bxi^{\top}]^{\top}$, it is clear that with high probability $\la \wb_{y,r^*}^{(T)},y\vb \ra=\tilde\Theta(1)$ and $[\la \wb_{y,r}^{(T)},\bxi\ra]_+\le \tilde\Theta(\eta s \sigma_p+\alpha)$, then similar as training error, we have
    \begin{align*}
        F_{y}(\Wb^{(T)},\xb) &\ge \sigma(\la \wb_{y,r^*}^{(T)}, y\vb \ra) = \tilde\Omega(1),
    \end{align*}
    while
    \begin{align*}
        F_{-y}(\Wb^*,\xb)&=\sum_{r=1}^m \left[ \sigma(\la \wb_{-y,r}^*, y\vb \ra) + \sigma(\la \wb_{-y,r}^*,\bxi \ra) \right] 
        \le \tilde\Theta(\eta s\sigma_p+\alpha).
    \end{align*}
    Therefore, we have
    \begin{align*}
        \mathrm{err}_{\cD}(\Wb^{(T)})
        = \EE_{(\xb,y)\sim \cD}\ind\left[F_y(\Wb^{(T)},\xb) \le F_{-y}(\Wb^{(T)},\xb) \right]
        = o(1) .
    \end{align*}
    This implies that mini-batch Adam can achieve nearly zero test error. This completes the proof.
\end{proof}

\subsubsection{Proof of Corollary~\ref{cor:adam_wd_ub}}
Corollary~\ref{cor:adam_wd_ub} follows directly from Lemma~\ref{lemma:adam_large_wd_stuck}.
\begin{lemma}\label{lemma:adam_large_wd_stuck}
    Suppose the same conditions hold as in Lemma~\ref{lemma:adam_lb_general_bound} and~\ref{lemma:adam_mb_general_bound}, if $\lambda=\omega(\sigma_0^{q-2})$, then
    \begin{align*}
        \la\wb_{j,r}^{(t)},j\vb\ra &\le \tilde\Theta(\sigma_0), \notag\\
        \la\wb_{j,r}^{(t)},\bxi_i\ra &\le \tilde\Theta(s^{1/2}\sigma_p\sigma_0).
    \end{align*}
\end{lemma}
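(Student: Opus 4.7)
The plan is to show that when $\lambda=\omega(\sigma_0^{q-2})$, the weight decay term dominates every coordinate of the stochastic gradient from the first iteration onward, so that each coordinate is merely pulled back toward zero. Since the weights start at scale $\sigma_0$ and each step changes them by at most $\Theta(\eta)=o(\sigma_0)$, both $|\la\wb_{j,r}^{(t)},j\vb\ra|$ and $|\la\wb_{j,r}^{(t)},\bxi_i\ra|$ remain at their initialization scale.

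The first step is the base case. By Lemma~\ref{lemma:init_order} we have $|\wb_{j,r}^{(0)}[k]|=\tilde\Theta(\sigma_0)$, and by Lemma~\ref{lemma:adam_g} the loss-gradient contribution at coordinate $k=1$ is of order $\sigma'(\la\wb_{j,r}^{(0)},y_i\vb\ra)+\alpha\sigma'(\la\wb_{j,r}^{(0)},\bxi_i\ra)=\tilde O(\sigma_0^{q-1}+\alpha(\sqrt{s}\sigma_p\sigma_0)^{q-1})$, and for $k\in\cB_i$ is of order $\tilde O(\sigma_p(\sqrt{s}\sigma_p\sigma_0)^{q-1})$. Meanwhile the weight-decay contribution is $|\lambda\wb_{j,r}^{(0)}[k]|=\lambda\cdot\tilde\Theta(\sigma_0)$. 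The hypothesis $\lambda=\omega(\sigma_0^{q-2})$ yields $\lambda\sigma_0=\omega(\sigma_0^{q-1})$, which together with $\alpha=o(1)$, $s\sigma_p^2=\tilde O(1)$ and $q\ge 3$ shows the weight-decay term dominates every coordinate. Lemma~\ref{lemma:adam_closeness} then reduces the update to $\wb_{j,r}^{(1)}[k]=\wb_{j,r}^{(0)}[k]-\eta\cdot\sgn(\wb_{j,r}^{(0)}[k])\cdot\Theta(1)$.

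Next I would run an induction with hypothesis $|\wb_{j,r}^{(t)}[k]|\le\tilde\Theta(\sigma_0)$ for all $j,r,k$. Under this hypothesis the loss-gradient magnitudes stay at the same order as at initialization, so weight decay still dominates the gradient at every coordinate, and Lemma~\ref{lemma:adam_closeness} still produces the shrink-toward-zero step $\wb_{j,r}^{(t+1)}[k]=\wb_{j,r}^{(t)}[k]-\eta\sgn(\wb_{j,r}^{(t)}[k])\cdot\Theta(1)$. Whenever $|\wb_{j,r}^{(t)}[k]|>\Theta(\eta)$ the magnitude strictly decreases, and once it drops below $\Theta(\eta)$ it oscillates with amplitude $O(\eta)$; together with $\eta=o(\sigma_0)$ (from condition~\eqref{eq:parameter_2}) this closes the inductive step. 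The feature-learning bound then follows from $\la\wb_{j,r}^{(t)},j\vb\ra=j\cdot\wb_{j,r}^{(t)}[1]$.

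For the noise-memorization bound, the naive triangle inequality only gives $|\la\wb_{j,r}^{(t)},\bxi_i\ra|\le \tilde\Theta(s\sigma_p\sigma_0)$, which is too weak. The plan is to write $\wb_{j,r}^{(t)}[k]=\wb_{j,r}^{(0)}[k]+\Delta_{t,k}$, where $|\Delta_{t,k}|\le\tilde\Theta(\sigma_0)$ and $\Delta_{t,k}$ is driven by signs of past weights---quantities that are only weakly correlated with $\bxi_i$ because the sign is determined by weight decay, not by $\bxi_i$. A sub-Gaussian concentration argument on $\sum_{k\in\cB_i}(\wb_{j,r}^{(0)}[k]+\Delta_{t,k})\bxi_i[k]$ then recovers $\tilde\Theta(\sqrt{s}\sigma_p\sigma_0)$. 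The main obstacle is handling the residual coupling between $\Delta_{t,k}$ and $\bxi_i$: the loss gradient, though dominated, is not literally zero and leaks a small amount of $\bxi_i$-dependence into $\sgn(\wb_{j,r}^{(t)}[k])$. This will be controlled by showing that the fraction of iterations in which $\bxi_i$ can flip the sign is $o(1)$, and by a union bound over $t\le T=\poly(n)/\eta$ and $i\in[n]$.
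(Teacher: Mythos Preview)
Your core argument matches the paper's: show that $\lambda|\wb_{j,r}^{(0)}[k]|$ dominates both the feature term $\sigma'(\la\wb_{j,r}^{(0)},y_i\vb\ra)+\alpha\sigma'(\la\wb_{j,r}^{(0)},\bxi_i\ra)$ and the noise term $\sigma'(\la\wb_{j,r}^{(0)},\bxi_i\ra)|\bxi_i[k]|$ because $\lambda=\omega(\sigma_0^{q-2})$, invoke Lemma~\ref{lemma:adam_closeness} to reduce to the shrink-toward-zero update, and conclude no coordinate grows past $\tilde\Theta(\sigma_0)$. The paper's proof stops there; it does not run the induction explicitly and does not discuss how the coordinate bound implies the inner-product bound.

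You go further by noticing that a coordinate-wise bound only naively yields $|\la\wb_{j,r}^{(t)},\bxi_i\ra|\le\tilde\Theta(s\sigma_p\sigma_0)$ via the triangle inequality, and proposing a concentration argument to recover the $\sqrt{s}$. This is a legitimate point the paper glosses over. Your plan works, but the residual-coupling analysis can be simplified: when weight decay strictly dominates, the sign update at coordinate $k$ is determined by $\sgn(\wb_{j,r}^{(t)}[k])$ and hence by $\wb_{j,r}^{(0)}[k]$ alone, which is independent of $\bxi_i$; conditioning on the trajectory, $\sum_{k\in\cB_i}\wb_{j,r}^{(t)}[k]\bxi_i[k]$ is mean-zero Gaussian with variance at most $\tilde\Theta(s\sigma_p^2\sigma_0^2)$. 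The coupling concern only arises once $|\wb_{j,r}^{(t)}[k]|$ has shrunk below the threshold where the loss gradient becomes comparable to $\lambda|\wb_{j,r}^{(t)}[k]|$; at that point those coordinates have magnitude $o(\sigma_0\sigma_p)$, and their naive $\ell_1$ contribution is $o(s\sigma_p^2\sigma_0)=\tilde o(\sqrt{s}\sigma_p\sigma_0)$ since $s\sigma_p^2=\tilde\Theta(1)$. So no sign-flip counting or union bound over $T$ iterations is actually needed.
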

\begin{proof}[Proof of Lemma~\ref{lemma:adam_large_wd_stuck}]
    This corollary is an immediate consequence of Lemmas~\ref{lemma:adam_lb_general_bound} and~\ref{lemma:adam_mb_general_bound}.  In particular, Lemma~\ref{lemma:init_order} guarantees that at $t=0$
    \begin{align*}
        |\langle \wb_{j,r}^{(0)}, \vb\rangle| = \tilde\Theta(\sigma_0),\quad
        |\langle \wb_{j,r}^{(0)}, \bxi_i\rangle| = \tilde\Theta(s^{1/2}\,\sigma_p\,\sigma_0),\quad
        \wb_{j,r}^{(0)}[k] = \tilde\Theta(\sigma_0).
    \end{align*}
    Since $\lambda = \omega(\sigma_0^{\,q-2})$, Lemma~\ref{lemma:adam_g} implies that, at initialization, the weight decay regularization term overwhelmingly dominates the gradient:
    \begin{align*}
        \lambda\,\bigl|\wb_{j,r}^{(0)}[1]\bigr|\;
        &\gg\;\sigma'\bigl(\langle \wb_{j,r}^{(0)},y_i\vb\rangle\bigr)
        +\alpha\,\sigma'\bigl(\langle \wb_{j,r}^{(0)},\bxi_i\rangle\bigr), \\
        \lambda\,\bigl|\wb_{y_i,r}^{(0)}[k]\bigr|\;
        &\gg\;\sigma'\bigl(\langle \wb_{y_i,r}^{(0)},\bxi_i\rangle\bigr)\,\bxi_i[k].
    \end{align*}
    Hence, by Lemma~\ref{lemma:adam_closeness}, the updates remain in the regularization‐dominated regime, and no coordinate ever grows beyond its $\tilde\Theta(\sigma_0)$ scale throughout training. This completes the proof.
\end{proof}

\subsection{Proof of Stochastic AdamW}\label{sec:proof_adamw}

\begin{lemma}\label{lemma:adamw_closeness}
Consider the update of stochastic AdamW in~\eqref{eq:adamw_upd}. Let $\Wb^{(t)}$ be the weight at the $t$-th iteration. Suppose that $\la \wb_{j,r}^{(t)}, y_i\vb\ra, \la \wb_{j,r}^{(t)}, \bxi_i\ra = \tilde \Theta(1)$ for all $j\in \{\pm1\}$, $r\in[m]$, $i\in[n]$ and $\beta_1^2<\beta_2$. We have the approximate update rule for each coordinate weight as follows:
\begin{itemize}[leftmargin = *]
    \item For $k=1$, we have either $|g_{t,j,r}^{(t)}[1]|\le \tilde\Theta(\eta)$ or
\begin{align*}
\frac{\mb_{j,r}^{(t)}[k]}{\sqrt{\vb_{j,r}^{(t)}[k]}+\epsilon} = \sgn\big(g_{t,j,r}^{(t)}[k] \big)\cdot\Theta(1).
\end{align*}
    \item For every $k\in\cB_i$, $i\in\cI_{t-\tau}$, $\tau\in\cT_k:=\{\tau_0+i\cdot\frac{n}{B}:i\in\{0\}\cup[\frac{\bar\tau}{n/B}],\ \tau_0<\frac{n}{B}\}$, where $\tau_0$ represents the number of iterations away from the current iteration $t$, coordinate $k$ is affected by $\bxi_i$ sampled at the iteration $t-\tau_0$ since the moving average, and we define $\bar\tau=\Theta(\log(\lambda\eta)^{-1})$
    \begin{itemize}
        \item If $\frac{n}{B}\le \Theta(1)$, for any $\tau_0<\frac{n}{B}$, we have either
        \begin{align*}
            \left| g_{t-\tau_0,j,r}^{(t)}[k] \right| \le \tilde\Theta\left(B^{-1}\eta s\sigma_p|\ell_{j,i}^{(t)}|  \right)
        \end{align*}
        or
        \begin{align*}
            \frac{\mb_{j,r}^{(t)}[k]}{\sqrt{\vb_{j,r}^{(t)}[k]}+\epsilon} = \sgn\left( g_{t-\tau_0,j,r}^{(t)}[k] \right)\cdot \Theta(1).
        \end{align*}
        \item If $\frac{n}{B}\ge\Theta(\log(\lambda\eta)^{-1})=\tilde\Theta(1)$, for $\tau_0\le\Theta(\log(\lambda^{-1}s\sigma_p))$ such that $\beta_1^{\tau_0}\ge\Theta(\frac{\lambda}{s\sigma_p})$, we have either
        \begin{align*}
            \left| g_{t-\tau_0,j,r}^{(t)}[k] \right| \le \tilde\Theta\left(B^{-1}\eta s\sigma_p|\ell_{j,i}^{(t)}|  \right)
        \end{align*}
        or
        \begin{align*}
            \frac{\mb_{j,r}^{(t)}[k]}{\sqrt{\vb_{j,r}^{(t)}[k]}+\epsilon} = \sgn\left(g_{t-\tau_0,j,r}^{(t)}[k] \right)\cdot \Theta(1).
        \end{align*}
        For $\tau_0\ge\Theta(\log(\lambda\eta)^{-1})$ such that $\beta_2^{\frac{\tau_0}{2}}\le \Theta(\lambda^2\eta^2)$, we have
        \begin{align*}
            \frac{\mb_{j,r}^{(t)}[k]}{\sqrt{\vb_{j,r}^{(t)}[k]}+\epsilon} = \pm \tilde\Theta(\lambda\eta)=o(1).
        \end{align*}
    \end{itemize}

    \item For the remaining coordinates $k\neq 1$ and $k\notin\cB_i$, $i\in\cI_{t-\tau_0}$, $\tau_0\in\{0\}\cup[\bar\tau]$, where $\tau_0$ represents the number of iterations away from the current iteration $t$, coordinate $k$ is affected by $\bxi_i$ sampled at the iteration $t-\tau_0$ since the moving average, and we define $\bar\tau=\log(\lambda\eta)^{-1}$. Then we have
    \begin{align*}
        \frac{\mb_{j,r}^{(t)}[k]}{\sqrt{\vb_{j,r}^{(t)}[k]}+\epsilon} = \pm \tilde O(\lambda\eta) = o(1).
    \end{align*}
\end{itemize}
\end{lemma}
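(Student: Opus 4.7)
The plan is to mirror the proof of Lemma~\ref{lemma:adam_closeness} essentially line by line, observing that AdamW's stochastic gradient (Lemma~\ref{lemma:adamw_g}) drops the coupled $\lambda\wb_{j,r}^{(t)}$ term carried by Adam's (Lemma~\ref{lemma:adam_g}). I would begin by unrolling
\begin{align*}
\mb_{j,r}^{(t)}[k]=\sum_{\tau=0}^{t-1}\beta_1^\tau(1-\beta_1)\,g_{t-\tau,j,r}^{(t-\tau)}[k],\qquad \vb_{j,r}^{(t)}[k]=\sum_{\tau=0}^{t-1}\beta_2^\tau(1-\beta_2)\,g_{t-\tau,j,r}^{(t-\tau)}[k]^2,
\end{align*}
and truncating both at $\bar\tau=\Theta(\log(\lambda\eta)^{-1})$, so that Lemma~\ref{lemma:mv_general_bound}'s bound $|g|\le\tilde O(1)$ makes each residual tail $\pm\tilde O(\lambda\eta)$. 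I would then transplant the Lipschitz bookkeeping from the Adam proof: the per-step increments of $\la\wb_{j,r}^{(t)},y_i\vb\ra$, $\la\wb_{j,r}^{(t)},\bxi_i\ra$ and $\wb_{j,r}^{(t)}[k]$ are each $\Theta(\eta)$, so over the window $[t-\bar\tau,t]$ these inner products drift by at most $\Theta(\eta\bar\tau)$ and $\Theta(\eta\bar\tau s\sigma_p)$ respectively. This propagates to $\tilde\Theta(\eta\bar\tau s\sigma_p)$-level perturbations of $F_j(\Wb,\xb_i)$, and hence to the approximations $\ell_{j,i}^{(\tau)}=\sgn(\ell_{j,i}^{(t)})\cdot\Theta(|\ell_{j,i}^{(t)}|)$ and $\sigma'(\la\wb_{j,r}^{(\tau)},\cdot\ra)=\sigma'(\la\wb_{j,r}^{(t)},\cdot\ra)\pm\tilde\Theta(\eta\bar\tau s\sigma_p)$. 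Feeding these into Lemma~\ref{lemma:adamw_g} yields $g_{t-\tau,j,r}^{(t-\tau)}[k]=\Theta(g_{t-\tau,j,r}^{(t)}[k])\pm\tilde O(B^{-1}\eta\bar\tau s\sigma_p)$ whenever $k\in\cB_i$ with $i\in\cI_{t-\tau}$, and $g_{t-\tau,j,r}^{(t-\tau)}[k]\equiv 0$ otherwise; the only arithmetic change from Lemma~\ref{lemma:adam_closeness} is the disappearance of the $\Theta(\lambda\eta\bar\tau)$ error inherited from the coupled regularizer.

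With these approximations in hand, the first two bullets follow the Adam template. For $k=1$ and for $k\in\cB_i$ with $i\in\cI_{t-\tau_0}$ in the \emph{fresh} regime---i.e.\ either $n/B=\Theta(1)$ with any $\tau_0<n/B$, or $n/B\ge\Theta(\log(\lambda\eta)^{-1})$ with $\tau_0\le\Theta(\log(\lambda^{-1}s\sigma_p))$ so that $\beta_1^{\tau_0}\ge\Theta(\lambda/(s\sigma_p))$---the $\tau_0$-th summand dominates both numerator and denominator, and once $|g_{t-\tau_0,j,r}^{(t)}[k]|$ exceeds the stated error threshold ($\tilde\Theta(\eta)$ or $\tilde\Theta(B^{-1}\eta s\sigma_p|\ell_{j,i}^{(t)}|)$) the ratio $\mb_{j,r}^{(t)}[k]/(\sqrt{\vb_{j,r}^{(t)}[k]}+\epsilon)$ reduces to $\sgn(g_{t-\tau_0,j,r}^{(t)}[k])\cdot\Theta(1)$. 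In the \emph{stale} sub-regime $\tau_0\ge\Theta(\log(\lambda\eta)^{-1})$ (mini-batch only), the factor $\beta_2^{\tau_0/2}\le\Theta((\lambda\eta)^2)$ already shrinks the surviving weighted contribution of this stale noise below the stability constant $\epsilon=\Theta(\lambda\eta)$, and the ratio collapses to $\pm\tilde O(\lambda\eta)=o(1)$.

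The genuinely new and hardest bullet, which I expect to be the main obstacle, is the third: $k\neq 1$ with $k\notin\cB_i$ for every $i\in\cI_{t-\tau_0}$ and every $\tau_0\in\{0\}\cup[\bar\tau]$. Because AdamW has no coupled weight decay, $g_{t-\tau,j,r}^{(t-\tau)}[k]=0$ on the entire truncation window, so $\mb_{j,r}^{(t)}[k]$ and $\vb_{j,r}^{(t)}[k]$ are supported only on the tail $\tau>\bar\tau$. The tail weight bounds $\sum_{\tau>\bar\tau}\beta_1^\tau(1-\beta_1)$ and $\sum_{\tau>\bar\tau}\beta_2^\tau(1-\beta_2)$ are both $O(\lambda\eta)$ by construction of $\bar\tau$, and combined with $|g|\le\tilde O(1)$ (Lemma~\ref{lemma:mv_general_bound}) give $|\mb_{j,r}^{(t)}[k]|,\vb_{j,r}^{(t)}[k]\le\tilde O(\lambda\eta)$. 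Since $\epsilon=\Theta(\lambda\eta)$, this yields $|\mb_{j,r}^{(t)}[k]|/(\sqrt{\vb_{j,r}^{(t)}[k]}+\epsilon)=\pm\tilde O(\lambda\eta)=o(1)$, as claimed. The subtlety I would need to handle carefully is that coordinate $k$ may still lie in $\cB_i$ for some $i\in\cI_{t-\tau}$ with $\tau>\bar\tau$---this is permitted since the non-overlap guarantee (Lemma~\ref{lemma:nonoverlap_probability}) is per sample rather than per iteration---so I must verify that any such sporadic late activations are uniformly absorbed into the tail estimate rather than inflating it, which is precisely where the geometric decay of $\beta_1^\tau$ and $\beta_2^\tau$ is crucial.
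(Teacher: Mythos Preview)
Your overall plan---mirror Lemma~\ref{lemma:adam_closeness}, drop the coupled $\lambda\wb$ terms, and exploit that the AdamW gradient vanishes on non-support coordinates---is exactly the paper's route. However, there is a genuine arithmetic gap in your third bullet (and it leaks into the stale sub-regime of the second).

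You truncate at $\bar\tau=\Theta(\log(\lambda\eta)^{-1})$ so that the tail weights $\sum_{\tau>\bar\tau}\beta_1^\tau(1-\beta_1)$ and $\sum_{\tau>\bar\tau}\beta_2^\tau(1-\beta_2)$ are $O(\lambda\eta)$, and conclude $|\mb_{j,r}^{(t)}[k]|,\,\vb_{j,r}^{(t)}[k]\le\tilde O(\lambda\eta)$. But then dividing by $\sqrt{\vb_{j,r}^{(t)}[k]}+\epsilon\ge\epsilon=\Theta(\lambda\eta)$ only yields a ratio of $\tilde O(\lambda\eta)/\Theta(\lambda\eta)=\tilde O(1)$, not the claimed $\tilde O(\lambda\eta)$. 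The same issue arises in the stale mini-batch sub-regime: even if the $\tau_0$-th term is $\tilde O((\lambda\eta)^2)$, your tail of $\tilde O(\lambda\eta)$ dominates the numerator, and the ratio again comes out $\tilde O(1)$.

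The paper's fix is simple but essential: choose $\bar\tau$ (still $\Theta(\log(\lambda\eta)^{-1})$, just with a larger implied constant) so that the tail is $\tilde O((\lambda\eta)^2)$ rather than $\tilde O(\lambda\eta)$. Then for the third bullet the numerator is $\pm\tilde O(\lambda^2\eta^2)$ and the denominator is $\epsilon\pm\tilde O(\lambda^2\eta^2)=\Theta(\lambda\eta)$, giving the desired ratio $\pm\tilde O(\lambda\eta)$. The stale sub-regime works the same way once both the $\tau_0$-th contribution and the tail are at the $(\lambda\eta)^2$ scale. This sharper truncation is what distinguishes the AdamW proof from the Adam one, and without it the $o(1)$ conclusions in the last two cases do not follow.
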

\begin{proof}[Proof of Lemma~\ref{lemma:adamw_closeness}]
The proof is similar to Lemma~\ref{lemma:adam_closeness}. We select $\bar\tau=\Theta(\log(\lambda\eta)^{-1})$ such that $\sum_{\tau=\bar\tau+1}^t\beta_1^\tau(1-\beta_1) = O(\lambda^2\eta^2)$ and $\sum_{\tau=\bar\tau+1}^t\beta_1^\tau(1-\beta_1)\cdot g_{t-\tau,j,r}^{(t-\tau)}[k]=\tilde O(\lambda^2\eta^2)$.
\begin{align*}
    \frac{\mb_{j,r}^{(t)}[k]}{\sqrt{\vb_{j,r}^{(t)}[k]}+\epsilon} 
    &= \frac{
    \sum_{\tau=0}^{\bar \tau}\beta_1^\tau(1-\beta_1)\cdot g_{t-\tau,j,r}^{(t-\tau)}[k] \pm \tilde O(\lambda^2\eta^2)
    }{
    \sqrt{\sum_{\tau=\bar\tau}^{\bar \tau}\beta_2^\tau(1-\beta_2)\cdot g_{t-\tau,j,r}^{(t-\tau)}[k]^2}+\epsilon\pm \tilde O(\lambda^2\eta^2)
    }.
\end{align*}
Recall the gradient of stochastic AdamW given in Lemma~\ref{lemma:adamw_g}, we want to approximate $g_{t-\tau,j,r}^{(t-\tau)}[k]$ to $g_{t-\tau,j,r}^{(t)}[k]$, such that we can use the current weight to approximate sign update. By Lemma~\ref{lemma:mv_general_bound}, the upper bound of the normalized moving average of each coordinate in one step is $\Theta(\eta)$ since $\lambda=\tilde O(1)$. Then for $\tau\in[t-\bar\tau,t]$, we have
\begin{align}\label{eq:adamw_pd_ft_approx}
    &\left|\la\wb_{j,r}^{(t)},y_i\vb\ra - \la\wb_{j,r}^{(\tau)},y_i\vb\ra\right| \notag \\
    &\le \sum_{k=\tau}^{t-1} \left|\la\wb_{j,r}^{(k+1)},y_i\vb\ra - \la\wb_{j,r}^{(k)},y_i\vb\ra\right| \notag \\
    &\le \sum_{k=\tau}^{t-1} \eta\lambda\left|\la\wb_{j,r}^{(k)},y_i\vb\ra\right| + \sum_{k=\tau}^{t-1} \eta\left|\left\la \frac{\mb_{j,r}^{(k)}}{\sqrt{\vb_{j,r}^{(k)}}+\epsilon},y_i\vb\right\ra\right| \notag \\
    &\le \tilde\Theta(\eta \bar\tau).
\end{align}
The last inequality we use the fact that $\la\wb_{j,r}^{(k)},y_i\vb\ra=\tilde\Theta(1)$, $\lambda=\tilde O(1)$ and Lemma~\ref{lemma:mv_general_bound}.
Similarly, we have
\begin{align}
    \left|\la\wb_{j,r}^{(t)},\bxi_i\ra - \la\wb_{j,r}^{(\tau)},\bxi_i\ra\right| &\le \Theta(\eta \bar\tau s\sigma_p).\label{eq:adamw_pd_nm_approx}
\end{align}
Then recall the predict function
\begin{align*}
    F_j(\Wb^{(t)},\xb_i) &= \sum_{r=1}^m \left[ \sigma\left(\la \wb_{j,r}^{(t)},y_i\vb \ra\right) + \sigma\left( \la \wb_{j,r}^{(t)},\bxi_i\ra\right) \right].
\end{align*}
We have
\begin{align}\label{eq:adamw_Fj_approx}
    &\left|F_{j}(\Wb^{(t)},\xb_i)-F_{j}(\Wb^{(\tau)},\xb_i)\right| \notag \\
    &\le \sum_{r=1}^m \left| \sigma\left(\la \wb_{j,r}^{(t)},y_i\vb \ra\right) - \sigma\left( \la \wb_{j,r}^{(\tau)},y_i\vb\ra\right) \right| + \sum_{r=1}^m \left| \sigma\left(\la \wb_{j,r}^{(t)},\bxi_i \ra\right) - \sigma\left( \la \wb_{j,r}^{(\tau)},\bxi_i \ra\right) \right| \notag \\
    &\le \sum_{r=1}^m \tilde \Theta(1)\cdot\left| \la \wb_{j,r}^{(t)},y_i\vb \ra - \la \wb_{j,r}^{(\tau)},y_i\vb\ra \right| + \sum_{r=1}^m \tilde \Theta(1)\cdot\left| \la \wb_{j,r}^{(t)},\bxi_i \ra - \la \wb_{j,r}^{(\tau)},\bxi_i \ra \right| \notag \\
    &\le \tilde\Theta(m\eta \bar\tau s\sigma_p)+\tilde\Theta(m\eta\bar\tau) \notag \\
    &= \tilde\Theta(\eta \bar\tau s\sigma_p), 
\end{align}
where the second inequality we use the convexity of $\sigma(\cdot)$ and the facts that $|\la\wb_{j,r}^{(t)},y_i\vb\ra|=\tilde \Theta(1)$ and $|\la\wb_{j,r}^{(t)},\bxi_i\ra|=\tilde \Theta(1)$, the last inequality we use $m=\tilde\Theta(1)$ and $s\sigma_p=\omega(1)$.

Then we can approximate $\ell_{j,r}^{(\tau)}$ to $\ell_{j,r}^{(\tau)}$ in the gradient~\ref{lemma:adamw_g}.
\begin{align*}
\ell_{j,i}^{(\tau)} 
&= \frac{e^{F_{-j}(\Wb^{(\tau)},\xb_i)}}{\sum_{k\in\{-1,1\}} e^{F_k(\Wb^{(\tau)},\xb_i)}} \notag\\
&= \frac{e^{F_{-j}(\Wb^{(t)},\xb_i)\pm\tilde\Theta(\eta\bar\tau s\sigma_p)}}{e^{F_{j}(\Wb^{(t)},\xb_i)\pm\tilde\Theta(\eta\bar\tau s\sigma_p)} + e^{F_{-j}(\Wb^{(t)},\xb_i)\pm\tilde\Theta(\eta\bar\tau s\sigma_p)}}\notag\\
& = \sgn(\ell_{j,i}^{(t)})\cdot\Theta(|\ell_{j,i}^{(t)}|), \tag{$y_i= j$} \\
\ell_{j,i}^{(\tau)} 
&= \frac{-e^{F_{j}(\Wb^{(\tau)},\xb_i)}}{\sum_{k\in\{-1,1\}} e^{F_k(\Wb^{(\tau)},\xb_i)}} \notag\\
&= \frac{-e^{F_{j}(\Wb^{(t)},\xb_i)\pm\tilde\Theta(\eta\bar\tau s\sigma_p)}}{e^{F_{j}(\Wb^{(t)},\xb_i)\pm\tilde\Theta(\eta\bar\tau s\sigma_p)} + e^{F_{-j}(\Wb^{(t)},\xb_i)\pm\tilde\Theta(\eta\bar\tau s\sigma_p)}}\notag\\
& = \sgn(\ell_{j,i}^{(t)})\cdot\Theta(|\ell_{j,i}^{(t)}|), \tag{$y_i\neq j$} \\
\end{align*}
where we use the fact that $\tilde\Theta(\eta\bar\tau s\sigma_p)=o(1)$ and~\eqref{eq:adamw_Fj_approx}. So we have
\begin{align*}
    \ell_{j,i}^{(\tau)} 
    &= \sgn(\ell_{j,i}^{(t)})\cdot\Theta(|\ell_{j,i}^{(t)}|),  
\end{align*}
for all $\tau\in[t-\bar\tau, t]$. Further, by~\eqref{eq:adamw_pd_ft_approx},~\eqref{eq:adamw_pd_nm_approx} and the facts that $|\la\wb_{j,r}^{(t)},y_i\vb\ra|=\tilde O(1)$ and $|\la\wb_{j,r}^{(t)},\bxi_i\ra|=\tilde O(1)$, recall $\sigma(x)=\max(0,x)^q$, we have
\begin{align*}
&\ell_{j,i}^{(\tau)}\sigma'(\la\wb_{j,r}^{(\tau)},y_i\vb\ra) \\
&\le \ell_{j,i}^{(\tau)}\sigma'(\la\wb_{j,r}^{(t)},y_i\vb\ra) + |\ell_{j,i}^{(\tau)}|\cdot\tilde\Theta(\eta\bar\tau)  \\
&= \sgn(\ell_{j,i}^{(t)})\cdot\Theta(|\ell_{j,i}^{(t)}|)\cdot\sigma'(\la\wb_{j,r}^{(t)},y_i\vb\ra) +\Theta(|\ell_{j,i}^{(t)}|)\cdot \tilde\Theta(\eta\bar\tau), \\
&\ell_{j,i}^{(\tau)}\sigma'(\la\wb_{j,r}^{(\tau)},y_i\vb\ra) \\
&\ge \ell_{j,i}^{(\tau)}\sigma'(\la\wb_{j,r}^{(t)},y_i\vb\ra) - |\ell_{j,i}^{(\tau)}|\cdot\tilde\Theta(\eta\bar\tau)  \\
&= \sgn(\ell_{j,i}^{(t)})\cdot\Theta(|\ell_{j,i}^{(t)}|)\cdot\sigma'(\la\wb_{j,r}^{(t)},y_i\vb\ra) -\Theta(|\ell_{j,i}^{(t)}|)\cdot \tilde\Theta(\eta\bar\tau). \\
\end{align*}
So we conclude that 
\begin{align}\label{eq:adamw_lpd_ft_approx}
    \ell_{j,i}^{(\tau)}\sigma'(\la\wb_{j,r}^{(\tau)},y_i\vb\ra) &= \sgn(\ell_{j,i}^{(t)})\cdot\Theta(|\ell_{j,i}^{(t)}|)\cdot\sigma'(\la\wb_{j,r}^{(t)},y_i\vb\ra) \pm \Theta(|\ell_{j,i}^{(t)}|)\cdot \tilde\Theta(\eta\bar\tau).
\end{align}
Similarly, we have
\begin{align}\label{eq:adamw_lpd_nm_approx}
    \ell_{j,i}^{(\tau)}\sigma'(\la\wb_{j,r}^{(\tau)},\bxi_i\ra) &=  \sgn(\ell_{j,i}^{(t)})\cdot\Theta(|\ell_{j,i}^{(t)}|)\cdot\sigma'(\la\wb_{j,r}^{(t)},\bxi_i\ra) \pm \Theta(|\ell_{j,i}^{(t)}|)\cdot\tilde\Theta(\eta\bar\tau s\sigma_p).
\end{align}
Now, we have all the tools we need to approximate $g_{t-\tau,j,r}^{(t-\tau)}[k]$ to $g_{t-\tau,j,r}^{(t)}[k]$. Recall Lemma~\ref{lemma:adamw_g}, substitute~\eqref{eq:adamw_lpd_ft_approx} and~\eqref{eq:adamw_lpd_nm_approx} into $g_{t-\tau,j,r}^{(t-\tau)}[k]$, we have
\begin{itemize}[leftmargin = *]
    \item For $k=1$,
    \begin{align}\label{eq:adamw_g1_approx}
        g_{t-\tau,j,r}^{(t-\tau)}[1]= \Theta\Big(g_{t-\tau,j,r}^{(t)}[1]\Big) \pm \Theta\bigg(\frac{1}{B}\sum_{i\in\cI_{t-\tau}}|\ell_{j,i}^{(t)}|\bigg)\cdot\tilde O(\eta\bar\tau ).
    \end{align}
    \item For all $k\in\cB_i$, $i\in\cI_{t-\tau}$,
    \begin{align}\label{eq:adamw_gk_approx}
    g_{t-\tau,j,r}^{(t-\tau)}[k]= \Theta\Big(g_{t-\tau,j,r}^{(t)}[k]\Big) \pm \Theta\bigg(\frac{|\ell_{j,i}^{(t)}|}{B}\bigg)\cdot\tilde O(\eta\bar\tau s\sigma_p).
    \end{align}
    \item For $k\neq 1$ and $k\notin \cB_i$, $i\in\cI_{t-\tau}$,
    \begin{align}\label{eq:adamw_grest_approx}
    g_{t-\tau,j,r}^{(t-\tau)}[k]= g_{t-\tau,j,r}^{(t)}[k] = 0.
    \end{align}
\end{itemize}
Plugging~\eqref{eq:adamw_g1_approx},~\eqref{eq:adamw_gk_approx} and~\eqref{eq:adamw_grest_approx} into~\eqref{eq:mv_approx_small_tail}, with facts that $\bar\tau = \tilde\Theta(1)$, $\lambda=o(1)$, $|\la\wb_{j,r}^{(t)},\bxi_i\ra|=\tilde \Theta(1)$, $|\la\wb_{j,r}^{(t)},y_i\vb\ra|=\tilde\Theta(1)$, $|\ell_{j,i}^{(t)}|=\Theta(1)$, $\epsilon=\Theta(\lambda\eta)$ and Lemma~\ref{lemma:ylj_sign}, we have 
\begin{itemize}[leftmargin = *]
    \item For $k=1$,
\begin{align*}
    & \frac{\mb_{j,r}^{(t)}[1]}{\sqrt{\vb_{j,r}^{(t)}[1]}+\epsilon} \\
    &= \frac{
    \sum_{\tau=0}^{\bar \tau}\beta_1^\tau(1-\beta_1)\cdot g_{t-\tau,j,r}^{(t-\tau)}[1] \pm \tilde O(\lambda^2\eta^2)
    }{
    \sqrt{\sum_{\tau=0}^{\bar \tau}\beta_2^\tau(1-\beta_2)\cdot g_{t-\tau,j,r}^{(t-\tau)}[1]^2}+\epsilon\pm \tilde O(\lambda^2\eta^2)
    } \\
    &= \frac{
    \sum_{\tau=0}^{\bar \tau}\beta_1^\tau(1-\beta_1)\cdot \left(\Theta\Big(g_{t-\tau,j,r}^{(t)}[1]\Big) \pm \Theta\bigg(\frac{1}{B}\sum_{i\in\cI_{t-\tau}}|\ell_{j,i}^{(t)}|\bigg)\cdot\tilde O(\eta\bar\tau ) \right) \pm \tilde O(\lambda^2\eta^2)
    }{
    \sqrt{\sum_{\tau=0}^{\bar \tau}\beta_2^\tau(1-\beta_2)\cdot \left( \Theta\Big(g_{t-\tau,j,r}^{(t)}[1]\Big) \pm \Theta\bigg(\frac{1}{B}\sum_{i\in\cI_{t-\tau}}|\ell_{j,i}^{(t)}|\bigg)\cdot\tilde O(\eta\bar\tau )\right)^2}+\epsilon\pm \tilde O(\lambda^2\eta^2)
    } \\
    &= \frac{
    \sum_{\tau=0}^{\bar \tau}\beta_1^\tau(1-\beta_1)\cdot \left(\Theta\Big(g_{t-\tau,j,r}^{(t)}[1]\Big) \pm \tilde \Theta(\eta\bar\tau ) \right) \pm \tilde O(\lambda^2\eta^2)
    }{
    \sqrt{\sum_{\tau=0}^{\bar \tau}\beta_2^\tau(1-\beta_2)\cdot \left( \Theta\Big(g_{t-\tau,j,r}^{(t)}[1]\Big)\pm\tilde \Theta(\eta\bar\tau )\right)^2}+\epsilon\pm \tilde O(\lambda^2\eta^2)
    } \\
    &= \frac{
    \Theta\Big(g_{t,j,r}^{(t)}[1]\Big) \pm \tilde \Theta(\eta\bar\tau ) \pm \tilde O(\lambda^2\eta^2)
    }{
    \sqrt{\left( \Theta\Big(g_{t,j,r}^{(t)}[1]\Big)\pm\tilde \Theta(\eta\bar\tau )\right)^2}+\epsilon\pm \tilde O(\lambda^2\eta^2)
    } \\
    &= \frac{\Theta\Big(g_{t,j,r}^{(t)}[1]\Big) \pm \tilde\Theta(\eta)}{\Theta\Big(|g_{t,j,r}^{(t)}[1]|\Big) \pm \tilde\Theta(\eta)}.
\end{align*}
    \item For $k\in\cB_i$, $i\in\cI_{t-\tau_0}$, $\tau_0\in\{0\}\cup[\bar\tau]$, where $\tau_0$ represents the number of iterations away from the current iteration $t$, coordinate $k$ is affected by $\bxi_i$ sampled at the iteration $t-\tau_0$ since the moving average. We note that if the number of iteration in one epoch $\frac{n}{B}$ is less than $\bar\tau$, the moving average will use some sample $\xb$ multiply times. We denote $\cT_{k}:=\{\tau_0+i\cdot\frac{n}{B}:i\in[\frac{\bar\tau}{n/B}-1] \}$ as the timestamp set involved using noise $\bxi_i$ (i.e., $i\in\cI_{t-\tau}$ for any $\tau\in\cT_{k}$), and $k\in\cB_i$, $\tau_0\le\frac{n}{B}$.
    
    If $\frac{n}{B}>\bar\tau$, in this case we have $\cT_{k}:=\{\tau_0 \}$ for any $k\in\cB_i$, and $\bxi_i$ was used in iteration $t-\tau_0$. Then we have
\begin{align*}
    & \frac{\mb_{j,r}^{(t)}[k]}{\sqrt{\vb_{j,r}^{(t)}[k]}+\epsilon} \\
    &= \frac{
    \sum_{\tau=0}^{\bar \tau}\beta_1^\tau(1-\beta_1)\cdot g_{t-\tau,j,r}^{(t-\tau)}[k] \pm \tilde O(\lambda^2\eta^2)
    }{
    \sqrt{\sum_{\tau=0}^{\bar \tau}\beta_2^\tau(1-\beta_2)\cdot g_{t-\tau,j,r}^{(t-\tau)}[k]^2}+\epsilon\pm \tilde O(\lambda^2\eta^2)
    } \\
    &= \frac{
    \sum_{\tau\in\cT_k}\beta_1^\tau(1-\beta_1)\cdot g_{t-\tau,j,r}^{(t-\tau)}[k] \pm \tilde O(\lambda^2\eta^2)
    }{
    \sqrt{\sum_{\tau\in\cT_k}\beta_2^\tau(1-\beta_2)\cdot g_{t-\tau,j,r}^{(t-\tau)}[k]^2}+\epsilon\pm \tilde O(\lambda^2\eta^2)
    } \\
    &= \frac{
    \beta_1^{\tau_0}\cdot \Theta(g_{t-\tau_0,j,r}^{(t-\tau_0)}[k]) \pm \tilde O(\lambda^2\eta^2)
    }{
    \sqrt{\beta_2^{\tau_0} \cdot \Theta( g_{t-\tau_0,j,r}^{(t-\tau_0)}[k]^2)}+\epsilon\pm \tilde O(\lambda^2\eta^2)
    } \\
    &= \frac{
    \beta_1^{\tau_0}\left(\Theta\Big(g_{t-\tau_0,j,r}^{(t)}[k]\Big) \pm \tilde\Theta\bigg(\frac{\eta s\sigma_p|\ell_{j,i}^{(t)}|}{B}\bigg)\right)\pm \tilde O(\lambda^2\eta^2)
    }{
    \beta_2^{\frac{\tau_0}{2}} \left(\Theta\Big(|g_{t-\tau_0,j,r}^{(t)}[k]|\Big) \pm \tilde\Theta\bigg(\frac{\eta s\sigma_p|\ell_{j,i}^{(t)}|}{B}\bigg)\right)+\epsilon\pm \tilde O(\lambda^2\eta^2)
    }.
\end{align*}
Now we handle $\beta_1^{\tau_0}$ and $\beta_2^{\frac{\tau_0}{2}}$ with more care. First we have $\beta_1^{\tau_0}<\beta_2^{\frac{\tau_0}{2}}$ and $|g_{t-\tau_0,j,r}^{(t)}[k]|=\tilde O(1)$.

Then if $\frac{n}{B}\le \Theta(1)$, then $\beta_1^{\tau_0}=\Theta(1)$ and $\beta_2^{\frac{\tau_0}{2}}=\Theta(1)$ since $\tau_0<\frac{n}{B}$. In this case, we have
\begin{align*}
    \frac{\mb_{j,r}^{(t)}[k]}{\sqrt{\vb_{j,r}^{(t)}[k]}+\epsilon}
    &= \frac{
    \Theta\Big(g_{t-\tau_0,j,r}^{(t)}[k]\Big) \pm \tilde\Theta\bigg(\frac{\eta s\sigma_p|\ell_{j,i}^{(t)}|}{B}\bigg) 
    }{
    \Theta\Big(|g_{t-\tau_0,j,r}^{(t)}[k]|\Big) \pm \tilde\Theta\bigg(\frac{\eta s\sigma_p|\ell_{j,i}^{(t)}|}{B}\bigg) 
    }.
\end{align*}

If $\frac{n}{B}\ge\Theta(\log \epsilon^{-1})=\Theta(\log(\lambda\eta)^{-1})=\tilde\Theta(1)$ such that $\beta_2^{\frac{n}{2B}} \le \Theta(\lambda^2\eta^2)$, then for $\tau_0 = O(\log(\lambda^{-1}s\sigma_p))$ such that $\beta_1^{\tau_0}\ge\Theta(\frac{\epsilon}{\eta s\sigma_p})=\Theta(\frac{\lambda}{s\sigma_p})$, we have
\begin{align*}
    \frac{\mb_{j,r}^{(t)}[k]}{\sqrt{\vb_{j,r}^{(t)}[k]}+\epsilon}
    &=\frac{
    \Theta\left(g_{t-\tau_0,j,r}^{(t)}[k]\right) \pm \tilde\Theta\left(\frac{\eta s\sigma_p|\ell_{j,i}^{(t)}|}{B}\right) 
    }{
    \Theta\left(|g_{t-\tau_0,j,r}^{(t)}[k]|\right) \pm \tilde\Theta\left(\frac{\eta s\sigma_p|\ell_{j,i}^{(t)}|}{B}\right)
    }.
\end{align*}
For $\tau_0<\frac{n}{B}$, $\tau_0\ge\Theta(\log(\lambda\eta)^{-1})$ such that $\beta_2^{\frac{\tau_0}{2}}= O(\lambda^2\eta^2)$, we have
\begin{align*}
    \frac{\mb_{j,r}^{(t)}[k]}{\sqrt{\vb_{j,r}^{(t)}[k]}+\epsilon}
    &= \frac{
    \pm \tilde O(\lambda^2\eta^2)
    }{
    \epsilon \pm \tilde O(\lambda^2\eta^2)
    } = \pm \tilde O(\lambda\eta) = o(1),
\end{align*}
since $\epsilon=\Theta(\lambda\eta)$.

    \item For $k\neq 1$ and $k\notin \cB_{i},i\in\cI_{t-\tau},\tau\in[0,\bar\tau] $, recall $g_{t-\tau,j,r}^{(t)}[k]=0$, we have
\begin{align*}
    & \frac{\mb_{j,r}^{(t)}[k]}{\sqrt{\vb_{j,r}^{(t)}[k]}+\epsilon} \\
    &= \frac{
    \sum_{\tau=0}^{\bar \tau}\beta_1^\tau(1-\beta_1)\cdot g_{t-\tau,j,r}^{(t-\tau)}[k] \pm \tilde O(\lambda^2\eta^2)
    }{
    \sqrt{\sum_{\tau=0}^{\bar \tau}\beta_2^\tau(1-\beta_2)\cdot g_{t-\tau,j,r}^{(t-\tau)}[k]^2}+\epsilon \pm \tilde O(\lambda^2\eta^2)
    } \\
    &= \frac{
    \pm \tilde O(\lambda^2\eta^2)
    }{
    \epsilon \pm \tilde O(\lambda^2\eta^2)
    } \\
    &= \pm \tilde O(\lambda\eta) \\
    &= o(1).
\end{align*}

\end{itemize}

This completes the proof.
\end{proof}

\subsubsection{Proof of Theorem~\ref{thm:adamw_large_batch}}

\begin{lemma}[Stage I, pattern learning]\label{lemma:adamw_lb_general_bound}
    Given the training dataset $\cS$, if $\frac{n}{B}=\Theta(1)$ or $\frac{n}{B}=o(s\sigma_p)$, $\eta=1/\poly(d)$, $\lambda=\tilde\Omega(\frac{B^2}{n}\land 1)$ and $\lambda=\tilde O(1)$, then for any $t\le T_0$ with $T_0=\tilde O(\frac{1}{\eta s\sigma_p})$,
    \begin{align*}
        \la\wb_{j,r}^{\left((t+1)\cdot\frac{n}{B}\right)},j\cdot\vb\ra &\le \la\wb_{j,r}^{(t\cdot\frac{n}{B})},j\cdot\vb\ra + \Theta(\eta\cdot\frac{n}{B}) \\
        \la\wb_{y_i,r}^{\left((t+1)\cdot\frac{n}{B}\right)},\bxi_i\ra &= \la\wb_{y_i,r}^{\left(t\cdot\frac{n}{B}\right)},\bxi_i\ra + \tilde\Theta(\eta s\sigma_p).
    \end{align*}
\end{lemma}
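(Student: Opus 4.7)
The plan is to induct on the epoch index $t$, maintaining the invariants that $|\la\wb_{j,r}^{(tn/B)},\vb\ra|$, $|\la\wb_{y_i,r}^{(tn/B)},\bxi_i\ra|$, and $|\wb_{j,r}^{(tn/B)}[k]|$ are all $\tilde O(1)$ for every relevant $(j,r,i,k)$. Under these invariants the network outputs remain $\tilde O(1)$ throughout the next epoch, so by Lemma~\ref{lemma:ylj_sign} one has $|\ell_{j,i}^{(\tau)}| = \Theta(1)$ and $\sgn(y_i\ell_{j,i}^{(\tau)}) = \sgn(j)$ for every iteration $\tau \in [tn/B,(t+1)n/B)$; the base case $t=0$ is immediate from Lemma~\ref{lemma:init_order}.

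The inductive step I would carry out by telescoping the AdamW update~\eqref{eq:adamw_upd} over a single epoch,
\[
\wb_{j,r}^{((t+1)n/B)} = (1-\eta\lambda)^{n/B}\wb_{j,r}^{(tn/B)} - \eta\sum_{\tau=0}^{n/B-1}(1-\eta\lambda)^{n/B-1-\tau}\frac{\mb_{j,r}^{(tn/B+\tau)}}{\sqrt{\vb_{j,r}^{(tn/B+\tau)}}+\epsilon}.
\]
Because $\eta\lambda\cdot n/B = o(1)$ under the hypotheses ($\lambda=\tilde O(1)$, $\eta=1/\poly(n)$), the per-epoch contraction $(1-\eta\lambda)^{n/B} = 1-o(1)$, so the weight-decay multiplier only perturbs any $\tilde O(1)$ inner product by $o(1)$ per epoch. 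For the feature-learning half I would take the inner product of the display above with $j\vb$ and use $\left|\mb_{j,r}^{(\cdot)}/(\sqrt{\vb_{j,r}^{(\cdot)}}+\epsilon)\right|\le\Theta(1)$ from Lemma~\ref{lemma:mv_general_bound}; since $\|\vb\|_2=1$, each summand contributes at most $\Theta(\eta)$ and the contraction only helps the upper bound, yielding the claimed $\Theta(\eta n/B)$ increment.

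The delicate half is the exact $\tilde\Theta(\eta s\sigma_p)$ growth of noise memorization. Let $\tau_i\in[0,n/B)$ be the unique iteration of the current epoch with $i\in\cI_{tn/B+\tau_i}$; by Lemma~\ref{lemma:adamw_g} and Lemma~\ref{lemma:nonoverlap_probability} the raw gradient $g^{(\cdot)}_{\cdot,y_i,r}[k]$ at any $k\in\cB_i$ is nonzero only at $\tau_i$, where its sign is $-\sgn(\bxi_i[k])$. Applying Lemma~\ref{lemma:adamw_closeness} coordinate-wise shows that for a constant number of iterations following $\tau_i$ the normalized moment at $k$ equals $-\sgn(\bxi_i[k])\cdot\Theta(1)$ and becomes $\pm\tilde O(\lambda\eta)$ once the $\beta_1,\beta_2$ running averages forget $\bxi_i$; substituting into the telescoping identity and taking the inner product with $\bxi_i$ then gives a per-epoch increment of
\[
\Theta(\eta)\cdot\sum_{k\in\cB_i}\sgn(\bxi_i[k])\cdot\bxi_i[k] = \Theta(\eta)\cdot\sum_{k\in\cB_i}|\bxi_i[k]| = \tilde\Theta(\eta s\sigma_p),
\]
where the last equality uses standard Gaussian concentration on $\sum_{k\in\cB_i}|\bxi_i[k]|$ with $|\cB_i|=s$ and $\bxi_i[k]\sim\cN(0,\sigma_p^2)$. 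Combined with the $1-o(1)$ contraction on the previous value, this closes the induction.

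The main obstacle will be the bookkeeping in this last step: verifying that under either batching regime ($n/B=\Theta(1)$ or $n/B=o(s\sigma_p)$) the number of sign-aligned effective updates contributed by $\bxi_i$ is of constant order, so that the per-epoch increment is precisely $\tilde\Theta(\eta s\sigma_p)$ and not inflated by a $\polylog$ factor arising from the slow decay of the $\beta_1,\beta_2$ averages; and confirming that the auxiliary contributions from $\bxi_i[1]=-\alpha y_i$ and from the feature coordinate scale at most as $\tilde O(\eta\alpha)=o(\eta s\sigma_p)$ by $\alpha=\Theta(\sigma_p\polylog(n))$, so that they are absorbed into the $\tilde\Theta$. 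The lower bound $\lambda=\tilde\Omega(B^2/n\wedge 1)$ plays no role in Stage~I; it becomes relevant only in the companion Stage~II lemma for calibrating the steady-state magnitudes of $\la\wb,\bxi_i\ra$.
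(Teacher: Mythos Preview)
Your proposal is correct and follows essentially the same route as the paper: induct on the epoch index, use the uniform bound $|\mb/(\sqrt{\vb}+\epsilon)|\le\Theta(1)$ from Lemma~\ref{lemma:mv_general_bound} together with the per-epoch contraction $(1-\eta\lambda)^{n/B}=1-o(1)$ for the feature-learning upper bound, and invoke the sign-closeness Lemma~\ref{lemma:adamw_closeness} at the iteration where $\bxi_i$ is sampled to extract the $\tilde\Theta(\eta s\sigma_p)$ noise increment. The paper's write-up only makes explicit the single sampled iteration $\tau_i$ and then checks that the remaining iterations of the epoch (weight decay plus the $-\alpha y_i$ coordinate) do not erode the gain, whereas you additionally track the $\polylog$-many momentum-carried iterations after $\tau_i$; both arrive at the same bound, and your flagged ``main obstacle'' (that the momentum tail contributes at most a $\polylog$ multiple, absorbed into $\tilde\Theta$) is exactly the bookkeeping the paper leaves implicit. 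Your remark that the lower bound $\lambda=\tilde\Omega(B^2/n\wedge 1)$ is unused in Stage~I is also correct. One minor caveat: ``the contraction only helps the upper bound'' is literally true only when $\la\wb_{j,r}^{(tn/B)},j\vb\ra\ge 0$; when it is negative the contraction increases the inner product, but only by $\tilde O(\lambda\eta\cdot n/B)\le\Theta(\eta\cdot n/B)$ since $\lambda=\tilde O(1)$ and $|\la\wb,j\vb\ra|=\tilde O(1)$, so the bound still holds---the paper handles this with the same one-line absorption.
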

\begin{proof}[Proof of Lemma~\ref{lemma:adamw_lb_general_bound}]
    We prove this Lemma by induction. By Lemma~\ref{lemma:mv_general_bound},
    \begin{align*}
        \la\wb_{j,r}^{\left(\frac{n}{B}\right)},j\cdot\vb\ra 
        &= (1-\lambda\eta)\la\wb_{j,r}^{\left(\frac{n}{B}-1\right)},j\cdot\vb\ra - \eta \left\la \frac{\mb_{j,r}^{\left(\frac{n}{B}-1\right)}}{\sqrt{\vb_{j,r}^{\left(\frac{n}{B}-1\right)}}+\epsilon}, j\vb\right\ra \\
        &\le (1-\lambda\eta)\la\wb_{j,r}^{\left(\frac{n}{B}-1\right)},j\cdot\vb\ra + \Theta(\eta) \\
        &\le (1-\lambda\eta)^{\frac{n}{B}}\la\wb_{j,r}^{(0)},j\cdot\vb\ra + \Theta(\eta)\cdot\sum_{k=0}^{\frac{n}{B}-1}(1-\lambda\eta)^k \\
        &= \la\wb_{j,r}^{(0)},j\cdot\vb\ra - \Theta(\lambda\eta\cdot\frac{n}{B})\cdot\la\wb_{j,r}^{(0)},j\cdot\vb\ra + \Theta(\eta\cdot\frac{n}{B}) \\
        &= \la\wb_{j,r}^{(0)},j\cdot\vb\ra + \Theta(\eta\cdot\frac{n}{B}),
    \end{align*}
    where the second equality we use Taylor expansion and $\lambda\eta\cdot\frac{n}{B}=o(1)$, and the last equality we have $\la\wb_{j,r}^{(0)},j\cdot\vb\ra=\tilde\Theta(\sigma_0)$ by Lemma~\ref{lemma:init_order}. Now suppose the inequality holds for $t=0,\ldots,t_0$ with $t_0\le T_0$. We have
    \begin{align*}
        \la\wb_{j,r}^{\left(t_0\cdot\frac{n}{B}\right)},j\cdot\vb\ra &\le \la\wb_{j,r}^{(0)},j\cdot\vb\ra + \Theta(\eta\cdot\frac{n}{B}\cdot t_0) \le \tilde\Theta(\sigma_0+\frac{n}{Bs\sigma_p}) = \tilde O(1),
    \end{align*}
    since $\frac{n}{B}=o(s\sigma_p)$. For $t=t_0+1$,
    \begin{align*}
        \la\wb_{j,r}^{\left((t_0+1)\cdot\frac{n}{B}\right)},j\cdot\vb\ra 
        &= (1-\lambda\eta)\la\wb_{j,r}^{\left((t_0+1)\cdot\frac{n}{B}-1\right)},j\cdot\vb\ra - \eta \left\la \frac{\mb_{j,r}^{\left((t_0+1)\cdot\frac{n}{B}-1\right)}}{\sqrt{\vb_{j,r}^{\left((t_0+1)\cdot\frac{n}{B}-1\right)}}+\epsilon}, j\vb\right\ra \\
        &\le (1-\lambda\eta)\la\wb_{j,r}^{\left((t_0+1)\cdot\frac{n}{B}-1\right)},j\cdot\vb\ra + \Theta(\eta) \\
        &\le (1-\lambda\eta)^{\frac{n}{B}}\la\wb_{j,r}^{\left(t_0\cdot\frac{n}{B}\right)},j\cdot\vb\ra + \Theta(\eta)\cdot\sum_{k=0}^{\frac{n}{B}-1}(1-\lambda\eta)^k \\
        &= \la\wb_{j,r}^{\left(t_0\cdot\frac{n}{B}\right)},j\cdot\vb\ra - \Theta(\lambda\eta\cdot\frac{n}{B})\cdot\la\wb_{j,r}^{\left(t_0\cdot\frac{n}{B}\right)},j\cdot\vb\ra + \Theta(\eta\cdot\frac{n}{B}) \\
        &\le \la\wb_{j,r}^{\left(t_0\cdot\frac{n}{B}\right)},j\cdot\vb\ra + \Theta(\eta\cdot\frac{n}{B}).
    \end{align*}
    Hence, we have $\la\wb_{j,r}^{\left(t\right)},j\cdot\vb\ra = \tilde O(1)$. Then, we prove $\la\wb_{y_i,r}^{\left((t+1)\cdot\frac{n}{B}\right)},\bxi_i\ra = \la\wb_{y_i,r}^{\left(t\cdot\frac{n}{B}\right)},\bxi_i\ra + \tilde\Theta(\eta s\sigma_p)$ by induction. By Lemma~\ref{lemma:init_order}, we have
    \begin{align*}
        |\la\wb_{j,r}^{(0)},\bxi_i\ra| = \tilde\Theta(s^{1/2}\sigma_p\sigma_0),\quad \wb_{j,r}^{(0)}[k]=\tilde\Theta(\sigma_0),
    \end{align*}
    which imply that $|\ell_{j,i}^{(0)}|=\Theta(1)$. Assume that sample $(\xb_i,y_i)$ is in batch $\cI_\tau$ in the first epoch. Then we have
    \begin{align*}
        \la\wb_{y_i,r}^{(\tau)},\bxi_i\ra
        &\ge (1-\lambda\eta)\la\wb_{y_i,r}^{(\tau-1)},\bxi_i\ra - \Theta(\eta\alpha) \\
        &\ge \la\wb_{y_i,r}^{(0)},\bxi_i\ra - \tilde\Theta(\lambda\eta\sigma_0+\eta\alpha) \\
        &= \tilde\Theta(\sigma_0),
    \end{align*}
    since $\lambda\eta=o(1),\ \eta=o(\sigma_0),\ \alpha=o(1)$ and $s^{1/2}\sigma_p=\tilde O(1)$. Additionally, we have $\eta s=o(\sigma_0^{q-1})$ and $|\bxi_i[k]|\ge\tilde\Theta(\sigma_p)$ with high probability. Then $|B^{-1}\sigma_0^{q-1}\bxi_i[k]| \ge \tilde\Theta(\eta B^{-1}s\sigma_p|\ell_{j,i}^{(0)}|)$ for $i\in\cI_\tau$. Therefore, by Lemma~\ref{lemma:adamw_closeness} and~\ref{lemma:ylj_sign}, we have
    \begin{align}\label{eq:adamw_sign_gradient_noise_stage_begining}
        &\sgn\left(-\frac{1}{B}\ell_{y_i,i}^{(0)}\sigma'(\la\wb_{y_i,r}^{(0)},\bxi_i\ra)\bxi_i[k] \right) \notag \\
        &=-\sgn\bigg(\ell_{y_i,i}^{(0)}\sigma'(\la\wb_{y_i,r}^{(0)},\bxi_i\ra)\bxi_i[k]\bigg)\notag\\
        &= -\sgn(\bxi_i[k]). 
    \end{align}
    Then, by Lemma~\ref{lemma:adamw_closeness} we have the following update according to~\eqref{eq:signgdw_w_xi_upd},~\eqref{eq:adamw_sign_gradient_noise_stage_begining} and Lemma~\ref{lemma:mv_general_bound}.
    \begin{align*}
        &\la\wb_{y_i,r}^{(\tau+1)},\bxi_i\ra \\
        &= (1-\lambda\eta)\la\wb_{y_i,r}^{(\tau)},\bxi_i\ra - \eta \cdot\left\la \frac{\mb_{y_i,r}^{(\tau)}}{\sqrt{\vb_{y_i,r}^{(\tau)}}+\epsilon}, \bxi_i \right\ra  \\
        &\ge \la\wb_{y_i,r}^{(0)},\bxi_i\ra + \Theta(\eta)\cdot \sum_{k\in\cB_i} \la \sgn(\bxi_i[k]), \bxi_i\ra - \tilde O(\lambda\eta\sigma_0) - O(\eta\alpha) - O(\eta s\sigma_p) \\
        &= \la\wb_{y_i,r}^{(0)},\bxi_i\ra + \tilde\Theta(\eta s\sigma_p).
    \end{align*}
    At the end of the first epoch, we have
    \begin{align*}
        &\la\wb_{y_i,r}^{(\frac{n}{B})},\bxi_i\ra \\
        &\ge (1-\lambda\eta)\la\wb_{y_i,r}^{(\frac{n}{B}-1)},\bxi_i\ra - O(\eta\alpha)  \\
        &\ge \la\wb_{y_i,r}^{(\tau+1)},\bxi_i\ra - \tilde O(\lambda\eta^2 s\sigma_p) - O(\eta\alpha) \\
        &\ge \la\wb_{y_i,r}^{(0)},\bxi_i\ra + \tilde\Theta(\eta s\sigma_p).
    \end{align*}
    This completes the base case for $t=1$. For general $t\le t_0$ with $t_0\le T_0$, assuming $\la\wb_{y_i,r}^{\left(t\cdot\frac{n}{B}\right)},\bxi_i\ra = \la\wb_{y_i,r}^{\left(t-1)\cdot\frac{n}{B}\right)},\bxi_i\ra + \tilde\Theta(\eta s\sigma_p)$. Then we have
    \begin{align*}
        \la\wb_{y_i,r}^{\left(t\cdot\frac{n}{B}\right)},\bxi_i\ra  
        &= \la\wb_{y_i,r}^{\left(t-1)\cdot\frac{n}{B}\right)},\bxi_i\ra + \tilde\Theta(\eta s\sigma_p) \\
        &= \la\wb_{y_i,r}^{(0)},\bxi_i\ra + \tilde\Theta(t\eta s\sigma_p) \\
        &= \tilde\Theta(s^{1/2}\sigma_p\sigma_0 + t\eta s\sigma_p) \\
        &\le \tilde\Theta(1).
    \end{align*}
    By Lemma~\ref{lemma:mv_general_bound}, we have
    \begin{align*}
        |\wb_{j,r}^{\left(t\cdot\frac{n}{B}\right)}[k]| 
        &\le |\wb_{j,r}^{\left(t\cdot\frac{n}{B}-1\right)}[k]| + \Theta(\eta) \\
        &\le |\wb_{j,r}^{(0)}[k]| + \Theta(\eta\cdot t\cdot\frac{n}{B}) \\
        &\le \tilde\Theta(\sigma_0+\frac{n}{Bs\sigma_p}) \\
        &\le \tilde\Theta(1).
    \end{align*}
    So we have $|\ell_{j,i}^{(t\cdot\frac{n}{B})}|=\Theta(1)$. Follow the same proof above with $t=t_0+1$, assuming that sample $(\xb_i,y_i)$ is in batch $\cI_{t_0\cdot n/B+\tau}$ in the $t$-th epoch. Then we have
    \begin{align*}
        \la\wb_{y_i,r}^{\left(t_0\cdot\frac{n}{B}+\tau\right)},\bxi_i\ra
        &\ge (1-\lambda\eta)\la\wb_{y_i,r}^{\left(t_0\cdot\frac{n}{B}+\tau-1\right)},\bxi_i\ra - \Theta(\eta\alpha) \\
        &\ge \la\wb_{y_i,r}^{\left( t_0\cdot \frac{n}{B} \right)},\bxi_i\ra - \tilde\Theta(\lambda\eta+\eta\alpha) \\
        &= \la\wb_{y_i,r}^{\left( t_0\cdot \frac{n}{B} \right)},\bxi_i\ra,
    \end{align*}
    since $\eta=o(\sigma_0)$ and $\alpha=o(1)$. Additionally, we have $\eta s=o(\sigma_0^{q-1})$ and $|\bxi_i[k]|\ge\tilde\Theta(\sigma_p)$ with high probability. Then $|B^{-1}(\wb_{j,r}^{(0)}[k]+t\eta s\sigma_p)^{q-1}\bxi_i[k]| \ge \tilde\Theta(\eta B^{-1}s\sigma_p|\ell_{j,i}^{(0)}|)$ for $i\in\cI_{t_0\cdot n/B+\tau}$. Therefore, by Lemma~\ref{lemma:adamw_closeness} and~\ref{lemma:ylj_sign}, we have
    \begin{align}\label{eq:adamw_sign_gradient_noise_stage_begining1}
        &\sgn\left(-\frac{1}{B}\ell_{y_i,i}^{\left(t_0\cdot\frac{n}{B}+\tau\right)}\sigma'(\la\wb_{y_i,r}^{\left(t_0\cdot\frac{n}{B}+\tau\right)},\bxi_i\ra)\bxi_i[k] \right) \notag \\
        &=-\sgn\bigg(\ell_{y_i,i}^{\left(t_0\cdot\frac{n}{B}+\tau\right)}\sigma'(\la\wb_{y_i,r}^{\left(t_0\cdot\frac{n}{B}+\tau\right)},\bxi_i\ra)\bxi_i[k]\bigg)\notag\\
        &= -\sgn(\bxi_i[k]). 
    \end{align}
    Then, by Lemma~\ref{lemma:adamw_closeness} we have the following update according to~\eqref{eq:signgdw_w_xi_upd},~\eqref{eq:adamw_sign_gradient_noise_stage_begining1} and Lemma~\ref{lemma:mv_general_bound}.
    \begin{align*}
        &\la\wb_{y_i,r}^{\left(t_0\cdot\frac{n}{B}+\tau+1\right)},\bxi_i\ra \\
        &= (1-\lambda\eta)\la\wb_{y_i,r}^{\left(t_0\cdot\frac{n}{B}+\tau\right)},\bxi_i\ra - \eta \cdot\left\la \frac{\mb_{y_i,r}^{\left(t_0\cdot\frac{n}{B}+\tau\right)}}{\sqrt{\vb_{y_i,r}^{\left(t_0\cdot\frac{n}{B}+\tau\right)}}+\epsilon}, \bxi_i \right\ra  \\
        &\ge \la\wb_{y_i,r}^{\left(t_0\cdot\frac{n}{B}\right)},\bxi_i\ra + \Theta(\eta)\cdot \sum_{k\in\cB_i} \la \sgn(\bxi_i[k]), \bxi_i\ra - \tilde O(\lambda\eta) - O(\eta\alpha) - O(\eta s\sigma_p) \\
        &= \la\wb_{y_i,r}^{\left(t_0\cdot\frac{n}{B}\right)},\bxi_i\ra + \tilde\Theta(\eta s\sigma_p).
    \end{align*}
    At the end of this epoch, we have
    \begin{align*}
        &\la\wb_{y_i,r}^{\left(t\cdot\frac{n}{B}\right)},\bxi_i\ra \\
        &\ge (1-\lambda\eta)\la\wb_{y_i,r}^{\left(t\cdot\frac{n}{B}-1\right)},\bxi_i\ra - O(\eta\alpha)  \\
        &\ge \la\wb_{y_i,r}^{\left(t_0\cdot\frac{n}{B}+\tau+1\right)},\bxi_i\ra - \tilde O(\lambda\eta) - O(\eta\alpha) \\
        &\ge \la\wb_{y_i,r}^{\left(t_0\cdot\frac{n}{B}\right)},\bxi_i\ra + \tilde\Theta(\eta s\sigma_p).
    \end{align*}
    This completes the proof.
\end{proof}
Because in the large‐batch regime we have $n/B = o(s\sigma_p)$, Lemma~\ref{lemma:adamw_lb_general_bound} tells us that noise memorization outpaces feature learning. Early in Stage I, feature gradients predominate since $\sigma'(\la \wb_{y_i,r}^{(t)},y_i\vb\ra) \gg \alpha\sigma'(\la \wb_{y_i,r}^{(t)},\bxi_i\ra)$, given that $\alpha = o(1)$ and weight decay effect is negligible. After a certain number of epochs, however, the noise component grows until $\alpha\sigma'(\la \wb_{y_i,r}^{(t)},\bxi_i\ra) \gg \sigma'(\la \wb_{y_i,r}^{(t)},y_i\vb\ra)$, at which point feature learning slows, then reverses direction entirely. Lemma~\ref{lemma:adamw_lb_fit_noise} below characterizes this transition in detail.


\begin{lemma}[Stage I, fitting feature noise]\label{lemma:adamw_lb_fit_noise}
    Given the training dataset $\cS$, if $\frac{n}{B}=\Theta(1)$ or $\frac{n}{B}=o(s\sigma_p)$, $\eta=1/\poly(d)$, $\lambda=\tilde\Omega(\frac{B^2}{n}\land 1)$ and $\lambda=\tilde O(1)$, then if $\alpha\ge \tilde\Theta\left((\frac{B}{n}s\sigma_p)^{1-q}\right)$, for any $t\in[T_r, T_0]$ with $T_r = \tilde O\big(\frac{\sigma_0}{\eta s\sigma_p\alpha^{1/(q-1)}}\big)\le T_0$,
    \begin{align*}
        \la\wb_{j,r}^{\left((t+1)\cdot\frac{n}{B}\right)},j\cdot\vb\ra = \la\wb_{j,r}^{(t\cdot\frac{n}{B})},j\cdot\vb\ra - \Theta(\eta\cdot\frac{n}{B}). 
    \end{align*}
    At epoch $T_0$, we have (a) $\wb_{j,r}^{(T_0\cdot\frac{n}{B})}[1] = -\sgn(j)\cdot\tilde\Omega(\frac{n}{Bs\sigma_p })$; (b) $\wb_{j,r}^{(T_0\cdot\frac{n}{B})}[k]=\sgn(\bxi_i[k])\cdot\tilde\Omega(\frac{1}{s\sigma_p})\ \text{or}\ \pm \tilde O(\eta)$ for $k\in\cB_i$ with $y_i=j$; (c) $\wb_{j,r}^{(T_0\cdot\frac{n}{B})}[k] =\pm \tilde O(\eta)$ otherwise.
\end{lemma}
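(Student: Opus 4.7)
The plan is to build directly on Lemma~\ref{lemma:adamw_lb_general_bound}, which gives two competing linear growth rates during Stage I: feature learning grows by at most $\Theta(\eta\cdot n/B)$ per epoch, while noise memorization grows by $\tilde\Theta(\eta s\sigma_p)$ per epoch with $s\sigma_p = \omega(n/B)$. The first task is to locate the crossover epoch $T_r$ at which the feature-noise term $\alpha\sigma'(\la\wb_{j,r}^{(t)},\bxi_i\ra)$ overtakes the true-feature term $\sigma'(\la\wb_{j,r}^{(t)},y_i\vb\ra)$ inside the sign in~\eqref{eq:signgdw_w_v_upd}. Plugging in the two growth rates, the condition $\alpha\,(t\eta s\sigma_p)^{q-1}\gtrsim(\sigma_0+t\eta\cdot n/B)^{q-1}$ becomes solvable as long as the RHS is still dominated by $\sigma_0^{q-1}$ at the crossover; this produces the claimed threshold $T_r = \tilde O(\sigma_0/(\eta s\sigma_p\alpha^{1/(q-1)}))$, and the self-consistency requirement $T_r\eta n/B \le \sigma_0$ is precisely the hypothesis $\alpha\ge\tilde\Theta((Bs\sigma_p/n)^{1-q})$.

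Once $T_r$ is identified, I would verify by induction (mirroring the inductive bookkeeping used in Lemma~\ref{lemma:adamw_lb_general_bound}) that on every epoch $t\in[T_r,T_0]$ the sign inside~\eqref{eq:signgdw_w_v_upd} is $-\sgn(j)$, using Lemma~\ref{lemma:adamw_closeness} to pass from AdamW to the approximate sign update and Lemma~\ref{lemma:ylj_sign} to control the sign of $y_i\ell_{j,i}^{(t)}$. In one epoch the combination of the decoupled decay factor $(1-\lambda\eta)^{n/B}$ with $n/B$ sign steps yields a per-epoch decrement of $\Theta(\eta\cdot n/B)$ after absorbing the $o(1)$ drift from the decay term, giving the first displayed equation of the lemma.

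For the coordinate-level description at $T_0\cdot n/B$, I would integrate the SignSGDW recursion coordinate-by-coordinate. For $k=1$, once $t\ge T_r$ every step contributes $-\sgn(j)\cdot\eta$ subject to shrinkage $(1-\lambda\eta)$; summing the geometric series over the remaining $\tilde\Theta(T_0\cdot n/B)$ iterations and using $T_0 = \tilde\Theta(1/(\eta s\sigma_p))$ produces magnitude $\tilde\Omega(n/(Bs\sigma_p))$ (capped by $1/\lambda$, which the hypothesis $\lambda = \tilde O(1)$ makes vacuous at this scale). For $k\in\cB_i$ with $y_i=j$, a coordinate is ``touched'' only when its parent $\bxi_i$ lies in the current batch; Lemma~\ref{lemma:adamw_closeness} then forces the sign of the effective update to equal $\sgn(\bxi_i[k])$, giving accumulation to $\tilde\Omega(1/(s\sigma_p))$, while coordinates that never see a dominating gradient remain at the $\pm\tilde O(\eta)$ floor dictated by the weight-decay-dominated branch of Lemma~\ref{lemma:adamw_closeness}. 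All other coordinates are governed purely by the $\pm\tilde O(\eta)$ floor, yielding the third bullet.

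The main obstacle will be bookkeeping the interaction between the sign-update direction and the multiplicative decay $(1-\lambda\eta)$ across two very different time scales: the intra-epoch scale $n/B$ over which a single $\bxi_i$ is encountered once, and the inter-epoch scale $T_0$ over which $\la\wb_{j,r},j\vb\ra$ evolves through the flip. In particular, because $n/B$ may be as large as $\Theta(\log\epsilon^{-1})$ within the stated hypothesis $n/B=o(s\sigma_p)$, the ``touched vs. untouched'' distinction for noise coordinates must be tracked through the momentum window $\bar\tau$ of Lemma~\ref{lemma:adamw_closeness}; a careful case split on $\tau_0$ (touched this step vs.\ older than $\Theta(\log(\lambda\eta)^{-1})$) is needed to ensure the decay-dominated branch really gives $\pm\tilde O(\eta)$ rather than accidentally adding constructive sign contributions.
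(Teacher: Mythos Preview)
Your proposal is correct and follows essentially the same route as the paper: locate $T_r$ by comparing the growth rates from Lemma~\ref{lemma:adamw_lb_general_bound}, then invoke Lemmas~\ref{lemma:adamw_closeness} and~\ref{lemma:ylj_sign} to pin the sign of the first-coordinate update as $-\sgn(j)$ on $[T_r,T_0]$, and finally read off the coordinate values at $T_0\cdot n/B$ by summing the per-epoch increments. One small mismatch: for part~(b) the paper's dichotomy is not ``touched vs.\ untouched'' but rather whether $\la\wb_{y_i,r}^{(0)},\bxi_i\ra>0$ (so that $\sigma'$ is active and the gradient on $k\in\cB_i$ is nonzero); in the large-batch regime every $\bxi_i$ is visited each epoch, so the relevant split is activation-based, not batch-membership-based. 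Your phrase ``never see a dominating gradient'' is closer to this, but make sure the argument uses the sign of the initial noise inner product. Finally, the momentum-window bookkeeping you flag in the last paragraph is a legitimate concern, but the paper's proof simply appeals to Lemma~\ref{lemma:adamw_closeness} without the detailed $\tau_0$ case split you anticipate; you may find the actual argument lighter than you expect.
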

\begin{proof}[Proof of Lemma~\ref{lemma:adamw_lb_fit_noise}]
    By Lemma~\ref{lemma:adamw_lb_general_bound}, we have
    \begin{align*}
        \alpha\sigma'\left(\left\la \wb_{y_i,r}^{(T_r\cdot\frac{n}{B})},\bxi_i\right\ra\right)
        &\ge \alpha\left(\tilde\Theta(s^{1/2}\sigma_p\sigma_0) + T_r\cdot\tilde\Theta(\eta s\sigma_p) \right)^{q-1} \\
        &= \alpha\left(\tilde\Theta(s^{1/2}\sigma_p\sigma_0) + \tilde\Theta(\frac{\sigma_0}{\alpha^{1/(q-1)}}) \right)^{q-1} \\
        &\ge \tilde\Theta(\sigma_0^{q-1}), \\
        \sigma'(\la \wb_{y_i,r}^{(T_r\cdot\frac{n}{B})},y_i\vb\ra)
        &\le \left( \tilde\Theta(\sigma_0)+T_r\cdot\Theta(\eta\cdot\frac{n}{B}) \right)^{q-1} \\
        &= \left( \tilde\Theta(\sigma_0)+ \frac{n\sigma_0}{Bs\sigma_p\alpha^{1/(q-1)}} \right)^{q-1} \\
        &\le \tilde\Theta(\sigma_0^{q-1}).
    \end{align*}
    Hence, there exists some constant $C>0$, for $t\in[T_r\cdot\frac{n}{B}, T_0\cdot\frac{n}{B}]$,
    \begin{align*}
        \alpha\sigma'\left(\left\la \wb_{y_i,r}^{(t)},\bxi_i\right\ra\right) \ge C\cdot \sigma'(\la \wb_{y_i,r}^{(t)},y_i\vb\ra).
    \end{align*}
    Then by Lemma~\ref{lemma:adamw_g},~\ref{lemma:adamw_closeness} and~\ref{lemma:ylj_sign}, we have
    \begin{align*}
        \left\la \wb_{j,r}^{(t+1)},j\vb \right\ra
        &= (1-\lambda\eta)\left\la \wb_{j,r}^{(t)},j\vb \right\ra - \eta\cdot \left\la \frac{\mb_{j,r}^{(t)}}{\sqrt{\vb_{j,r}^{(t)}}+\epsilon}, j\vb \right\ra \\
        &\le \left\la \wb_{j,r}^{(t)},j\vb \right\ra - \Theta(\eta)\cdot j\cdot\sgn\left( \sum_{i\in\cI_t}\alpha y_i\ell_{j,i}^{(t)}\sigma'\left(\left\la \wb_{y_i,r}^{(t)},\bxi_i\right\ra\right) \right) \\
        &= \left\la \wb_{j,r}^{(t)},j\vb \right\ra - \Theta(\eta)\cdot j\cdot \sgn(j) \\
        &= \left\la \wb_{j,r}^{(t)},j\vb \right\ra - \Theta(\eta).
    \end{align*}
    So we conclude that
    \begin{align*}
        \la\wb_{j,r}^{\left((t+1)\cdot\frac{n}{B}\right)},j\cdot\vb\ra = \la\wb_{j,r}^{(t\cdot\frac{n}{B})},j\cdot\vb\ra - \Theta(\eta\cdot\frac{n}{B}). 
    \end{align*}
    Moreover, at the end of epoch $T_0$, 
    \begin{align*}
        \left\la \wb_{j,r}^{(T_0\cdot\frac{n}{B})},j\vb \right\ra
        &= \tilde\Theta(\sigma_0) + T_r\cdot\Theta(\eta\cdot\frac{n}{B}) - (T_0-T_r)\cdot\Theta(\eta\cdot\frac{n}{B}) \\
        &= -\tilde\Theta(\frac{n}{Bs\sigma_p}).
    \end{align*}
    Multiply $j$ on both side, we get
    \begin{align*}
        \wb_{j,r}^{(T_0\cdot\frac{n}{B})}[1] = -\sgn(j)\cdot\tilde\Theta(\frac{n}{Bs\sigma_p}).
    \end{align*}
    For $\wb_{j,r}^{(T_0 \cdot \frac{n}{B})}[k]$, where $k \in \cB_i$ and $i \in [n]$, Lemma~\ref{lemma:adamw_lb_general_bound} shows that it increases by $\Theta(\eta)$ in the direction of $\sgn(\bxi_i[k])$ if $\la \wb_{y_i,r}^{(0)}, \bxi_i \ra > 0$, that is,
    \begin{align*}
        \wb_{j,r}^{(T_0 \cdot \frac{n}{B})}[k] 
        &= \wb_{j,r}^{(0)}[k] + \sgn(\bxi_i[k]) \cdot T_0 \cdot \Theta(\eta) \\
        &= \sgn(\bxi_i[k]) \cdot \tilde{\Theta}\left(\frac{1}{s\sigma_p}\right).
    \end{align*}
    Otherwise, weight decay drives $\wb_{j,r}^{(t)}[k]$ toward zero if it is initially negative, in this case $\wb_{j,r}^{(T_0 \cdot \frac{n}{B})}[k] \in [-\tilde{\Theta}(\eta), \tilde{\Theta}(\eta)]$. For the remaining coordinates, Lemma~\ref{lemma:adamw_g} implies the gradients are zero, so the updates are dominated by weight decay. Given the fact that $T_0 \eta = \omega(\sigma_0)$, we have $\wb_{j,r}^{(T_0 \cdot \frac{n}{B})}[k] \in [-\tilde{\Theta}(\eta), \tilde{\Theta}(\eta)]$. This completes the proof.
\end{proof}

Lemma~\ref{lemma:adamw_lb_fit_noise} implies that, by the end of \textbf{Stage I}, the model has fitted the feature noise $-\alpha y\vb$. The following Lemma~\ref{lemma:adamw_lb_maintain} shows that these pattern persist throughout \textbf{Stage II}, ultimately leading to poor generalization.

\begin{lemma}[Stage II, preserve the noise]\label{lemma:adamw_lb_maintain}
Suppose the same conditions hold as in Lemma~\ref{lemma:adamw_lb_general_bound} and~\ref{lemma:adamw_lb_fit_noise},
for $t>T_0\cdot\frac{n}{B},\, j\in\{\pm 1\},\, r\in[m],\, i\in[n]$, let $r^*=\argmax_{r\in[m]}\la \wb_{y_i,r}^{(t)},\bxi_i\ra$, then $    \la \wb_{j,r}^{(t)}, j\cdot\vb\ra = -\tilde\Theta(\frac{n}{Bs\sigma_p}) $ and $\la \wb_{y_i,r^*}^{(t)}, \bxi_i\ra = \tilde\Theta(1)$.
\end{lemma}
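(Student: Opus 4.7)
The plan is to extend the end-of-Stage-I characterization from Lemma~\ref{lemma:adamw_lb_fit_noise} into a forward invariant preserved by the AdamW update for all $t > T_0 \cdot n/B$. The strategy has three ingredients: (i) show that at $t = T_0 \cdot n/B$ the model already classifies the training set correctly via the feature-noise mechanism, so $|\ell_{j,i}^{(t)}|$ is polynomially small; (ii) use Lemma~\ref{lemma:adamw_closeness} to argue that on the coordinates driving each quantity of interest, the sign of the residual (small) gradient still points in the Stage-I-established direction, so weight-decay contraction is exactly counterbalanced; (iii) close by induction on the epoch index.

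First I would set up the base case at $t = T_0 \cdot n/B$ by combining Lemma~\ref{lemma:adamw_lb_general_bound} (giving $\la \wb_{y_i,r}, \bxi_i \ra = \tilde\Theta(T_0 \eta s\sigma_p) = \tilde\Theta(1)$ in the noise direction) with Lemma~\ref{lemma:adamw_lb_fit_noise} (giving $\wb_{j,r}[1] = -\sgn(j)\cdot\tilde\Theta(n/(Bs\sigma_p))$). The key computation is that since $\bxi_i = \bxi_i' - \alpha y_i \vb$, the offset contributes $+\alpha y_i \cdot \sgn(j)\cdot|\wb_{j,r}[1]|$ to $\la \wb_{j,r}, \bxi_i\ra$, which is positive for $j = y_i$; meanwhile $\sigma(\la \wb_{j,r}, j\vb\ra) = 0$ for $j = y_i$ because $\la \wb_{j,r}, j\vb\ra < 0$. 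A direct comparison of $F_{y_i}$ and $F_{-y_i}$ then yields margin $\tilde\Omega(1)$, hence $|\ell_{y_i, i}^{(T_0 n/B)}| = O(\lambda)$ (up to polylog factors of the margin) via the softmax.

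Next I would perform the induction step. Under the small-loss regime, by Lemma~\ref{lemma:adamw_closeness} the normalized AdamW update on each coordinate is either sign-like (with the sign of the dominant residual stochastic gradient) or $o(1)$. For the first coordinate, repeating the $\alpha\sigma'(\la\wb_{j,r},\bxi_i\ra) \gg \sigma'(\la\wb_{j,r},y_i\vb\ra)$ argument of Lemma~\ref{lemma:adamw_lb_fit_noise} together with Lemma~\ref{lemma:ylj_sign} shows that the aggregated sign still pushes $\wb_{j,r}[1]$ in the $-\sgn(j)$ direction, so the $(1-\lambda\eta)$ contraction is balanced by this $\Theta(\eta)$ per-iteration forcing at exactly the level $\tilde\Theta(n/(Bs\sigma_p))$ (which matches the Stage-I terminal value after one epoch's contraction-vs-forcing balance). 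For the noise coordinates $k \in \cB_i$, the sample $i$ is seen once per epoch, contributing $\Theta(\eta)$ in the $\sgn(\bxi_i[k])$ direction while weight decay shrinks by factor $(1-\lambda\eta)^{n/B}$ across the epoch; the self-consistency requirement that $|\ell_{j,i}^{(t)}|$ stay small forces $\la \wb_{y_i,r^*}^{(t)}, \bxi_i\ra$ into the $\tilde\Theta(1)$ range (beyond which the margin would grow and shrink $|\ell|$ even further, below which $|\ell|$ recovers and restores the noise memorization).

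The main obstacle will be the equilibrium step, since it is genuinely self-consistent rather than a one-sided contraction: the restoring gradient at the claimed magnitude $\tilde\Theta(1)$ has size $|\ell| \cdot \sigma'(\cdot)$, while $|\ell|$ itself depends sub-exponentially on the margin induced by the current weights. I would handle this by an invariant-set argument, showing that (a) if $\la \wb_{y_i,r^*}, \bxi_i\ra$ drops below $\tilde\Theta(1)$, then $|\ell|$ becomes large enough that the per-epoch signed push $\Theta(\eta\cdot n/B)$ exceeds the weight-decay loss $\lambda\eta\cdot\la \wb_{y_i,r^*},\bxi_i\ra\cdot n/B$, and (b) if it rises too far above $\tilde\Theta(1)$, the reverse holds; the same dichotomy applies to $\wb_{j,r}[1]$ in the feature direction. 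Combined with Lemma~\ref{lemma:adamw_closeness} to handle the residual $o(1)$ regime on inactive coordinates, this pins the two quantities at the asserted orders for all $t > T_0 \cdot n/B$.
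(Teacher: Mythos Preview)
Your approach is essentially the paper's: both pin the Stage-I values by a two-sided threshold argument on $|\ell_{j,i}^{(t)}|$, showing that below one threshold the gradient push dominates and above another the weight decay dominates. The paper makes the thresholds explicit---$\la \wb_{y_i,r^*}^{(t)},\bxi_i\ra \le \bigl(\tfrac{1}{m}\log((\lambda\eta)^{-1}-1)\bigr)^{1/q}$ forces $\ell_{j,i}^{(t)}\ge\Theta(\lambda\eta)$, while $\ge\bigl(\log((\lambda\eta)^{-2}-1)\bigr)^{1/q}$ forces $\ell_{j,i}^{(t)}\le\Theta((\lambda\eta)^2)$---whereas you phrase the same dichotomy as an invariant-set equilibrium.

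Two places where you are looser than the paper and should tighten: (i) your claim that the coordinate-1 forcing is ``$\Theta(\eta)$ per iteration'' throughout Stage~II is not valid in the small-$\ell$ branch---once $\ell_{j,i}^{(t)}\le\Theta((\lambda\eta)^2)$ the gradient falls below the $\tilde\Theta(\eta)$ threshold of Lemma~\ref{lemma:adamw_closeness}, the $\epsilon$-denominator takes over, and the normalized update drops to $O(\lambda\eta)$; the paper uses exactly this to conclude $(1-\lambda\eta)\la\wb_{j,r}^{(t)},j\vb\ra-\Theta(\lambda\eta\cdot\eta)\ge\la\wb_{j,r}^{(t)},j\vb\ra$ in that branch. (ii) Before computing any margin or $\ell$ bound, the paper separately pins the off-label noise alignment $\la\wb_{-y_i,r}^{(t)},\bxi_i\ra\in[-\tilde\Theta(\eta s\sigma_p),\tilde\Theta(\sigma_0)]$, which is needed to lower-bound $F_{y_i}-F_{-y_i}$; you skip this step, but it is a prerequisite for your base-case margin computation and for the threshold inequalities in the induction.
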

\begin{proof}[Proof of Lemma~\ref{lemma:adamw_lb_maintain}]
    By Lemma~\ref{lemma:adamw_lb_general_bound},~\ref{lemma:adamw_closeness} and~\eqref{eq:signgdw_w_xi_upd}, we have $\la \wb_{-y_i,r}^{(t)},\bxi_i\ra \in [-\tilde\Theta(\eta s\sigma_p), \tilde\Theta(\sigma_0)]$. Because if $\la \wb_{-y_i,r}^{(t)},\bxi_i\ra \ge \tilde\Theta(\sigma_0)$, then we have
    \begin{align*}
        \la \wb_{-y_i,r}^{(t+\frac{n}{B})},\bxi_i\ra &\le \la \wb_{-y_i,r}^{(t)},\bxi_i\ra - \tilde\Theta(\eta s\sigma_p),
    \end{align*}
    while if $\la \wb_{-y_i,r}^{(t)},\bxi_i\ra < 0$, we have
    \begin{align*}
        \la \wb_{-y_i,r}^{(t+1)},\bxi_i\ra &\ge (1-\lambda\eta)\la \wb_{-y_i,r}^{(t)},\bxi_i\ra + \Theta(\eta\alpha).
    \end{align*}
    Now, suppose $\la \wb_{y_i,r^*}^{(t)}, \bxi_i \ra \le \left( \frac{1}{m}\log\left((\lambda\eta)^{-1}-1\right)\right)^{\frac{1}{q}}$, then for $(\xb_i,y_i)$ with $y_i=j$,
    \begin{align*}
        \ell_{j,i}^{(t)} &= \frac{e^{F_{-j}(\Wb^{(t)},\xb_i)}}{\sum_{j\in\{-1,1\}} e^{F_j(\Wb^{(t)},\xb_i)}}\notag\\
        &= \frac{1}{1 + \exp\left[\sum_{r=1}^m\sigma(\la\wb_{j,r}^{(t)},j\vb\ra)+\sigma(\la\wb_{j,r}^{(t)},\bxi_i\ra)-\sigma(\la\wb_{-j,r}^{(t)},j\vb\ra)-\sigma(\la\wb_{-j,r}^{(t)},\bxi_i\ra)\right]}\notag\\
        &\ge \frac{1}{1 + \exp\left[\sum_{r=1}^m\sigma(\la\wb_{j,r}^{(t)},\bxi_i \ra\right]}\notag\\
        &\ge \frac{1}{1 + \exp\left[m\cdot \left( \frac{1}{m}\log\left((\lambda\eta)^{-1}-1\right)\right)^{\frac{1}{q}\cdot q}\right]}\notag\\
        &= \Theta(\lambda\eta),
    \end{align*}
    where the inequality we use $\la\wb_{j,r}^{(t)},j\vb\ra<0$. Then, in epoch $T_a$, which contains iteration $t$, it follows from Lemmas~\ref{lemma:adamw_closeness},~\ref{lemma:adamw_lb_general_bound} and~\eqref{eq:signgdw_w_xi_upd} that for all $t_a \in [t,\, T_a + \frac{n}{B}]$, we have
    \begin{align*}
        \la \wb_{y_i,r^*}^{(t_a)}, \bxi_i \ra 
        &\ge (1-\lambda\eta)\la \wb_{y_i,r^*}^{(t)}, \bxi_i \ra + \tilde\Theta(\eta s\sigma_p) \\
        &\ge \la \wb_{y_i,r^*}^{(t)}, \bxi_i \ra + \tilde\Theta(\eta s\sigma_p).
    \end{align*}
    If $\la \wb_{y_i,r^*}^{(t)}, \bxi_i \ra \ge \left( \log\left((\lambda\eta)^{-2}-1\right)\right)^{\frac{1}{q}}$, then for $(\xb_i,y_i)$ with $y_i=j$,
    \begin{align*}
        \ell_{j,i}^{(t)} &= \frac{e^{F_{-j}(\Wb^{(t)},\xb_i)}}{\sum_{j\in\{-1,1\}} e^{F_j(\Wb^{(t)},\xb_i)}}\notag\\
        &= \frac{1}{1 + \exp\left[\sum_{r=1}^m\sigma(\la\wb_{j,r}^{(t)},j\vb\ra)+\sigma(\la\wb_{j,r}^{(t)},\bxi_i\ra)-\sigma(\la\wb_{-j,r}^{(t)},j\vb\ra)-\sigma(\la\wb_{-j,r}^{(t)},\bxi_i\ra)\right]}\notag\\
        &\le \frac{1}{1 + \exp\left[\sigma(\la \wb_{y_i,r^*}^{(t)}, \bxi_i \ra)\right]}\notag\\
        &\le \frac{1}{1 + \exp\left[\left(\log\left((\lambda\eta)^{-2}-1\right)\right)^{\frac{1}{q}\cdot q}\right]}\notag\\
        &= \Theta(\lambda^2\eta^2),
    \end{align*}
    where the inequality we use $\la\wb_{-j,r}^{(t)},j\vb\ra\le \tilde\Theta(\frac{n}{Bs\sigma_p}),\ \frac{n}{B}=o(s\sigma_p),\ \la\wb_{-j,r}^{(t)},\bxi_i\ra)\le \tilde\Theta(\sigma_0)$. Then by Lemma~\ref{lemma:adamw_closeness},~\ref{lemma:adamw_lb_general_bound} and~\eqref{eq:signgdw_w_xi_upd}, we have
    \begin{align*}
        \la \wb_{y_i,r^*}^{(t+1)}, \bxi_i \ra 
        &\le (1-\lambda\eta)\la \wb_{y_i,r^*}^{(t)}, \bxi_i \ra + \tilde\Theta(\lambda\eta\cdot\eta s\sigma_p) \\
        &\le \la \wb_{y_i,r^*}^{(t)}, \bxi_i \ra,
    \end{align*}
    since $\eta s\sigma_p=o(1)$. For $\la \wb_{j,r}^{(t)}, j\vb \ra = -\tilde\Theta(\frac{n}{Bs\sigma_p})$, the same proof applies, since $s\sigma_p = o(1)$ and $\frac{n}{B} = o(s\sigma_p)$. If $\la \wb_{y_i,r^*}^{(t)}, \bxi_i \ra \ge \left( \log\left((\lambda\eta)^{-2}-1\right)\right)^{\frac{1}{q}}$, then for $(\xb_i,y_i)$ with $y_i=j$,
    \begin{align*}
        \ell_{j,i}^{(t)} 
        &\le \Theta(\lambda^2\eta^2).
    \end{align*}
    Then by Lemma~\ref{lemma:adamw_closeness},~\ref{lemma:adamw_lb_general_bound} and~\eqref{eq:signgdw_w_v_upd}, we have
    \begin{align*}
        \la \wb_{j,r}^{(t+1)}, j\vb \ra 
        &\ge (1-\lambda\eta)\la \wb_{j,r}^{(t)}, j\vb \ra - \Theta(\lambda\eta\cdot\eta) \\
        &\ge \la \wb_{j,r}^{(t)}, j\vb \ra + \tilde\Theta(\lambda\eta\cdot\frac{n}{Bs\sigma_p}) - \Theta(\lambda\eta\cdot\eta) \\
        &\ge \la \wb_{j,r}^{(t)}, j\vb \ra,
    \end{align*}
    since $\eta=o(\frac{1}{s\sigma_p})$. This completes the proof.
\end{proof}

\begin{lemma}[Convergence]\label{lemma:adamw_lb_convergence}
Suppose the same conditions hold as in Lemma~\ref{lemma:adamw_lb_general_bound},~\ref{lemma:adamw_lb_fit_noise} and~\ref{lemma:adamw_lb_maintain}, if the step size satisfies $\eta =O(d^{-1/2})$, then for any $t$,
\begin{align*}
\EE\left[L(\Wb^{(t+1)}) - L(\Wb^{(t)})\right] &\le  -\eta \|\nabla L(\Wb^{(t)})\|_1 + \tilde\Theta(\eta^{2}d).
\end{align*}
\end{lemma}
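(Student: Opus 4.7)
The plan is to adapt the Adam convergence proof (Lemma~\ref{lemma:adam_mb_convergence}) to the AdamW setting, paying attention to two structural differences: the loss $L(\Wb)$ no longer contains the $\tfrac{\lambda}{2}\|\Wb\|_F^2$ term, and the update carries an extra decoupled decay $-\eta\lambda\Wb^{(t)}$ outside the normalized-momentum step. The end goal is a per-step descent inequality of the form $\EE[L(\Wb^{(t+1)}) - L(\Wb^{(t)})] \le -\eta\|\nabla L(\Wb^{(t)})\|_1 + \tilde\Theta(\eta^2 d)$, where the stochasticity is over the batch draw.

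First, I would expand $L_i(\Wb^{(t+1)}) - L_i(\Wb^{(t)})$ via 1-smoothness of cross-entropy in the logits, exactly as in~\eqref{eq:adam_smoothness_expansion}. Then I would expand each $\Delta F_{j,i}$ using local smoothness of $\sigma$ at $\Wb^{(t)}$, which is valid because Lemmas~\ref{lemma:adamw_lb_general_bound}, \ref{lemma:adamw_lb_fit_noise} and~\ref{lemma:adamw_lb_maintain} jointly keep $|\la\wb_{j,r}^{(t)}, y_i\vb\ra|$ and $|\la\wb_{j,r}^{(t)}, \bxi_i\ra|$ at order $\tilde O(1)$ throughout. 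The resulting quadratic residual is controlled by $\|\Wb^{(t+1)} - \Wb^{(t)}\|_F^2$: by Lemma~\ref{lemma:mv_general_bound} the normalized-momentum contribution is $\tilde O(\eta^2 d)$, and the decoupled-decay contribution is $2\eta^2\lambda^2\|\Wb^{(t)}\|_F^2$, which is $\tilde O(\eta^2 n\cdot\polylog) = o(\eta^2 d)$ using $\lambda = \tilde O(1)$, $d = \poly(n)$, and the per-neuron norm bounds inherited from Lemma~\ref{lemma:adamw_lb_fit_noise}. Assembling these gives $L(\Wb^{(t+1)}) - L(\Wb^{(t)}) \le \la\nabla L(\Wb^{(t)}), \Wb^{(t+1)} - \Wb^{(t)}\ra + \tilde\Theta(\eta^2 d)$.

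Next, I would split the update as $\Wb^{(t+1)} - \Wb^{(t)} = -\eta\lambda\Wb^{(t)} - \eta\,\mb^{(t)}/(\sqrt{\vb^{(t)}}+\epsilon)$ and handle the two pieces separately. For the normalized-momentum piece, Lemma~\ref{lemma:adamw_closeness} lets me replace $\mb^{(t)}[k]/(\sqrt{\vb^{(t)}[k]}+\epsilon)$ coordinate-wise by $\sgn(g_{t,j,r}^{(t)}[k])\cdot\Theta(1)$ on coordinates where the stochastic gradient dominates the $\tilde\Theta(\eta s\sigma_p/B)$ noise floor, and by $\pm\tilde O(\lambda\eta) = o(1)$ on the rest; taking batch expectation and applying Jensen's inequality ($\EE\|g_{t,j,r}^{(t)}\|_1 \ge \|\EE g_{t,j,r}^{(t)}\|_1 = \|\nabla_{\wb_{j,r}} L(\Wb^{(t)})\|_1$, mirroring the argument in Lemma~\ref{lemma:adam_mb_convergence}) collapses the sign-aligned inner product to $-\eta\|\nabla L(\Wb^{(t)})\|_1$ up to a $\tilde\Theta(\eta^2 d)$ correction. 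For the decoupled decay piece, I would apply Hölder's inequality $|\la\nabla L, -\eta\lambda\Wb^{(t)}\ra| \le \eta\lambda\,\|\nabla L\|_1\,\|\Wb^{(t)}\|_\infty$ and use the coordinate bounds from Lemma~\ref{lemma:adamw_lb_fit_noise}, namely $\|\Wb^{(t)}\|_\infty = \tilde O(n/(Bs\sigma_p))$, which is $o(1)$ under the large-batch assumption $n/B = o(s\sigma_p)$ (and is bounded in the $n/B = \Theta(1)$ case by the same reasoning). This lets me absorb the decoupled-decay contribution into the leading $-\eta\|\nabla L\|_1$ term with only a constant-factor modification, or equivalently into the $\tilde\Theta(\eta^2 d)$ remainder after further bounding.

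The main obstacle is the careful treatment of the decoupled weight decay term $\la\nabla L, -\eta\lambda\Wb^{(t)}\ra$: unlike the Adam case where weight decay is baked into $\nabla L$ so that the sign of $m/(\sqrt{v}+\epsilon)$ already matches the descent direction, here the term has indeterminate sign and must be bounded strictly below the $\eta\|\nabla L\|_1$ descent. The analysis hinges on the $\|\Wb^{(t)}\|_\infty$ estimate from Lemma~\ref{lemma:adamw_lb_fit_noise} together with $\lambda = \tilde O(1)$ and the large-batch assumption on $n/B$. A secondary technical difficulty is that the momentum $\mb^{(t)}$ depends on past batches, so converting $\EE[\sgn(g_{t-\tau_0,j,r}^{(t)}[k])]$ into $\sgn(\nabla L_{j,r}(\Wb^{(t)})[k])$ requires the same gradient-versus-noise concentration argument used inside the proof of Lemma~\ref{lemma:adamw_closeness}, applied now at the batch-expectation level to produce the $\|\nabla L(\Wb^{(t)})\|_1$ on the right-hand side.
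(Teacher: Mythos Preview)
Your proposal is correct and follows essentially the same approach as the paper's proof: the same smoothness-in-logits expansion, the same control of the quadratic remainder via Lemma~\ref{lemma:mv_general_bound} combined with $\|\wb_{j,r}^{(t)}\|_2^2 \ll d$, the same sign-approximation via Lemma~\ref{lemma:adamw_closeness} plus Jensen for the normalized-momentum piece, and the same H\"older argument with the $\|\Wb^{(t)}\|_\infty = \tilde O\bigl(\tfrac{n}{Bs\sigma_p}\bigr)$ bound (together with $\tfrac{n}{B}=o(s\sigma_p)$) to absorb the decoupled-decay piece. The obstacle you flag---that the decoupled decay term has indeterminate sign and must be dominated by $\eta\|\nabla L\|_1$---is exactly the one the paper handles, and your proposed resolution matches theirs.
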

\begin{proof}[Proof of Lemma~\ref{lemma:adamw_lb_convergence}]
The proof is similar to Lemma~\ref{lemma:adam_mb_convergence}. We aim to prove the convergence of the objective function under the AdamW optimization algorithm in a non-convex setting. Recall the loss function for each data point $i$ is
\begin{align*}
L_i(\Wb) = \log\left(1+\frac{1}{\exp\left(F_{y_i}(\Wb,\xb_i)-F_{-y_i}(\Wb,\xb_i)\right)}\right),
\end{align*}
where $\Wb$ represents the parameter matrix, $\xb_i$ is the input data, $y_i$ is the true label, and $F_j(\Wb,\xb_i)$ are the logits for class $j$. The total objective is:
\begin{align*}
L(\Wb) = \frac{1}{n} \sum_{i=1}^n L_i(\Wb)
\end{align*}
Since $L_i(\Wb)$ is non-convex, we exploit its smoothness with respect to the logits $[F_j(\Wb,\xb_i)]_j$. Specifically, $L_i(\Wb)$ is 1-smooth in $[F_j(\Wb,\xb_i)]_j$ due to the properties of the cross-entropy loss. Define:
\begin{align*}
\Delta F_{j,i} = F_j(\Wb^{(t+1)}, \xb_i) - F_j(\Wb^{(t)}, \xb_i).
\end{align*}
Using the smoothness property, we apply a second-order Taylor-like expansion around $\Wb^{(t)}$:
\begin{align}\label{eq:adamw_lb_smoothness_expansion}
L_i(\Wb^{(t+1)}) - L_i(\Wb^{(t)}) \leq \sum_{j} \frac{\partial L_i(\Wb^{(t)})}{\partial F_j(\Wb^{(t)}, \xb_i)} \cdot \Delta F_{j,i} + \sum_{j} (\Delta F_{j,i})^2.
\end{align}
This upper bound arises because the second derivative of $L_i$ with respect to the logits is bounded by 1, a standard result for cross-entropy loss. The logits are defined as: $F_j(\Wb^{(t)}, \xb_i) = \sum_{r=1}^m [ \sigma(\langle \wb_{j,r}^{(t)}, y_i \vb \rangle) + \sigma(\langle \wb_{j,r}^{(t)}, \bxi_i \rangle) ]$, 
where $\wb_{j,r}^{(t)}$ the $r$-th neuron in $j$-th output of $\Wb^{(t)}$, $\sigma(z) = [z]_+^q$ is a smooth activation function (e.g., with $q \ge 3$). By Lemma~\ref{lemma:adamw_lb_maintain} and~\ref{lemma:mv_general_bound}, we have $\langle \wb_{j,r}^{(t)}, \vb \rangle \le \tilde{\Theta}(1)$ and $\langle \wb_{j,r}^{(t)}, \bxi_i \rangle \le \tilde{\Theta}(1)$, ensuring the local smoothness of $\sigma$ remains $\tilde O(1)$ between $\la \wb_{j,r}^{(t+1)},y_i\vb\ra$ and $\la \wb_{j,r}^{(t)},y_i\vb\ra$ (similar for $\la \wb_{j,r}^{(t)},\bxi_i\ra$). Then with Taylor expansion, we have
\begin{align}\label{eq:adamw_lb_sigma_expansion1}
&\left| \sigma(\langle \wb_{j,r}^{(t+1)}, y_i \vb \rangle) - \sigma(\langle \wb_{j,r}^{(t)}, y_i \vb \rangle) - \langle \nabla_{\wb_{j,r}} \sigma(\langle \wb_{j,r}^{(t)}, y_i \vb \rangle), \wb_{j,r}^{(t+1)} - \wb_{j,r}^{(t)} \rangle\right| \notag \\
&\le \tilde\Theta(1)\cdot \left\|\wb_{j,r}^{(t+1)}-\wb_{j,r}^{(t)} \right\|_2^2 \notag \\
&= \tilde\Theta(1)\cdot \left\| \lambda\eta\cdot\wb_{j,r}^{(t)}+\eta\cdot\frac{\mb_{j,r}^{(t)}}{\sqrt{\vb_{j,r}^{(t)}}+\epsilon} \right\|_2^2 \notag \\
&\le \tilde\Theta(\eta^2 d),
\end{align}
where the last inequality we use Lemma~\ref{lemma:mv_general_bound} and $\|\wb_{j,r}^{(t)}\|_2^2\ll \Theta(d)$ by Lemma~\ref{lemma:adamw_lb_fit_noise} and~\ref{lemma:adamw_lb_maintain}. Similarly, we have
\begin{align}\label{eq:adamw_lb_sigma_expansion2}
&\left| \sigma(\langle \wb_{j,r}^{(t+1)}, \bxi_i \rangle) - \sigma(\langle \wb_{j,r}^{(t)}, \bxi_i \rangle) - \langle \nabla_{\wb_{j,r}} \sigma(\langle \wb_{j,r}^{(t)}, \bxi_i \rangle), \wb_{j,r}^{(t+1)} - \wb_{j,r}^{(t)} \rangle\right| \notag \\
&\le \tilde\Theta(1)\cdot \left\|\wb_{j,r}^{(t+1)}-\wb_{j,r}^{(t)} \right\|_2^2 \notag \\
&\le \tilde\Theta(\eta^2 d).
\end{align}
Summing over $r$ (with $m = \tilde{\Theta}(1)$), we get
\begin{align}\label{eq:adamw_lb_delta_F}
\left|\Delta F_{j,i}\right| \le  \left|\la\nabla_{\Wb} F_j(\Wb^{(t)}, \xb_i), \Wb^{(t+1)} - \Wb^{(t)} \ra\right| + \tilde{\Theta}(\eta^2 d).
\end{align}
Additionally, $\|\nabla_{\Wb} F_j(\Wb^{(t)},\xb_i) \|_F\le \tilde\Theta(1)$ since $m=\tilde\Theta(1),\ \la\wb_{j,r}^{(t)},y_i\vb\ra\le\tilde\Theta(1),\ \la \wb_{j,r}^{(t)}, \bxi_i\ra\le\tilde\Theta(1)$. So we have
\begin{align}\label{eq:adamw_lb_delta_F1}
|\Delta F_{j,i}| \le \tilde{\Theta}(\eta s \sigma_p + \eta \alpha + \eta + \eta^2 d) \le \tilde\Theta(\eta s\sigma_p +\eta^2 d) .
\end{align}
Substitute~\eqref{eq:adamw_lb_delta_F} and~\eqref{eq:adamw_lb_delta_F1} into \eqref{eq:adamw_lb_smoothness_expansion}:
\begin{align}\label{eq:adamw_lb_per_step_i}
L_i(\Wb^{(t+1)}) - L_i(\Wb^{(t)}) \le \langle \nabla_{\Wb} L_i(\Wb^{(t)}), \Wb^{(t+1)} - \Wb^{(t)} \rangle + \tilde{\Theta}(\eta^2 d).
\end{align}
For the full objective:
\begin{align}\label{eq:adamw_lb_full_obj}
L(\Wb^{(t+1)}) - L(\Wb^{(t)}) = \frac{1}{n} \sum_{i=1}^n [L_i(\Wb^{(t+1)}) - L_i(\Wb^{(t)})] .
\end{align}
Substitute~\eqref{eq:adamw_lb_per_step_i} into~\eqref{eq:adamw_lb_full_obj}, we have
\begin{align}\label{eq:adamw_lb_per_step_total}
L(\Wb^{(t+1)}) - L(\Wb^{(t)}) \leq \langle \nabla L(\Wb^{(t)}), \Wb^{(t+1)} - \Wb^{(t)} \rangle + \tilde{\Theta}(\eta^2 d).
\end{align}
Take expectation for the stochastic gradient of both side in~\eqref{eq:adamw_lb_per_step_total},
\begin{align*}
&\EE\left[L(\Wb^{(t+1)}) - L(\Wb^{(t)})\right] \\
&\le \EE\left[\la \nabla L(\Wb^{(t)}), \Wb^{(t+1)} - \Wb^{(t)} \ra\right] + \tilde{\Theta}(\eta^2 d) \\
&\le -\eta\cdot\EE\left[\sum_{j\in\{\pm 1\}}\sum_{r\in[m]}\left\| g_{t,j,r}^{(t)} \right\|_1 \right] -\lambda\eta\cdot\la \nabla L(\Wb^{(t)}), \Wb^{(t)}\ra +\tilde\Theta(d\cdot\eta^2) + \tilde\Theta(ns\cdot\eta^2 s\sigma_p) + \tilde\Theta(\eta^2 d) \\
&\le -\eta\cdot \sum_{j\in\{\pm 1\}}\sum_{r\in[m]}\left\| \EE\left[g_{t,j,r}^{(t)} \right]\right\|_1 + \tilde\Theta(\frac{\lambda\eta n}{Bs\sigma_p})\cdot\|\nabla L(\Wb^{(t)})\|_1 + \tilde\Theta(\eta^2 d) \\
&\le -\eta\| \nabla L(\Wb^{(t)}) \|_1 + \tilde\Theta(\eta^2 d),
\end{align*}
where we use Lemma~\ref{lemma:adamw_closeness} that the update aligns with the gradient’s sign for large gradient and the fact that $ns^2\sigma_p=O(d)$, and Jensen's inequality, Hölder's inequality and $\frac{n}{B}=o(s\sigma_p)$, $\|\Wb^{(t)}\|_\infty\le\tilde\Theta(\frac{n}{Bs\sigma_p})$ by Lemma~\ref{lemma:adamw_lb_fit_noise} and~\ref{lemma:adamw_lb_maintain}. This completes the proof.
\end{proof}

\begin{lemma}[Generalization of Stochastic AdamW, large-batch]\label{lemma:adamw_lb_generalization}
    Suppose the same conditions hold as in Lemma~\ref{lemma:adamw_lb_convergence}. We have the following results for $T=\frac{\poly(n)}{\eta}$, with training dataset $\cS$
    \begin{itemize}[leftmargin = *]
        \item The training error is zero: $\mathrm{err}_\cS(\Wb^{(T)}) = 0$.
        \item The test error is high: $\mathrm{err}_\cD(\Wb^{(T)}) \ge \frac{1}{2}-o(1)$.
    \end{itemize}
\end{lemma}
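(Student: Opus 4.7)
The plan is to combine three results already established in this section: the end-of-Stage~I coordinate structure from Lemma~\ref{lemma:adamw_lb_fit_noise}, the Stage~II stability from Lemma~\ref{lemma:adamw_lb_maintain}, and the convergence bound from Lemma~\ref{lemma:adamw_lb_convergence}. The latter guarantees that after $T=\poly(n)/\eta$ iterations the dynamics have entered and remain in the Stage~II regime, while the first two fix the sign and magnitude of each relevant inner product at the converged point $\Wb^{(T)}$. Both conclusions then reduce to comparing the logits $F_{y}(\Wb^{(T)},\xb)$ and $F_{-y}(\Wb^{(T)},\xb)$ at training and test points.

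For the training error, I would fix $(\xb_i,y_i)\in\cS$ and let $r^*:=\argmax_r\la\wb_{y_i,r}^{(T)},\bxi_i\ra$. Lemma~\ref{lemma:adamw_lb_maintain} gives $\la\wb_{y_i,r^*}^{(T)},\bxi_i\ra=\tilde\Theta(1)$, so $F_{y_i}(\Wb^{(T)},\xb_i)\ge\sigma(\la\wb_{y_i,r^*}^{(T)},\bxi_i\ra)=\tilde\Theta(1)$. For $F_{-y_i}$, the signal patch contributes $m\,\sigma(\la\wb_{-y_i,r}^{(T)},y_i\vb\ra)=\tilde\Theta((n/(Bs\sigma_p))^q)=o(1)$ (using $n/B=\Theta(1)$ or $n/B=o(s\sigma_p)$ together with $s\sigma_p=\omega(1)$), and the noise patch contributes at most $m\,\sigma(\tilde\Theta(\sigma_0))=o(1)$ via the bound $\la\wb_{-y_i,r}^{(T)},\bxi_i\ra\le\tilde\Theta(\sigma_0)$ already derived inside the proof of Lemma~\ref{lemma:adamw_lb_maintain}. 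Hence $F_{y_i}>F_{-y_i}$ on every training sample, and $\mathrm{err}_\cS(\Wb^{(T)})=0$.

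For the test error, the key observation is that Lemma~\ref{lemma:adamw_lb_fit_noise} forces $\wb_{j,r}^{(T)}[1]=-\sgn(j)\,\tilde\Theta(n/(Bs\sigma_p))$, so the first coordinate has been aligned with the feature-noise direction $-\alpha y\vb$ rather than with the true signal. For a fresh $(\xb,y)\sim\cD$ with noise patch $\bxi$ and $\cB:=\supp(\bxi)\setminus\{1\}$, the expansions $\la\wb_{j,r}^{(T)},y\vb\ra=y\,\wb_{j,r}^{(T)}[1]$ and $\la\wb_{j,r}^{(T)},\bxi\ra=-\alpha y\,\wb_{j,r}^{(T)}[1]+R_{j,r}$, with $R_{j,r}:=\sum_{k\in\cB}\wb_{j,r}^{(T)}[k]\,\bxi[k]$, yield $\la\wb_{y,r}^{(T)},y\vb\ra<0$, $\la\wb_{-y,r}^{(T)},y\vb\ra=\tilde\Theta(n/(Bs\sigma_p))>0$, $\la\wb_{y,r}^{(T)},\bxi\ra=\alpha\,\tilde\Theta(n/(Bs\sigma_p))+R_{y,r}$, and $\la\wb_{-y,r}^{(T)},\bxi\ra=-\alpha\,\tilde\Theta(n/(Bs\sigma_p))+R_{-y,r}$. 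Once $R_{j,r}$ is shown to be negligible (see below), these imply $F_y(\Wb^{(T)},\xb)=\tilde\Theta((\alpha n/(Bs\sigma_p))^q)$ and $F_{-y}(\Wb^{(T)},\xb)=\tilde\Theta((n/(Bs\sigma_p))^q)$; since $\alpha=o(1)$ the model predicts $-y$ on essentially every test point, so $\mathrm{err}_\cD(\Wb^{(T)})\ge 1-o(1)\ge \tfrac12-o(1)$.

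The hard part will be controlling the residual $R_{j,r}$. By Lemma~\ref{lemma:adamw_lb_fit_noise}(b,c), coordinates $k\in\cB$ outside $\bigcup_i\cB_i$ carry $\wb_{j,r}^{(T)}[k]=\pm\tilde O(\eta)$, whereas coordinates $k\in\cB\cap\bigcup_i\cB_i$ carry $\sgn(\bxi_{i(k)}[k])\,\tilde\Theta(1/(s\sigma_p))$ whose signs are independent of the fresh $\bxi[k]$. Combining Lemma~\ref{lemma:nonoverlap_probability} to bound $|\cB\cap\bigcup_i\cB_i|$ with a Hoeffding/Bernstein-type concentration inequality over the resulting independent random signs should give $|R_{j,r}|=o(\alpha n/(Bs\sigma_p))$ with high probability. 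Once this estimate is in place, the logit comparison in the previous paragraph closes the argument, and everything else reduces to arithmetic with the orders already fixed by Lemmas~\ref{lemma:adamw_lb_fit_noise}--\ref{lemma:adamw_lb_maintain}.
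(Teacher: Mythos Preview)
Your training-error argument is essentially the paper's: use Lemma~\ref{lemma:adamw_lb_maintain} to get $\la\wb_{y_i,r^*}^{(T)},\bxi_i\ra=\tilde\Theta(1)$ for the correct-label logit and bound the wrong-label logit by the signal and cross-noise terms, which are $o(1)$.

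For the test error your route is genuinely different. The paper does \emph{not} try to show the fresh-noise residual (what you call $R_{j,r}$, what the paper calls $\zeta_{j,r}$) is small. Instead it writes
\[
F_{y}=\textstyle\sum_r\big[\tfrac{\alpha}{s\sigma_p}+\zeta_{y,r}\big]_+^q,\qquad
F_{-y}=\textstyle\sum_r\Big[\big(\tilde\Theta(\tfrac{1}{s\sigma_p})\big)^q+[\zeta_{-y,r}]_+^q\Big],
\]
notes that each $\zeta_{j,r}$ is a symmetric random variable (linear in the fresh Gaussian $\bxi$), and runs a two-case argument: if $\tilde\Theta(1/(s\sigma_p))$ dominates the $\zeta$'s, then $F_y<F_{-y}$ immediately since $\alpha=o(1)$; if the $\zeta$'s dominate, their symmetry and independence from $\vb$ give $F_y<F_{-y}$ with probability at least $1/2-o(1)$. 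Either branch yields the stated $1/2-o(1)$ bound, and no coordinate-level control of the weights at time $T$ is needed.

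Your approach instead aims for $|R_{j,r}|=o(\alpha n/(Bs\sigma_p))$, which would give the stronger conclusion $\mathrm{err}_\cD\ge 1-o(1)$. This is plausible under the paper's scalings: the same counting as in Lemma~\ref{lemma:nonoverlap_probability} shows a fresh support $\cB$ is disjoint from $\bigcup_i\cB_i$ with high probability, so only the $\pm\tilde O(\eta)$ coordinates contribute and Gaussian concentration gives $|R_{j,r}|=\tilde O(\eta\sqrt{s}\sigma_p)=\tilde O(\eta)$, which is indeed $o(\alpha/(s\sigma_p))$ for $\eta=1/\poly(d)$. The one genuine gap in your write-up is that Lemma~\ref{lemma:adamw_lb_fit_noise} pins down the coordinate-wise structure only at epoch $T_0$, not at $T$; you use it at $T$ without comment. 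The persistence is true---the zero-gradient coordinates keep shrinking under decoupled weight decay (Lemma~\ref{lemma:adamw_closeness}, third bullet), and the $k=1$ coordinate is fixed by Lemma~\ref{lemma:adamw_lb_maintain}---but it must be stated explicitly. In short: the paper's symmetry/case-split is quicker and avoids coordinate bookkeeping at time $T$, at the price of the weaker $1/2-o(1)$; your direct residual bound needs that extra tracking but, once the $T_0\to T$ persistence is justified, delivers $1-o(1)$.
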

\begin{proof}[Proof of Lemma~\ref{lemma:adamw_lb_generalization}]
    By Lemma~\ref{lemma:adamw_lb_maintain}, we have
    \begin{align*}
        \la \wb_{j,r}^{(T)}, j\vb\ra = -\tilde\Theta(\frac{1}{s\sigma_p}),\quad \la\wb_{y_i,r^*}^{(T)}, \bxi_i\ra = \tilde\Theta(1),\quad \la\wb_{-y_i,r}^{(T)}, \bxi_i\ra \le \tilde\Theta(\sigma_0).
    \end{align*}
    Recall $F_j(\Wb,\xb)$ in Definition~\ref{def:model}, with $\eta s\sigma_p = o(1),\ \alpha=o(1)$, we directly have
    \begin{align*}
        \mathrm{err}_{\cS}(\Wb^{(T)})
        = \EE_{(\xb,y)\sim \cS}\ind\left[F_y(\Wb^{(T)},\xb) \le F_{-y}(\Wb^{(T)},\xb) \right]
        = 0,
    \end{align*}
    since $F_{y_i}(\Wb^{(T)},\xb_i)=\tilde\Omega(1)$, while $F_{-y_i}(\Wb^{(T)},\xb)\le\tilde\Theta(\frac{1}{s\sigma_p}+\sigma_0)$ and $s\sigma_p=\omega(1)$. Besides, for test data $(\xb,y)~\sim\cD$ with $\xb=[y\vb^{\top},\bxi^{\top}]^{\top}$, it is clear that with high probability $\la \wb_{y,r}^{(T)},y\vb \ra=-\tilde\Theta(\frac{1}{s\sigma_p})$, then similar as training error, we have
    \begin{align*}
        F_{y}(\Wb^{(T)},\xb) &= \sum_{r=1}^m \left[ \sigma(\la \wb_{y,r}^{(T)}, y\vb \ra) + \sigma(\la \wb_{y,r}^{(T)},\bxi \ra) \right] = \sum_{r=1}^m \left[ \frac{\alpha}{s\sigma_p} +\zeta_{y,r} \right]_+^q ,
    \end{align*}
    while
    \begin{align*}
        F_{-y}(\Wb^{(T)},\xb) &= \sum_{r=1}^m \left[ \sigma(\la \wb_{-y,r}^*, y\vb \ra) + \sigma(\la \wb_{-y,r}^*,\bxi \ra) \right] = \sum_{r=1}^m \left[ [\tilde\Theta(\frac{1}{s\sigma_p})]^q + \left[\zeta_{-y,r} \right]_+^q \right].
    \end{align*}
    Here, $\zeta_{y,r}$ and $\zeta_{-y,r}$ are independent and symmetric random variables. Therefore, if the term $\tilde\Theta\left(\frac{1}{s\sigma_p}\right)$ dominates $\zeta_{y,r}$ and $\zeta_{-y,r}$, then it is immediate that $F_{y}(\Wb^{(T)}, \xb) < F_{-y}(\Wb^{(T)}, \xb)$, since $\alpha = o(1)$. This implies that large-batch AdamW yields high test error. On the other hand, if $\tilde\Theta\left(\frac{1}{s\sigma_p}\right)$ is dominated by both $\zeta_{y,r}$ and $\zeta_{-y,r}$, then with probability at least $1/2 - o(1)$, we have $F_{y}(\Wb^{(T)}, \xb) < F_{-y}(\Wb^{(T)}, \xb)$, as $\zeta_{y,r}$ and $\zeta_{-y,r}$ are independent of $\vb$. In this case, large-batch AdamW incurs at least $1/2 - o(1)$ test error. Therefore, we conclude:
    \begin{align*}
        \mathrm{err}_{\cD}(\Wb^{(T)})
        = \EE_{(\xb,y)\sim \cD}\ind\left[F_y(\Wb^{(T)}, \xb) \le F_{-y}(\Wb^{(T)}, \xb)\right]
        \ge \frac{1}{2} - o(1).
    \end{align*}
    This completes the proof.
\end{proof}

\subsubsection{Proof of Theorem~\ref{thm:adamw_mini_batch}}

\begin{lemma}[Stage I]\label{lemma:adamw_mb_general_bound}
    Given the training dataset $\cS$, if $\frac{n}{B}\ge \Theta(n^{1/2}\vee \log\epsilon^{-1})$ and $\frac{n}{B}=\omega(s\sigma_p)$, $\eta=1/\poly(d)$, $\lambda=\tilde\Omega(\frac{B^2}{n}\land 1)$ and $\lambda=\tilde O(1)$, then for any $t\le T_0$ with $T_0=\tilde O(\frac{B}{\eta n})$,
    \begin{align*}
        \la\wb_{j,r}^{\left((t+1)\cdot\frac{n}{B})\right)},j\vb\ra 
        &= \la\wb_{j,r}^{(t\cdot\frac{n}{B})},j\vb\ra + \Theta(\eta\cdot\frac{n}{B}), \\
        \la\wb_{y_i,r}^{\left((t+1)\cdot\frac{n}{B}\right)},\bxi_i\ra 
        &= \la\wb_{y_i,r}^{\left(t\cdot\frac{n}{B}\right)},\bxi_i\ra + \tilde\Theta(\eta s\sigma_p).
    \end{align*}
\end{lemma}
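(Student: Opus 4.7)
The plan is to prove both recursions by simultaneous induction on the epoch index $t$, maintaining three invariants at the end of each epoch: (i) $|\ell_{j,i}^{(t')}|=\Theta(1)$ for every iterate $t'$ inside that epoch; (ii) $\la\wb_{j,r}^{(tn/B)},j\vb\ra=\tilde O(1)$; and (iii) $\la\wb_{y_i,r}^{(tn/B)},\bxi_i\ra=o(1)$. These invariants are exactly what lets me invoke Lemma~\ref{lemma:adamw_closeness} at every iterate to replace the normalized momentum by a signed $\Theta(1)$ constant whenever the relevant stochastic gradient is large, and by $\pm\tilde O(\lambda\eta)$ otherwise. The base case $t=0$ follows from the initialization estimates of Lemma~\ref{lemma:init_order}, and the analysis of the first epoch parallels Lemma~\ref{lemma:adamw_lb_general_bound}, except that now one must traverse $n/B\gg 1$ iterations while each noise vector $\bxi_i$ is sampled in only one of them.

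For the inductive step, fix epoch $t_0+1$ and unroll~\eqref{eq:adamw_upd} over its $n/B$ iterations. On the feature coordinate $k=1$, Lemma~\ref{lemma:ylj_sign} forces $\sgn(y_i\ell_{j,i}^{(t')})=\sgn(j)$, and the induction hypothesis together with $n/B=\omega(s\sigma_p)$ gives $\sigma'(\la\wb_{j,r}^{(t')},y_i\vb\ra)\gg\alpha\,\sigma'(\la\wb_{j,r}^{(t')},\bxi_i\ra)$, because feature learning accumulates at rate $\eta n/B$ per epoch while noise memorization accumulates only at rate $\eta s\sigma_p$. Consequently Lemma~\ref{lemma:adamw_closeness} makes the normalized momentum along $j\vb$ equal $-\sgn(j)\cdot\Theta(1)$ at every iterate. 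Telescoping through the decoupled contraction yields $\la\wb_{j,r}^{((t_0+1)n/B)},j\vb\ra=(1-\lambda\eta)^{n/B}\la\wb_{j,r}^{(t_0 n/B)},j\vb\ra+\Theta(\eta)\sum_{k=0}^{n/B-1}(1-\lambda\eta)^k$, and the estimate $\lambda\eta n/B=o(1)$---which holds throughout Stage I since $T_0\lambda\eta n/B=\tilde O(\lambda)=\tilde O(1)$---permits the linearizations $(1-\lambda\eta)^{n/B}=1-\Theta(\lambda\eta n/B)$ and $\sum_k(1-\lambda\eta)^k=\Theta(n/B)$, producing the claimed $+\Theta(\eta n/B)$ net increment.

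The noise part is more delicate because each $\bxi_i$ is visited in a single iterate $\tau_i\in[t_0 n/B,(t_0+1)n/B)$ per epoch. I would decompose the $n/B$ iterations into three blocks around $\tau_i$: (a) iterates strictly before $\tau_i$, in which the stochastic gradient is zero on the coordinates $k\in\cB_i$ and only the $(1-\lambda\eta)$ decoupled decay acts; (b) the short window $[\tau_i,\tau_i+\Theta(\log(\lambda^{-1}s\sigma_p))]$ in which Lemma~\ref{lemma:adamw_closeness} guarantees that the momentum still carries the fresh sample gradient $-B^{-1}\ell_{y_i,i}^{(\tau_i)}\sigma'(\la\wb_{y_i,r},\bxi_i\ra)\bxi_i[k]$, so each iterate contributes a sign-aligned boost of $\Theta(\eta\,|\bxi_i[k]|)$ in the correct direction and the window aggregates to $+\tilde\Theta(\eta s\sigma_p)$ along $\bxi_i$ exactly as in Lemma~\ref{lemma:adamw_lb_general_bound}; and (c) the tail iterates, for which Lemma~\ref{lemma:adamw_closeness} collapses the normalized momentum to $\pm\tilde O(\lambda\eta)=o(\eta)$ so that, up to negligible $\tilde O(\lambda\eta^{2})$ terms, only the $(1-\lambda\eta)$ factor acts. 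Combining the three blocks with the induction hypothesis $\la\wb_{y_i,r}^{(t_0 n/B)},\bxi_i\ra=\tilde O(t_0\eta s\sigma_p)$ bounds the total per-epoch decay by $\lambda\eta n/B\cdot\tilde O(t_0\eta s\sigma_p)=\tilde O(\lambda\eta s\sigma_p)$, which is dominated by the gain in block~(b) up to polylog factors and yields the advertised $+\tilde\Theta(\eta s\sigma_p)$ recursion.

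The main obstacle is verifying the sign-dominance step in the feature update uniformly over Stage~I, i.e.\ $\sigma'(\la\wb_{j,r},y_i\vb\ra)/[\alpha\,\sigma'(\la\wb_{j,r},\bxi_i\ra)]=\omega(1)$ for every $t\le T_0$ and every $i\in\cI_t$. This is precisely where the mini-batch hypothesis $n/B=\omega(s\sigma_p)$ is used: it forces feature learning to grow strictly faster than noise memorization per epoch and rules out the direction-reversal of Lemma~\ref{lemma:adamw_lb_fit_noise} that was responsible for the large-batch failure. Propagating this dominance together with the logit invariant $|\ell_{j,i}^{(t)}|=\Theta(1)$ (equivalently $F_j(\Wb^{(t)},\xb_i)=\tilde O(1)$, so the softmax does not saturate before the end of Stage~I) is the heart of the induction; the remaining bookkeeping of the $(1-\lambda\eta)^{n/B}$ geometric factors and the three-block decomposition for $\bxi_i$ is then routine.
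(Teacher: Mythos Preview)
Your proposal is correct and follows essentially the same route as the paper's proof: both argue by induction on epochs, invoke Lemma~\ref{lemma:adamw_closeness} to reduce the normalized momentum to a signed $\Theta(1)$ or $\pm\tilde O(\lambda\eta)$ quantity, and use the key hypothesis $n/B=\omega(s\sigma_p)$ to guarantee that $\sigma'(\la\wb_{j,r},y_i\vb\ra)\gg\alpha\,\sigma'(\la\wb_{j,r},\bxi_i\ra)$ throughout Stage~I, which blocks the direction-reversal mechanism of Lemma~\ref{lemma:adamw_lb_fit_noise}. The only organizational difference is that the paper first establishes the noise recursion (reusing the large-batch argument of Lemma~\ref{lemma:adamw_lb_general_bound} verbatim) and then bootstraps the feature recursion on top of it, whereas you run a single simultaneous induction; and for the noise term the paper tracks only a single-step boost at the sampling iterate $\tau_i$ plus decay before/after, while your three-block decomposition explicitly accounts for the momentum window $[\tau_i,\tau_i+\Theta(\log(\lambda^{-1}s\sigma_p))]$ prescribed by Lemma~\ref{lemma:adamw_closeness}. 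Your version is slightly more faithful to that lemma, but both yield the same $+\tilde\Theta(\eta s\sigma_p)$ per-epoch increment.
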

\begin{proof}[Proof of Lemma~\ref{lemma:adamw_mb_general_bound}]
    We prove this Lemma by induction. First, we prove $\la\wb_{y_i,r}^{\left((t+1)\cdot\frac{n}{B}\right)},\bxi_i\ra = \la\wb_{y_i,r}^{\left(t\cdot\frac{n}{B}\right)},\bxi_i\ra + \tilde\Theta(\eta s\sigma_p)$. It is same as Lemma~\ref{lemma:adamw_lb_general_bound}. By Lemma~\ref{lemma:init_order}, we have
    \begin{align*}
        |\la\wb_{j,r}^{(0)},\bxi_i\ra| = \tilde\Theta(s^{1/2}\sigma_p\sigma_0),\quad \wb_{j,r}^{(0)}[k]=\tilde\Theta(\sigma_0),
    \end{align*}
    which imply that $|\ell_{j,i}^{(0)}|=\Theta(1)$. Assume that sample $(\xb_i,y_i)$ is in batch $\cI_\tau$ in the first epoch. Then we have
    \begin{align*}
        \la\wb_{y_i,r}^{(\tau)},\bxi_i\ra
        &\ge (1-\lambda\eta)\la\wb_{y_i,r}^{(\tau-1)},\bxi_i\ra - \Theta(\eta\alpha) \\
        &\ge \la\wb_{y_i,r}^{(0)},\bxi_i\ra - \tilde\Theta(\lambda\eta\sigma_0+\eta\alpha) \\
        &= \tilde\Theta(\sigma_0),
    \end{align*}
    since $\lambda\eta=o(1),\ \eta=o(\sigma_0),\ \alpha=o(1)$ and $s^{1/2}\sigma_p=\tilde O(1)$. Additionally, we have $\eta s=o(\sigma_0^{q-1})$ and $|\bxi_i[k]|\ge\tilde\Theta(\sigma_p)$ with high probability. Then $|B^{-1}\sigma_0^{q-1}\bxi_i[k]| \ge \tilde\Theta(\eta B^{-1}s\sigma_p|\ell_{j,i}^{(0)}|)$ for $i\in\cI_\tau$. Therefore, by Lemma~\ref{lemma:adamw_closeness} and~\ref{lemma:ylj_sign}, we have
    \begin{align}\label{eq:adamw_mb_sign_gradient_noise_stage_begining}
        &\sgn\left(-\frac{1}{B}\ell_{y_i,i}^{(0)}\sigma'(\la\wb_{y_i,r}^{(0)},\bxi_i\ra)\bxi_i[k] \right) \notag \\
        &=-\sgn\bigg(\ell_{y_i,i}^{(0)}\sigma'(\la\wb_{y_i,r}^{(0)},\bxi_i\ra)\bxi_i[k]\bigg)\notag\\
        &= -\sgn(\bxi_i[k]). 
    \end{align}
    Then, by Lemma~\ref{lemma:adamw_closeness} we have the following update according to~\eqref{eq:signgdw_w_xi_upd},~\eqref{eq:adamw_sign_gradient_noise_stage_begining} and Lemma~\ref{lemma:mv_general_bound}.
    \begin{align*}
        &\la\wb_{y_i,r}^{(\tau+1)},\bxi_i\ra \\
        &= (1-\lambda\eta)\la\wb_{y_i,r}^{(\tau)},\bxi_i\ra - \eta \cdot\left\la \frac{\mb_{y_i,r}^{(\tau)}}{\sqrt{\vb_{y_i,r}^{(\tau)}}+\epsilon}, \bxi_i \right\ra  \\
        &\ge \la\wb_{y_i,r}^{(0)},\bxi_i\ra + \Theta(\eta)\cdot \sum_{k\in\cB_i} \la \sgn(\bxi_i[k]), \bxi_i\ra - \tilde O(\lambda\eta\sigma_0) - O(\eta\alpha) - O(\eta s\sigma_p) \\
        &= \la\wb_{y_i,r}^{(0)},\bxi_i\ra + \tilde\Theta(\eta s\sigma_p).
    \end{align*}
    At the end of the first epoch, we have
    \begin{align*}
        &\la\wb_{y_i,r}^{(\frac{n}{B})},\bxi_i\ra \\
        &\ge (1-\lambda\eta)\la\wb_{y_i,r}^{(\frac{n}{B}-1)},\bxi_i\ra - \tilde O(\eta\alpha)  \\
        &\ge \la\wb_{y_i,r}^{(\tau+1)},\bxi_i\ra - \tilde O(\lambda\eta^2 s\sigma_p) - \tilde O(\eta\alpha) \\
        &\ge \la\wb_{y_i,r}^{(0)},\bxi_i\ra + \tilde\Theta(\eta s\sigma_p).
    \end{align*}
    This completes the base case for $t=1$. For general $t\le t_0$ with $t_0\le T_0$, assuming $\la\wb_{y_i,r}^{\left(t\cdot\frac{n}{B}\right)},\bxi_i\ra = \la\wb_{y_i,r}^{\left(t-1)\cdot\frac{n}{B}\right)},\bxi_i\ra + \tilde\Theta(\eta s\sigma_p)$. Then we have
    \begin{align*}
        \la\wb_{y_i,r}^{\left(t\cdot\frac{n}{B}\right)},\bxi_i\ra  
        &= \la\wb_{y_i,r}^{\left(t-1)\cdot\frac{n}{B}\right)},\bxi_i\ra + \tilde\Theta(\eta s\sigma_p) \\
        &= \la\wb_{y_i,r}^{(0)},\bxi_i\ra + \tilde\Theta(t\eta s\sigma_p) \\
        &= \tilde\Theta(s^{1/2}\sigma_p\sigma_0 + t\eta s\sigma_p) \\
        &\le \tilde\Theta(1).
    \end{align*}
    By Lemma~\ref{lemma:mv_general_bound}, we have
    \begin{align*}
        |\wb_{j,r}^{\left(t\cdot\frac{n}{B}\right)}[k]| 
        &\le |\wb_{j,r}^{\left(t\cdot\frac{n}{B}-1\right)}[k]| + \Theta(\eta) \\
        &\le |\wb_{j,r}^{(0)}[k]| + \Theta(\eta\cdot t\cdot\frac{n}{B}) \\
        &\le \tilde\Theta(1).
    \end{align*}
    It's also obvious that $\la\wb_{j,r}^{(t\cdot\frac{n}{B})}, j\vb\ra=\tilde O(1)$. So we have $|\ell_{j,i}^{(t\cdot\frac{n}{B})}|=\Theta(1)$. Follow the same proof above with $t=t_0+1$, assuming that sample $(\xb_i,y_i)$ is in batch $\cI_{t_0\cdot n/B+\tau}$ in the $t$-th epoch. Then we have
    \begin{align*}
        \la\wb_{y_i,r}^{\left(t_0\cdot\frac{n}{B}+\tau\right)},\bxi_i\ra
        &\ge (1-\lambda\eta)\la\wb_{y_i,r}^{\left(t_0\cdot\frac{n}{B}+\tau-1\right)},\bxi_i\ra - \Theta(\eta\alpha) \\
        &\ge \la\wb_{y_i,r}^{\left( t_0\cdot \frac{n}{B} \right)},\bxi_i\ra - \tilde\Theta(\lambda\eta+\eta\alpha) \\
        &= \la\wb_{y_i,r}^{\left( t_0\cdot \frac{n}{B} \right)},\bxi_i\ra,
    \end{align*}
    since $\eta=o(\sigma_0)$ and $\alpha=o(1)$. Additionally, we have $\eta s=o(\sigma_0^{q-1})$ and $|\bxi_i[k]|\ge\tilde\Theta(\sigma_p)$ with high probability. Then $|B^{-1}(\wb_{j,r}^{(0)}[k]+t\eta s\sigma_p)^{q-1}\bxi_i[k]| \ge \tilde\Theta(\eta B^{-1}s\sigma_p|\ell_{j,i}^{(0)}|)$ for $i\in\cI_{t_0\cdot n/B+\tau}$. Therefore, by Lemma~\ref{lemma:adamw_closeness} and~\ref{lemma:ylj_sign}, we have
    \begin{align}\label{eq:adamw_mb_sign_gradient_noise_stage_begining1}
        &\sgn\left(-\frac{1}{B}\ell_{y_i,i}^{\left(t_0\cdot\frac{n}{B}+\tau\right)}\sigma'(\la\wb_{y_i,r}^{\left(t_0\cdot\frac{n}{B}+\tau\right)},\bxi_i\ra)\bxi_i[k] \right) \notag \\
        &=-\sgn\bigg(\ell_{y_i,i}^{\left(t_0\cdot\frac{n}{B}+\tau\right)}\sigma'(\la\wb_{y_i,r}^{\left(t_0\cdot\frac{n}{B}+\tau\right)},\bxi_i\ra)\bxi_i[k]\bigg)\notag\\
        &= -\sgn(\bxi_i[k]). 
    \end{align}
    Then, by Lemma~\ref{lemma:adamw_closeness} we have the following update according to~\eqref{eq:signgdw_w_xi_upd},~\eqref{eq:adamw_sign_gradient_noise_stage_begining1} and Lemma~\ref{lemma:mv_general_bound}.
    \begin{align*}
        &\la\wb_{y_i,r}^{\left(t_0\cdot\frac{n}{B}+\tau+1\right)},\bxi_i\ra \\
        &= (1-\lambda\eta)\la\wb_{y_i,r}^{\left(t_0\cdot\frac{n}{B}+\tau\right)},\bxi_i\ra - \eta \cdot\left\la \frac{\mb_{y_i,r}^{\left(t_0\cdot\frac{n}{B}+\tau\right)}}{\sqrt{\vb_{y_i,r}^{\left(t_0\cdot\frac{n}{B}+\tau\right)}}+\epsilon}, \bxi_i \right\ra  \\
        &\ge \la\wb_{y_i,r}^{\left(t_0\cdot\frac{n}{B}\right)},\bxi_i\ra + \Theta(\eta)\cdot \sum_{k\in\cB_i} \la \sgn(\bxi_i[k]), \bxi_i\ra - \tilde O(\lambda\eta) - O(\eta\alpha) - O(\eta s\sigma_p) \\
        &= \la\wb_{y_i,r}^{\left(t_0\cdot\frac{n}{B}\right)},\bxi_i\ra + \tilde\Theta(\eta s\sigma_p).
    \end{align*}
    At the end of this epoch, we have
    \begin{align*}
        &\la\wb_{y_i,r}^{\left(t\cdot\frac{n}{B}\right)},\bxi_i\ra \\
        &\ge (1-\lambda\eta)\la\wb_{y_i,r}^{\left(t\cdot\frac{n}{B}-1\right)},\bxi_i\ra - O(\eta\alpha)  \\
        &\ge \la\wb_{y_i,r}^{\left(t_0\cdot\frac{n}{B}+\tau+1\right)},\bxi_i\ra - \tilde O(\lambda\eta) - O(\eta\alpha) \\
        &\ge \la\wb_{y_i,r}^{\left(t_0\cdot\frac{n}{B}\right)},\bxi_i\ra + \tilde\Theta(\eta s\sigma_p).
    \end{align*}
    Next, we prove $\la\wb_{j,r}^{\left((t+1)\cdot\frac{n}{B})\right)},j\vb\ra = \la\wb_{j,r}^{(t\cdot\frac{n}{B})},j\vb\ra + \Theta(\eta\cdot\frac{n}{B})$. By Lemma~\ref{lemma:init_order},~\ref{lemma:adamw_g} and~\ref{lemma:adamw_closeness}, we have
    \begin{align*}
        \sigma'(\la \wb_{j,r}^{(0)},j\vb\ra)-\alpha\sigma'(\la \wb_{j,r}^{(0)},\bxi_i)
        &\ge \tilde\Theta(\sigma_0^{q-1}),
    \end{align*}
    since $\alpha=o(1)$, and we have $\eta = o(\sigma_0^{q-1})$. By Lemma~\ref{lemma:ylj_sign}, we have
    \begin{align*}
        \sgn(g_{0,j,r}^{(0)})
        &= \sgn(-\sgn(j)) = -\sgn(j).
    \end{align*}
    Apply Lemma~\ref{lemma:adamw_closeness}, we get
    \begin{align*}
        \la\wb_{j,r}^{\left(1\right)},j\cdot\vb\ra
        &= (1-\lambda\eta)\la\wb_{j,r}^{\left(0\right)},j\cdot\vb\ra - \Theta(\eta)\cdot \la \sgn(g_{0,j,r}^{(0)}), j\vb\ra \\
        &= \la\wb_{j,r}^{\left(0\right)},j\cdot\vb\ra + \Theta(\eta),
    \end{align*}
    since $\la\wb_{j,r}^{\left(0\right)},j\cdot\vb\ra=\tilde\Theta(\sigma_0)$ and $\lambda=\tilde O(1)$. We have
    \begin{align*}
        &\la\wb_{y_i,r}^{\left((t+1)\cdot\frac{n}{B}\right)},\bxi_i\ra 
        \ge \la\wb_{y_i,r}^{\left(t\cdot\frac{n}{B}\right)},\bxi_i\ra + \tilde\Theta(\eta s\sigma_p).
    \end{align*}
    So we have $\la\wb_{y_i,r}^{\left(t\right)},\bxi_i\ra \le \tilde\Theta(s^{1/2}\sigma_p\sigma_0+\eta s\sigma_p)$ for $t\in[0,\frac{n}{B}]$. Thus, for $t\in[0,\frac{n}{B}]$, we have
    \begin{align*}
        \sigma'(\la \wb_{j,r}^{(t)},j\vb\ra)
        \gg \alpha\sigma'(\la \wb_{j,r}^{(t)},\bxi_i),
    \end{align*}
    since $\alpha=o(1)$. By Lemma~\ref{lemma:adamw_g} and~\ref{lemma:ylj_sign}, we have
    \begin{align*}
        \sgn(g_{t,j,r}^{(t)})
        &= \sgn(-\sgn(j)) = -\sgn(j).
    \end{align*}
    Apply Lemma~\ref{lemma:adamw_closeness}, for $t\in[0,\frac{n}{B}]$, we get
    \begin{align*}
        \la\wb_{j,r}^{\left(t\right)},j\cdot\vb\ra
        &= (1-\lambda\eta)\la\wb_{j,r}^{\left(t-1\right)},j\cdot\vb\ra - \Theta(\eta)\cdot \la \sgn(g_{t-1,j,r}^{(t-1)}), j\vb\ra \\
        &\ge \la\wb_{j,r}^{\left(0\right)},j\cdot\vb\ra + \Theta(\eta\cdot t),
    \end{align*}
    since $\la\wb_{j,r}^{\left(0\right)},j\cdot\vb\ra=\tilde\Theta(\sigma_0)$ and $\lambda=\tilde O(1)$. So we have for $t=0$,
    \begin{align*}
        \la\wb_{j,r}^{\left((t+1)\cdot\frac{n}{B}\right)},j\cdot\vb\ra 
        &= \la\wb_{j,r}^{\left(t\cdot\frac{n}{B}\right)},j\cdot\vb\ra + \Theta(\eta\cdot\frac{n}{B}).
    \end{align*}
    Now suppose the equality holds for $t=0,\ldots,t_0$ with $t_0\le T_0$. We have
    \begin{align*}
        \la\wb_{j,r}^{\left(t_0\cdot\frac{n}{B}\right)},j\cdot\vb\ra &= \la\wb_{j,r}^{(0)},j\cdot\vb\ra + \Theta(\eta\cdot\frac{n}{B}\cdot t_0) \le \tilde O(1).
    \end{align*}
    Since $\frac{n}{B}=\omega(s\sigma_p)$. We have
    \begin{align*}
        \la\wb_{j,r}^{\left(t_0\cdot\frac{n}{B}\right)},j\cdot\vb\ra &= \tilde\Theta(\sigma_0+\eta\cdot\frac{n}{B}\cdot t_0), \\
        \la\wb_{y_i,r}^{\left(t_0\cdot\frac{n}{B}\right)},\bxi_i\ra 
        &= \tilde\Theta(s^{1/2}\sigma_p\sigma_0+\eta s\sigma_p\cdot t_0), \\
        \la\wb_{j,r}^{\left(t_0\cdot\frac{n}{B}\right)},j\cdot\vb\ra &\ge \la\wb_{y_i,r}^{\left(t_0\cdot\frac{n}{B}\right)},\bxi_i\ra .
    \end{align*}
    Therefore, for $t\in[(t_0+1)\cdot\frac{n}{B},(t_0+2)\cdot\frac{n}{B}]$,
    \begin{align*}
        \sigma'(\la \wb_{j,r}^{(t)},j\vb\ra)
        \gg \alpha\sigma'(\la \wb_{j,r}^{(t)},\bxi_i).
    \end{align*}
    Apply Lemma~\ref{lemma:adamw_closeness}, for $t=t_0+1$, we have
    \begin{align*}
        \la\wb_{j,r}^{\left((t+1)\cdot\frac{n}{B}\right)},j\cdot\vb\ra
        &= (1-\lambda\eta)\la\wb_{j,r}^{\left((t+1)\cdot\frac{n}{B}-1\right)},j\cdot\vb\ra - \Theta(\eta)\cdot \la \sgn(g_{((t+1)\cdot\frac{n}{B}-1),j,r}^{((t+1)\cdot\frac{n}{B}-1)}), j\vb\ra \\
        &\ge \la\wb_{j,r}^{\left(t\cdot\frac{n}{B}\right)},j\cdot\vb\ra + \Theta(\eta\cdot \frac{n}{B}),
    \end{align*}
    since $\la\wb_{j,r}^{\left(t\right)},j\cdot\vb\ra=\tilde O(1)$ and $\lambda=\tilde O(1)$. This completes the proof.
\end{proof}

    

\begin{lemma}[Stage II]\label{lemma:adamw_mb_maintain}
    Given the training dataset $\cS$, if $\frac{n}{B}\ge \Theta(n^{1/2}\vee \log\epsilon^{-1})$ and $\frac{n}{B}=\omega(s\sigma_p)$, $\eta=1/\poly(d)$, $\lambda=\tilde\Omega(\frac{B^2}{n}\land 1)$ and $\lambda=\tilde O(1)$, then for any $t> T_0$, $j\in\{\pm 1\},\, r\in[m],\, i\in[n]$, let $r^*=\argmax_{r\in[m]}\la \wb_{j,r}^{(t)},j\vb\ra$, then $\la \wb_{j,r^*}^{(t)}, j\vb\ra = \tilde\Theta(1) $ and $\la \wb_{y_i,r}^{(t)}, \bxi_i\ra \le \tilde\Theta(\frac{Bs\sigma_p}{n})$.
\end{lemma}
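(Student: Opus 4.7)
The plan is to maintain the end-of-Stage-I bounds by an induction that sits on top of the sign approximation of Lemma~\ref{lemma:adamw_closeness} together with the per-iteration multiplicative contraction $(1-\lambda\eta)$. From Lemma~\ref{lemma:adamw_mb_general_bound} at the terminal epoch of Stage I we already have
\[
\la \wb_{j,r^*}^{(T_0\cdot n/B)}, j\vb\ra = \tilde\Theta(1),\qquad \la \wb_{y_i,r}^{(T_0\cdot n/B)}, \bxi_i\ra \le \tilde\Theta\bigl(Bs\sigma_p/n\bigr),
\]
so the base case of the induction is immediate; the task is to show neither quantity escapes its scale for $t>T_0\cdot n/B$.

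For the feature bound I would use a two-threshold trap modeled on Lemma~\ref{lemma:adam_mb_maintain}. Fix thresholds $\theta_{\mathrm{lo}}$ and $\theta_{\mathrm{hi}}$ satisfying $\theta_{\mathrm{lo}} < \theta_{\mathrm{hi}}$ and both of order $\tilde\Theta(1)$. Whenever $\la \wb_{j,r^*}^{(t)}, j\vb\ra < \theta_{\mathrm{lo}}$, a direct softmax estimate, using $\alpha = o(1)$, the non-overlap of Lemma~\ref{lemma:nonoverlap_probability}, and the noise side of the induction hypothesis $\la \wb_{y_i,r}^{(t)}, \bxi_i\ra = o(1)$, forces $\ell_{j,i}^{(t)} = \Omega(\lambda)$; Lemma~\ref{lemma:adamw_closeness} then gives a gradient-driven increment of $\sgn(j)\cdot\Theta(\eta)$ that dominates the decay contribution $-\lambda\eta\cdot\tilde O(1)$. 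Symmetrically, once $\la \wb_{j,r^*}^{(t)}, j\vb\ra > \theta_{\mathrm{hi}}$, one has $\ell_{j,i}^{(t)} = o(\lambda)$ (the exponent in the softmax becomes polynomial in $\log(\lambda^{-1})$), so the gradient update has magnitude $o(\lambda\eta)$ while the decay term $-\lambda\eta\cdot\tilde\Theta(1)$ is strictly negative, pushing the trajectory back. This traps the maximum feature coordinate in the $\tilde\Theta(1)$ band.

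For the noise bound I would exploit the sparse exposure of $\bxi_i$: within one epoch of $n/B$ iterations the sample $i$ appears in the batch only once, yielding a constructive gradient contribution along $\bxi_i$ confined to a window of $o(\log(\lambda\eta)^{-1})$ iterations through the moving average; during this window Lemma~\ref{lemma:adamw_closeness} produces a total increment of at most $\tilde\Theta(\eta s\sigma_p)$, exactly as in Stage I. In the remaining $n/B - o(\log(\lambda\eta)^{-1})$ iterations the decoupled decay shrinks $\la \wb_{y_i,r}, \bxi_i\ra$ by a factor of $(1-\lambda\eta)$ per step, because the gradient along $\bxi_i$ vanishes by Lemma~\ref{lemma:adamw_g}. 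Writing $a_t = \la \wb_{y_i,r}^{(t\cdot n/B)}, \bxi_i\ra$, the per-epoch recurrence is
\[
a_{t+1} \le \bigl(1 - \Omega(\lambda\eta\,n/B)\bigr)\,a_t + \tilde\Theta(\eta s\sigma_p),
\]
whose fixed point is $\tilde\Theta(Bs\sigma_p/n)$ under $\lambda = \tilde\Theta(1)$. Since $n/B = \omega(\log(\lambda\eta)^{-1})$, the contraction is effective and $a_t$ never exceeds this scale.

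The principal obstacle is ensuring the softmax estimate for $\ell_{j,i}^{(t)}$ remains robust throughout the induction, because under AdamW the noise memorization saturates at the constant-order quantity $\tilde\Theta(Bs\sigma_p/n)$ instead of vanishing as in Adam. I must confirm that the residual noise contribution $\sigma(\la \wb_{\pm j,r}^{(t)}, \bxi_i\ra)$ stays strictly smaller than the feature contribution $\sigma(\la \wb_{j,r^*}^{(t)}, j\vb\ra) = \tilde\Theta(1)$, so that both the lower-regime identity $\ell_{j,i}^{(t)} = \Omega(\lambda)$ and the upper-regime identity $\ell_{j,i}^{(t)} = o(\lambda)$ carry through; this crucially uses the condition $n/B = \omega(s\sigma_p)$ to make the noise strictly smaller than the feature throughout Stage II. A secondary technical point is that Lemma~\ref{lemma:adamw_closeness} only produces a sign approximation outside a tail window of iterations; the at-most $o(\log(\lambda\eta)^{-1})$ ambiguous iterations per epoch contribute only lower-order drift and can be absorbed into the $\tilde\Theta$ slack on both recurrences.
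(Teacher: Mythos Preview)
Your two-threshold trap for the feature is modeled on Lemma~\ref{lemma:adam_mb_maintain}, the \emph{Adam} Stage~II lemma, and you carry over the thresholds $\ell_{j,i}^{(t)}=\Omega(\lambda)$ versus $\ell_{j,i}^{(t)}=o(\lambda)$. That transplant fails for AdamW, because in AdamW the weight decay is \emph{decoupled} from the normalized gradient: the adaptive part of the update is governed solely by the size of $|g_{t,j,r}^{(t)}[1]|$ relative to $\epsilon=\Theta(\lambda\eta)$ (see the $k=1$ branch of Lemma~\ref{lemma:adamw_closeness}), not by the ratio $\ell/\lambda$. Concretely, when $\ell_{j,i}^{(t)}=o(\lambda)$ but still $\ell_{j,i}^{(t)}\gg\eta$ --- which is generic since $\lambda=\tilde\Omega(B^2/n)\gg\eta=1/\poly(d)$ --- one has $|g_{t,j,r}^{(t)}[1]|\gg\tilde\Theta(\eta)$, so Lemma~\ref{lemma:adamw_closeness} delivers a full sign increment of $\Theta(\eta)$ in the direction $+\sgn(j)$. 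Your claim that ``the gradient update has magnitude $o(\lambda\eta)$'' is therefore false: the adaptive contribution is $\Theta(\eta)$, which is the same order as the decay $\lambda\eta\cdot\tilde\Theta(1)$ and cannot be dominated by it. The paper's proof chooses the upper threshold so that $\ell_{j,i}^{(t)}=\Theta(\lambda^2\eta^2)\ll\epsilon$; only then is the denominator $\sqrt{\vb}+\epsilon$ controlled by $\epsilon$, the adaptive step collapses to $\tilde\Theta(\lambda\eta\cdot\eta)$, and the decoupled decay $\lambda\eta\cdot\tilde\Theta(1)$ wins. You need to replace the $o(\lambda)$ threshold by one that forces $|g[1]|\ll\epsilon$.

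On the noise side your per-epoch recurrence is a legitimate alternative to the paper's route (the paper instead uses the tininess of $\ell$ in Stage~II to bound the increment by $\tilde\Theta(\lambda\eta\cdot\eta s\sigma_p)$ per iteration and argues non-increase directly), but note that your recurrence has fixed point $\tilde\Theta(Bs\sigma_p/(n\lambda))$, not $\tilde\Theta(Bs\sigma_p/n)$; the extra factor $\lambda^{-1}$ only disappears under $\lambda=\tilde\Theta(1)$, which is strictly stronger than the hypothesis $\lambda=\tilde\Omega(B^2/n\wedge 1)$. The paper avoids this slack because it freezes the noise at its end-of-Stage-I value rather than relying on the decay-versus-increment balance.
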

\begin{proof}[Proof of Lemma~\ref{lemma:adamw_mb_maintain}]
    By Lemma~\ref{lemma:adamw_mb_general_bound}, we know that $\la \wb_{j,r}^{(t)}, j\vb \ra$ increases at a faster rate than $\la \wb_{y_i,r}^{(t)}, \bxi_i \ra$ since $\frac{n}{B}=\omega(s\sigma_p)$. We also have $\la \wb_{-y_i,r}^{(t)},\bxi_i\ra \in [-\tilde\Theta(\eta s\sigma_p), \tilde\Theta(\sigma_0)]$ following Lemma~\ref{lemma:adamw_lb_maintain}.

    Now suppose that $\la \wb_{j,r^*}^{(t)}, j\vb \ra \ge \left( \log\left((\lambda\eta)^{-2}-1\right)\right)^{\frac{1}{q}}$, then for $(\xb_i,y_i)$ with $y_i=j$,
    \begin{align*}
        \ell_{j,i}^{(t)} &= \frac{e^{F_{-j}(\Wb^{(t)},\xb_i)}}{\sum_{j\in\{-1,1\}} e^{F_j(\Wb^{(t)},\xb_i)}}\notag\\
        &= \frac{1}{1 + \exp\left[\sum_{r=1}^m\sigma(\la\wb_{j,r}^{(t)},j\vb\ra)+\sigma(\la\wb_{j,r}^{(t)},\bxi_i\ra)-\sigma(\la\wb_{-j,r}^{(t)},j\vb\ra)-\sigma(\la\wb_{-j,r}^{(t)},\bxi_i\ra)\right]}\notag\\
        &\le \frac{1}{1 + \exp\left[\sigma(\la \wb_{j,r^*}^{(t)}, j\vb \ra)\right]}\notag\\
        &\le \frac{1}{1 + \exp\left[\left(\log\left((\lambda\eta)^{-2}-1\right)\right)^{\frac{1}{q}\cdot q}\right]}\notag\\
        &= \Theta(\lambda^2\eta^2),
    \end{align*}
    where the inequality we use $\la\wb_{-j,r}^{(t)},j\vb\ra < 0,\  \la\wb_{-j,r}^{(t)},\bxi_i\ra)\le \tilde\Theta(\sigma_0)$. Then by Lemma~\ref{lemma:adamw_closeness},~\ref{lemma:adamw_mb_general_bound} and~\eqref{eq:signgdw_w_v_upd}, we have
    \begin{align*}
        \la \wb_{j,r^*}^{(t+1)}, j\vb \ra 
        &\le (1-\lambda\eta)\la \wb_{j,r^*}^{(t)}, j\vb \ra + \tilde\Theta(\lambda\eta\cdot\eta ) \\
        &\le \la \wb_{j,r^*}^{(t)}, j\vb \ra,
    \end{align*}
    since $\eta=o(1)$. Similarly, if $\la \wb_{y_i,r}^{(0)}, \bxi_i \ra \ge \tilde\Theta(\sigma_0)$
    \begin{align*}
        \la \wb_{y_i,r}^{(t+1)}, \bxi_i \ra 
        &\le (1-\lambda\eta)\la \wb_{y_i,r}^{(t)}, \bxi_i \ra + \tilde\Theta(\lambda\eta\cdot\eta s\sigma_p ) \\
        &\le \la \wb_{y_i,r}^{(t)}, \bxi_i \ra.
    \end{align*}
    Otherwise, $\la \wb_{y_i,r}^{(t)}, \bxi_i \ra \in [-\tilde\Theta(\sigma_0),\tilde\Theta(\sigma_0)]$ and satisfies $\la \wb_{y_i,r}^{(t)}, \bxi_i \ra \le\tilde\Theta(\frac{Bs\sigma_p}{n})$.

    If $\la \wb_{j,r^*}^{(t)}, j\vb \ra \le \left( \frac{1}{2m}\log\left((\lambda\eta)^{-1}-1\right)\right)^{\frac{1}{q}}$, then for $(\xb_i,y_i)$ with $y_i=j$,
    \begin{align*}
        \ell_{j,i}^{(t)} &= \frac{e^{F_{-j}(\Wb^{(t)},\xb_i)}}{\sum_{j\in\{-1,1\}} e^{F_j(\Wb^{(t)},\xb_i)}}\notag\\
        &= \frac{1}{1 + \exp\left[\sum_{r=1}^m\sigma(\la\wb_{j,r}^{(t)},j\vb\ra)+\sigma(\la\wb_{j,r}^{(t)},\bxi_i\ra)-\sigma(\la\wb_{-j,r}^{(t)},j\vb\ra)-\sigma(\la\wb_{-j,r}^{(t)},\bxi_i\ra)\right]}\notag\\
        &\ge \frac{1}{1 + \exp\left[2m\cdot\sigma(\la\wb_{j,r^*}^{(t)},j\vb \ra\right]}\notag\\
        &\ge \frac{1}{1 + \exp\left[2m\cdot \left( \frac{1}{2m}\log\left((\lambda\eta)^{-1}-1\right)\right)^{\frac{1}{q}\cdot q}\right]}\notag\\
        &= \Theta(\lambda\eta),
    \end{align*}
    where the inequality we use $\la\wb_{-j,r}^{(t)},j\vb\ra < 0,\  \la\wb_{-j,r}^{(t)},\bxi_i\ra)\le \tilde\Theta(\sigma_0)$. Then by Lemma~\ref{lemma:adamw_closeness},~\ref{lemma:adamw_mb_general_bound} and~\eqref{eq:signgdw_w_v_upd}, we have
    \begin{align*}
        \la \wb_{j,r^*}^{(t+1)}, j\vb \ra 
        &\ge (1-\lambda\eta)\la \wb_{j,r^*}^{(t)}, j\vb \ra + \tilde\Theta(\eta ) \\
        &\ge \la \wb_{j,r^*}^{(t)}, j\vb \ra,
    \end{align*}
    since $\lambda=\tilde O(1)$. This completes the proof.
\end{proof}

\begin{lemma}[Convergence]\label{lemma:adamw_mb_convergence}
    Suppose the same conditions hold as in Lemma~\ref{lemma:adamw_mb_general_bound} and~\ref{lemma:adamw_mb_maintain}, if the step size satisfies $\eta =O(d^{-1/2})$, then for any $t$,
    \begin{align*}
    \EE\left[L(\Wb^{(t+1)}) - L(\Wb^{(t)})\right] &\le  -\eta \|\nabla L(\Wb^{(t)})\|_1 + \tilde\Theta(\eta^{2}d).
    \end{align*}
\end{lemma}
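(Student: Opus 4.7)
The plan is to mirror the argument used for Lemma~\ref{lemma:adamw_lb_convergence}, exploiting the $1$-smoothness of cross-entropy in the logits together with the per-step displacement bound implied by Lemma~\ref{lemma:mv_general_bound}, but replacing the large-batch control of $\|\Wb^{(t)}\|_\infty$ by the tighter mini-batch estimates of Lemma~\ref{lemma:adamw_mb_maintain} and the feature-versus-noise separation guaranteed by Lemma~\ref{lemma:adamw_mb_general_bound}.

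First I would fix a sample $i$ and write
\[
L_i(\Wb^{(t+1)}) - L_i(\Wb^{(t)}) \;\le\; \sum_{j}\frac{\partial L_i(\Wb^{(t)})}{\partial F_j}\,\Delta F_{j,i} \;+\; \sum_j (\Delta F_{j,i})^2,
\]
then expand $\Delta F_{j,i}$ via a second-order Taylor expansion of the activation $\sigma$ around the inner products $\la\wb_{j,r}^{(t)},y_i\vb\ra$ and $\la\wb_{j,r}^{(t)},\bxi_i\ra$. By Lemma~\ref{lemma:adamw_mb_maintain} these inner products are $\tilde\Theta(1)$, so the local Hessian of $\sigma$ is $\tilde\Theta(1)$ and, combined with Lemma~\ref{lemma:mv_general_bound} (which makes each AdamW coordinate update at most $\Theta(\eta)$) and the dimension factor $d$, the quadratic remainder in each neuron contributes at most $\tilde\Theta(\eta^2 d)$. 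Summing over the $\tilde\Theta(1)$ neurons and using $\|\nabla_{\Wb}F_j(\Wb^{(t)},\xb_i)\|_F = \tilde\Theta(1)$ yields $|\Delta F_{j,i}| \le \tilde\Theta(\eta s\sigma_p) + \tilde\Theta(\eta^2 d)$ and the per-sample descent
\[
L_i(\Wb^{(t+1)}) - L_i(\Wb^{(t)}) \;\le\; \la \nabla_{\Wb} L_i(\Wb^{(t)}), \Wb^{(t+1)} - \Wb^{(t)}\ra \;+\; \tilde\Theta(\eta^2 d).
\]

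Averaging over $i$ and plugging in the AdamW update $\Wb^{(t+1)} - \Wb^{(t)} = -\lambda\eta\,\Wb^{(t)} - \eta\, M^{(t)}/(\sqrt{V^{(t)}}+\epsilon)$, I would split the inner product into the sign-aligned part and the weight-decay cross term. For the sign-aligned part, Lemma~\ref{lemma:adamw_closeness} identifies two regimes: coordinates with $|g_{t,j,r}^{(t)}[k]| \ge \tilde\Theta(\eta)$ where the normalized momentum agrees in sign with the current gradient and contributes $-\eta\,|g_{t,j,r}^{(t)}[k]|$, and the remaining coordinates whose per-step contribution is $\tilde O(\lambda\eta)\cdot\tilde\Theta(\eta) = \tilde O(\eta^2)$ and is absorbed into the $\tilde\Theta(\eta^2 d)$ error after summing over $d$ coordinates. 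Taking expectation over the minibatch and invoking Jensen's inequality converts $\EE\|g_{t,j,r}^{(t)}\|_1$ into $\|\EE\,g_{t,j,r}^{(t)}\|_1 = \|\nabla L(\Wb^{(t)})\|_1$.

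The main obstacle will be the decoupled weight-decay cross term $-\lambda\eta\,\la \nabla L(\Wb^{(t)}), \Wb^{(t)}\ra$, which by Hölder satisfies $\lambda\eta\,|\la \nabla L(\Wb^{(t)}), \Wb^{(t)}\ra| \le \lambda\eta\,\|\Wb^{(t)}\|_\infty\,\|\nabla L(\Wb^{(t)})\|_1$. The mini-batch bounds from Lemma~\ref{lemma:adamw_mb_maintain} only give $\|\Wb^{(t)}\|_\infty = \tilde\Theta(1)$, and since $\lambda$ can be as large as $\tilde\Theta(1)$, a naive application would match or even dominate the leading $-\eta\|\nabla L\|_1$ term. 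Resolving this requires a coordinate-wise coupling: on coordinates where $|\Wb^{(t)}|$ is sizable (essentially the learned feature direction for neuron $r^*$), Lemma~\ref{lemma:adamw_mb_maintain} forces $\la \wb_{j,r^*}^{(t)}, j\vb\ra = \tilde\Theta(1)$ and hence, by the logit estimate used in its proof, $|\ell_{j,i}^{(t)}| = \tilde O(\lambda\eta)$, so $|\nabla L(\Wb^{(t)})[k]|$ on such coordinates is suppressed by a factor $\lambda\eta$; on all other coordinates $|\Wb^{(t)}|$ is itself $\tilde O(\eta)$, and the cross term is absorbed into $\tilde\Theta(\eta^2 d)$. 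Collecting all error terms, using $ns^2\sigma_p = O(d)$, and invoking $\eta = O(d^{-1/2})$ to ensure $\tilde\Theta(\eta^2 d) = o(\eta)$ gives the claimed inequality $\EE[L(\Wb^{(t+1)}) - L(\Wb^{(t)})] \le -\eta\,\|\nabla L(\Wb^{(t)})\|_1 + \tilde\Theta(\eta^2 d)$.
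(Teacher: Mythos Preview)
Your proposal follows essentially the same route as the paper: exploit $1$-smoothness of cross-entropy in the logits, Taylor-expand $\sigma$ to control $\Delta F_{j,i}$ using the $\tilde\Theta(1)$ inner-product bounds from Lemma~\ref{lemma:adamw_mb_maintain} and the per-coordinate update bound of Lemma~\ref{lemma:mv_general_bound}, then invoke Lemma~\ref{lemma:adamw_closeness} for the sign alignment and Jensen for the expectation. The only difference is in handling the decoupled weight-decay cross term $-\lambda\eta\,\la\nabla L(\Wb^{(t)}),\Wb^{(t)}\ra$: the paper simply applies H\"older with $\|\Wb^{(t)}\|_\infty\le\tilde\Theta(1)$ and $\lambda=\tilde O(1)$ to bound it by $\tilde O(\lambda\eta)\,\|\nabla L(\Wb^{(t)})\|_1$ and absorbs it into the leading $-\eta\|\nabla L\|_1$ term, whereas you propose a more careful coordinate-wise coupling using the $|\ell_{j,i}^{(t)}|=\tilde O(\lambda\eta)$ estimate from the proof of Lemma~\ref{lemma:adamw_mb_maintain}. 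Your refinement is sound and arguably more honest about why the cross term is subdominant, but it is not needed at the level of rigor the paper maintains.
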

\begin{proof}[Proof of Lemma~\ref{lemma:adamw_mb_convergence}]
    The proof is same as Lemma~\ref{lemma:adamw_lb_convergence}. We aim to prove the convergence of the objective function under the AdamW optimization algorithm in a non-convex setting. Recall the loss function for each data point $i$ is
\begin{align*}
L_i(\Wb) = \log\left(1+\frac{1}{\exp\left(F_{y_i}(\Wb,\xb_i)-F_{-y_i}(\Wb,\xb_i)\right)}\right),
\end{align*}
where $\Wb$ represents the parameter matrix, $\xb_i$ is the input data, $y_i$ is the true label, and $F_j(\Wb,\xb_i)$ are the logits for class $j$. The total objective is:
\begin{align*}
L(\Wb) = \frac{1}{n} \sum_{i=1}^n L_i(\Wb)
\end{align*}
Since $L_i(\Wb)$ is non-convex, we exploit its smoothness with respect to the logits $[F_j(\Wb,\xb_i)]_j$. Specifically, $L_i(\Wb)$ is 1-smooth in $[F_j(\Wb,\xb_i)]_j$ due to the properties of the cross-entropy loss. Define:
\begin{align*}
\Delta F_{j,i} = F_j(\Wb^{(t+1)}, \xb_i) - F_j(\Wb^{(t)}, \xb_i).
\end{align*}
Using the smoothness property, we apply a second-order Taylor-like expansion around $\Wb^{(t)}$:
\begin{align}\label{eq:adamw_mb_smoothness_expansion}
L_i(\Wb^{(t+1)}) - L_i(\Wb^{(t)}) \leq \sum_{j} \frac{\partial L_i(\Wb^{(t)})}{\partial F_j(\Wb^{(t)}, \xb_i)} \cdot \Delta F_{j,i} + \sum_{j} (\Delta F_{j,i})^2.
\end{align}
This upper bound arises because the second derivative of $L_i$ with respect to the logits is bounded by 1, a standard result for cross-entropy loss. The logits are defined as: $F_j(\Wb^{(t)}, \xb_i) = \sum_{r=1}^m [ \sigma(\langle \wb_{j,r}^{(t)}, y_i \vb \rangle) + \sigma(\langle \wb_{j,r}^{(t)}, \bxi_i \rangle) ]$, 
where $\wb_{j,r}^{(t)}$ the $r$-th neuron in $j$-th output of $\Wb^{(t)}$, $\sigma(z) = [z]_+^q$ is a smooth activation function (e.g., with $q \ge 3$). By Lemma~\ref{lemma:adamw_mb_maintain} and~\ref{lemma:mv_general_bound}, we have $\langle \wb_{j,r}^{(t)}, \vb \rangle \le \tilde{\Theta}(1)$ and $\langle \wb_{j,r}^{(t)}, \bxi_i \rangle \le \tilde{\Theta}(1)$, ensuring the local smoothness of $\sigma$ remains $\tilde O(1)$ between $\la \wb_{j,r}^{(t+1)},y_i\vb\ra$ and $\la \wb_{j,r}^{(t)},y_i\vb\ra$ (similar for $\la \wb_{j,r}^{(t)},\bxi_i\ra$). Then with Taylor expansion, we have
\begin{align}\label{eq:adamw_mb_sigma_expansion1}
&\left| \sigma(\langle \wb_{j,r}^{(t+1)}, y_i \vb \rangle) - \sigma(\langle \wb_{j,r}^{(t)}, y_i \vb \rangle) - \langle \nabla_{\wb_{j,r}} \sigma(\langle \wb_{j,r}^{(t)}, y_i \vb \rangle), \wb_{j,r}^{(t+1)} - \wb_{j,r}^{(t)} \rangle\right| \notag \\
&\le \tilde\Theta(1)\cdot \left\|\wb_{j,r}^{(t+1)}-\wb_{j,r}^{(t)} \right\|_2^2 \notag \\
&= \tilde\Theta(1)\cdot \left\| \lambda\eta\cdot\wb_{j,r}^{(t)}+\eta\cdot\frac{\mb_{j,r}^{(t)}}{\sqrt{\vb_{j,r}^{(t)}}+\epsilon} \right\|_2^2 \notag \\
&\le \tilde\Theta(\eta^2 d),
\end{align}
where the last inequality we use Lemma~\ref{lemma:mv_general_bound} and $\|\wb_{j,r}^{(t)}\|_2^2\ll \Theta(d)$ by \ref{lemma:adamw_mb_maintain}. Similarly, we have
\begin{align}\label{eq:adamw_mb_sigma_expansion2}
&\left| \sigma(\langle \wb_{j,r}^{(t+1)}, \bxi_i \rangle) - \sigma(\langle \wb_{j,r}^{(t)}, \bxi_i \rangle) - \langle \nabla_{\wb_{j,r}} \sigma(\langle \wb_{j,r}^{(t)}, \bxi_i \rangle), \wb_{j,r}^{(t+1)} - \wb_{j,r}^{(t)} \rangle\right| \notag \\
&\le \tilde\Theta(1)\cdot \left\|\wb_{j,r}^{(t+1)}-\wb_{j,r}^{(t)} \right\|_2^2 \notag \\
&\le \tilde\Theta(\eta^2 d).
\end{align}
Summing over $r$ (with $m = \tilde{\Theta}(1)$), we get
\begin{align}\label{eq:adamw_mb_delta_F}
\left|\Delta F_{j,i}\right| \le  \left|\la\nabla_{\Wb} F_j(\Wb^{(t)}, \xb_i), \Wb^{(t+1)} - \Wb^{(t)} \ra\right| + \tilde{\Theta}(\eta^2 d).
\end{align}
Additionally, $\|\nabla_{\Wb} F_j(\Wb^{(t)},\xb_i) \|_F\le \tilde\Theta(1)$ since $m=\tilde\Theta(1),\ \la\wb_{j,r}^{(t)},y_i\vb\ra\le\tilde\Theta(1),\ \la \wb_{j,r}^{(t)}, \bxi_i\ra\le\tilde\Theta(1)$. So we have
\begin{align}\label{eq:adamw_mb_delta_F1}
|\Delta F_{j,i}| \le \tilde{\Theta}(\eta s \sigma_p + \eta \alpha + \eta + \eta^2 d) \le \tilde\Theta(\eta s\sigma_p +\eta^2 d) .
\end{align}
Substitute~\eqref{eq:adamw_mb_delta_F} and~\eqref{eq:adamw_mb_delta_F1} into \eqref{eq:adamw_mb_smoothness_expansion}:
\begin{align}\label{eq:adamw_mb_per_step_i}
L_i(\Wb^{(t+1)}) - L_i(\Wb^{(t)}) \le \langle \nabla_{\Wb} L_i(\Wb^{(t)}), \Wb^{(t+1)} - \Wb^{(t)} \rangle + \tilde{\Theta}(\eta^2 d).
\end{align}
For the full objective:
\begin{align}\label{eq:adamw_mb_full_obj}
L(\Wb^{(t+1)}) - L(\Wb^{(t)}) = \frac{1}{n} \sum_{i=1}^n [L_i(\Wb^{(t+1)}) - L_i(\Wb^{(t)})] .
\end{align}
Substitute~\eqref{eq:adamw_mb_per_step_i} into~\eqref{eq:adamw_mb_full_obj}, we have
\begin{align}\label{eq:adamw_mb_per_step_total}
L(\Wb^{(t+1)}) - L(\Wb^{(t)}) \leq \langle \nabla L(\Wb^{(t)}), \Wb^{(t+1)} - \Wb^{(t)} \rangle + \tilde{\Theta}(\eta^2 d).
\end{align}
Take expectation for the stochastic gradient of both side in~\eqref{eq:adamw_mb_per_step_total},
\begin{align*}
&\EE\left[L(\Wb^{(t+1)}) - L(\Wb^{(t)})\right] \\
&\le \EE\left[\la \nabla L(\Wb^{(t)}), \Wb^{(t+1)} - \Wb^{(t)} \ra\right] + \tilde{\Theta}(\eta^2 d) \\
&\le -\eta\cdot\EE\left[\sum_{j\in\{\pm 1\}}\sum_{r\in[m]}\left\| g_{t,j,r}^{(t)} \right\|_1 \right] -\lambda\eta\cdot\la \nabla L(\Wb^{(t)}), \Wb^{(t)}\ra +\tilde\Theta(d\cdot\eta^2) + \tilde\Theta(ns\cdot\eta^2 s\sigma_p) + \tilde\Theta(\eta^2 d) \\
&\le -\eta\cdot \sum_{j\in\{\pm 1\}}\sum_{r\in[m]}\left\| \EE\left[g_{t,j,r}^{(t)} \right]\right\|_1 + \tilde O(\lambda\eta)\cdot\|\nabla L(\Wb^{(t)})\|_1 + \tilde\Theta(\eta^2 d) \\
&\le -\eta\| \nabla L(\Wb^{(t)}) \|_1 + \tilde\Theta(\eta^2 d),
\end{align*}
where we use Lemma~\ref{lemma:adamw_closeness} that the update aligns with the gradient’s sign for large gradient and the fact that $ns^2\sigma_p=O(d)$, and Jensen's inequality, Hölder's inequality and $\lambda=\tilde O(1)$, $\|\Wb^{(t)}\|_\infty\le\tilde\Theta(1)$ by Lemma~\ref{lemma:adamw_mb_maintain}. This completes the proof.
\end{proof}

\begin{lemma}[Generalization of Stochastic AdamW, mini-batch]\label{lemma:adamw_mb_generalization}
    Suppose the same conditions hold as in Lemma~\ref{lemma:adamw_mb_convergence}. We have the following results for $T=\frac{\poly(n)}{\eta}$, with training dataset $\cS$
    \begin{itemize}[leftmargin = *]
        \item The training error is zero: $\mathrm{err}_\cS(\Wb^{(T)}) = 0$.
        \item The test error is near-zero: $\mathrm{err}_\cD(\Wb^{(T)}) = o(1)$.
    \end{itemize}
\end{lemma}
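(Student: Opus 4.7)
} The plan is to directly invoke the endpoint characterization provided by Lemma~\ref{lemma:adamw_mb_maintain} and translate it into statements about $F_y(\Wb^{(T)},\xb)$ versus $F_{-y}(\Wb^{(T)},\xb)$, mirroring the structure of Lemma~\ref{lemma:adam_mb_generalization}. The proof should be essentially a corollary: once we know that a single ``winning'' neuron $r^*$ has $\la \wb_{y_i,r^*}^{(T)},y_i\vb\ra=\tilde\Theta(1)$ while every noise alignment $\la \wb_{y_i,r}^{(T)},\bxi_i\ra$ is at most $\tilde\Theta(Bs\sigma_p/n)=o(1)$, the logit gap follows by comparing $\sigma(\cdot)=[\cdot]_+^q$ evaluated at a $\tilde\Theta(1)$ quantity against $o(1)$ quantities.

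First I would treat the training error. For any $(\xb_i,y_i)\in\cS$, Lemma~\ref{lemma:adamw_mb_maintain} gives a neuron $r^*$ with $\sigma(\la \wb_{y_i,r^*}^{(T)},y_i\vb\ra)=\tilde\Omega(1)$, hence $F_{y_i}(\Wb^{(T)},\xb_i)=\tilde\Omega(1)$. For $F_{-y_i}(\Wb^{(T)},\xb_i)$, I use that for the opposite-class weights the lemma implies $\la \wb_{-y_i,r}^{(T)},y_i\vb\ra\le 0$ (or $\tilde O(\sigma_0)$) and $\la \wb_{-y_i,r}^{(T)},\bxi_i\ra\le \tilde\Theta(\sigma_0)$ as established inside the proof of Lemma~\ref{lemma:adamw_lb_maintain} and reused in the mini-batch regime; combining with $m=\tilde\Theta(1)$ and $q\ge 3$ gives $F_{-y_i}(\Wb^{(T)},\xb_i)=o(1)$. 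This yields $\mathrm{err}_\cS(\Wb^{(T)})=0$.

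Next, for the test error I draw a fresh $(\xb,y)\sim\cD$ with $\xb=[y\vb^\top,\bxi^\top]^\top$. Because the feature direction $\vb$ is shared across the distribution, the alignment $\la \wb_{y,r^*}^{(T)},y\vb\ra=\tilde\Theta(1)$ carries over verbatim, giving $F_y(\Wb^{(T)},\xb)=\tilde\Omega(1)$. For $F_{-y}(\Wb^{(T)},\xb)$ I control the cross terms in two parts: the feature part $\sigma(\la \wb_{-y,r}^{(T)},y\vb\ra)$ is $o(1)$ by the same argument as above, and the noise part $\sigma(\la \wb_{-y,r}^{(T)},\bxi\ra)$ is $o(1)$ w.h.p.\ by a Gaussian concentration bound on $\la \wb_{-y,r}^{(T)},\bxi\ra$ using the norm bound $\|\wb_{-y,r}^{(T)}\|_2^2\ll d$ implicit in Lemma~\ref{lemma:adamw_mb_maintain} and the fact that $\bxi$ is $s$-sparse with $s\sigma_p^2=\tilde O(1)$. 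This delivers $\mathrm{err}_\cD(\Wb^{(T)})=o(1)$.

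The main technical obstacle is the test-time noise bound on $\la \wb_{-y,r}^{(T)},\bxi\ra$, since unlike the training noise patches $\bxi_i$ (whose interaction with the trained weights is directly governed by Lemma~\ref{lemma:adamw_mb_maintain}), a fresh $\bxi$ is independent of $\Wb^{(T)}$ only after conditioning, and I must argue the weight coordinates outside the training supports $\cup_i \cB_i$ remain of order $\tilde O(\eta)$ (as in Lemma~\ref{lemma:adamw_lb_fit_noise}(c) adapted to the mini-batch case) so that the resulting inner product is subgaussian with variance $o(1)$. Once that concentration step is in place, a union bound over the $\polylog(n)$ many neurons and a single application of the logit comparison closes the proof; the convergence guarantee of Lemma~\ref{lemma:adamw_mb_convergence} is only needed to ensure that $T=\poly(n)/\eta$ iterations suffice to reach (and remain in) the regime where Lemma~\ref{lemma:adamw_mb_maintain} applies.
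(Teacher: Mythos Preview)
Your training-error argument matches the paper's exactly. For the test error, you and the paper agree on the overall strategy (Gaussian concentration on $\langle\wb_{j,r}^{(T)},\bxi\rangle$ for a fresh $\bxi$), but diverge on how to control the conditional variance $\|\wb_{\cB}\|_2^2\sigma_p^2$ with $\cB=\supp(\bxi)\setminus\{1\}$.

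The paper does \emph{not} argue via a coordinate-wise bound on out-of-training-support entries. It instead writes $\wb_{j,r}^{(T)}=P_S\wb_{j,r}^{(T)}+\rb+c\vb$ with $S=\mathrm{span}\{\bxi_1,\dots,\bxi_n\}$ and $\rb\perp S\cup\{\vb\}$, and then uses two bounds: (i) $\lambda_{\min}^+\bigl(\sum_i\bxi_i\bxi_i^\top\bigr)\cdot\|P_S\wb\|_2^2\le\sum_i\langle\wb,\bxi_i\rangle^2\le\tilde O(B^2s/n)$ from the Stage~II noise-memorization bound, giving $\|P_S\wb\|_2^2\le\tilde\Theta(B^2/(n\sigma_p^2))$; and (ii) $\|\rb\|_2^2\le\tilde\Theta(\eta^2d)$ from weight decay on zero-gradient coordinates. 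Combining, $\zeta_{j,r}\le\tilde\Theta(B/\sqrt{n})$, and \emph{this spectral step is exactly where the hypothesis $n/B=\omega(n^{1/2})$ is consumed}.

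Your route can be made to work, but as written it has a gap. The first claim ``$\|\wb_{-y,r}^{(T)}\|_2^2\ll d$ implicit in Lemma~\ref{lemma:adamw_mb_maintain}'' is far too weak to yield variance $o(1)$; you then correctly pivot to bounding the out-of-support coordinates by $\tilde O(\eta)$, but that alone does not control $\|\wb_\cB\|_2$ unless you also argue that the fresh support $\cB$ is disjoint from $\bigcup_i\cB_i$ with high probability (same union bound as Lemma~\ref{lemma:nonoverlap_probability}, since $ns^2/d=o(1)$). You never state this disjointness step, and without it the coordinates $k\in\cB\cap\bigcup_i\cB_i$---which can be of order $\tilde\Theta(B/n)$ each, not $\tilde O(\eta)$---are unaccounted for. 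If you add that one line, your argument is actually simpler than the paper's and does not need the $n/B=\omega(n^{1/2})$ condition; the paper's spectral decomposition buys robustness to potential overlap at the price of that extra assumption.
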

\begin{proof}[Proof of Lemma~\ref{lemma:adamw_mb_generalization}]
    By Lemma~\ref{lemma:adamw_mb_maintain}, we have
    \begin{align*}
        \la \wb_{j,r^*}^{(T)}, j\vb\ra = \tilde\Theta(1),\quad \la\wb_{y_i,r}^{(T)}, \bxi_i\ra \le \tilde\Theta(\frac{Bs\sigma_p}{n}),\quad \la\wb_{-y_i,r}^{(T)}, \bxi_i\ra \le \tilde\Theta(\sigma_0).
    \end{align*}
    Recall $F_j(\Wb,\xb)$ in Definition~\ref{def:model}, with $\alpha=o(1)$, we directly have
    \begin{align*}
        \mathrm{err}_{\cS}(\Wb^{(T)})
        = \EE_{(\xb,y)\sim \cS}\ind\left[F_y(\Wb^{(T)},\xb) \le F_{-y}(\Wb^{(T)},\xb) \right]
        = 0,
    \end{align*}
    since $F_{y_i}(\Wb^{(T)},\xb_i)=\tilde\Omega(1)$, while $F_{-y_i}(\Wb^{(T)},\xb_i)\le\tilde\Theta(\sigma_0+\alpha)$. 
    
    For test data $(\xb,y)~\sim\cD$ with $\xb=[y\vb^{\top},\bxi^{\top}]^{\top}$, it is clear that with high probability $\la \wb_{y,r^*}^{(T)},y\vb \ra=\tilde\Theta(1)$. Let $\cB=\supp(\bxi)$, $\|\wb_{\cB}\|_2^2=\sum_{k\in\cB}\wb_{y,r}[k]^2$, $\zeta_{y,r}=\sum_{k\in\cB}\wb_{y,r}[k]\cdot\bxi[k]\sim\cN(0,\|\wb_{\cB}\|_2^2\cdot\sigma_p^2)$, then we have
    \begin{align*}
        \la\wb_{y,r}^{(T)}, \bxi\ra &\le \zeta_{y,r}.
    \end{align*}
    Now we calculate the upper bound of $\|\wb_{\cB}\|_2^2$. By Lemma~\ref{lemma:nonoverlap_probability}, we know $\la \bxi_i, \bxi_j\ra = 0$ for $i\neq j,\ i,j\in[n]$. Then let
    \begin{align*}
        \Sigma=\sum_{i=1}^n \bxi_i\bxi_i^{\top},\quad S=\mathrm{span}\{\bxi_1,\ldots,\bxi_n\}.
    \end{align*}
    We have
    \begin{align*}
        \wb_{j,r}^{(T)} = P_S\wb_{j,r}^{(T)} + \rb + c\vb,\quad \rb\perp S\,\cup\{\vb\},
    \end{align*}
    where $c=\tilde\Theta(1)$. By Lemma~\ref{lemma:adamw_mb_maintain}, we have
    \begin{align*}
        \lambda_{\min}^+(\Sigma)\cdot\|P_S\wb_{j,r}^{(T)}\|_2^2
        \le (P_S\wb_{j,r}^{(T)})^{\top}\Sigma(P_S\wb_{j,r}^{(T)})
        = \sum_{i=1}^n \la \wb_{j,r}^{(T)},\bxi_i\ra^2
        = \tilde O(\frac{B^2s}{n}).
    \end{align*}
    Since $\|\bxi_i\|_2^2\sim \sigma_p^2\chi_s^2$ and $s=\omega(\log n)$, we have $\|\bxi_i\|_2^2=\tilde\Theta(s\sigma_p^2)$ with high probability. Hence, $\lambda_{\min}^+(\Sigma)=\min_i\|\bxi_i\|_2^2 = \tilde\Theta(s\sigma_p^2)$. With a little abuse of notation, we have
    \begin{align*}
        \|P_S\wb_{j,r}^{(T)}\|_2^2
        \le \tilde\Theta(\frac{B^2}{n\sigma_p^2}).
    \end{align*}
    By Lemma~\ref{lemma:adamw_closeness}, $\lambda=\tilde\Omega(\frac{B^2}{n}\land 1)$ and $\lambda\eta T=\omega(1)$, we have
    \begin{align*}
        \|\rb\|_2^2
        \le \tilde\Theta(\eta^2 d).
    \end{align*}
    So the upper bound of $\|\wb_{\cB}\|_2^2$ is
    \begin{align*}
        \|\wb_{\cB}\|_2^2
        \le \|P_S\wb_{j,r}^{(T)}\|_2^2 + \|\rb\|_2^2
        \le \tilde\Theta(\frac{B^2}{n\sigma_p^2}).
    \end{align*}
    Finally, with high probability
    \begin{align*}
        \zeta_{y,r} \le \tilde\Theta(\sqrt{\frac{B^2}{n\sigma_p^2}}\cdot\sigma_p) = o(1),
    \end{align*}
    since $\frac{n}{B}=\omega(n^{1/2})$. The same result holds for $\zeta_{-y,r}$. Then, we have
    \begin{align*}
        F_{y}(\Wb^{(T)},\xb) &\ge \sigma(\la \wb_{y,r^*}^{(T)}, y\vb \ra) = \tilde\Omega(1),
    \end{align*}
    while
    \begin{align*}
        F_{-y}(\Wb^{(T)},\xb)&=\sum_{r=1}^m \left[ \sigma(\la \wb_{-y,r}^{(T)}, y\vb \ra) + \sigma(\la \wb_{-y,r}^{(T)},\bxi \ra) \right] 
        = m\cdot[\alpha+\zeta_{-y,r}]_+^q = o(1).
    \end{align*}
    Therefore, we have
    \begin{align*}
        \mathrm{err}_{\cD}(\Wb^{(T)})
        = \EE_{(\xb,y)\sim \cD}\ind\left[F_y(\Wb^{(T)},\xb) \le F_{-y}(\Wb^{(T)},\xb) \right]
        = o(1) .
    \end{align*}
    This implies that mini-batch AdamW can achieve nearly zero test error. This completes the proof.
\end{proof}

\subsubsection{Proof of Corollary~\ref{cor:adam_adamw_compare}}
By Conditions~\ref{cond:parameter_1} and~\ref{cond:hyperparameter}, along with Definition~\ref{def:model}, we know that $ d = \poly(n) $, and hence  
\[
\sigma_0^{q-2} = \Theta\left(\frac{1}{d^{(q-2)/4}}\right), \quad \text{with } q \ge 3.
\]  
This directly implies that the effective weight decay parameter for Adam satisfies  
\[
\lambda_{\mathrm{Adam}} \sim \sigma_0^{q-2} \ll \min\left\{\frac{B^2}{n}, 1\right\} \sim \lambda_{\mathrm{AdamW}}.
\]  
This completes the proof.

\section{Experimental details and results}\label{sec:experiments}
This section presents the complete details of our experiments. 

\subsection{Experimental Details for Real-world Data}\label{app:real_data_exp}
For the real-world experiments in Figures~\ref{fig:batch} and~\ref{fig:wd}, we use the CIFAR-10 dataset, VGG16 and ResNet18 architectures, and the Adam and AdamW optimizers, all implemented in PyTorch. We do not use data augmentation in order to avoid any additional regularization effects.

In Figure~\ref{fig:batch}, we report the test error as a function of batch size. The batch sizes considered are $\{16,\ 32,\ 64,\ 256,\ 1024,\ 4096,\ 8192\}$, with training conducted for $100$ epochs. The weight decay is set to $5 \times 10^{-4}$ for Adam and $1\times 10^{-2}$ for AdamW; the momentum parameters are fixed at $(\beta_1, \beta_2) = (0.9, 0.99)$ for both optimizers. Each configuration is evaluated with three learning rates: $\{5 \times 10^{-4},\ 1\times 10^{-4},\ 1\times 10^{-5}$\}, and we report the best test performance for each batch size. All experiments can be run within one hour on a single RTX 4090 GPU. The only exception is training ResNet18 with a batch size of 8192, which requires three GPUs due to memory constraints.

Figure~\ref{figsub:batch_adam} presents the test error versus batch size for Adam with VGG16 and ResNet18, while Figure~\ref{figsub:batch_adamw} shows the corresponding results for AdamW. Both demonstrate that test performance degrades as batch size increases, which is consistent with our theoretical findings in Section~\ref{sec:main_results}, showing that small-batch Adam and AdamW outperform their large-batch counterparts.

In Figure~\ref{fig:wd}, we report the test error as a function of weight decay $\lambda$ for Adam and AdamW, using VGG16 (Figure~\ref{figsub:wd_vgg16}) and ResNet18 (Figure~\ref{figsub:wd_resnet18}). We fix the batch size to 16, the learning rate to $1 \times 10^{-4}$, and set $(\beta_1, \beta_2) = (0.9, 0.99)$. The weight decay values for Adam are $\{1 \times 10^{-1},\ 5 \times 10^{-2},\ 1 \times 10^{-2},\ 5 \times 10^{-3},\ 1 \times 10^{-3},\ 5 \times 10^{-4},\ 1 \times 10^{-4},\ 5 \times 10^{-5},\ 1 \times 10^{-5},\ 5 \times 10^{-6},\ 1 \times 10^{-6},\ 5 \times 10^{-7}\}$, and for AdamW are $\{5\times 10^{-1},\ 1 \times 10^{-1},\ 5 \times 10^{-2},\ 1 \times 10^{-2},\ 5 \times 10^{-3},\ 1 \times 10^{-3},\ 5 \times 10^{-4},\ 1 \times 10^{-4}\}$. All models are trained for 100 epochs.

Figure~\ref{figsub:wd_vgg16} shows results for training VGG16, and Figure~\ref{figsub:wd_resnet18} for ResNet18, both using Adam and AdamW. For a fair comparison, we scale the weight decay $\lambda$ of AdamW by a factor of $1/25$. The results show that Adam suffers from poor generalization under large weight decay values (e.g., $\lambda > 0.05$), while AdamW maintains stable performance even with larger weight decays (e.g., $\lambda = 0.5$), which aligns with our theoretical results in Section~\ref{sec:main_results}.

\subsection{Experimental Details for Synthetic Data}
For the data model defined in Definition~\ref{def:data_distribution}, we set the input dimension to $d=1000$ and the number of training samples to $n=200$, consisting of $100$ positive and $100$ negative samples. The sparsity level is set to $s = 0.1d = 100$, and the noise strength is $\sigma_p = 1/\sqrt{s} = 0.1$. The feature noise strength is set to $\alpha = 0.2$, and the model weights are initialized with standard deviation $\sigma_0 = 0.01$. The network, defined in Definition~\ref{def:model}, has width $m = 20$.

All synthetic experiments are trained for $T = 10^{4}$ epochs with a learning rate of $\eta = 5 \times 10^{-5}$, and evaluated on a test dataset of size $10^{4}$. For Adam and AdamW optimizers, we adopt the default momentum hyperparameters $\beta_1 = 0.9$ and $\beta_2 = 0.999$.

We primarily focus on the following metrics: 
\begin{itemize}[leftmargin = *]
    \item Training error: $\mathrm{err}_\cS(\Wb)$.
    \item Test error: $\mathrm{err}_\cD(\Wb)$.
    \item Feature learning: $\max_{r \in [m]} \la \wb_{j,r}, j\vb \ra$.
    \item Noise memorization: $\min_{i \in [n]: y_i = j} \max_{r \in [m]} \la \wb_{j,r}, \bxi_i \ra$ or $\max_{i \in [n]: y_i = j} \max_{r \in [m]} \la \wb_{j,r}, \bxi_i \ra$.
\end{itemize}

\paragraph{Large-batch Adam vs. Mini-batch Adam.} We set $\lambda = 1 \times 10^{-5}$ for both large-batch Adam (batch size $B = 100$) and mini-batch Adam (batch size $B = 2$). Table~\ref{tab:adam_results} presents the training and test errors of the solutions obtained by the two training methods. Although both large-batch and mini-batch Adam achieve zero training error, their generalization performance differs significantly. Specifically, large-batch Adam suffers from high test error (greater than $0.5$), while mini-batch Adam achieves zero test error. This observation verifies Theorems~\ref{thm:adam_large_batch} and~\ref{thm:adam_mini_batch}.

Moreover, Figure~\ref{fig:adam_lb_dynamics} illustrates the dynamics of feature learning, measured by $\max_{r \in [m]} \la \wb_{j,r}, j\vb \ra$, and noise memorization, measured by $\min_{i \in [n]: y_i = j} \max_{r \in [m]} \la \wb_{j,r}, \bxi_i \ra$, under large-batch Adam. The results are consistent with Figure 2 in \citet{zou2023understanding}. Figure~\ref{fig:adam_mb_dynamics} shows the corresponding dynamics for mini-batch Adam, where feature learning $\max_{r \in [m]} \la \wb_{j,r}, j\vb \ra$ increases steadily, while noise memorization $\max_{i \in [n]: y_i = j} \max_{r \in [m]} \la \wb_{j,r}, \bxi_i \ra$ remains suppressed at the end of Pattern Learning Stage. In the subsequent Regularization Stage, feature learning saturates at a stable threshold and stops increasing. This behavior is consistent with Lemma~\ref{lemma:adam_mb_maintain}.

\begin{table}
  \caption{Training and test errors of Adam with large ($B=100$) and mini-batch ($B=2$) settings.}
  \label{tab:adam_results}
  \centering
  \begin{tabular}{ccc}
    \toprule
    Batch size     & $B=100$     & $B=2$ \\
    \midrule
    Training error & 0  &   0  \\
    Test error     & 0.9545 & 0      \\
    \bottomrule
  \end{tabular}
\end{table}

\begin{figure}[!t]
\vskip -0.1in
     \centering
     \subfigure[Large-batch Adam ($B=100$)]{\includegraphics[width=0.45\linewidth]{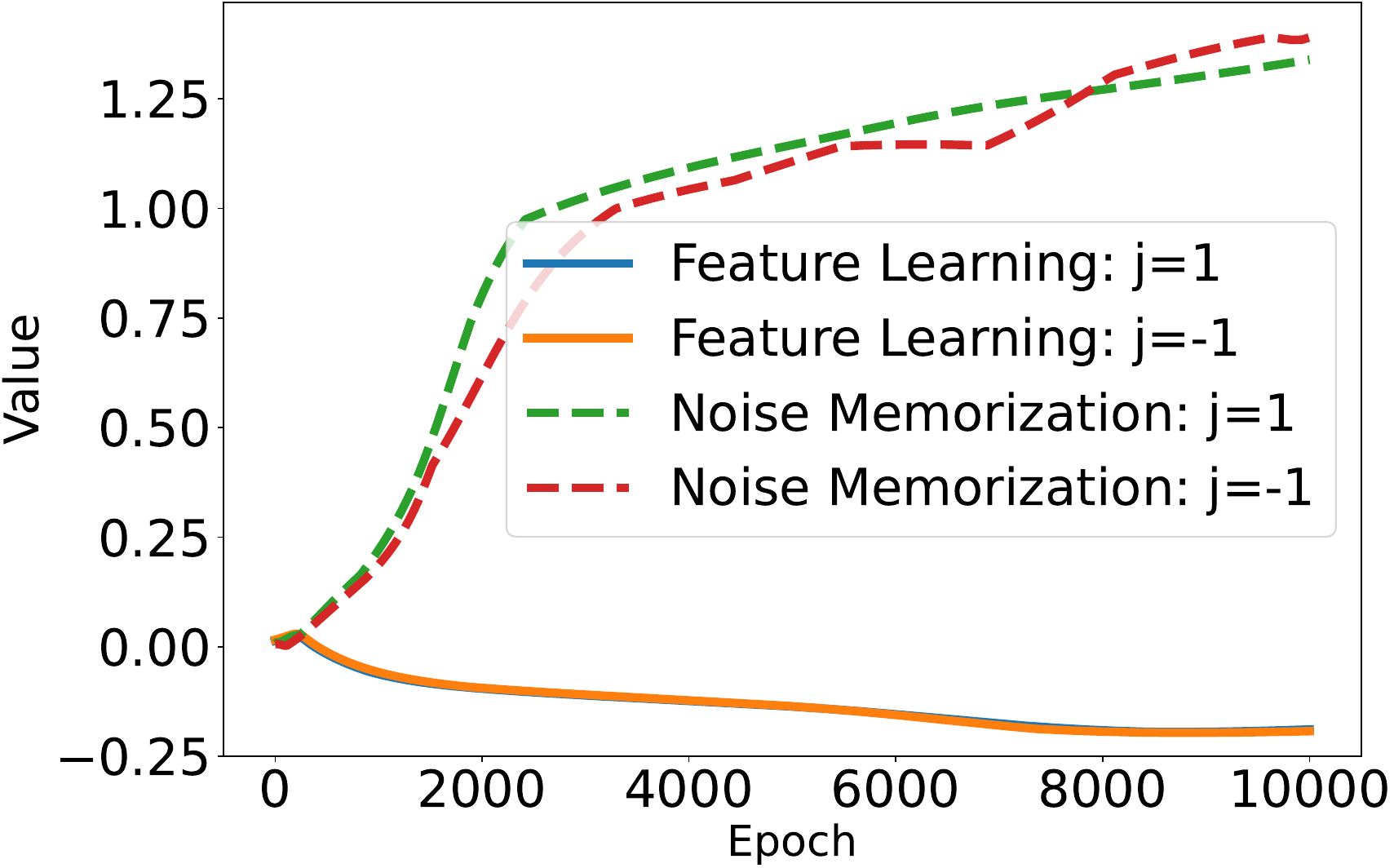}\label{fig:adam_lb_dynamics}} \hspace{0.2cm}
     \subfigure[Mini-batch Adam ($B=2$)]{\includegraphics[width=0.45\linewidth]{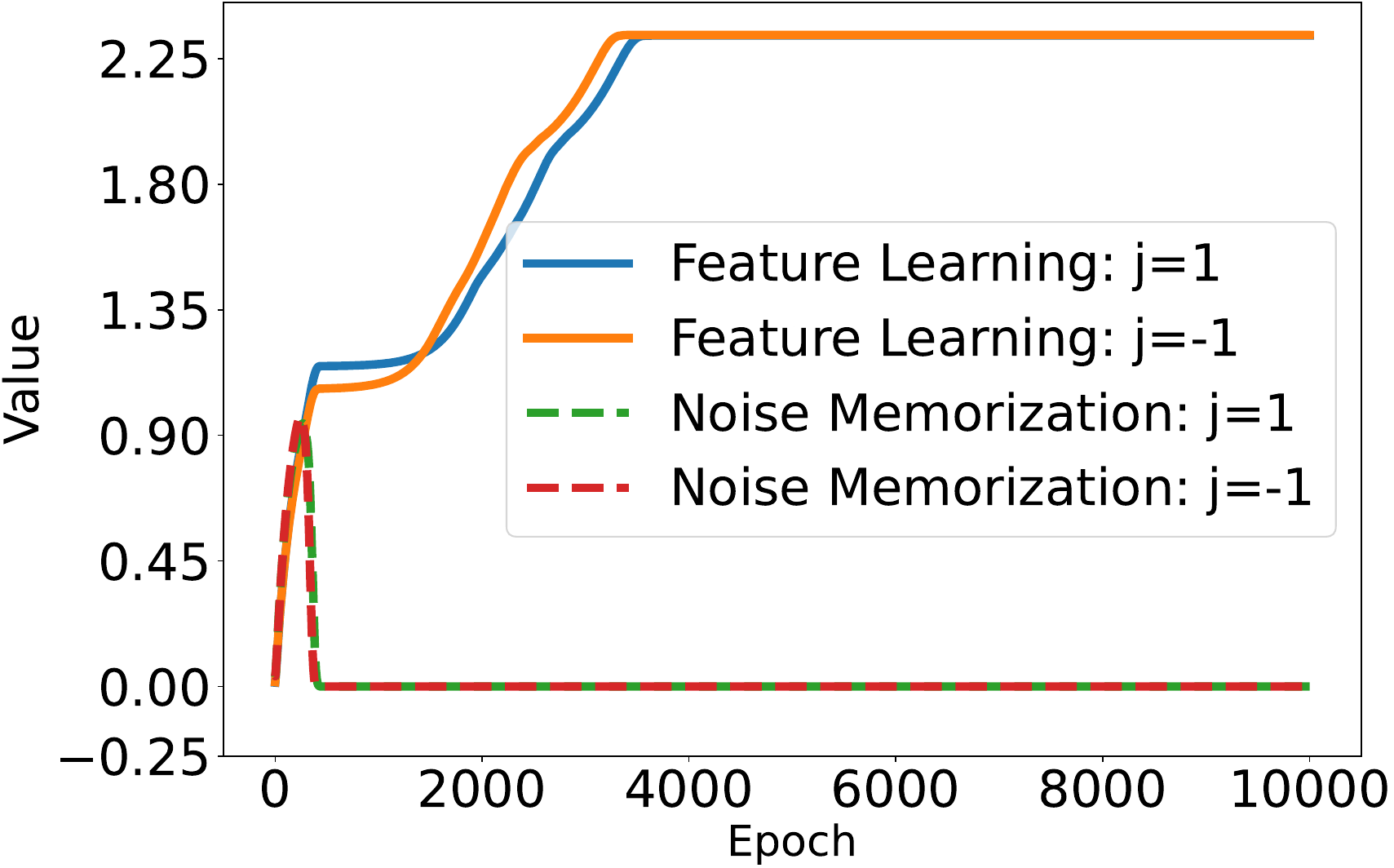}\label{fig:adam_mb_dynamics}}
      \vskip -0.1in
    \caption{Feature learning and noise memorization of Adam in the training.}
    \label{fig:adam_dynamics}
\end{figure}

\paragraph{Large-batch AdamW vs. Mini-batch AdamW.} We set $\lambda = 0.01$ for both large-batch AdamW (batch size $B = 100$) and mini-batch AdamW (batch size $B = 2$). Table~\ref{tab:adamw_results} reports the training and test errors for both training methods. Although both large-batch and mini-batch AdamW achieve zero training error, their test performance differs significantly: large-batch AdamW suffers from high test error (exceeding $0.5$), while mini-batch AdamW attains zero test error. This observation supports Theorems~\ref{thm:adamw_large_batch} and~\ref{thm:adamw_mini_batch}.

Figure~\ref{fig:adamw_lb_dynamics} illustrates the dynamics of feature learning, measured by $\max_{r \in [m]} \langle \wb_{j,r}, j\vb \rangle$, and noise memorization, measured by $\min_{i \in [n]: y_i = j} \max_{r \in [m]} \langle \wb_{j,r}, \bxi_i \rangle$, under large-batch AdamW. Initially, feature learning increases, but it is eventually flipped by noise memorization, which grows at a faster rate. As a result, the model begins fitting to the feature noise, which is negatively aligned with the true feature direction. Specifically, noise memorization increases rapidly during the Pattern Learning Stage and saturates at a logarithmic rate in the Regularization Stage. These behaviors are consistent with Lemmas~\ref{lemma:adamw_lb_general_bound},~\ref{lemma:adamw_lb_fit_noise}, and~\ref{lemma:adamw_lb_maintain}.

Figure~\ref{fig:adamw_mb_dynamics} shows the corresponding dynamics for mini-batch AdamW. Feature learning increases steadily and remains unaffected by noise memorization during the Pattern Learning Stage. In the Regularization Stage, feature learning saturates at a stable threshold, which causes the gradient to become small and consequently suppresses further growth of noise memorization (recall that $\bxi_i[1] = -\alpha y_i$). This behavior is consistent with Lemmas~\ref{lemma:adamw_mb_general_bound} and~\ref{lemma:adamw_mb_maintain}.

\begin{table}
  \caption{Training and test errors of AdamW with large ($B=100$) and mini-batch ($B=2$) settings.}
  \label{tab:adamw_results}
  \centering
  \begin{tabular}{ccc}
    \toprule
    Batch size     & $B=100$     & $B=2$ \\
    \midrule
    Training error & 0  &   0  \\
    Test error     & 0.5485 & 0      \\
    \bottomrule
  \end{tabular}
\end{table}

\begin{figure}[!t]
\vskip -0.1in
     \centering
     \subfigure[Large-batch AdamW ($B=100$)]{\includegraphics[width=0.45\linewidth]{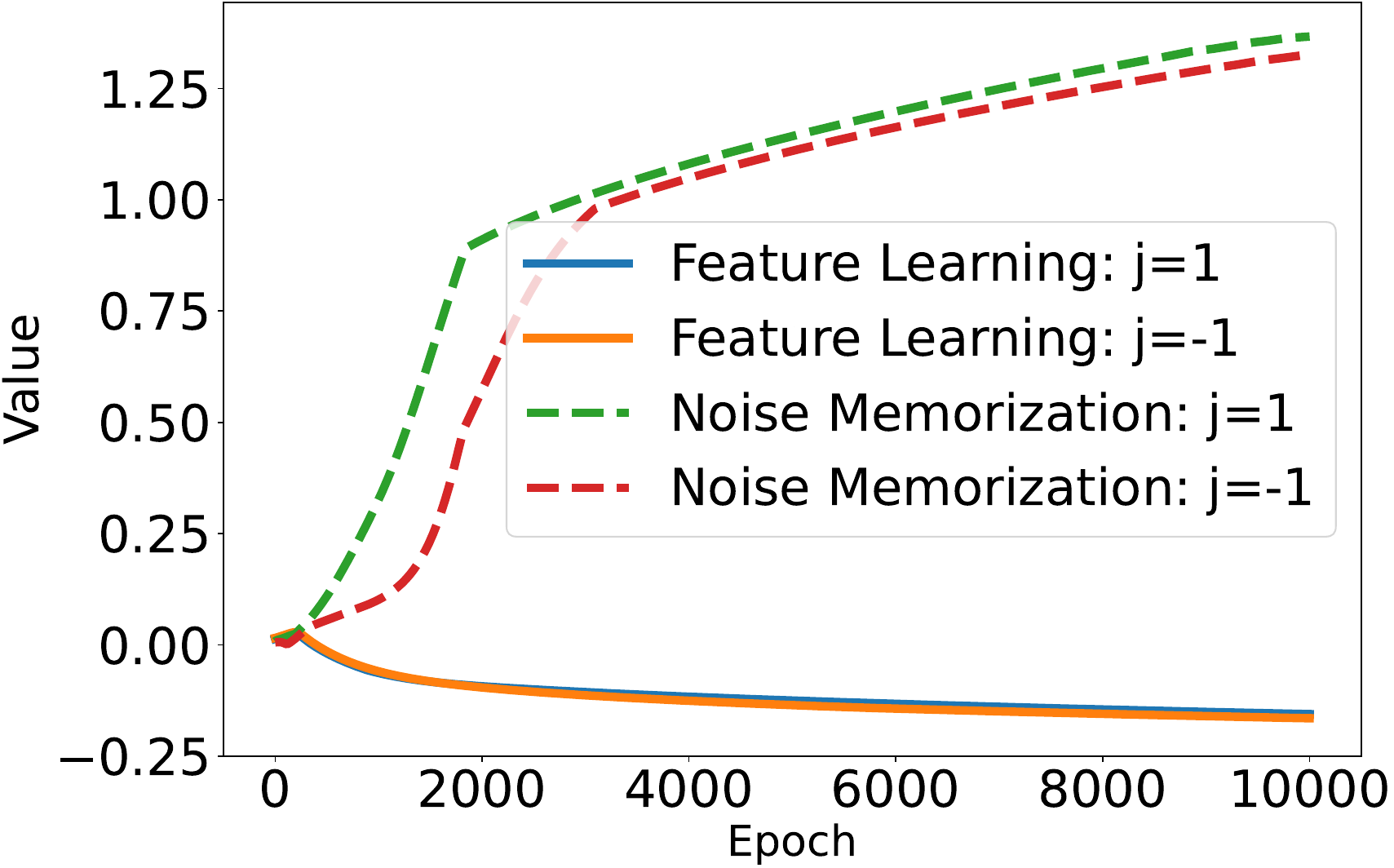}\label{fig:adamw_lb_dynamics}} \hspace{0.2cm}
     \subfigure[Mini-batch AdamW ($B=2$)]{\includegraphics[width=0.45\linewidth]{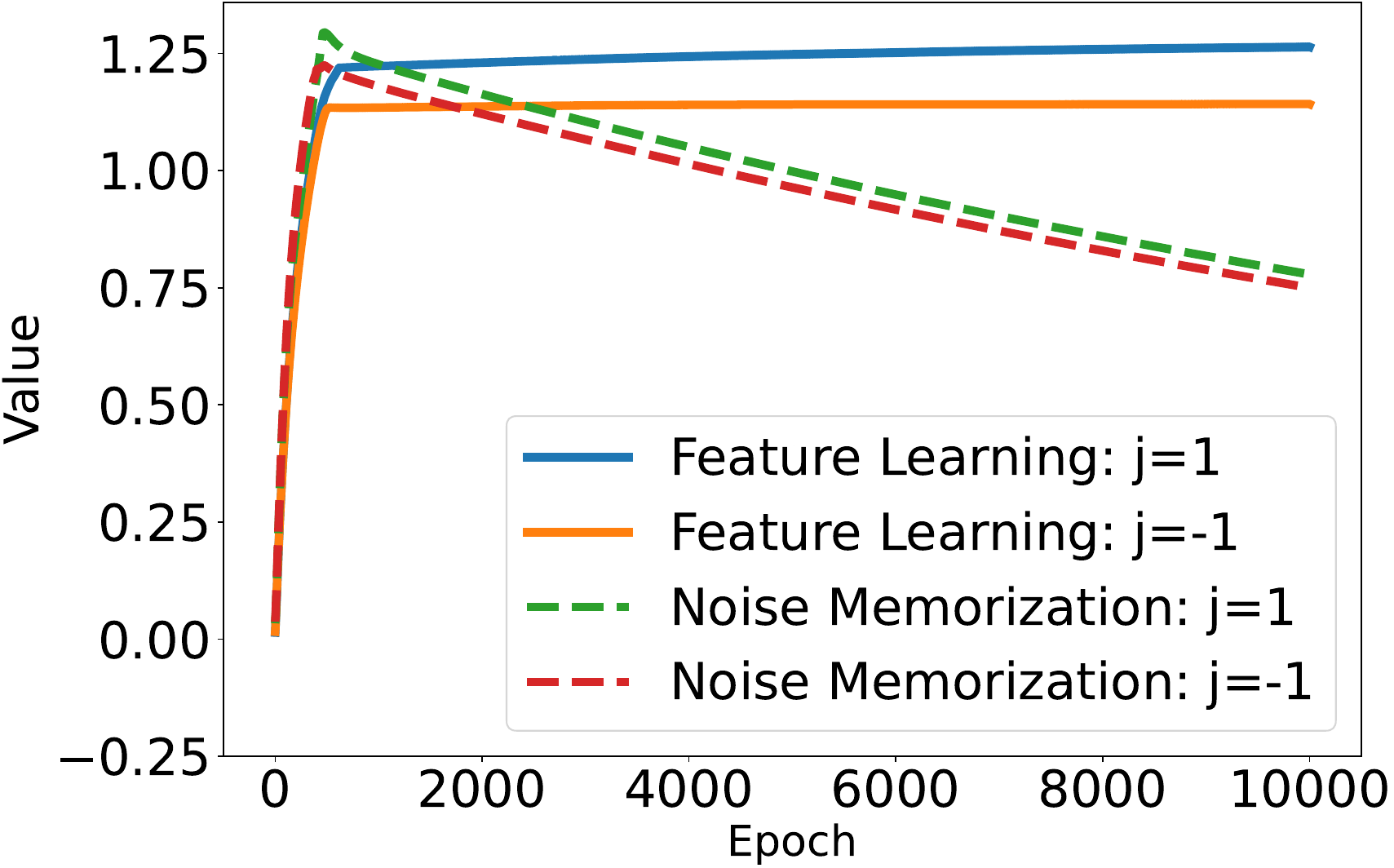}\label{fig:adamw_mb_dynamics}}
      \vskip -0.1in
    \caption{Feature learning and noise memorization of AdamW in the training.}
    \label{fig:adamw_dynamics}
\end{figure}

\paragraph{Large weight decay regularization $\lambda$ hinders Adam training.} 
We repeat the experiments from \textbf{Large-batch Adam vs. Mini-batch Adam} using a larger weight decay parameter $\lambda = 0.05$, and those from \textbf{Large-batch AdamW vs. Mini-batch AdamW} with $\lambda = 0.5$. Figure~\ref{fig:adam_large_lambda} reports the training accuracy over epochs for Adam, while Figure~\ref{fig:adamw_large_lambda} shows the same for AdamW. It can be observed that Adam fails to train under large weight decay. In contrast, AdamW remains robust and achieves results consistent with those in \textbf{Large-batch AdamW vs. Mini-batch AdamW}, even with a larger $\lambda = 0.5$. These results support Corollaries~\ref{cor:adam_wd_ub} and~\ref{cor:adam_adamw_compare}.

\begin{figure}[!t]
\vskip -0.1in
     \centering
     \subfigure[Large-batch Adam ($B=100$)]{\includegraphics[width=0.45\linewidth]{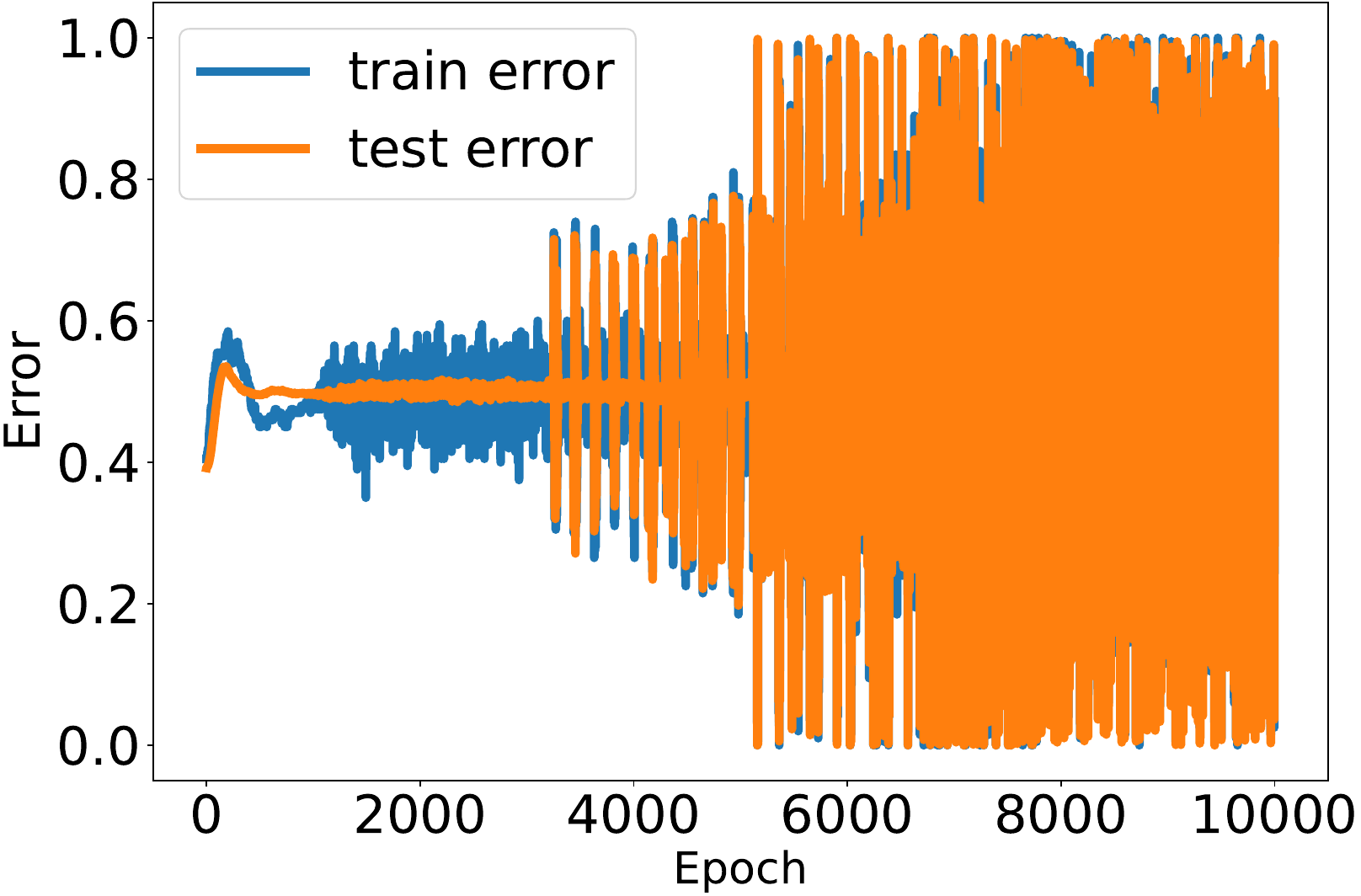}} \hspace{0.2cm}
     \subfigure[Mini-batch Adam ($B=2$)]{\includegraphics[width=0.45\linewidth]{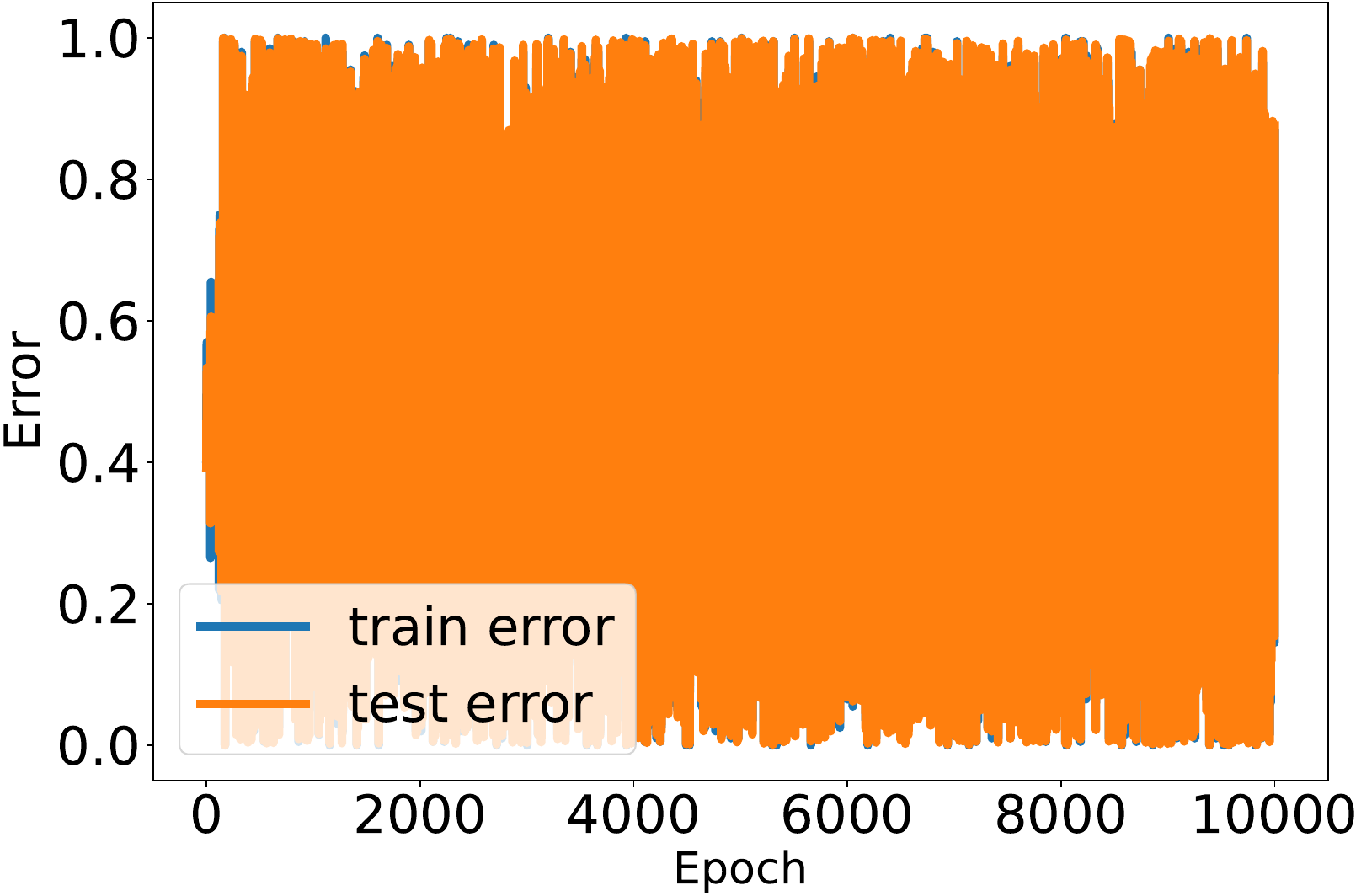}}
      \vskip -0.1in
    \caption{Training error and test error over epochs of Adam training with $\lambda=0.05$.}
    \label{fig:adam_large_lambda}
\end{figure}

\begin{figure}[!t]
\vskip -0.1in
     \centering
     \subfigure[Large-batch AdamW ($B=100$)]{\includegraphics[width=0.45\linewidth]{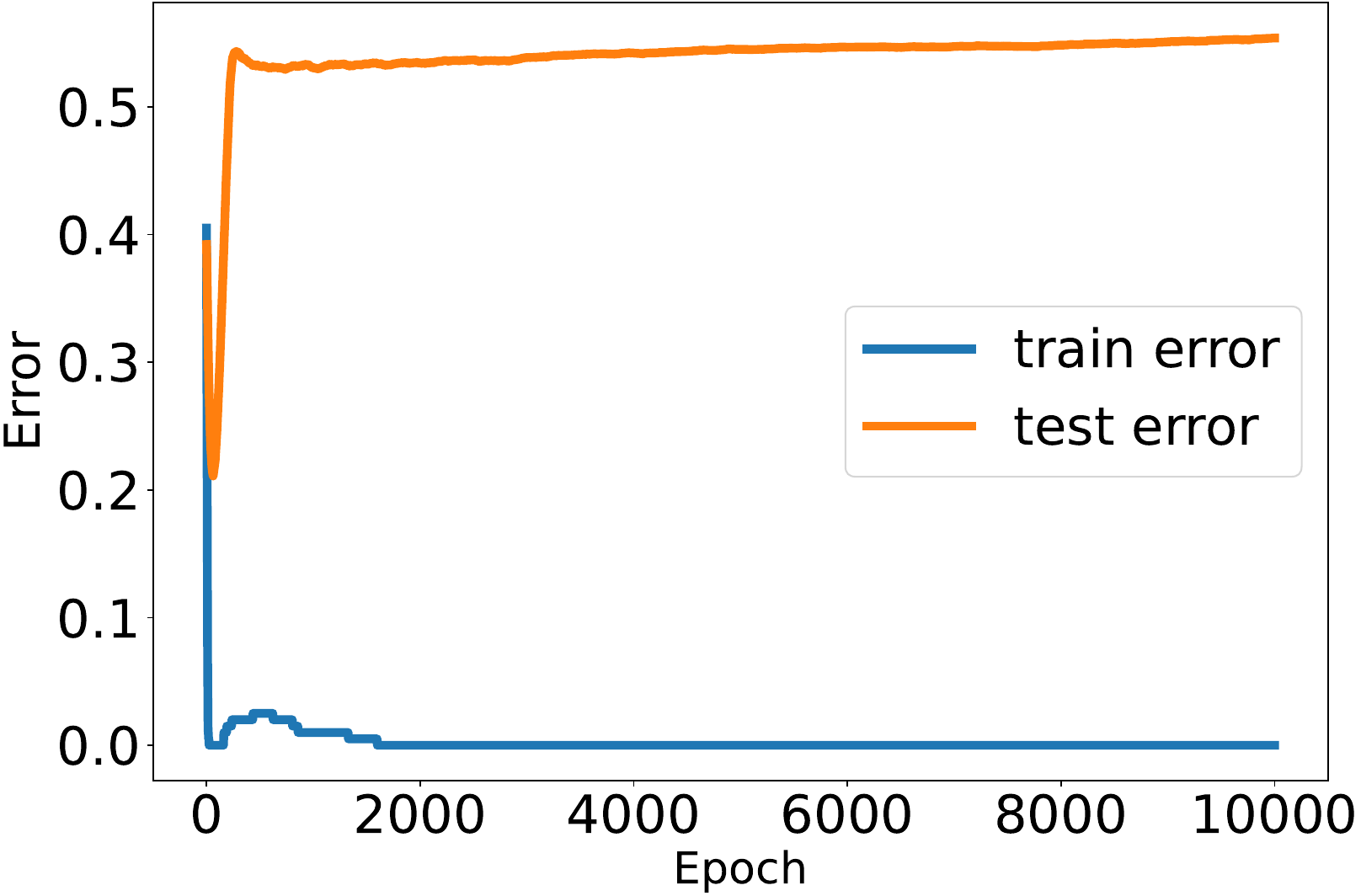}} \hspace{0.2cm}
     \subfigure[Mini-batch AdamW ($B=2$)]{\includegraphics[width=0.45\linewidth]{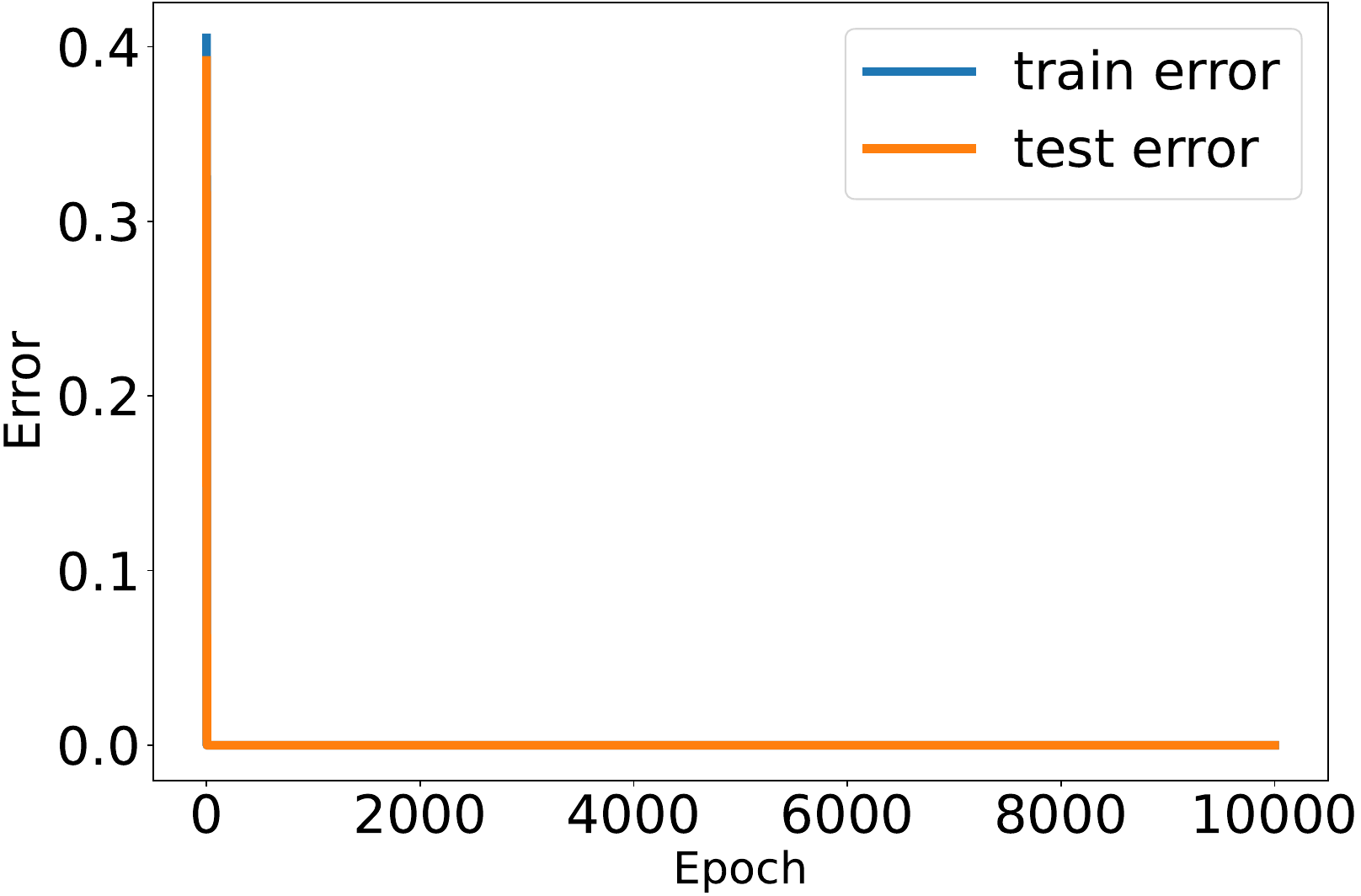}}
      \vskip -0.1in
    \caption{Training error and test error over epochs of AdamW training with $\lambda=0.5$.}
    \label{fig:adamw_large_lambda}
\end{figure}

\subsection{Additional Experimental Results}
\paragraph{Error bars across random seeds, Figures~\ref{fig:eb_seed_batch} and~\ref{fig:eb_seed_wd}.}
We provide additional results to support our theoretical findings. To assess statistical significance, we repeat the CIFAR-10 experiments from Figures~\ref{fig:batch} and~\ref{fig:wd} with five random seeds (0--4), using the same settings as in Section~\ref{app:real_data_exp}. Figures~\ref{fig:eb_seed_batch} and~\ref{fig:eb_seed_wd} report the results, with error bars denoting the standard deviation across runs. The results confirm that both Adam and AdamW degrade in performance as the batch size increases, and that Adam is more sensitive to weight decay than AdamW.

\begin{figure}[!t]
\vskip -0.1in
\centering
\subfigure[Test error vs.\ batch size under Adam]{\includegraphics[width=0.4629\linewidth]{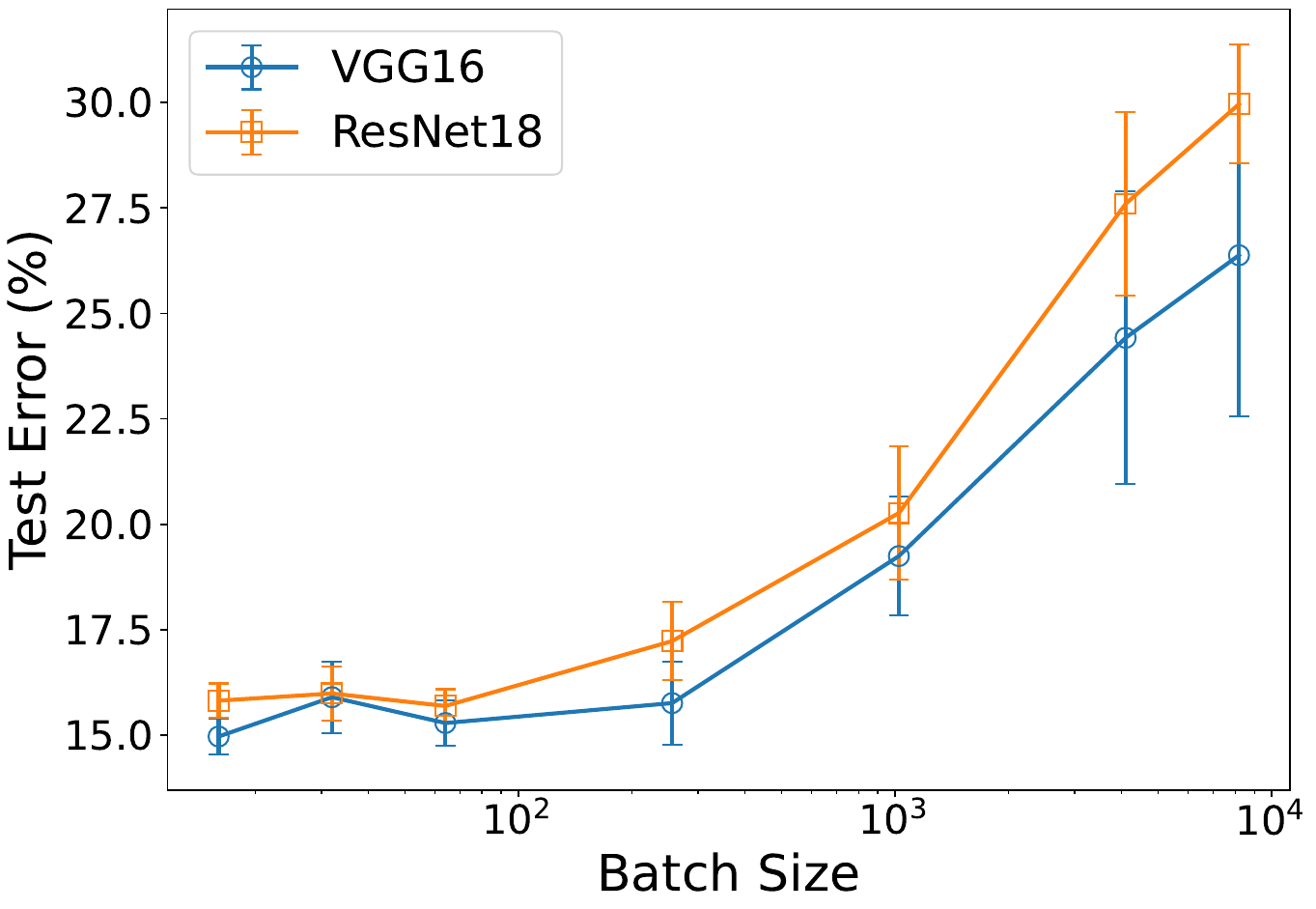}\label{figsub:eb_seed_batch_adam}} \hspace{0.05cm}
\subfigure[Test error vs.\ batch size under AdamW]{\includegraphics[width=0.45\linewidth]{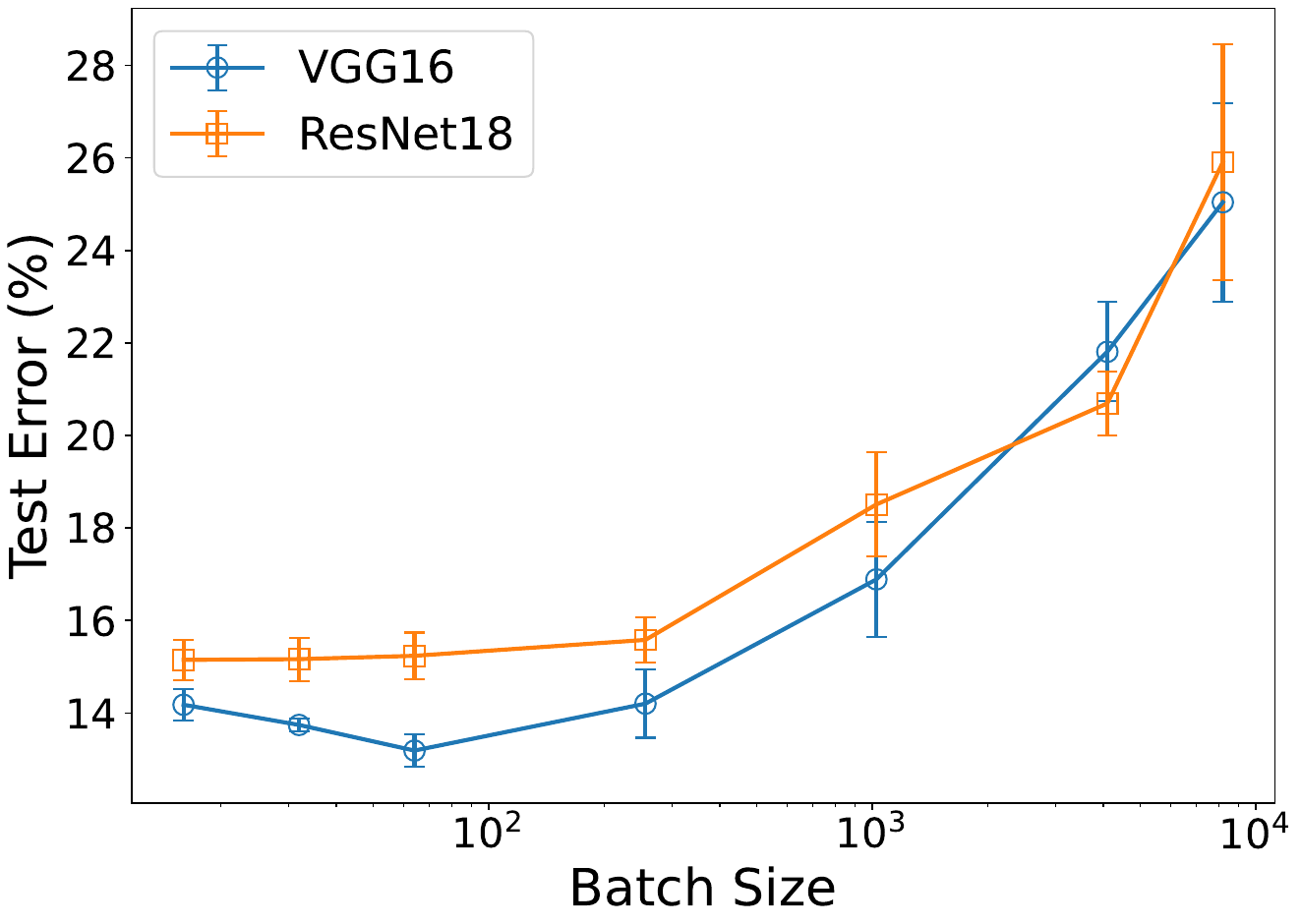}\label{figsub:eb_seed_batch_adamw}}
\vskip -0.1in
\caption{Error bars across seeds: Test error vs.\ batch size for VGG16 and ResNet18 on CIFAR-10.}
\label{fig:eb_seed_batch}
\end{figure}

\begin{figure}[!t]
\vskip -0.1in
\centering
\subfigure[VGG16]{\includegraphics[width=0.45\linewidth]{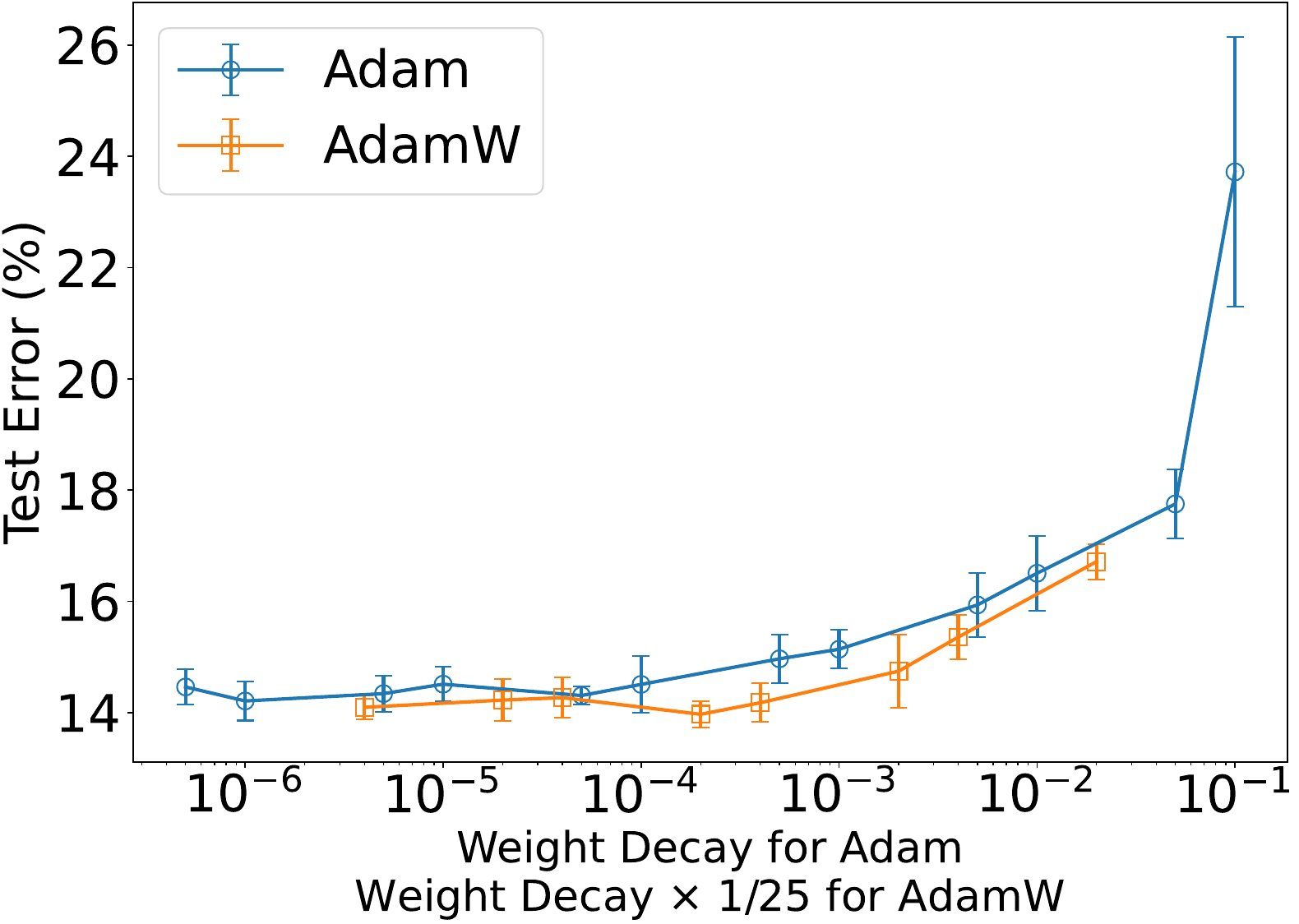}\label{figsub:eb_seed_wd_vgg16}} \hspace{0.2cm}
\subfigure[ResNet18]{\includegraphics[width=0.45\linewidth]{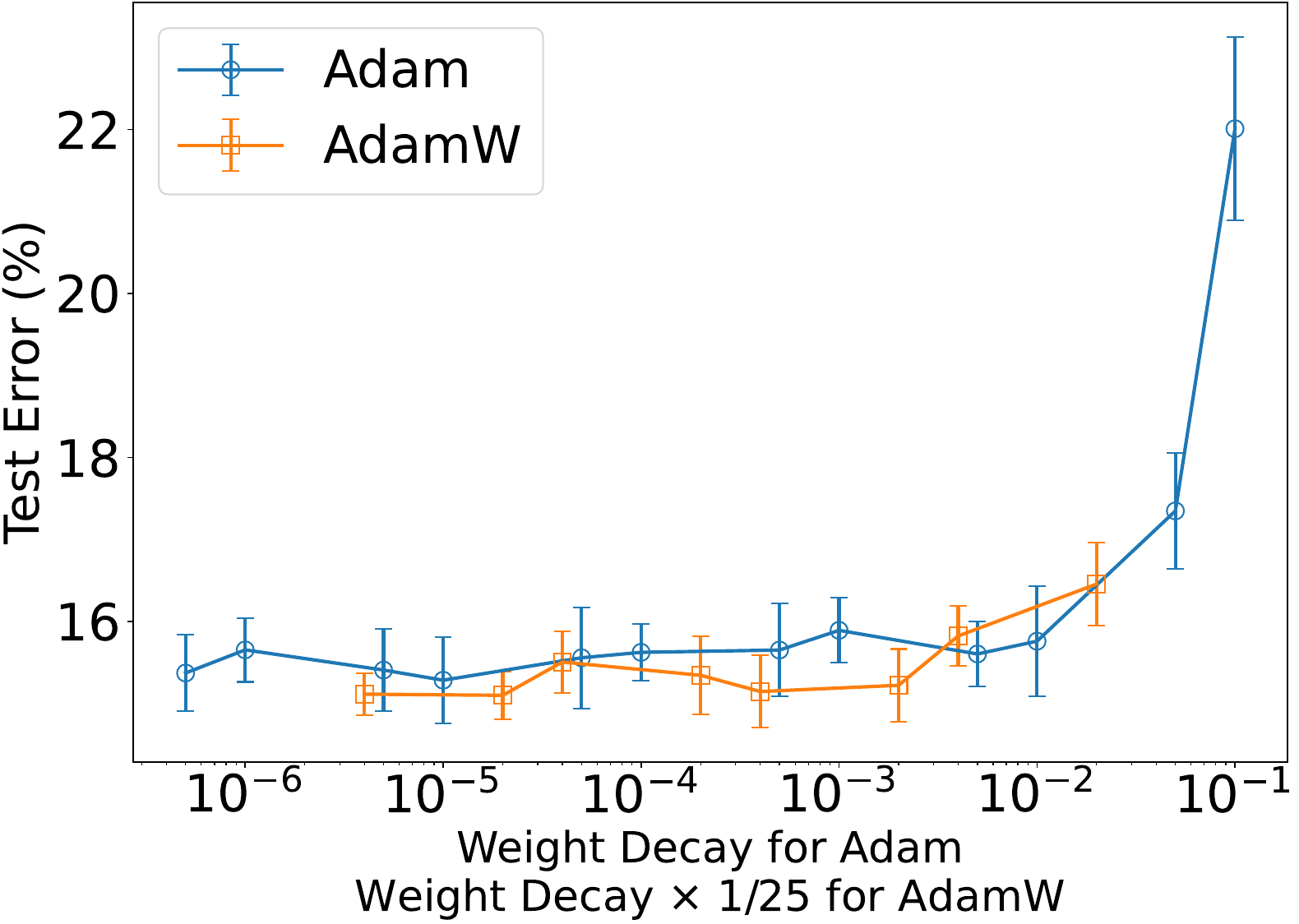}\label{figsub:eb_seed_wd_resnet18}}
\vskip -0.1in
\caption{Error bars across seeds: Test error vs.\ weight decay (batch size = 16), comparing Adam and AdamW.}
\label{fig:eb_seed_wd}
\end{figure}

\paragraph{Sensitivity to momentum parameters $(\beta_1,\beta_2)$, Figures~\ref{fig:eb_beta_batch} and~\ref{fig:eb_beta_wd}.}
We further study the sensitivity of Adam and AdamW to the momentum parameters $(\beta_1,\beta_2)$, which are treated as constants in our theory. We sweep over $\beta_1 \in \{0, 0.5, 0.9\}$ and $\beta_2 \in \{0.5, 0.9, 0.95\}$, yielding 8 valid combinations under $\beta_1^2 < \beta_2$, plus the standard setting $(\beta_1,\beta_2) = (0.9,0.99)$, for a total of 9 configurations. Figures~\ref{fig:eb_beta_batch} and~\ref{fig:eb_beta_wd} report the results, with error bars showing the standard deviation across the 9 runs. The findings again confirm that both Adam and AdamW suffer performance degradation as batch size increases, and that Adam is more sensitive to weight decay than AdamW.

\begin{figure}[!t]
\vskip -0.1in
\centering
\subfigure[Test error vs.\ batch size under Adam]{\includegraphics[width=0.45\linewidth]{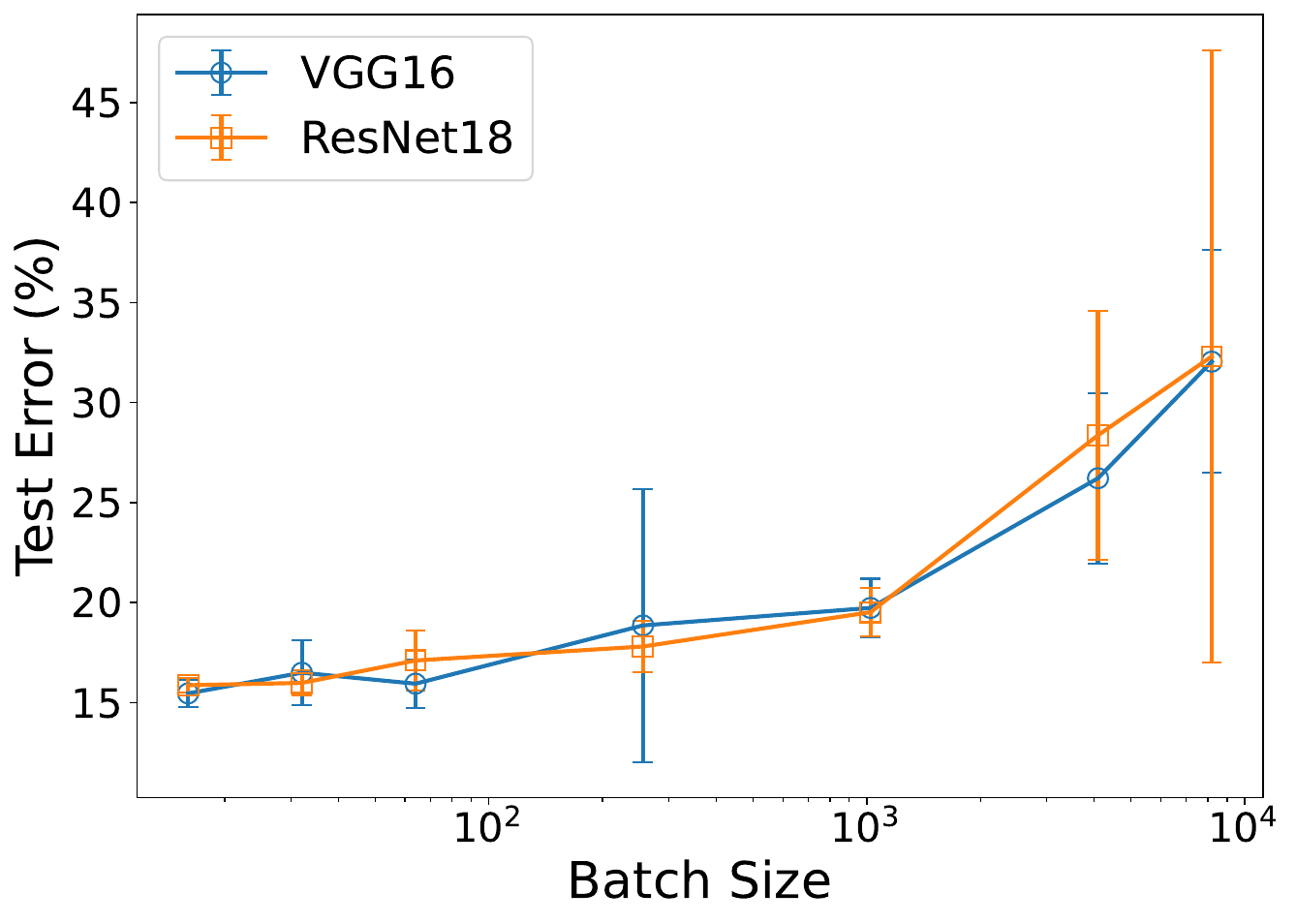}\label{figsub:eb_beta_batch_adam}} \hspace{0.2cm}
\subfigure[Test error vs.\ batch size under AdamW]{\includegraphics[width=0.45\linewidth]{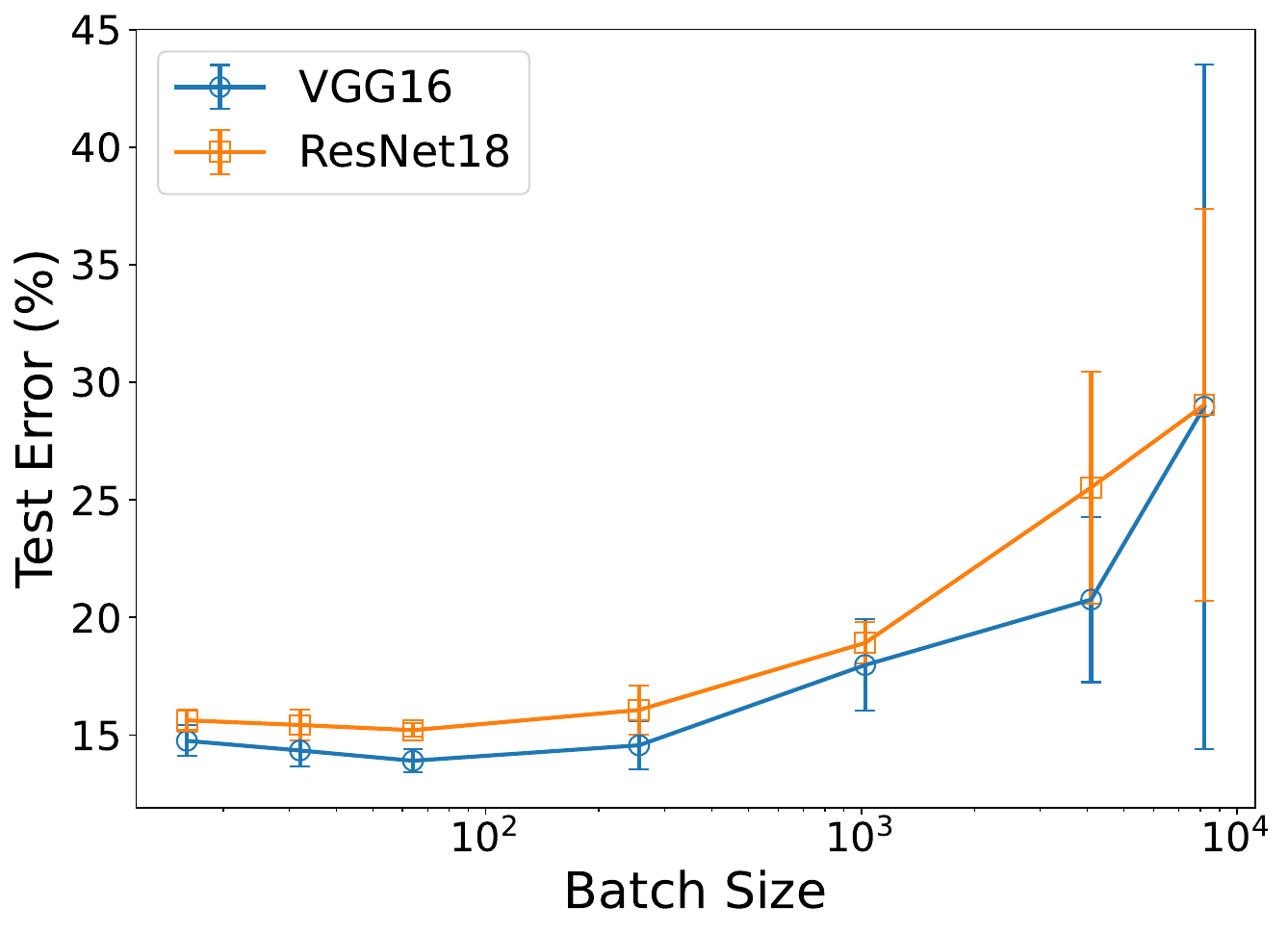}\label{figsub:eb_beta_batch_adamw}}
\vskip -0.1in
\caption{Error bars across $(\beta_1,\beta_2)$: Test error vs.\ batch size for VGG16 and ResNet18 on CIFAR-10.}
\label{fig:eb_beta_batch}
\end{figure}

\begin{figure}[!t]
\vskip -0.1in
\centering
\subfigure[VGG16]{\includegraphics[width=0.45\linewidth]{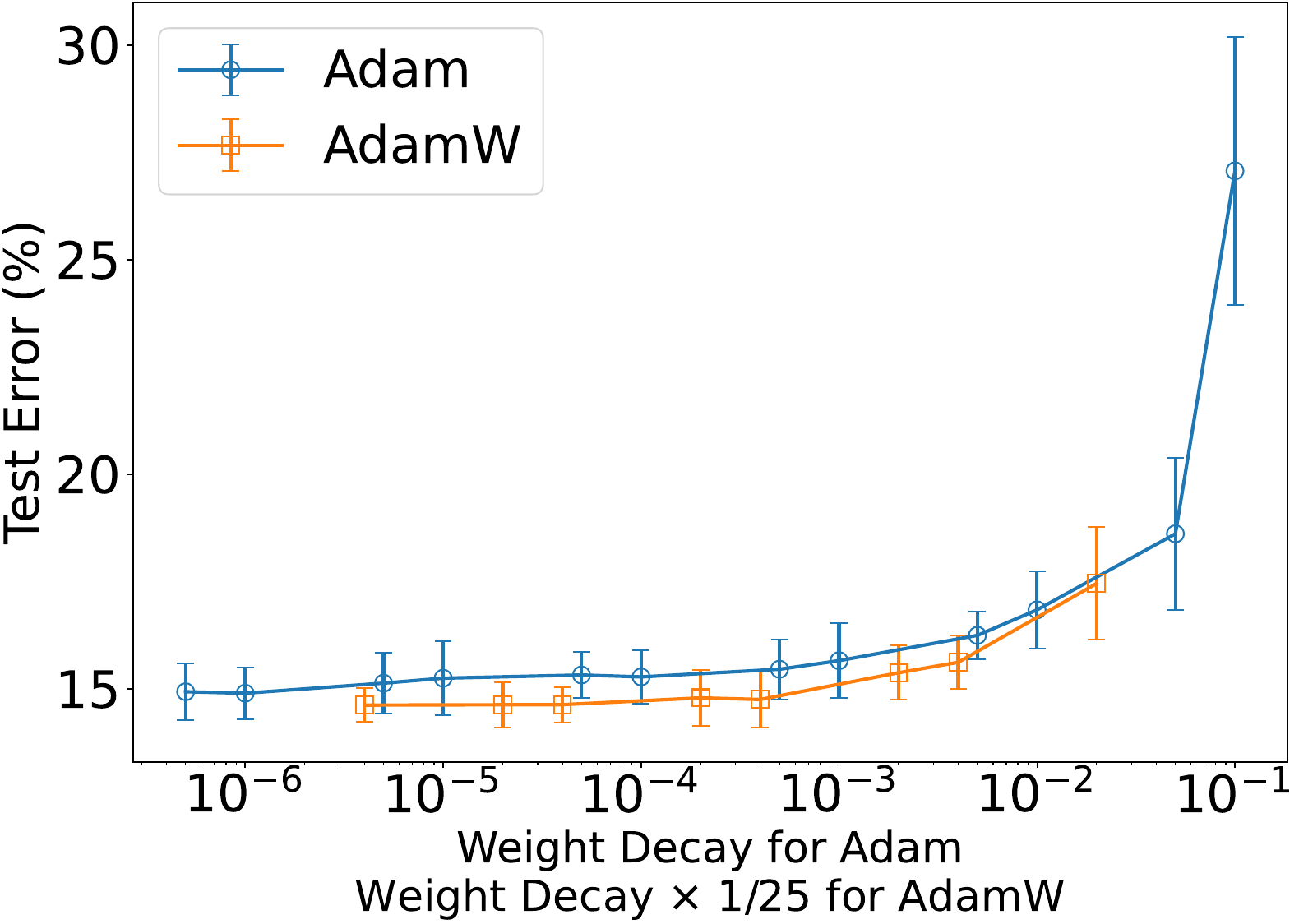}\label{figsub:eb_beta_wd_vgg16}} \hspace{0.2cm}
\subfigure[ResNet18]{\includegraphics[width=0.45\linewidth]{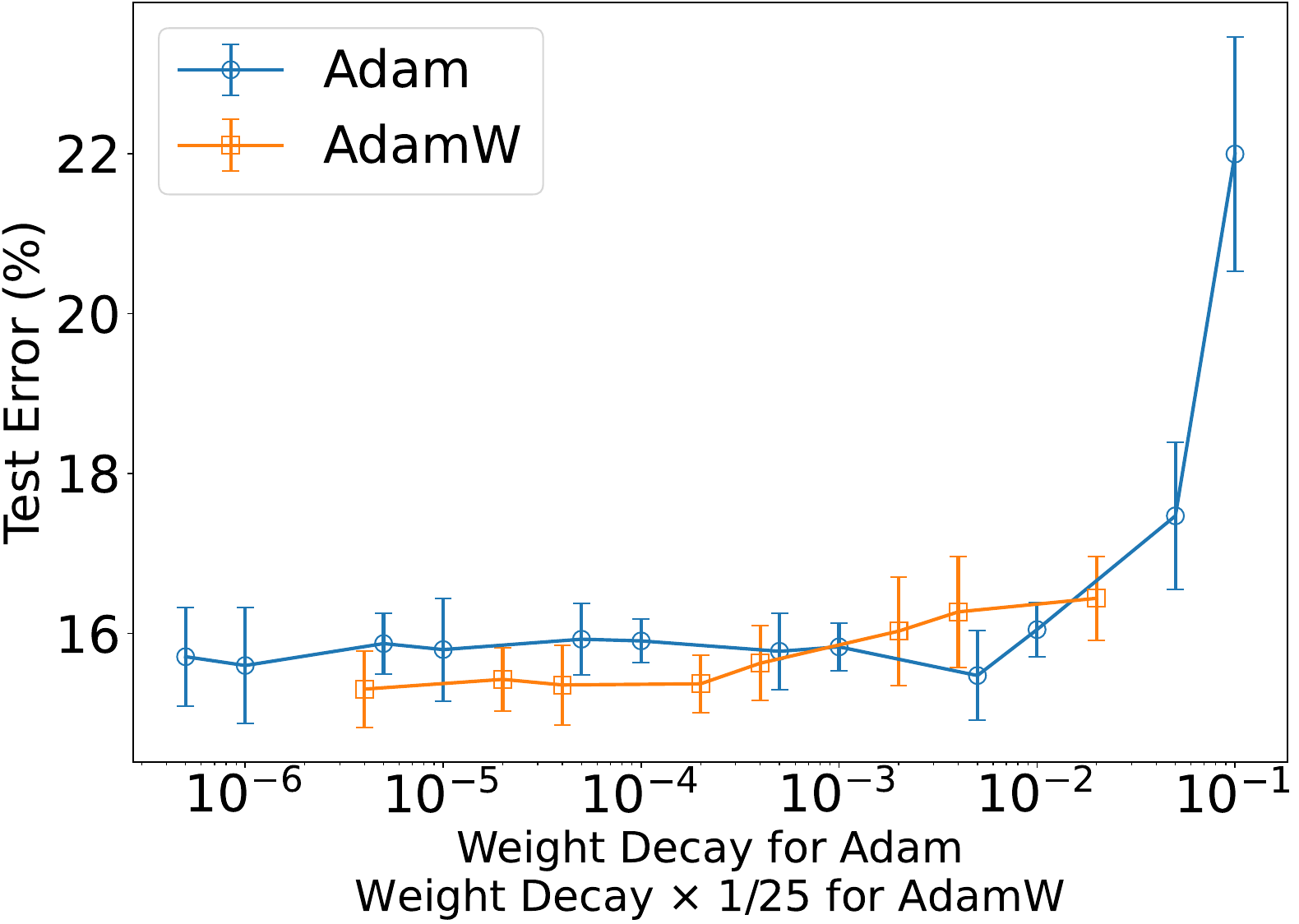}\label{figsub:eb_beta_wd_resnet18}}
\vskip -0.1in
\caption{Error bars across $(\beta_1,\beta_2)$: Test error vs.\ weight decay (batch size = 16), comparing Adam and AdamW.}
\label{fig:eb_beta_wd}
\end{figure}

\paragraph{Large-scale vision experiments with ResNet-50 on ImageNet-1K subset, Figures~\ref{fig:imagenet_val5_error_vs_bs} and~\ref{fig:imagenet_val5_error_vs_wd_bs64}.}
To further validate our theory, we conduct large-scale experiments on ImageNet-1K. We construct a subset by randomly sampling 100 training images per class (seed=0), ensuring a controlled large-batch regime ($\tfrac{n}{B} = \Theta(1)$) while keeping computation feasible. ResNet-50 is trained for 90 epochs with standard ImageNet preprocessing. We report top-5 validation error against batch size (Figure~\ref{fig:imagenet_val5_error_vs_bs}) and weight decay (Figure~\ref{fig:imagenet_val5_error_vs_wd_bs64}), comparing Adam and AdamW.  

For Figure~\ref{fig:imagenet_val5_error_vs_bs}, we set learning rate $\eta=1\times 10^{-4}$, $(\beta_1,\beta_2)=(0.9,0.99)$, and weight decay $\lambda=1\times 10^{-4}$ for Adam and $\lambda=1\times 10^{-2}$ for AdamW. Batch sizes are $\{64,128,256,512,1024,2048,3072,4096,\\8192,16384,32768\}$.  

For Figure~\ref{fig:imagenet_val5_error_vs_wd_bs64}, we fix $B=64$, $\eta=1\times 10^{-3}$, and $(\beta_1,\beta_2)=(0.9,0.99)$. Weight decay values for Adam are $\{5\times 10^{-7}, 1\times 10^{-6}, 5\times 10^{-6}, 1\times 10^{-5}, 5\times 10^{-5}, 1\times 10^{-4}, 5\times 10^{-4}, 1\times 10^{-3}, 5\times 10^{-3}, 1\times 10^{-2}, 5\times 10^{-2}, 1\times 10^{-1}\}$, and for AdamW are $\{1\times 10^{-4}, 5\times 10^{-4}, 1\times 10^{-3}, 5\times 10^{-3}, 1\times 10^{-2}, 5\times 10^{-2}, 1\times 10^{-1}, 5\times 10^{-1}\}$.  

The results again confirm that both optimizers degrade as batch size increases, and that Adam is more sensitive to weight decay than AdamW.

\begin{figure}[!t]
\vskip -0.1in
\centering
\includegraphics[width=0.7\linewidth]{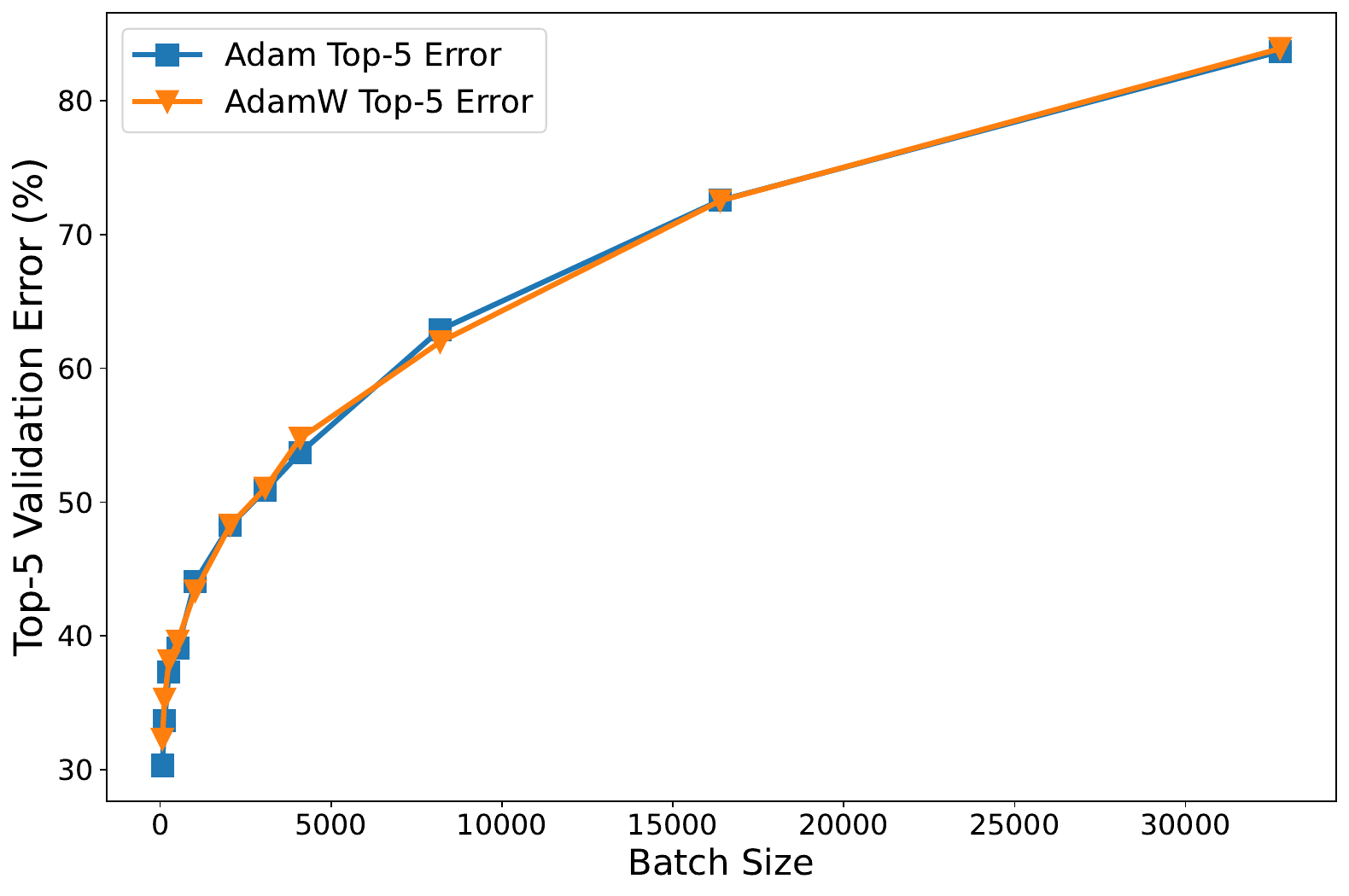}
\vskip -0.1in
\caption{ImageNet-1K subset: Top-5 validation error vs.\ batch size for Adam and AdamW with ResNet-50.}
\label{fig:imagenet_val5_error_vs_bs}
\end{figure}

\begin{figure}[!t]
\vskip -0.1in
\centering
\includegraphics[width=0.7\linewidth]{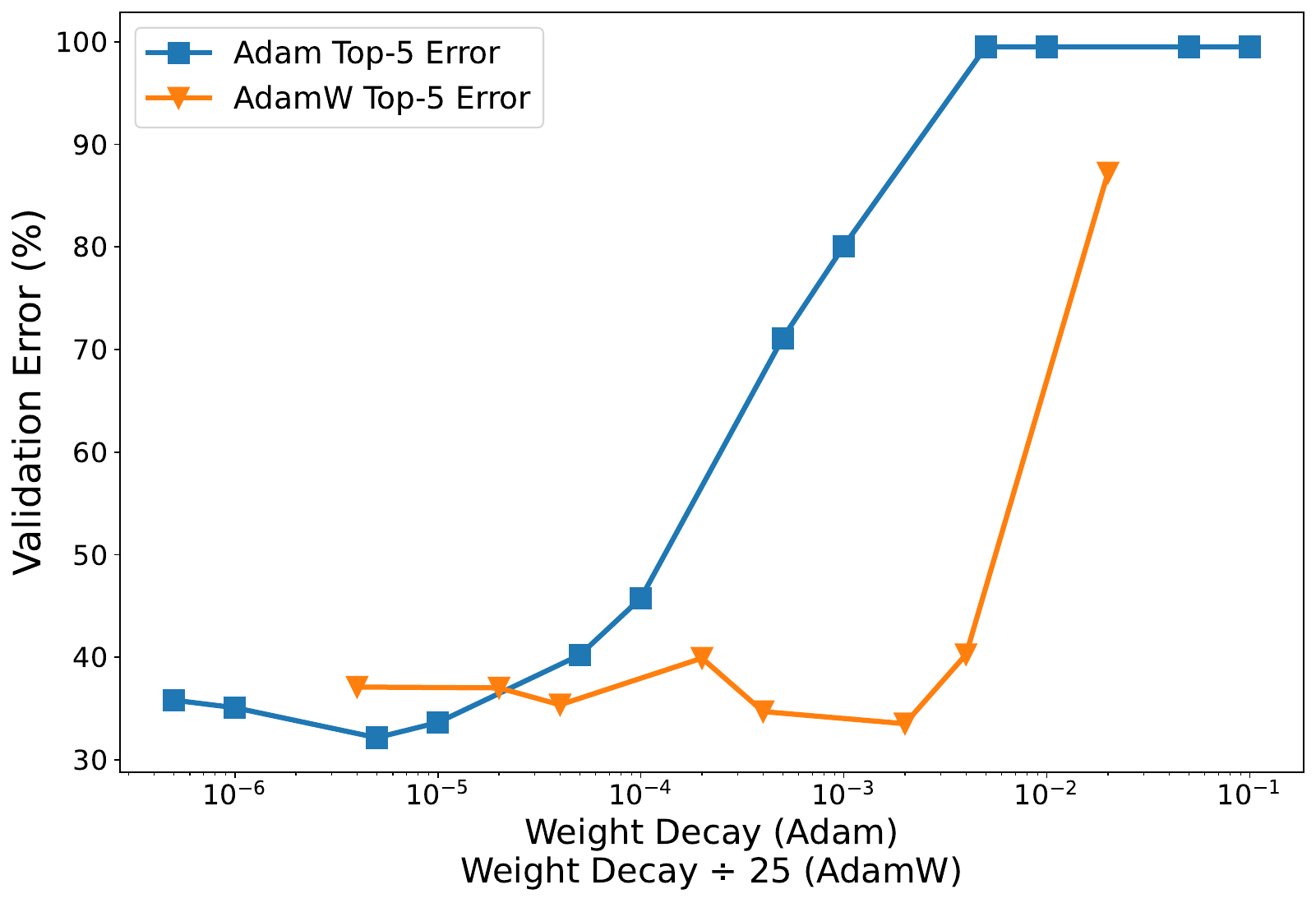}
\vskip -0.1in
\caption{ImageNet-1K subset: Top-5 validation error vs.\ weight decay for Adam and AdamW with ResNet-50 (batch size = 64).}
\label{fig:imagenet_val5_error_vs_wd_bs64}
\end{figure}

\end{document}